\documentclass{article} 
\usepackage{newtxtext}
\usepackage[left=1.25in, top=1in, bottom=1in, right=1.25in]{geometry}
\usepackage[utf8]{inputenc}
\usepackage[T1]{fontenc}
\usepackage[authoryear]{natbib}
\usepackage{microtype}
\usepackage{float}

\usepackage{amsmath,amssymb,amsthm}
\usepackage{bm}
\usepackage{mathtools}
\usepackage{booktabs}
\usepackage{array}
\usepackage{enumitem}
\usepackage{graphicx}
\usepackage{xcolor}
\usepackage{ifthen}
\usepackage{url}

\newcommand{\ShowColors}{yes}
\ifthenelse{\equal{\ShowColors}{yes}}{
\colorlet{linkcol}{blue}\colorlet{urlcol}{magenta}
\colorlet{cmtorange}{orange}\colorlet{cmtred}{red}\colorlet{cmtblue}{blue}\colorlet{cmtcyan}{cyan}
\definecolor{cmtgreen}{RGB}{0,128,80}
\definecolor{addcolor}{RGB}{204,153,0}
}{
\colorlet{linkcol}{black}\colorlet{urlcol}{black}
\colorlet{cmtorange}{black}\colorlet{cmtred}{black}\colorlet{cmtblue}{black}\colorlet{cmtcyan}{black}
\colorlet{cmtgreen}{black}
\colorlet{addcolor}{black}
}

\usepackage{hyperref}
\hypersetup{colorlinks=true,linkcolor=linkcol,citecolor=linkcol,urlcolor=urlcol}
\usepackage[capitalize]{cleveref}
\newcommand{\tianyu}[1]{}
\newcommand{\sy}[1]{}
\newcommand{\garvit}[1]{}
\newcommand{\addressedyaoqing}[1]{}
\newcommand{\addressedtianyu}[1]{}
\newcommand{\addressedsy}[1]{}
\newcommand{\LEI}[1]{}
\newcommand{\LEIB}[1]{}
\newcommand{\PY}[1]{}
\newcommand{\PYB}[1]{}
\newtheorem{theorem}{Theorem}
\newtheorem{lemma}{Lemma}

\newtheorem{corollary}{Corollary}
\newtheorem{assumption}{Assumption}
\newtheorem{remark}{Remark}

\makeatletter
\renewenvironment{proof}[1][Proof]{%
  \par\addvspace{\topsep}\noindent{\bfseries #1.\ }\ignorespaces%
}{%
  \hfill\rule{1.1ex}{1.1ex}\par\addvspace{\topsep}%
}
\newenvironment{theoremrst}[1]{%
  \par\addvspace{\topsep}\noindent{\bfseries Theorem (#1).}\space\normalfont\itshape\ignorespaces%
}{%
  \par\upshape\addvspace{\topsep}%
}
\makeatother

\crefname{assumption}{Assumption}{Assumptions}
\Crefname{assumption}{Assumption}{Assumptions}
\crefname{theorem}{Theorem}{Theorems}\Crefname{theorem}{Theorem}{Theorems}
\crefname{lemma}{Lemma}{Lemmas}\Crefname{lemma}{Lemma}{Lemmas}
\crefname{corollary}{Corollary}{Corollaries}\Crefname{corollary}{Corollary}{Corollaries}
\crefname{proposition}{Proposition}{Propositions}\Crefname{proposition}{Proposition}{Propositions}
\crefname{remark}{Remark}{Remarks}\Crefname{remark}{Remark}{Remarks}
\crefname{definition}{Definition}{Definitions}\Crefname{definition}{Definition}{Definitions}
\crefname{equation}{Eq.}{Eqs.}
\crefname{section}{Section}{Sections}
\crefname{appendix}{Appendix}{Appendices}
\crefname{figure}{Figure}{Figures}
\crefname{table}{Table}{Tables}

\newcommand{\R}{\mathbb{R}}
\newcommand{\E}{\mathbb{E}}
\newcommand{\bN}{\mathcal{N}}
\newcommand{\bL}{\mathcal{L}}
\newcommand{\norm}[1]{\left\|#1\right\|}
\newcommand{\Frob}{\mathrm{F}}
\DeclareMathOperator{\tr}{tr}
\DeclareMathOperator{\diagop}{diag}
\newcommand{\diag}{\diagop}
\DeclareMathOperator{\Var}{Var}
\newcommand{\1}{\mathbf{1}}
\newcommand{\Mstar}{\bm{M}^{\star}}
\newcommand{\eqdef}{:=}
\newcommand{\kron}{\otimes}
\newcommand{\WQ}{\bm{W}_{Q}}
\newcommand{\WK}{\bm{W}_{K}}
\newcommand{\WV}{\bm{W}_{V}}
\newcommand{\WO}{\bm{W}_{O}}
\newcommand{\GQ}{\bm{G}_{Q}}
\newcommand{\GK}{\bm{G}_{K}}
\newcommand{\GV}{\bm{G}_{V}}
\newcommand{\GO}{\bm{G}_{O}}
\newcommand{\mA}{\mathsf{A}}

\newcommand{\ShowAdditions}{no}
\ifthenelse{\equal{\ShowAdditions}{yes}}{
\newcommand{\addtxt}[1]{{\color{addcolor}#1}}
\newenvironment{addblock}{\color{addcolor}}{\par}
}{
\newcommand{\addtxt}[1]{#1}
\newenvironment{addblock}{}{\par}
}
\definecolor{revcolor}{RGB}{0,90,200}
\newcommand{\ShowRevisions}{no}
\ifthenelse{\equal{\ShowRevisions}{yes}}{
\newcommand{\revtxt}[1]{{\color{revcolor}#1}}
\newenvironment{revblock}{\color{revcolor}}{\par}
}{
\newcommand{\revtxt}[1]{#1}
\newenvironment{revblock}{}{\par}
}

\title{How Does Orthogonalization Adapt to the Neural Network Hessian Structure? \\ A Gradient Self Outer-Product Analysis at Initialization}

\author{Shenyang Deng \\
Dartmouth College \\
\texttt{shenyang.deng.gr@dartmouth.edu}
\and
Garvit Goyal \\
Dartmouth College \\
\texttt{garvit.goyal.gr@dartmouth.edu}
\and
Tianyu Pang \\
Dartmouth College \\
\texttt{tianyu.pang.gr@dartmouth.edu}
\and
Shuhua Yu \\
Meta \\
\texttt{yu.shuhuaxh@gmail.com}
\and
Yaoqing Yang \\
Dartmouth College \\
\texttt{yaoqing.yang@dartmouth.edu}
}

\begin{document}

\maketitle

\begin{abstract}%
Muon orthogonalizes a weight matrix's momentum before each step,
and on neural networks this simple preconditioner beats AdamW in a
wide range of tasks. Existing analyses explain Muon within general
optimization frameworks or on a few specific models, and many of
them view orthogonalization as a preconditioner. Since the Hessian
is the optimal preconditioner under a quadratic approximation of the
loss, these works leave a basic question open: what makes
orthogonalization suited to the Hessian of a neural network? Prior
work shows that the
layer-wise Hessian of a neural network is row-block diagonally
dominant, and Muon's implicit preconditioner, the Kronecker product
$(\bm{V}\bm{V}^\top)^{1/2}\otimes\bm{I}$ with $\bm{V}$ the momentum,
has the same row-block structure exactly when $\bm{V}\bm{V}^\top$ is
diagonally dominant. Diagonal dominance of $\bm{V}\bm{V}^\top$ has
been observed in language models, but why a neural network produces
it has not been explained. This work provides a quantitative
asymptotic theory for neural networks at initialization, and points
out the intuition that the overparameterization of the weight
matrices drives the diagonal dominance. We analyze
$\mathbb{E}[\bm{G}\bm{G}^\top]$, which equals
$\mathbb{E}[\bm{V}\bm{V}^\top]$ at initialization, in four canonical
models: symmetric matrix factorization, deep linear networks,
two-layer ReLU networks, and single-layer linear attention. In each
case the diagonal entries outgrow the off-diagonal ones at explicit
rates as the hidden dimensions grow; for the query, key and value
weights of attention the off-diagonal vanishes altogether.
Simulations of the four models and experiments on language models
at initialization confirm the predicted scaling. The code for all
our simulations and language-model experiments is available at this
\href{https://anonymous.4open.science/r/orthogonalization-hessian-CB2C/}{link}.
\end{abstract}

\section{Introduction}\label{sec:intro}

Muon~\citep{jordan2024muon} optimizes a matrix-shaped weight $\bm{W}_t\in\R^{m\times n}$ by orthogonalizing the momentum buffer instead of using it directly. From a stochastic gradient $\bm{G}_t$,
\begin{equation}\label{eq:muon}
\bm{V}_t=\beta\bm{V}_{t-1}+(1-\beta)\bm{G}_t,\qquad
\bm{D}_t=\mathrm{NS}_5(\bm{V}_t)\approx(\bm{V}_t\bm{V}_t^\top)^{-1/2}\bm{V}_t,\qquad
\bm{W}_{t+1}=\bm{W}_t-\eta_t\bm{D}_t,
\end{equation}
with $\bm{V}_0=\bm{G}_0$ and $\mathrm{NS}_5$ a fixed-point iteration for the orthogonal polar factor: if $\bm{V}_t=\bm{P}\bm{\Sigma}\bm{Q}^\top$ is a thin SVD then $\bm{D}_t=\bm{P}\bm{Q}^\top=(\bm{V}_t\bm{V}_t^\top)^{-1/2}\bm{V}_t$. A growing body of work studies why Muon works within general optimization frameworks: as momentum steepest descent under a chosen operator norm~\citep{bernstein2024old,bernstein2024modular,pethick2025training,lau2025polargrad}, as a non-Euclidean trust-region step~\citep{kovalev2025understanding}, under an isotropic curvature model~\citep{su2025isotropic}, with an implicit spectral-norm constraint on the iterates~\citep{chen2025muon}, and with convergence guarantees under smoothness or H\"older smoothness, heavy-tailed noise, inexact orthogonalization, variance reduction, and nonsmooth or convex Lipschitz objectives~\citep{li2025note,shen2025convergence,an2025asgo,kovalev2026non,nagashima2026improved,sato2026criticalbatch,gratton2026unified,hubler2026free,iiduka2026muon,choudhury2026muon,shulgin2025inexact,kim2026newtonschulz,qian2026convergence,do2026muon,chang2026beyond,qian2025mvr,li2026muonb,yang2026nonsmooth,lai2026nonsmooth,parshakova2026muon,liu2026continuous,long2026dimension}.

However, those works derive their analyses from abstract assumptions on the loss, such as smoothness or convexity, imposed uniformly over a problem class rather than from any specific model.  The limitation of those works is that they cannot explain why Muon, a simple matrix preconditioner, works so well \emph{for neural networks specifically}. The few theoretical results for specific models, such as matrix factorization and in-context learning with linear transformers~\citep{ma2026preconditioning}, linear or bilinear classifiers on separable or imbalanced data~\citep{fan2025implicit,vasudeva2025muon}, and random-feature and transformer blocks at initialization~\citep{davis2025spectral}, sharpen the picture in particular regimes. \revtxt{Nevertheless, since the Hessian is the optimal preconditioner under a quadratic approximation of the loss, those previous works leave a basic structural question for preconditioning methods open:} \textbf{What makes the orthogonalization preconditioner suited to the Hessian of a neural network specifically?} \revtxt{\textbf{This paper gives a theoretical answer that Muon's implicit preconditioner shares a similar row-block diagonally dominant heterogeneous structure to the neural-network Hessian described by \citet{zhang2024transformersadam,zhang2024adammini}.}}  

\paragraph{Preconditioner structure versus Hessian structure.}
Let us first take a view of the equivalent preconditioner form of orthogonalization. The orthogonalized update is $\bm{D}_t=(\bm{V}_t\bm{V}_t^\top)^{-1/2}\bm{V}_t$, a left multiplication that mixes the rows of $\bm{V}_t$. Stacking the rows of $\bm{D}_t$ and of $\bm{V}_t$ into vectors turns it into $\mathrm{vec}(\bm{D}_t)=\bigl((\bm{V}_t\bm{V}_t^\top)^{-1/2}\kron\bm{I}_n\bigr)\mathrm{vec}(\bm{V}_t)$, so Muon descends in the metric
\begin{equation}\label{eq:Hmuon}
H_{\mathrm{Muon}}=\bm{P}_t^{1/2}\kron\bm{I}_n,\qquad \bm{P}_t\eqdef\bm{V}_t\bm{V}_t^\top\in\R^{m\times m},
\end{equation}
a Kronecker structure determined entirely by $\bm{P}_t$. At initialization, $\bm{V}_0=\bm{G}_0$ and hence $\bm{P}_0=\bm{G}_0\bm{G}_0^\top$. Previous work has shown that the neural-network Hessian has a special heterogeneous structure: for a weight $\bm{W}\in\R^{m\times n}$, the $m\times m$ blocks indexed by its rows (output neurons) are dominated by the diagonal ones, that is, the layer-wise Hessian is \emph{row-block diagonally dominant}~\citep{zhang2024transformersadam,zhang2024adammini,dong2025towards}. \revtxt{Meanwhile, \citet{deng2026rmnp} point out empirically that $\bm{G}_t\bm{G}_t^\top$ is diagonally dominant throughout training, and the larger the model, the stronger the dominance. When $\bm{P}_t$ is close to diagonal, $H_{\mathrm{Muon}}$ in \eqref{eq:Hmuon} shares the row-block structure of the Hessian (\cref{fig:hessian}), so the question reduces to a clean property of the gradient.}

\begin{figure}[t]
\centering
\includegraphics[width=0.7\textwidth]{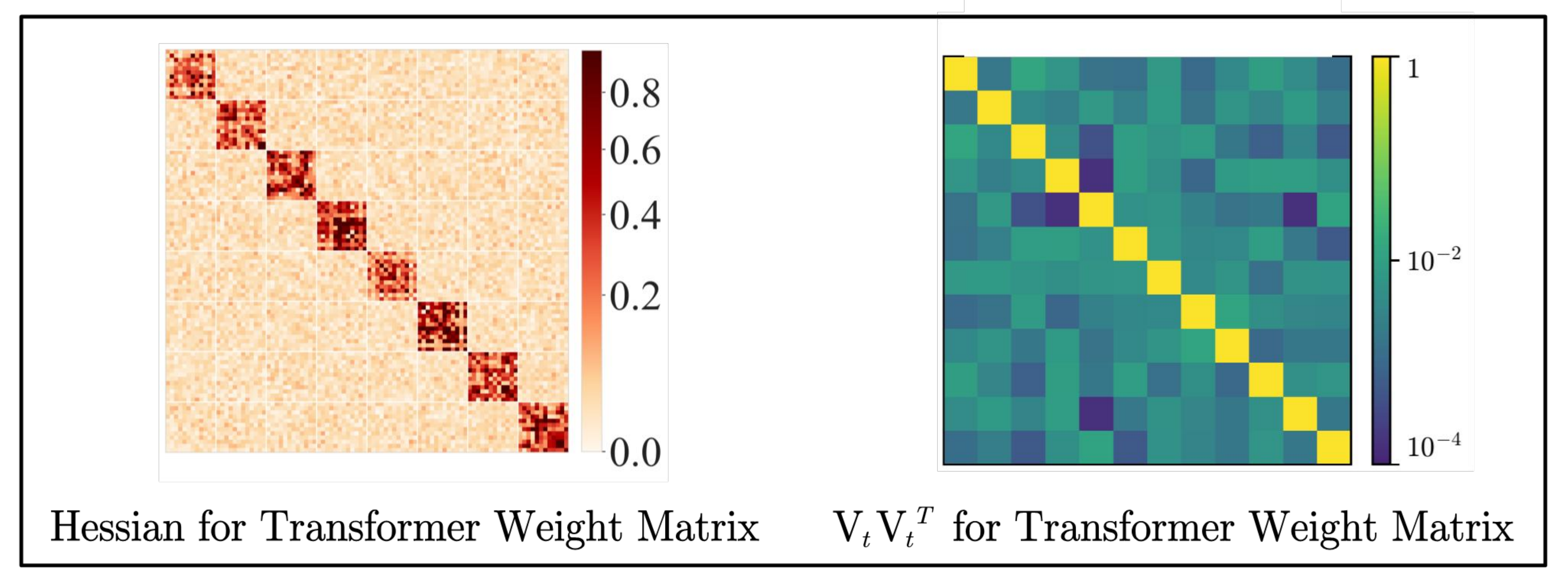}
\vspace{-6pt}
\caption{\revtxt{Left: Hessian of a Transformer weight matrix at initialization, whose mass concentrates on the diagonal blocks. Right: the momentum self outer-product $\bm{V}_t\bm{V}_t^\top$ of a Transformer weight matrix at initialization (log scale), whose mass concentrates on the diagonal.}}
\label{fig:hessian}
\vspace{-12pt}
\end{figure}

\vspace{-4pt}
\begin{center}
\fbox{\begin{minipage}{0.93\linewidth}\centering
\textbf{Question.}\quad \revtxt{How does Neural Networks produce such a diagonally dominant $\bm{V}_t\bm{V}_t^\top$ (equivalently, $\bm{G}_t\bm{G}_t^\top$ at initialization)?}
\end{minipage}}
\end{center}
\vspace{-4pt}

\revtxt{This paper answers this question for neural networks in an asymptotic regime, going a step further than the empirical observation of \citet{deng2026rmnp} by turning it into a quantitative asymptotic theoretical result at initialization.}  We prove that in four canonical neural-network models the diagonal of $\E[\bm{G}\bm{G}^\top]$ outgrows the off-diagonal as the hidden dimensions of the weight matrices grow, at rates summarized in \cref{tab:orders}.
The overparameterization of the weight matrices is thus a key indicator of this diagonalization. Our contributions can be summarized as follows.

\begin{enumerate}[leftmargin=1.4em,itemsep=2pt,topsep=2pt]
\item \textbf{Hidden-dimension asymptotics theory for Muon's preconditioner.} For symmetric matrix factorization, deep linear networks, two-layer ReLU networks and single-layer linear attention, we prove that Muon's implicit preconditioner becomes diagonally dominant as the hidden dimensions grow, with explicit rates (\cref{thm:mf,thm:dl,thm:relu,thm:att} and \cref{tab:orders}).
\item \textbf{Empirical evaluation of the predicted scaling, from simulations to language models.} Simulations of the four models reproduce the predicted rates (\cref{sec:empirical-verification}), and Qwen3.5, Llama-3.2 and Granite-4.0 at initialization show the dominance strengthening with model width (\cref{sec:llm}). 
\end{enumerate}

\begin{figure}[t]
\centering
\includegraphics[width=0.85\textwidth]{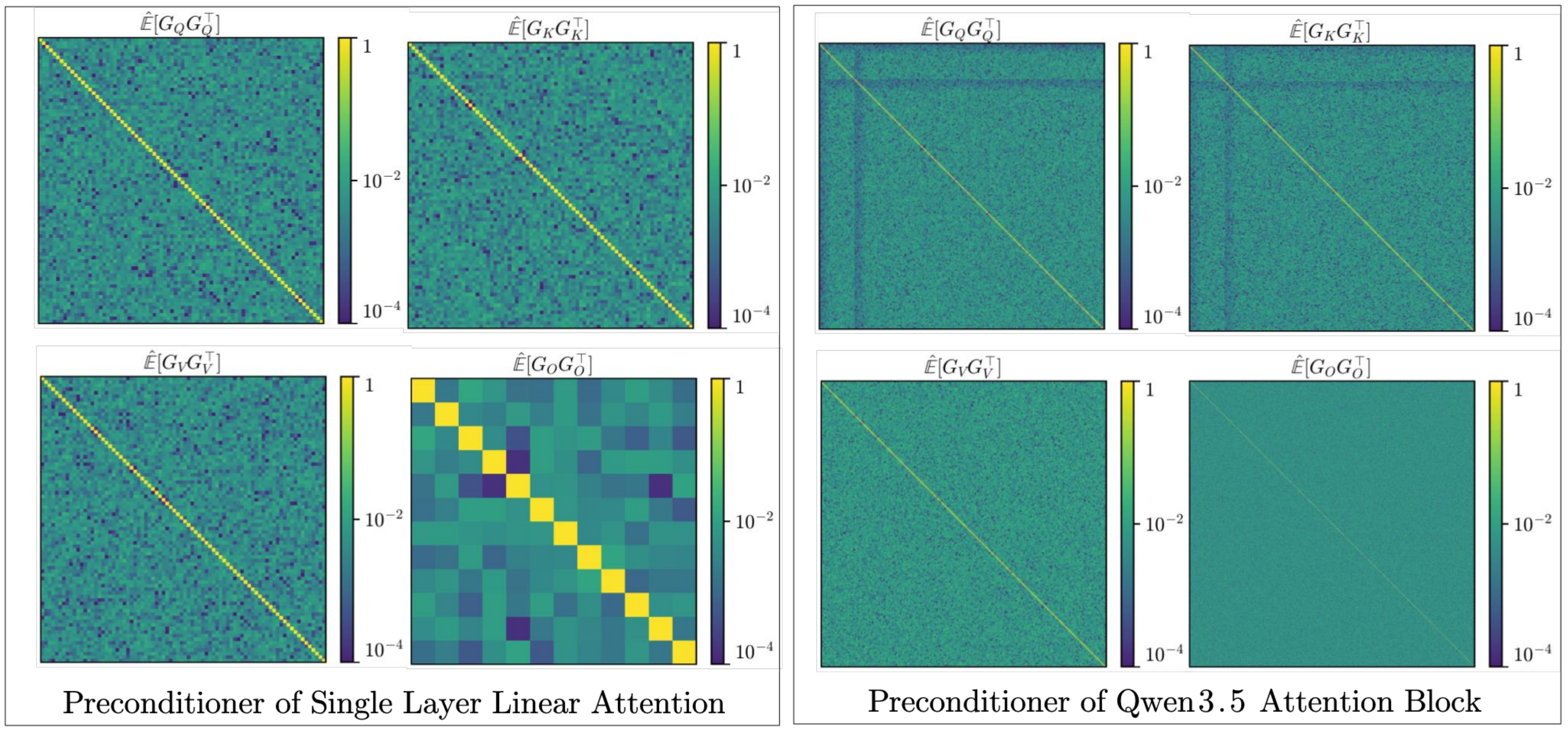}
\caption{Empirical $\widehat{\E}[\bm{G}\bm{G}^\top]$ (log scale) of the query, key, value and output projections at initialization, for single-layer linear attention (left, details in \cref{app:sim-attention}) and the first attention block of Qwen3.5 (right, details in \cref{app:llm-gram-structure}). In both cases the mass concentrates on the diagonal. Each heatmap is normalized by its empirical mean diagonal magnitude.}
\label{fig:main}
\end{figure}

\section{Related Work}\label{sec:related}

\paragraph{Mechanistic analyses of Muon.}
Muon~\citep{jordan2024muon} replaces the momentum of each weight matrix by its polar factor. This is the momentum form of steepest descent in the spectral norm, a view developed by \citet{bernstein2024old,bernstein2024modular}. Work on why this update suits neural networks follows two routes. The first route models the curvature of the loss. \citet{su2025isotropic} introduces an isotropic curvature model and shows that, under a growth condition on the curvature, the optimal update flattens the spectrum of the gradient, as orthogonalization does. \citet{davis2025spectral} give a layerwise condition, comparing the nuclear-to-Frobenius ratio of the gradient with the stable rank of the incoming activations, under which one spectral step decreases the loss more than one Euclidean step. \citet{ma2026preconditioning} prove that a simplified Muon converges linearly on matrix factorization and on in-context learning with linear transformers, at a rate independent of the condition number and ahead of gradient descent and Adam. \citet{nguyen2026flattening} recast Muon as a preconditioned gradient method and show, under a Kronecker-factored curvature model, that spectral flattening improves the convergence factor. \citet{wang2026muon} compare the one-step loss decrease of Muon and Adam under a second-order Taylor approximation and attribute Muon's advantage to a smaller curvature penalty, and \citet{wu2026spectral} probe the loss landscape along real training runs, estimating the loss-optimal step size along each singular direction of the momentum. Since the isotropic geometry behind Muon only approximates the true curvature, several curvature-aware variants have followed~\citep{zhang2026mousse,du2026newtonmuon,che2026second,zhang2026matrix}. The second route studies the update itself. \citet{beneventano2026spectral} characterize the polar factor as the entropy-maximizing bounded update that keeps the singular directions of the gradient and derive its spectral dynamics. \citet{shumaylov2026notspecial} find that random or inverted spectra train equally well, and \citet{wang2025muon} attribute Muon's advantage on heavy-tailed data to its more isotropic update spectrum. \citet{zhang2026polaradamw} separate spectral control from gauge equivariance, \citet{eschenhagen2026clarifying} relate Muon to Shampoo, and \citet{zhang2026grouping} ask at which granularity attention weights should be orthogonalized. Both routes take a curvature model or a landscape as given. None of them compares the structure of Muon's implicit preconditioner with that of the network's Hessian, which is the question we study.

\paragraph{Block structure of the neural-network Hessian.}
The Hessian of a neural network is near block diagonal, with most of its mass in dense principal blocks. This observation goes back to \citet{collobert2004large}, and the same structure is assumed by the block-diagonal curvature approximations of \citet{leroux2008topmoumoute} and \citet{martens2015kfac}. \citet{dong2025towards} quantify the effect at random initialization: for linear and one-hidden-layer classifiers, the off-diagonal blocks vanish relative to the diagonal ones as the number of classes grows. The dense blocks within a layer are indexed by output neurons. This is the row-block structure we build on, and it is the partition by which Adam-mini assigns one learning rate per block to the value, projection and MLP weights~\citep{zhang2024adammini}. Beyond being block diagonal, the Hessian is heterogeneous across parameter blocks. \citet{zhang2024transformersadam} show that in Transformers, unlike CNNs, the Hessian spectrum varies strongly from one weight matrix to another, so that no single learning rate serves all of them. \citet{ormaniec2024transformer} derive the Hessian of a self-attention layer and trace this heterogeneity to the softmax and the query-key parameterization, \citet{tomihari2025gradient} show that gradient heterogeneity across blocks complements it, and \citet{wang2025sharpness} exploit the sharpness disparity between blocks to speed up pre-training. Within a block, \citet{zhang2025concurrence} prove that under anisotropic inputs a layer-wise Kronecker-factored preconditioner recovers the feature-learning rate that SGD and Adam miss. Our work builds on the block-diagonal part of this picture: a preconditioner should be block diagonal in the row-block basis, and Muon's $(\bm{V}\bm{V}^\top)^{1/2}\otimes\bm{I}$ has this form precisely when $\bm{V}\bm{V}^\top$ is diagonal. Further preconditioning methods are discussed in \cref{app:related}.

\section{Preliminaries}\label{sec:prelim}

Throughout, $\bm{W}\in\R^{m\times n}$ denotes a weight matrix, $\bL$ the population loss and $\bm{G}=\partial\bL/\partial\bm{W}$ the gradient of the loss with respect to that matrix. Our object of study is the $m\times m$ matrix $\E[\bm{G}\bm{G}^\top]$ at a Gaussian initialization. This matrix is the expectation of $\bm{P}_0=\bm{V}_0\bm{V}_0^\top$ in \eqref{eq:Hmuon}. We analyze it in four settings. \cref{sec:setup} introduces the four models, and \cref{sec:assump} states the initialization assumed for each. We write $\bm{I}_d$ for the $d\times d$ identity, $\tr$ for the trace and $\norm{\cdot}_\Frob$ for the Frobenius norm. The ReLU is $\sigma(z)=\max(z,0)$, with derivative $\sigma'(z)=\1[z>0]$, and $\kron$ is the Kronecker product. Finally, $a=\Theta(b)$ means that $c\,b\le a\le C\,b$ for some constants $0<c\le C$.

\subsection{Setup}\label{sec:setup}

\noindent\fbox{\begin{minipage}{\dimexpr\linewidth-2\fboxsep-2\fboxrule\relax}
\paragraph{Symmetric matrix factorization.}
For positive integers $d,k,r$ with $k\ge r$, a weight $\bm{W}\in\R^{d\times k}$ fits a fixed target $\Mstar\in\R^{d\times d}$, symmetric positive semidefinite of rank $r$, under the over-parameterized loss
\begin{equation}\label{eq:mf-setup}
\bL(\bm{W})\eqdef\tfrac14\norm{\bm{W}\bm{W}^\top-\Mstar}_\Frob^2,
\qquad
\bm{G}\eqdef\frac{\partial\bL}{\partial\bm{W}}=(\bm{W}\bm{W}^\top-\Mstar)\,\bm{W}\in\R^{d\times k}.
\end{equation}
The hidden dimension is $k$.
\end{minipage}}
\medskip

\noindent\fbox{\begin{minipage}{\dimexpr\linewidth-2\fboxsep-2\fboxrule\relax}
\paragraph{Deep linear network.}
For $L\ge1$ and layer dimensions $d_0,\dots,d_L$, weights $\bm{W}_i\in\R^{d_i\times d_{i-1}}$ fit a fixed linear target $\bm{\Phi}\in\R^{d_L\times d_0}$ on inputs $\bm{x}\sim\bN(\bm{0},\bm{I}_{d_0})$,
\begin{equation}\label{eq:dl-setup}
f(\bm{x})=\bm{W}_L\cdots\bm{W}_1\bm{x},
\qquad
\bL(\bm{W}_1,\dots,\bm{W}_L)\eqdef\tfrac12\,\E_{\bm{x}}\norm{f(\bm{x})-\bm{\Phi}\bm{x}}^2 .
\end{equation}
With $\bm{B}_i\eqdef\bm{W}_{i-1}\cdots\bm{W}_1$ ($\bm{B}_1\eqdef\bm{I}$), $\bm{A}_i\eqdef\bm{W}_L\cdots\bm{W}_{i+1}$ ($\bm{A}_L\eqdef\bm{I}$) and $\bm{R}\eqdef\bm{W}_L\cdots\bm{W}_1-\bm{\Phi}$, the layer-$i$ gradient is
\begin{equation}\label{eq:dl-grad}
\bm{G}_i\eqdef\frac{\partial\bL}{\partial\bm{W}_i}=\bm{A}_i^\top\,\bm{R}\,\bm{B}_i^\top\in\R^{d_i\times d_{i-1}}.
\end{equation}
The hidden dimensions are $d_1,\dots,d_{L-1}$.
\end{minipage}}
\medskip

\noindent\fbox{\begin{minipage}{\dimexpr\linewidth-2\fboxsep-2\fboxrule\relax}
\paragraph{Two-layer ReLU network.}
For positive $d_0,d_1,d_2$, weights $\bm{W}_1\in\R^{d_1\times d_0}$ and $\bm{W}_2\in\R^{d_2\times d_1}$ fit a fixed linear target $\bm{\Phi}\in\R^{d_2\times d_0}$ on inputs $\bm{x}\sim\bN(\bm{0},\bm{I}_{d_0})$,
\begin{equation}\label{eq:relu-setup}
f(\bm{x})=\bm{W}_2\,\sigma(\bm{W}_1\bm{x}),
\qquad
\bL(\bm{W}_1,\bm{W}_2)\eqdef\tfrac12\,\E_{\bm{x}}\norm{f(\bm{x})-\bm{\Phi} \bm{x}}^2 .
\end{equation}
With $\bm{h}(\bm{x})\eqdef\sigma(\bm{W}_1\bm{x})$ the hidden features, $\bm{r}(\bm{x})\eqdef f(\bm{x})-\bm{\Phi}\bm{x}$ the residual and $\bm{D}(\bm{x})\eqdef\diag(\sigma'(\bm{W}_1\bm{x}))$, the gradients $\bm{G}_i\eqdef\partial\bL/\partial\bm{W}_i$ are
\begin{equation}\label{eq:relu-grad}
\bm{G}_1=\E_{\bm{x}}\bigl[\bm{D}(\bm{x})\bm{W}_2^\top\bm{r}(\bm{x})\bm{x}^\top\bigr]\in\R^{d_1\times d_0},
\qquad
\bm{G}_2=\E_{\bm{x}}\bigl[\bm{r}(\bm{x})\bm{h}(\bm{x})^\top\bigr]\in\R^{d_2\times d_1}.
\end{equation}
Let $\bm{w}_a$ denote the $a$-th row of $\bm{W}_1$. The hidden dimension is $d_1$.
\end{minipage}}
\medskip

\begin{addblock}
\noindent\fbox{\begin{minipage}{\dimexpr\linewidth-2\fboxsep-2\fboxrule\relax}
\paragraph{Single-layer linear attention.}
For positive $T,d_0,d_k,d_v,d_2$, tokens $\bm{X}\in\R^{T\times d_0}$ (one per row) pass through weights $\WQ,\WK\in\R^{d_k\times d_0}$, $\WV\in\R^{d_v\times d_0}$, $\WO\in\R^{d_2\times d_v}$, which fit a fixed linear target $\bm{\Phi}\in\R^{d_2\times d_0}$,
\begin{equation}\label{eq:att-layer}
\begin{gathered}
\bm{A}\eqdef\tfrac{1}{\sqrt{d_k}}\,\bm{X}\WQ^\top\WK\bm{X}^\top,
\qquad
f(\bm{X})=\bm{A}\,\bm{X}\WV^\top\WO^\top,\\
\bL(\WQ,\WK,\WV,\WO)\eqdef\tfrac12\norm{f(\bm{X})-\bm{X}\bm{\Phi}^\top}_\Frob^2 .
\end{gathered}
\end{equation}
Let $\bm{C}\eqdef\bm{X}^\top\bm{X}$, $c_p\eqdef\tr(\bm{C}^p)$, $\bm{Z}\eqdef\bm{X}\WV^\top$ and $\bm{R}\eqdef f(\bm{X})-\bm{X}\bm{\Phi}^\top$. The gradients $\bm{G}_\bullet\eqdef\partial\bL/\partial\bm{W}_\bullet$, $\bullet\in\{Q,K,V,O\}$, are
\begin{equation}\label{eq:att-grads}
\begin{aligned}
\GQ&=\tfrac{1}{\sqrt{d_k}}\,\WK\bm{X}^\top\bm{Z}\WO^\top\bm{R}^\top\bm{X},
&\qquad
\GK&=\tfrac{1}{\sqrt{d_k}}\,\WQ\bm{X}^\top\bm{R}\WO\bm{Z}^\top\bm{X},\\
\GV&=\WO^\top\bm{R}^\top\bm{A}\bm{X},
&\qquad
\GO&=\bm{R}^\top\bm{A}\bm{Z} .
\end{aligned}
\end{equation}
The hidden dimensions are $d_k$ and $d_v$. In most practical language models the input and output dimensions of an attention block both equal the model width, $d_0=d_2$~\citep{vaswani2017attention,radford2019gpt2,brown2020gpt3,touvron2023llama,grattafiori2024llama3,llama32,qwen35,granite4}, and scaling a model up increases this width while $d_k,d_v$ stay fixed, so for attention we also track $d_0$ and $d_2$.
\end{minipage}}
\end{addblock}

\subsection{Assumptions}\label{sec:assump}

Each setting is analyzed at a standard Gaussian initialization, stated separately below.

\begin{assumption}[Matrix factorization]\label{ass:mf}
The entries of $\bm{W}$ are i.i.d.\ $\bN(0,1)$, and $\Mstar$ is deterministic and independent of $\bm{W}$. Hence $\bm{S}\eqdef\bm{W}\bm{W}^\top\sim\mathcal{W}_d(k,\bm{I}_d)$ is Wishart with $k$ degrees of freedom.
\end{assumption}

\begin{assumption}[Deep linear network]\label{ass:dl}
The entries of $\bm{W}_1,\dots,\bm{W}_L$ are i.i.d.\ $\bN(0,1)$; the input $\bm{x}\sim\bN(\bm{0},\bm{I}_{d_0})$ is integrated out in $\bL$; and $\bm{\Phi}$ is deterministic and independent of the weights. Expectations are over $\bm{W}_1,\dots,\bm{W}_L$.
\end{assumption}

\begin{assumption}[Two-layer ReLU network]\label{ass:relu}
The entries of $\bm{W}_1$ and $\bm{W}_2$ are i.i.d.\ $\bN(0,1)$; the input $\bm{x}\sim\bN(0,\bm{I}_{d_0})$ is integrated out in $\bL$; and $\bm{\Phi}$ is deterministic and independent of $\bm{W}_1,\bm{W}_2$. Expectations are over $\bm{W}_1,\bm{W}_2$.
\end{assumption}

\begin{addblock}
\begin{assumption}[Linear attention]\label{ass:att}
$\WQ,\WK,\WV,\WO$ are mutually independent with i.i.d.\ $\bN(0,1)$ entries; the rows of $\bm{X}$ are i.i.d.\ $\bN(\bm{0},\bm{I}_{d_0})$ and independent of the weights, so that $\bm{C}\sim\mathcal{W}_{d_0}(T,\bm{I}_{d_0})$ is Wishart; and $\bm{\Phi}$ is deterministic. Expectations are over the weights and the tokens jointly.
\end{assumption}
\end{addblock}

\section{Main Results}\label{sec:main}

We state the closed form of $\E[\bm{G}\bm{G}^\top]$ in each setup, discuss its diagonal dominance, and defer the proofs to \addtxt{\cref{app:mf,app:dl,app:relu,app:att}}.

\subsection{Symmetric matrix factorization}
\begin{theorem}\label{thm:mf}
Under \cref{ass:mf}, with $\bm{S}\eqdef\bm{W}\bm{W}^\top\sim\mathcal{W}_d(k,\bm{I}_d)$ Wishart,
\begin{equation}\label{eq:mf-main}
\E[\bm{G}\bm{G}^\top]=\alpha_3\,\bm{I}_d-2\alpha_2\,\Mstar+k\,(\Mstar)^2,
\end{equation}
where $\alpha_2=k(k+d+1)$ and $\alpha_3=k^3+3(d+1)k^2+(d^2+3d+4)k$ are the (isotropic) diagonal entries of $\E[\bm{S}^2]$ and $\E[\bm{S}^3]$. \revtxt{Consequently, every diagonal entry $\alpha_3-2\alpha_2\Mstar_{ii}+k(\Mstar)_{ii}^2$ is $\Theta(k^3)$, and every off-diagonal entry $-2\alpha_2\Mstar_{ij}+k(\Mstar)_{ij}^2$ with $\Mstar_{ij}\neq0$ is $\Theta(k^2)$.}
\end{theorem}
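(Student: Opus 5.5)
The plan is to expand $\bm{G}\bm{G}^\top$ algebraically so that only Wishart moments appear. Since $\bm{G}=(\bm{S}-\Mstar)\bm{W}$, we have
\begin{equation*}
\bm{G}\bm{G}^\top=(\bm{S}-\Mstar)\,\bm{W}\bm{W}^\top(\bm{S}-\Mstar)=(\bm{S}-\Mstar)\bm{S}(\bm{S}-\Mstar)=\bm{S}^3-\Mstar\bm{S}^2-\bm{S}^2\Mstar+\Mstar\bm{S}\Mstar .
\end{equation*}
Because $\Mstar$ is deterministic, taking expectations reduces everything to $\E[\bm{S}]$, $\E[\bm{S}^2]$ and $\E[\bm{S}^3]$ for $\bm{S}\sim\mathcal{W}_d(k,\bm{I}_d)$. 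Each of these is a multiple of $\bm{I}_d$ by orthogonal invariance: $\bm{O}\bm{S}\bm{O}^\top\overset{d}{=}\bm{S}$ for every orthogonal $\bm{O}$, so $\E[\bm{S}^p]$ commutes with all orthogonal matrices. Writing $\E[\bm{S}^p]=\alpha_p\bm{I}_d$ with $\alpha_1=k$, we get $\E[\bm{G}\bm{G}^\top]=\alpha_3\bm{I}_d-2\alpha_2\Mstar+k(\Mstar)^2$, which is \eqref{eq:mf-main}.

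It remains to identify $\alpha_2$ and $\alpha_3$. The cleanest route is through traces, using $\alpha_p=\E[\tr\bm{S}^p]/d$ and $\tr\bm{S}^p=\tr\bigl((\bm{W}^\top\bm{W})^p\bigr)$.
\begin{itemize}[leftmargin=1.4em,itemsep=2pt,topsep=2pt]
\item For $p=2$, write $\E[(\bm{S}^2)_{11}]=\sum_j\E[S_{1j}^2]$. Here $S_{11}\sim\chi^2_k$ has second moment $k^2+2k$. For $j\neq1$, $S_{1j}=\sum_l W_{1l}W_{jl}$ has second moment $k$. Summing gives $k^2+2k+(d-1)k=k(k+d+1)$.
\item For $p=3$, I would use Wick/Isserlis pairings on $\E\tr(\bm{W}\bm{W}^\top)^3$, or the known third Wishart moment $\E\tr\bm{S}^3=k d(k^2+d^2+3kd+3k+3d+4)$. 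Dividing by $d$ gives $\alpha_3=k^3+3(d+1)k^2+(d^2+3d+4)k$.
\end{itemize}
The main obstacle is the third-moment bookkeeping. The genus expansion yields terms $kd(k^2+d^2)$ from planar pairings, $3kd(k+d)$ from the mixed pairings, and $4kd$ from the non-planar ones. I would double-check this by the direct computation $\E[(\bm{S}^3)_{11}]=\sum_{j,l}\E[S_{1j}S_{jl}S_{l1}]$, splitting into the cases $j=l=1$; exactly one of $j,l$ equal to $1$; $j=l\neq1$; and $j\neq l$, both $\neq1$. Each case is evaluated with Gaussian moments: $\E\chi_k^6=k(k+2)(k+4)$, plus conditioning on row $1$ of $\bm{W}$. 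Summing with multiplicities $1$, $2(d-1)$, $d-1$ and $(d-1)(d-2)$ should reproduce $\alpha_3$.

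For the asymptotic claims, treat $d$ and $\Mstar$ as fixed and let $k\to\infty$.
\begin{itemize}[leftmargin=1.4em,itemsep=2pt,topsep=2pt]
\item \textbf{Diagonal entries.} Each equals $\alpha_3-2\alpha_2\Mstar_{ii}+k(\Mstar)^2_{ii}$. Since $\alpha_3=k^3(1+O(1/k))$ and $\alpha_2=O(k^2)$, every diagonal entry is $k^3(1+o(1))$, hence $\Theta(k^3)$.
\item \textbf{Off-diagonal entries.} For $i\neq j$ the identity term drops out, leaving $-2\alpha_2\Mstar_{ij}+k(\Mstar)^2_{ij}=-2k^2\Mstar_{ij}+O(k)$. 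When $\Mstar_{ij}\neq0$ this is $\Theta(k^2)$ in absolute value.
\end{itemize}
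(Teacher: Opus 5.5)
Your proposal is correct and follows the paper's route: the same expansion $\bm{G}\bm{G}^\top=(\bm{S}-\Mstar)\bm{S}(\bm{S}-\Mstar)$, isotropy of $\E[\bm{S}^p]$ from orthogonal invariance, and a multiplicity-pattern computation of $\alpha_2,\alpha_3$ from Gaussian moments. You index over the $d$ rows of $\bm{W}$ in the $(1,1)$ entry, while the paper indexes over the $k$ columns and, for $\alpha_3$, uses $\tr\bm{S}^3$; this is the dual form of the same bookkeeping. Your four cases give $k(k+2)(k+4)$, $k(k+2)$, $k(k+2)$ and $k$, and with your multiplicities they do sum to $\alpha_3$.

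The only slip is in your genus-expansion aside, which drops the $3k^2d^2$ term contributed by the planar pairings: they give $kd(k^2+3kd+d^2)$, not $kd(k^2+d^2)$. As a result, your three listed pieces do not add up to the moment you quote. The direct computation you propose as a check is complete on its own, so the aside is unnecessary.
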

\revtxt{As the hidden dimension $k$ grows, the ratio of diagonal to off-diagonal entries is therefore $\Theta(k)$, and $\E[\bm{G}\bm{G}^\top]$ approaches the isotropic matrix $\alpha_3\bm{I}_d$. At initialization the preconditioner thus reduces, to leading order, to a uniform rescaling of the momentum, and the target enters only through the lower-order terms $-2\alpha_2\Mstar+k(\Mstar)^2$.}

\subsection{Deep linear network}
\begin{theorem}\label{thm:dl}
Under \cref{ass:dl}, for every layer $i\in\{1,\dots,L\}$,
\begin{equation}\label{eq:dl-main}
\E[\bm{G}_i\bm{G}_i^\top]=V_i\,s_i\,\bm{I}_{d_i}+\bm{T}_i,\qquad
\bm{T}_i=\begin{cases}\bigl(\textstyle\prod_{j=1}^{L-1}d_j\bigr)\bm{\Phi}\bm{\Phi}^\top,&i=L,\\[3pt]
\bigl(\textstyle\prod_{j\neq i,\,1\le j\le L-1}d_j\bigr)\norm{\bm{\Phi}}_\Frob^2\,\bm{I}_{d_i},&1\le i<L,\end{cases}
\end{equation}
where $V_i,s_i$ are positive dimension-only scalars given by two-step recursions in $(d_0,\dots,d_L)$ (\cref{app:dl}), with
\begin{equation}\label{eq:dl-order}
V_is_i=\begin{cases}\Theta\bigl(d_i\prod_{j=1,\,j\neq i}^{L-1}d_j^2\bigr),&1\le i<L,\\[3pt]
\Theta\bigl(\prod_{j=1}^{L-1}d_j^2\bigr),&i=L.\end{cases}
\end{equation}
\end{theorem}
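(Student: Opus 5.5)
We compute $\E[\bm{G}_i\bm{G}_i^\top]$ with $\bm{G}_i=\bm{A}_i^\top\bm{R}\bm{B}_i^\top$ and $\bm{R}=\bm{A}_i\bm{W}_i\bm{B}_i-\bm{\Phi}$, exploiting that $\bm{A}_i$, $\bm{W}_i$, $\bm{B}_i$ are independent. Expanding,
\[
\bm{G}_i\bm{G}_i^\top=\bm{A}_i^\top(\bm{A}_i\bm{W}_i\bm{B}_i-\bm{\Phi})\bm{B}_i^\top\bm{B}_i(\bm{A}_i\bm{W}_i\bm{B}_i-\bm{\Phi})^\top\bm{A}_i,
\]
which gives a quadratic term in $\bm{W}_i$, two cross terms linear in $\bm{W}_i$, and a pure target term. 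Taking $\E_{\bm{W}_i}$ first kills the cross terms, since $\bm{W}_i$ is centered and independent of $\bm{A}_i,\bm{B}_i$. For the quadratic term, $\E[\bm{W}_i\bm{M}\bm{W}_i^\top]=\tr(\bm{M})\bm{I}_{d_i}$. This yields $\bm{A}_i^\top\bm{A}_i\bm{A}_i^\top\bm{A}_i\,\tr\bigl((\bm{B}_i\bm{B}_i^\top)^2\bigr)$. By independence of $\bm{A}_i$ and $\bm{B}_i$, the expectation factors as $\E[(\bm{A}_i^\top\bm{A}_i)^2]\cdot\E\tr\bigl((\bm{B}_i^\top\bm{B}_i)^2\bigr)$. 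The target term is $\E[\bm{A}_i^\top\bm{\Phi}\,\E(\bm{B}_i^\top\bm{B}_i)\,\bm{\Phi}^\top\bm{A}_i]$.

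\textbf{Step 1 (moments of products of Gaussians).} We need $\E[\bm{B}_i^\top\bm{B}_i]$, $\E\tr\bigl((\bm{B}_i\bm{B}_i^\top)^2\bigr)$, $\E[\bm{A}_i^\top\bm{N}\bm{A}_i]$ for deterministic $\bm{N}$, and $\E[(\bm{A}_i^\top\bm{A}_i)^2]$. By rotation invariance, peel one factor at a time. For $\bm{B}_{i}=\bm{W}_{i-1}\bm{B}_{i-1}$ we have $\E[\bm{B}_i^\top\bm{B}_i]=d_{i-1}\E[\bm{B}_{i-1}^\top\bm{B}_{i-1}]$, so $\E[\bm{B}_i^\top\bm{B}_i]=\prod_{j=1}^{i-1}d_j\,\bm{I}_{d_0}$. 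Similarly, $\E[\bm{A}_i^\top\bm{N}\bm{A}_i]=\tr(\bm{N})\prod_{j=i+1}^{L-1}d_j\,\bm{I}_{d_i}$ for $i<L$, and $\bm{A}_L=\bm{I}$.

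For the fourth moments, the Wishart identity $\E[\bm{W}^\top\bm{M}\bm{W}\bm{W}^\top\bm{M}\bm{W}]$ with $\bm{W}\in\R^{p\times q}$ equals $\bigl(\tr(\bm{M})^2+\tr(\bm{M}^2)\bigr)\bm{I}_q+\tr(\bm{M}^2)\,q$-type terms. More cleanly, for $\bm{M}$ symmetric it is $\tr(\bm{M}^2)(q+1)\bm{I}+\tr(\bm{M})^2\bm{I}$ after averaging by invariance. This gives a two-step recursion for the pair $\bigl(\E\tr(\bm{M}^2),\E(\tr\bm{M})^2\bigr)$ as each factor is added. That pair recursion is precisely the "two-step recursion" defining $s_i$ (from $\bm{B}_i$, yielding $\E\tr((\bm{B}_i\bm{B}_i^\top)^2)$) and $V_i$ (from $\bm{A}_i$, yielding $\E[(\bm{A}_i^\top\bm{A}_i)^2]=V_i\bm{I}$, isotropic by invariance). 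The base cases are $\bm{A}_L=\bm{I}$ and $\bm{B}_1=\bm{I}$.

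\textbf{Step 2 (assemble).} Substituting gives $\E[\bm{G}_i\bm{G}_i^\top]=V_is_i\bm{I}+\bm{T}_i$. For $i=L$, the target term is $\bm{\Phi}\,\E[\bm{B}_L^\top\bm{B}_L]\,\bm{\Phi}^\top=\prod_{j=1}^{L-1}d_j\,\bm{\Phi}\bm{\Phi}^\top$. For $i<L$, it is $\prod_{j<i}d_j\cdot\tr(\bm{\Phi}\bm{\Phi}^\top)\prod_{i<j\le L-1}d_j\,\bm{I}$, matching $\bm{T}_i$; the factor $d_0$ cancels as expected since the output index $d_L$ is not a hidden dimension. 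We should check the cross terms for sign: they vanish, so $\bm{T}_i$ has a plus sign as stated.

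\textbf{Step 3 (orders).} Unrolling the recursion, each factor multiplies the leading part of $\E\tr(\bm{M}^2)$ by roughly $d_j^2$ (from the $\tr(\bm{M})^2$ term), with lower-order corrections, and all coefficients are positive. Hence $s_i=\Theta\bigl(d_0\prod_{j<i}d_j^2\bigr)$ and $V_i=\Theta\bigl(d_i\prod_{i<j\le L-1}d_j^2\cdot d_L\bigr)$, where the $d_i$ factor comes from the $(q+1)$ term at the first peeled factor. Treating $d_0$ and $d_L$ as fixed constants gives \eqref{eq:dl-order}.

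\textbf{Main obstacle.} The delicate step is getting the fourth-moment recursion exactly right, namely the Wishart identity for $\E[\bm{W}^\top\bm{M}\bm{W}\bm{W}^\top\bm{M}\bm{W}]$ with general $\bm{M}$. I would first verify it for scalar $\bm{M}$ and then use invariance under $\bm{M}\mapsto\bm{U}\bm{M}\bm{U}^\top$ to reduce to diagonal $\bm{M}$, computing it entrywise with Isserlis' theorem.
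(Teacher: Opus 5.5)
Your proposal is correct and is essentially the paper's proof. The same four steps appear there as \cref{lem:dla-Wexp,lem:dla-Bmom,lem:dla-Amom,lem:dla-Phi}: averaging over $\bm{W}_i$ first to kill the cross terms, factoring by independence of $\bm{A}_i$ and $\bm{B}_i$, propagating the pair $\bigl(\E\tr\bm{M}^2,\E(\tr\bm{M})^2\bigr)$ one factor at a time through the fourth-moment identity, and iterating $\E[\bm{X}^\top\bm{N}\bm{X}]=\tr(\bm{N})\bm{I}$ for $\bm{T}_i$.

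Two small corrections. First, your labels are swapped relative to the paper: there $V_i=\E\norm{\bm{B}_i\bm{B}_i^\top}_\Frob^2$ and $\E[(\bm{A}_i^\top\bm{A}_i)^2]=s_i\bm{I}_{d_i}$, which is harmless since only the product appears. Second, your Step~3 is only a heuristic, whereas the paper proves two-sided bounds by normalizing the coupled recursion so that $v_i/\pi_i^2$ and $V_i/(\pi_i^2\sigma_i)$ stay in $[1,(2L)^L]$. At fixed $d_0,d_L$ you could close it just as well by dropping positive terms for the lower bound and using $\tr(\bm{M}^2)\le(\tr\bm{M})^2$ for the upper bound.
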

For a hidden layer ($i<L$) the off-diagonal block $\bm{T}_i$ is a multiple of $\bm{I}_{d_i}$, so $\E[\bm{G}_i\bm{G}_i^\top]$ is exactly diagonal. At the output layer the diagonal noise $V_Ls_L=\Theta\bigl(\prod_{j=1}^{L-1}d_j^2\bigr)$ grows with the hidden dimensions while the off-diagonal stays proportional to $\bigl(\prod_{j=1}^{L-1}d_j\bigr)(\bm{\Phi}\bm{\Phi}^\top)_{ab}$, so the ratio, of order $\prod_{j=1}^{L-1}d_j$, still diverges. \revtxt{For equal hidden dimensions $d_1=\dots=d_{L-1}=d$ the output-layer diagonal is $\Theta(d^{2(L-1)})$ against an off-diagonal of $\Theta(d^{L-1})$.}

\subsection{Two-layer ReLU network}
Let $\theta\eqdef\angle(\bm{u},\bm{v})$ for independent $\bm{u},\bm{v}\sim\bN(0,\bm{I}_{d_0})$, and set the constants $\kappa_1\eqdef\E\bigl[\tfrac{\sin^2\theta+(\pi-\theta)^2}{4\pi^2}+\tfrac{\sin\theta\cos\theta(\pi-\theta)}{2\pi^2}\bigr]$ and $\kappa_2\eqdef\tfrac{\E[\sin\theta(\pi-\theta)]}{2\pi^2}$.
\begin{theorem}\label{thm:relu}
Under \cref{ass:relu}, with $\bm{H}\eqdef\E_{\bm{x}}[\bm{h}(\bm{x})\bm{h}(\bm{x})^\top]$ and $\alpha\eqdef\E_{\bm{W}_1}[\tr(\bm{H}^2)]$,
\[
\E[\bm{G}_2\bm{G}_2^\top]=\alpha\,\bm{I}_{d_2}+\tfrac{d_1}{4}\bm{\Phi}\bm{\Phi}^\top,\qquad\alpha=\Theta(d_1^2),
\]
and $\E[\bm{G}_1\bm{G}_1^\top]$ is permutation-invariant in the hidden index with diagonal entries
\[
\E[(\bm{G}_1\bm{G}_1^\top)_{aa}]=\tfrac{d_0d_2^2}{4}+d_0d_2\bigl[(d_1-1)\kappa_1+\tfrac12\bigr]+\tfrac{\norm{\bm{\Phi}}_\Frob^2}{4}
\]
and off-diagonal entries $\E[(\bm{G}_1\bm{G}_1^\top)_{ac}]=d_0d_2\,\kappa_2+O(d_2\sqrt{d_0})$ for $a\neq c$. \revtxt{Consequently, for $\bm{G}_1$ the diagonal entries are $\Theta(d_1)$ and the off-diagonal entries $\Theta(1)$, and for $\bm{G}_2$ they are $\Theta(d_1^2)$ and $\Theta(d_1)$.}
\end{theorem}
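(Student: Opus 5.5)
The plan is to compute both Gram matrices by first integrating out $\bm{x}$ in closed form, using the arc-cosine kernel identities for Gaussian inputs. After that, the expectation over $\bm{W}_2$ is an exact Gaussian moment computation. The expectation over $\bm{W}_1$ reduces to averages over the angle $\theta$ between independent Gaussian rows and over the row norms.

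\textbf{Second-layer gradient.} Write $\bm{r}(\bm{x})=\bm{W}_2\bm{h}(\bm{x})-\bm{\Phi}\bm{x}$, so that
\[
\bm{G}_2=\bm{W}_2\bm{H}-\bm{\Phi}\bm{K}^\top,\qquad \bm{K}\eqdef\E_{\bm{x}}[\bm{h}(\bm{x})\bm{x}^\top].
\]
Since $\E[\sigma(\bm{w}^\top\bm{x})\bm{x}]=\tfrac12\bm{w}$, we have $\bm{K}=\tfrac12\bm{W}_1$. Expanding $\bm{G}_2\bm{G}_2^\top$ and taking the expectation over $\bm{W}_2$ first, the cross terms vanish because they are linear in $\bm{W}_2$. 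We also have $\E_{\bm{W}_2}[\bm{W}_2\bm{H}^2\bm{W}_2^\top]=\tr(\bm{H}^2)\bm{I}_{d_2}$. Finally, $\E_{\bm{W}_1}[\bm{\Phi}\bm{K}^\top\bm{K}\bm{\Phi}^\top]=\tfrac14\bm{\Phi}\E[\bm{W}_1^\top\bm{W}_1]\bm{\Phi}^\top=\tfrac{d_1}{4}\bm{\Phi}\bm{\Phi}^\top$. Together these give the displayed identity.

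For $\alpha=\Theta(d_1^2)$, use the arc-cosine kernel
\[
H_{ac}=\tfrac{\norm{\bm{w}_a}\norm{\bm{w}_c}}{2\pi}\bigl(\sin\theta_{ac}+(\pi-\theta_{ac})\cos\theta_{ac}\bigr),\qquad H_{aa}=\tfrac12\norm{\bm{w}_a}^2.
\]
Then $\alpha=\sum_{a,c}\E[H_{ac}^2]$. The $d_1(d_1-1)$ off-diagonal terms each contribute a fixed positive constant times $d_0^2$, because norms and angles of Gaussian vectors are independent. Hence $\alpha=\Theta(d_1^2)$ for fixed $d_0$.

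\textbf{First-layer gradient.} Integrating out $\bm{x}$ row by row gives
\[
(\bm{G}_1)_{a\cdot}=\sum_{c}(\bm{W}_2^\top\bm{W}_2)_{ac}\,\E_{\bm{x}}\bigl[\sigma'(\bm{w}_a^\top\bm{x})\sigma(\bm{w}_c^\top\bm{x})\bm{x}^\top\bigr]-\E_{\bm{x}}\bigl[\sigma'(\bm{w}_a^\top\bm{x})\,(\bm{W}_2^\top\bm{\Phi})_{a\cdot}\bm{x}\,\bm{x}^\top\bigr].
\]
The first expectation has a closed form as a combination of $\bm{w}_a$ and $\bm{w}_c$, with coefficients that are functions of $\theta_{ac}$ and the norm ratio; it follows by differentiating the arc-cosine kernel in $\bm{w}_a$. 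The second expectation is $\tfrac12(\bm{W}_2^\top\bm{\Phi})_{a\cdot}$, since $\E[\sigma'(\bm{w}^\top\bm{x})\bm{x}\bm{x}^\top]=\tfrac12\bm{I}$.

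Next, square and take $\E_{\bm{W}_2}$. Use Wishart moments of $\bm{W}_2^\top\bm{W}_2$:
\begin{itemize}
\item $\E[(\bm{W}_2^\top\bm{W}_2)_{aa}^2]=d_2^2+2d_2$;
\item $\E[(\bm{W}_2^\top\bm{W}_2)_{ac}^2]=d_2$ for $a\ne c$;
\item the mixed products vanish unless the indices pair up.
\end{itemize}
For the diagonal entry, the $c=a$ term gives $\E\norm{\bm{w}_a}^2/4\cdot(d_2^2+\dots)=\tfrac{d_0d_2^2}{4}+\tfrac{d_0d_2}{2}$. Each $c\ne a$ term gives $d_2\cdot d_0\kappa_1$, where $\kappa_1$ is the expected squared norm of the cross-kernel vector divided by $d_0$. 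The $\bm{\Phi}$ term contributes $\tfrac14\norm{\bm{\Phi}}_\Frob^2$ after averaging $\bm{W}_2$. The $\bm{\Phi}$--noise cross terms are odd in $\bm{W}_2$ and vanish. Collecting terms gives the stated formula, which is $\Theta(d_1)$.

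For the off-diagonal entry $(a,c)$, the surviving pairings are those in which the same $\bm{W}_2^\top\bm{W}_2$ entry appears twice. These are the pairs $(a\!\to\!c,\ c\!\to\!a)$, which yield $d_2\cdot\E[\text{inner product of the two cross vectors}]$, with leading part $d_0d_2\kappa_2$. The remaining pairings, through a common third index $e$, have coefficient $\E[(\bm{W}_2^\top\bm{W}_2)_{ae}(\bm{W}_2^\top\bm{W}_2)_{ce}]=0$ for distinct indices, so no $d_1$-growth survives. Permutation invariance in the hidden index follows from exchangeability of the rows of $\bm{W}_1$ and the columns of $\bm{W}_2$.

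\textbf{Main obstacle.} I expect the hardest step to be the exact evaluation of the first-layer cross vectors and their inner products over $\bm{W}_1$. This requires separating the norm and angle dependence of $\E_{\bm{x}}[\sigma'(\bm{w}_a^\top\bm{x})\sigma(\bm{w}_c^\top\bm{x})\bm{x}]$ and averaging it to produce exactly $\kappa_1,\kappa_2$. The residual $O(d_2\sqrt{d_0})$ term comes from the correlation between norms and the $\bm{\Phi}$-dependent pieces. I would bound it by Cauchy--Schwarz rather than compute it.
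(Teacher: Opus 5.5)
Your route is the paper's: the split $\bm{G}_1=\bm{G}_1^{(1)}-\bm{G}_1^{(2)}$ with $\bm{G}_1^{(2)}=\tfrac12\bm{W}_2^\top\bm{\Phi}$, the cross vector $F_b(\bm{w}_a,\bm{w}_c)$ obtained by differentiating the Cho--Saul kernel, and the Wick formula $\E[(\bm{W}_2^\top\bm{W}_2)_{ak}(\bm{W}_2^\top\bm{W}_2)_{cl}]=d_2^2\delta_{ak}\delta_{cl}+d_2\delta_{ac}\delta_{kl}+d_2\delta_{al}\delta_{ck}$. Your treatment of $\bm{G}_2$ and of the diagonal of $\E[\bm{G}_1\bm{G}_1^\top]$ is correct.

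The off-diagonal step, however, misidentifies the surviving pairings. For $a\neq c$ the Wick formula leaves two terms, not one. Besides $(k,l)=(c,a)$ with coefficient $d_2$, there is $(k,l)=(a,c)$ with $\E[(\bm{W}_2^\top\bm{W}_2)_{aa}(\bm{W}_2^\top\bm{W}_2)_{cc}]=d_2^2\neq0$. So your claim that only pairings repeating the same entry survive is false. The omitted term is $d_2^2\sum_bF_b(\bm{w}_a,\bm{w}_a)F_b(\bm{w}_c,\bm{w}_c)=\tfrac{d_2^2}{4}\bm{w}_a^\top\bm{w}_c$. It is the only off-diagonal contribution of order $d_2^2$, and it does not vanish conditionally on $\bm{W}_1$. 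It disappears only after averaging over $\bm{W}_1$, because distinct rows are independent and centered. Your final formula survives, but only once this step is added.

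Your account of the error term is also wrong. The $\bm{\Phi}$-part contributes exactly $\tfrac14\norm{\bm{\Phi}}_\Frob^2\bm{I}_{d_1}$, which is diagonal. The row norms are independent of the angles. So neither produces an off-diagonal residual. The residual sits inside the surviving $d_2$ term. To see it, and to identify $\kappa_2$ at all (which your proposal asserts without deriving), you need the explicit inner product
\[
\sum_bF_b(\bm{w}_a,\bm{w}_c)F_b(\bm{w}_c,\bm{w}_a)=\frac{\norm{\bm{w}_a}\norm{\bm{w}_c}}{4\pi^2}\Bigl[\cos\theta\bigl(\sin^2\theta+(\pi-\theta)^2\bigr)+2\sin\theta(\pi-\theta)\Bigr],\qquad\theta=\theta_{ac}.
\]
The $2\sin\theta(\pi-\theta)$ part gives $d_0d_2\kappa_2+O(d_2)$, where the $O(d_2)$ comes from $\E[\norm{\bm{w}_a}\norm{\bm{w}_c}]=d_0+O(1)$. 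The $\cos\theta$ part is $O(d_2\sqrt{d_0})$ by Cauchy--Schwarz with $\E[\cos^2\theta]=1/d_0$. So Cauchy--Schwarz is the right tool, but it has to be applied to this $\cos\theta$ factor, not to $\bm{\Phi}$-dependent pieces.
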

\revtxt{As the hidden dimension $d_1$ grows, the ratio of diagonal to off-diagonal entries is therefore $\Theta(d_1)$ for both layers. For $\bm{G}_2$ the off-diagonal $\tfrac{d_1}{4}(\bm{\Phi}\bm{\Phi}^\top)_{ac}$ comes only from the target, as at the output layer of \cref{thm:dl}. For $\bm{G}_1$ the off-diagonal $d_0d_2\kappa_2$ is target-free:} it comes from a single term in the proof and traces back to ReLU's lack of odd symmetry, which is why the dominance here is asymptotic rather than exact. Simulations match every formula (\cref{app:sim-relu}).

\begin{addblock}
\subsection{Single-layer linear attention}
\begin{theorem}\label{thm:att}
Under \cref{ass:att},
\begin{equation}\label{eq:att-main}
\begin{gathered}
\E[\GQ\GQ^\top]=c_Q\,\bm{I}_{d_k},\qquad
\E[\GK\GK^\top]=c_K\,\bm{I}_{d_k},\qquad
\E[\GV\GV^\top]=c_V\,\bm{I}_{d_v},\\
\E[\GO\GO^\top]=\bar\alpha_O\,\bm{I}_{d_2}+d_v\,\gamma\;\bm{\Phi}\bm{\Phi}^\top,\\
\gamma=\tfrac1{d_0}\E[c_2^2]=T\bigl[(Td_0{+}8)(T{+}d_0)^2+2(Td_0{+}10)(T{+}d_0)+5(Td_0{+}4)\bigr],
\end{gathered}
\end{equation}
where $c_Q,c_K,c_V,\bar\alpha_O$ are explicit positive polynomials in the dimensions and $\norm{\bm{\Phi}}_\Frob^2$ (\cref{app:att}); for fixed $\bm{\Phi}$,
\[
c_Q\asymp c_K=\Theta\bigl(d_2^2d_v^2d_0^6/d_k\bigr),\qquad c_V=\Theta\bigl(d_2(d_2{+}d_v)\,d_0^6\bigr),\qquad \bar\alpha_O=\Theta\bigl(d_v^2d_0^6\bigr).
\]
\end{theorem}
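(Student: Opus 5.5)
The plan is to exploit the independence and sign-symmetry of the four Gaussian weight matrices: condition on everything except the weight whose rows index $\bm{G}\bm{G}^\top$ (or which multiplies it from the left), and use rotational invariance of that weight to show the conditional second moment is a multiple of the identity.

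\textbf{Query, key and value.} For $\GQ=\tfrac{1}{\sqrt{d_k}}\WK\bm{M}$ with $\bm{M}=\bm{X}^\top\bm{Z}\WO^\top\bm{R}^\top\bm{X}$, note that $\bm{R}$ depends on $\WK$ only through $\WQ^\top\WK$. So the joint law of $(\WQ,\WK)$ is invariant under $(\WQ,\WK)\mapsto(\bm{U}\WQ,\bm{U}\WK)$ for any orthogonal $\bm{U}\in\R^{d_k\times d_k}$. This map leaves $\bm{A}$, $\bm{R}$ and $\bm{M}$ unchanged and sends $\GQ\GQ^\top\mapsto\bm{U}\GQ\GQ^\top\bm{U}^\top$. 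Hence $\E[\GQ\GQ^\top]$ commutes with every orthogonal matrix and equals $c_Q\bm{I}_{d_k}$ with $c_Q=\E\norm{\GQ}_\Frob^2/d_k$. The same argument with the roles swapped gives the result for $\GK$. For $\GV=\WO^\top\bm{R}^\top\bm{A}\bm{X}$ the joint law is invariant under $(\WV,\WO)\mapsto(\bm{U}\WV,\WO\bm{U}^\top)$, which fixes $\WO^\top\WV$ and therefore $\bm{R}$, giving $c_V\bm{I}_{d_v}$.

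\textbf{Output projection.} For $\GO=\bm{R}^\top\bm{A}\bm{Z}$, write $\bm{R}^\top=\WO\bm{N}-\bm{\Phi}\bm{X}^\top$ with $\bm{N}=\bm{Z}^\top\bm{A}^\top$. Then
\[
\GO\GO^\top=(\WO\bm{N}-\bm{\Phi}\bm{X}^\top)\bm{K}(\WO\bm{N}-\bm{\Phi}\bm{X}^\top)^\top,\qquad \bm{K}=\bm{A}\bm{Z}\bm{Z}^\top\bm{A}^\top .
\]
Integrating out $\WO$ first:
\begin{itemize}[leftmargin=1.4em,itemsep=1pt]
\item The cross terms are linear in $\WO$ and vanish.
\item The quadratic term gives $\tr(\bm{N}\bm{K}\bm{N}^\top)\bm{I}_{d_2}$.
\item The target term is $\bm{\Phi}\,\E[\bm{X}^\top\bm{K}\bm{X}]\,\bm{\Phi}^\top$.
\end{itemize}
Next integrate out $\WQ,\WK,\WV$ given $\bm{X}$. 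Since $\E[\WV^\top\WV]=d_v\bm{I}$, we get $\E[\bm{Z}\bm{Z}^\top]=d_v\bm{X}\bm{X}^\top$. Also $\E[\WQ^\top\WK\bm{C}\WK^\top\WQ]=d_k\,c_1\bm{I}$, hence $\E_{Q,K}[\bm{A}\bm{B}\bm{A}^\top]=\bm{X}\bm{X}^\top\bm{B}\bm{X}\bm{X}^\top$-type contractions reduce to traces. Rotation invariance of $\bm{X}$'s rows then shows $\E[\bm{X}^\top\bm{K}\bm{X}]$ is a multiple of $\bm{I}_{d_0}$ whose value is $d_v\E[\tr(\cdot)]/d_0$. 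Tracking the contractions should reduce it to $d_v\E[c_2^2]/d_0$, which gives $d_v\gamma$. This identification is worth checking by a direct index computation. The identity part collects $\E\tr(\bm{N}\bm{K}\bm{N}^\top)$ into $\bar\alpha_O$.

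\textbf{Explicit constants and orders.} Each scalar ($c_Q$, $c_K$, $c_V$, $\bar\alpha_O$, $\gamma$) becomes a polynomial in Wishart moments. Gaussian weight moments reduce them to expectations of products of $c_p=\tr(\bm{C}^p)$ with $\bm{C}\sim\mathcal{W}_{d_0}(T,\bm{I})$. I would evaluate these using the standard Wishart moment formulas for $\E[c_2^2]$, $\E[c_1c_3]$, $\E[c_6]$, $\E[c_3^2]$, and so on, where degree six in $\bm{C}$ is the top degree. This gives $\gamma$ as stated, and the leading powers $d_0^6$ come from products of traces scaling like $T^{\cdot}d_0^{\cdot}$.

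The main obstacle is bookkeeping in $c_Q$, $c_V$ and $\bar\alpha_O$. Each involves $\bm{R}$ twice, hence up to eighth-order Gaussian moments in the weights, mixed with degree-six Wishart moments. I would expand $\bm{R}=\bm{R}_{\mathrm{net}}-\bm{X}\bm{\Phi}^\top$ and note that terms odd in $\WO$ vanish. I would keep only the leading-order Wick pairings to read off the $\Theta$ rates, for example $c_Q=\Theta(d_2^2d_v^2d_0^6/d_k)$ from the $\WO\WO^\top\WO\WO^\top$ pairing contributing $d_2^2$ and the $\WV$ pairing contributing $d_v^2$. The exact polynomials follow from a full Isserlis expansion, which is routine but long.
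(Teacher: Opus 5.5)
You have the structural half right, and it is essentially the paper's argument. The paper also gets the isotropy of $\GQ,\GK,\GV$ from the invariance of $\WQ^\top\WK$ and $\WO\WV$. It uses sign flips and permutations instead of the full orthogonal group; that is equally valid here and extends to any sign- and permutation-invariant initialization. (For $\GV$ the invariant product is $\WO\WV$, not $\WO^\top\WV$.) The paper also treats $\GO$ by exactly your expansion.

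The target term, however, is not established by what you wrote. The identity $\E[\WQ^\top\WK\bm{C}\WK^\top\WQ]=d_kc_1\bm{I}$ is not the one that enters, and the contraction is not of $\bm{X}\bm{X}^\top\bm{B}\bm{X}\bm{X}^\top$ type. For $\bm{B}$ independent of $\WQ,\WK$, averaging over $\WQ,\WK$ gives $\E[\bm{A}\bm{B}\bm{A}^\top\mid\bm{X},\bm{B}]=\tr(\bm{B}\bm{X}\bm{X}^\top)\,\bm{X}\bm{X}^\top$. Concretely, $\bm{X}^\top\bm{A}\bm{X}\bm{X}^\top\bm{A}^\top\bm{X}=\tfrac1{d_k}\bm{C}\WQ^\top\WK\bm{C}^2\WK^\top\WQ\bm{C}$, so you need $\E[\WK\bm{C}^2\WK^\top]=c_2\bm{I}_{d_k}$ and $\E[\WQ^\top\WQ]=d_k\bm{I}_{d_0}$. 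These give $\E[\bm{X}^\top\bm{K}\bm{X}\mid\bm{X}]=d_v\,c_2\,\bm{C}^2$. Only then does $O(d_0)$-invariance of $\bm{C}$ give $\E[c_2\bm{C}^2]=\gamma\bm{I}_{d_0}$ with $\gamma=\E[c_2^2]/d_0$. Used as written, your identity would put $c_1$ where $c_2$ belongs. So the value $d_v\gamma$ you state is right, but your sketch does not produce it.

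The more serious gap is the orders. Keeping only the leading Wick pairing does not prove the $\Theta$ statements. They are two-sided bounds uniform in $d_k,d_v,d_2,d_0\ge1$ (for fixed $T$ and $\norm{\bm{\Phi}}_\Frob$), and different pairings dominate in different regimes.
\begin{itemize}
\item $c_V$ inherits $\E[(\WO^\top\WO)^2]=d_2(d_2{+}d_v{+}1)\bm{I}_{d_v}$, so neither the $d_2^2$ nor the $d_2d_v$ pairing alone yields $d_2(d_2{+}d_v)$.
\item In $c_Q$ and $c_K$ you must show the $1/d_k^2$ pieces are at most a constant multiple of the $1/d_k$ piece. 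That means comparing data moments such as $\E[c_2^3]$ or $\E[c_1^2c_4]$ with $\E[c_2c_4]$.
\item A lower bound also needs an argument that no contributions cancel.
\end{itemize}

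The paper closes this by computing every constant exactly, one weight matrix at a time, conditionally on the others. Each step then needs only a fourth-moment identity for a single Gaussian matrix, instead of a joint Isserlis expansion. For $\bar\alpha_O$ the sequence is: $\WO$ first; then $\WV$, via $\E[\tr(\bm{M}\bm{H}_V\bm{M}\bm{H}_V)]=d_v(d_v{+}1)\tr(\bm{M}^2)+d_v(\tr\bm{M})^2$ for $\bm{H}_V=\WV^\top\WV$ and symmetric $\bm{M}$ independent of $\WV$; then $\WK$ conditionally on $\WQ\bm{C}\WQ^\top$; then $\WQ$.

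The outcome is that each of $c_Q,c_K,c_V,\bar\alpha_O$ is a combination of $\gamma$ and four positive degree-six moments, $\E[c_1^2c_4],\E[c_2c_4],\E[c_2^3],\E[c_1^2c_2^2]$. The coefficients are nonnegative and depend only on the dimensions and $\norm{\bm{\Phi}}_\Frob^2$. The moments you guessed, $\E[c_6]$ and $\E[c_3^2]$, never appear, because the weight averages only ever produce $c_1,c_2,c_4$. Positivity of every term then gives the lower bound from the leading term and bounds each remaining term by a constant multiple of it. The $d_0^6$ rate is the $d_0$-degree of these moments. The paper derives it, together with the closed form of $\E[c_2^2]$ behind $\gamma$, from a Gaussian integration-by-parts recursion for the trace moments $\E[\prod_i c_{m_i}]$.
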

For $\WQ,\WK,\WV$ the off-diagonal is exactly zero (\cref{lem:atta-iso}). For $\WO$, as at the output layer of \cref{thm:dl}, the off-diagonal $d_v\gamma(\bm{\Phi}\bm{\Phi}^\top)_{ab}=\Theta(d_vd_0^3)$ comes only from the target while the diagonal $\bar\alpha_O=\Theta(d_v^2d_0^6)$ is target-free noise, so the ratio $\Theta(d_vd_0^3)$ diverges with $d_v$ and, faster, with $d_0$; $d_k$ does not affect it. With $d_0=d_2$ equal to the model width (\cref{sec:setup}), the ratio grows like the cube of the model width (\cref{sec:llm}).
\end{addblock}

\subsection{Summary and proof sketch}
In all \addtxt{four} settings the diagonal of $\E[\bm{G}\bm{G}^\top]$ outgrows the off-diagonal as the hidden dimension grows (\cref{tab:orders}), so $\bm{P}_t=\bm{V}_t\bm{V}_t^\top$ is asymptotically diagonal, $H_{\mathrm{Muon}}=\bm{P}_t^{1/2}\kron\bm{I}_n$ is asymptotically block-diagonal in the Hessian's row-block structure, and (\cref{sec:equiv}) orthogonalization is asymptotically a row-wise normalization.

\begin{table}[H]
\centering
\fbox{\begin{minipage}{\dimexpr\linewidth-2\fboxsep-2\fboxrule\relax}\smallskip\centering
\small
\renewcommand{\arraystretch}{1.25}\setlength{\tabcolsep}{6pt}
\begin{tabular}{@{}llll@{}}
\toprule
Model & Diagonal & Off-diagonal & Asymptotic dims\\
\midrule
Matrix factorization & $\Theta(k^3)$ & $\Theta(k^2)$ & $k$\\
Deep linear, hidden layer $i$ & $\Theta\bigl(d_i\prod_{j=1,\,j\neq i}^{L-1}d_j^2\bigr)$ & $0$ & $d_1,\dots,d_{L-1}$\\
Deep linear, output layer & $\Theta\bigl(\prod_{j=1}^{L-1}d_j^2\bigr)$ & $\Theta\bigl(\prod_{j=1}^{L-1}d_j\bigr)$ & $d_1,\dots,d_{L-1}$\\
Two-layer ReLU, layer $\bm{W}_1$ & $\Theta(d_1)$ & $\Theta(1)$ & $d_1$\\
Two-layer ReLU, layer $\bm{W}_2$ & $\Theta(d_1^2)$ & $\Theta(d_1)$ & $d_1$\\
\addtxt{Linear attention, $\WQ,\WK$} & \addtxt{$\Theta(d_2^2d_v^2d_0^6/d_k)$} & \addtxt{$0$} & \addtxt{$d_v,d_0,d_2$}\\
\addtxt{Linear attention, $\WV$} & \addtxt{$\Theta(d_2(d_2{+}d_v)d_0^6)$} & \addtxt{$0$} & \addtxt{$d_v,d_0,d_2$}\\
\addtxt{Linear attention, $\WO$} & \addtxt{$\Theta(d_v^2d_0^6)$} & \addtxt{$\Theta(d_vd_0^3)$} & \addtxt{$d_v,d_0$}\\
\bottomrule
\end{tabular}
\smallskip\end{minipage}}
\caption{Summary of the orders of $\E[\bm{G}\bm{G}^\top]$ at a Gaussian initialization in the hidden dimensions of each model and, for attention, the token dimensions $d_0,d_2$.}
\label{tab:orders}
\end{table}

\begin{revblock}
The four proofs share a similar strategy. Each row of $\bm{G}$ is a random vector, and the $(a,b)$ entry of $\bm{G}\bm{G}^\top$ is the inner product of rows $a$ and $b$, a measure of their correlation. A diagonal entry is the correlation of a row with itself, its squared norm, which adds up over all hidden coordinates. An off-diagonal entry is the correlation between two different rows, which are built from independent Gaussian vectors at initialization, so most of its terms cancel in expectation and only the lower-order terms carried by the fixed target remain. The gap between the orders of the diagonal and off-diagonal entries in \cref{tab:orders} comes from this difference. In the proofs we compute these statistics of $\bm{G}\bm{G}^\top$ exactly, using moments of the Wishart distribution for matrix factorization, deep linear networks and linear attention, and Stein's lemma together with the arc-cosine kernel of \citet{cho2009kernel} for the ReLU network, whose nonlinearity is not a polynomial. Each closed form is reproduced by the simulations in \cref{sec:empirical-verification}.
\end{revblock}

\section{Simulation of Theoretical Results}
\label{sec:empirical-verification}

\paragraph{Simulation setup.}
We verify the dimension scalings of \cref{thm:mf,thm:dl,thm:relu,thm:att} under \cref{ass:mf,ass:dl,ass:relu,ass:att} by varying the model dimensions. For each model, we vary the
dimensions in the corresponding theoretical results and examine the
resulting structure of the expected gradient self outer-product $\bm{P}\eqdef\E[\bm{G}\bm{G}^\top]\in\R^{m\times m}$, that is, the expectation of $\bm{P}_0$ in \eqref{eq:Hmuon}. For this matrix, we summarize the mean diagonal magnitude, the mean off-diagonal magnitude, and their ratio,
\begin{equation}
D(\bm{P})=\frac{1}{m}\sum_{i=1}^{m}|P_{ii}|,\qquad
\mathrm{OD}(\bm{P})=\frac{1}{m(m-1)}\sum_{i\neq j}|P_{ij}|,\qquad
R(\bm{P})=\frac{D(\bm{P})}{\mathrm{OD}(\bm{P})},
\label{eq:main-DOR}
\end{equation}
We estimate $\bm{P}$ by $\widehat{\E}[\bm{G}\bm{G}^\top]=\frac1S\sum_{s=1}^{S}\bm{G}^{(s)}\bm{G}^{(s)\top}$ over $S$ independent initializations; $\widehat D,\widehat{\mathrm{OD}},\widehat R$ denote these estimates. Where the expected off-diagonal is exactly zero, the nonzero $\widehat{\mathrm{OD}}$ is Monte Carlo variance (details in \cref{app:empirical-verification}). We next apply this procedure to matrix factorization, deep linear networks, two-layer ReLU networks, and single-layer linear attention, varying in each
case the dimensions governing the corresponding theoretical scaling.

\paragraph{Symmetric matrix factorization.}
\label{sec:sim-mf}
We begin with symmetric matrix factorization, varying $k\in\{5,\allowbreak 10,\allowbreak 20,\allowbreak 40,\allowbreak 80,\allowbreak 160,\allowbreak 320,\allowbreak 640\}$ with $d=25$ and a fixed rank-5 target, using $S=10{,}000$ initializations for each $k$. In \cref{fig:main-mf-width-scaling} (left), $\widehat R$ increases with $k$, in line with $R=\Theta(k)$ from \cref{thm:mf}; $\widehat D$ and $\widehat{\mathrm{OD}}$ follow $\Theta(k^3)$ and $\Theta(k^2)$ (\cref{app:sim-mf}).  

\begin{figure}[tb]
    \centering
    \includegraphics[width=\linewidth]{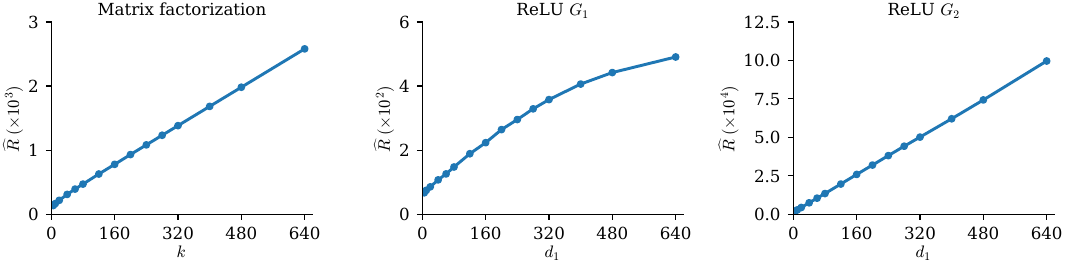}
    \caption{Diagonal-dominance ratio $\widehat R$ for symmetric matrix factorization against $k$ (left; $d=25$, rank-5 target) and for the two-layer ReLU network against $d_1$ (right; $d_0=512$, $d_2=12$).}
    \label{fig:main-mf-width-scaling}
\end{figure}

\paragraph{Deep linear network.}
\label{sec:empirical-dl}
We next consider a depth-$4$ linear network with baseline dimensions $(d_0,\dots,d_4)=(512,80,80,80,512)$. We vary each hidden
dimension $d_1,d_2,d_3$ in turn over the same grid while holding the remaining dimensions fixed, using $S=10{,}000$ initializations per
configuration. In \cref{fig:main-dl-dimension-scaling}, $\widehat R$ increases
with every hidden dimension for all layers. For the output layer, \cref{thm:dl} gives
$D=\Theta((d_1d_2d_3)^2)$ and
$\mathrm{OD}=\Theta(d_1d_2d_3)$, and hence
$R=\Theta(d_1d_2d_3)$; accordingly, $\widehat R$ grows approximately linearly when any one hidden dimension is varied. For the hidden layers, the expected off-diagonal is zero (\cref{app:sim-dl}).

\begin{figure}[tb]
    \centering
    \includegraphics[width=0.8\linewidth]{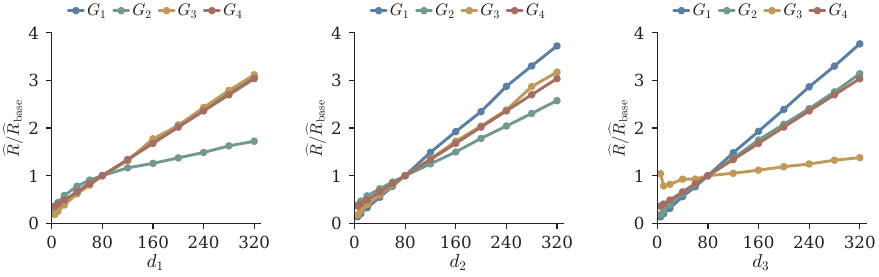}
    \caption{Diagonal-dominance ratio $\widehat R$ of $\bm{G}_1,\dots,\bm{G}_4$ for the depth-$4$ linear network when $d_1$, $d_2$ or $d_3$ is varied (left to right). Each curve is normalized by its value at the baseline
    ($d_1=d_2=d_3=80$) to facilitate comparison across layers with different absolute scales.}
    \label{fig:main-dl-dimension-scaling}
\end{figure}

\paragraph{Two-layer ReLU network.}
\label{sec:empirical-relu}
We then study the two-layer ReLU network, fixing $d_0=512$ and $d_2=12$ and varying the hidden width $d_1$ over the same grid. For each
configuration, we use $S=10{,}000$ weight initializations and estimate the population gradient using $128{,}000$ Gaussian inputs. In \cref{fig:main-mf-width-scaling} (right), $\widehat R$ increases with $d_1$ for both layers, matching $R=\Theta(d_1)$ in \cref{thm:relu} (\cref{app:sim-relu}).

\paragraph{Single-layer linear attention.}
\label{sec:empirical-attention}
Finally, we consider single-layer linear attention with $T=32$, $d_0=512$, and $d_2=12$. We examine four dimension sweeps: $d_k=d_v$ jointly, $d_k$ with $d_v=80$, $d_v$ with $d_k=80$, and $d_0=d_2$ jointly. In \cref{fig:main-attention-dimension-scaling}, $\widehat R$ of $\GO$ grows approximately linearly with $d_v$, is nearly unchanged with $d_k$, and grows superlinearly with $d_0$, consistent with $R=\Theta(d_vd_0^3)$ from \cref{thm:att}. For $\GQ,\GK,\GV$, the expected off-diagonal entries are zero. Their finite $\widehat R$ values nevertheless follow the corresponding diagonal trends: they increase with $d_k=d_v$, increase with $d_v$,
and grow strongly with $d_0=d_2$, while $\GQ$ and $\GK$ decrease with $d_k$, consistent with
$D=\Theta(d_2^2d_v^2d_0^6/d_k)$ (\cref{app:sim-attention}).

\begin{figure}[tb]
    \centering
    \includegraphics[width=\linewidth]{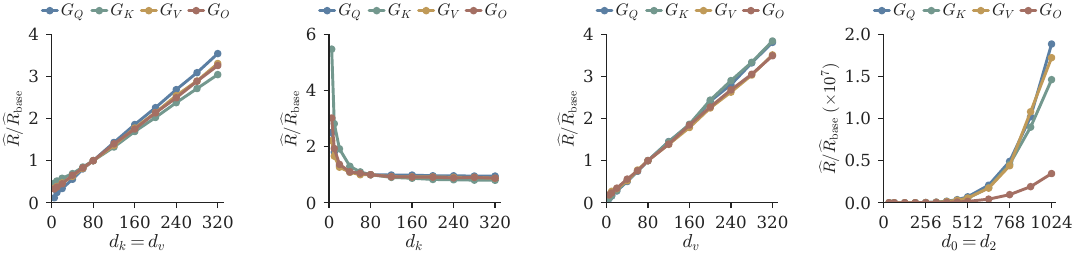}
    \caption{Diagonal-dominance ratio $\widehat R$ of the four attention projections when $d_k=d_v$, $d_k$, $d_v$, and $d_0=d_2$ are varied (left to right). Each curve is normalized by its value at the baseline (first three: $T=32$, $d_0=512$, $d_k=d_v=80$, $d_2=12$; fourth: $T=16, d_0=d_2=32, d_k=d_v=20$) to
    facilitate comparison across projections with different absolute scales.}
    \label{fig:main-attention-dimension-scaling}
\end{figure}

\section{Empirical Evaluation on Large Language Models}
\label{sec:llm}

We next test whether the diagonal dominance of \cref{thm:att} persists in Transformer-based language models at random initialization: Qwen3.5-0.8B-Base~\citep{qwen35}, Granite-4.0-1B-Base~\citep{granite4} and Llama-3.2-1B~\citep{llama32}. \Cref{fig:main} (right) shows the empirical $\widehat{\E}[\bm{G}\bm{G}^\top]$ of the four projections in Qwen3.5, whose mass concentrates on the diagonal. For the first head of the first eligible attention block of each model we vary the model width $d_0=d_2$ at fixed head dimension $d_k=d_v$, using $25$ random initializations and $64$ fixed FineWeb-Edu~\citep{penedo2024the} sequences per configuration (details in \cref{app:llm-width-scaling}). \Cref{fig:llm-d0d2-scaling} shows $\widehat R$ increasing with model width for all four projections in all three models, as the $\Theta(d_vd_0^3)$ ratio of \cref{thm:att} predicts.

\begin{figure}[tb]
    \centering
    \includegraphics[width=0.95\linewidth]
    {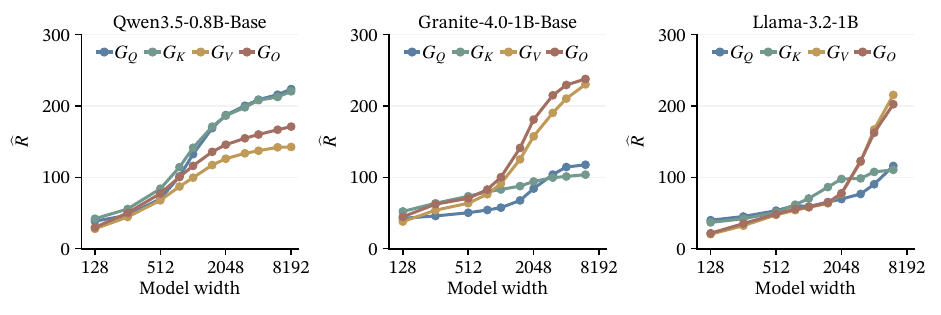}
    \caption{
    Model width scaling of \(\widehat R\) for the query, key, value, and output projections in the first head of the first eligible
    attention block of the three language models at random initialization.
    }
    \label{fig:llm-d0d2-scaling}
\end{figure}

\section{Conclusion and Limitations}\label{sec:conclusion}
\paragraph{Conclusion.}
This work provides an asymptotic theory that shows Muon's implicit preconditioner matches the row-block diagonally dominant structure of the neural-network Hessian, because $\E[\bm{G}\bm{G}^\top]$ itself becomes diagonally dominant as the weight matrices are overparameterized, at the explicit rates of \cref{tab:orders}. Specifically, in matrix factorization the isotropic $\Theta(k^3)$ diagonal outgrows the target-driven $\Theta(k^2)$ off-diagonal. In deep linear networks the hidden layers are exactly diagonal, and the ratio at the output layer grows with the product of all hidden dimensions. In the two-layer ReLU network the nonlinearity leaves an off-diagonal of constant order, yet the ratio still grows linearly with the hidden dimension. In linear attention the query, key and value weights are exactly diagonal. For the output weight the ratio $\Theta(d_vd_0^3)$ grows with the head dimension and grows as the cube of the model width, so the dominance strengthens as models get wider. Our experiments on language models of increasing width match this intuition.

\paragraph{Limitations.}
Our theory is stated at initialization and in expectation: it does not characterize the training dynamics of $\bm{V}_t\bm{V}_t^\top$, although the empirical evidence of \citet{deng2026rmnp} indicates that the dominance persists during training. It is also worth studying whether this dominance reflects a degenerate behavior of the model that emerges at large scale. We leave these questions to future work.

\subsection*{Ethics statement}
This work is primarily theoretical: it characterizes the expected gradient self outer-product of standard neural-network models at initialization. The experiments consist of synthetic simulations and training runs of publicly available language models on standard public corpora; they involve no human subjects, no personal or sensitive data, and no newly collected datasets. We do not foresee ethical concerns or direct negative societal impacts arising from this work.

\subsection*{Reproducibility statement}
All results are stated under explicit assumptions (\cref{sec:assump}) and proved in full in the appendix: Appendix~\ref{app:mf} (symmetric matrix factorization), Appendix~\ref{app:dl} (deep linear networks), Appendix~\ref{app:relu} (two-layer ReLU networks) and Appendix~\ref{app:att} (single-layer linear attention). Each appendix restates the corresponding theorem together with its standing assumption and all supporting lemmas, and the closed-form Wishart trace moments used for the attention layer are listed explicitly in Lemma~\ref{lem:atta-wm}. The numerical checks (Appendix~\ref{app:empirical-verification}) specify every dimension, sample size and initialization used, and the language-model experiments report the model sizes, datasets, optimizer settings and number of training steps; the code for both the simulations and the language-model runs is provided at the anonymous repository \url{https://anonymous.4open.science/r/orthogonalization-hessian-CB2C/}.

\bibliography{sample}
\bibliographystyle{iclr2027_conference}

\appendix
\crefalias{section}{appendix}
\crefalias{subsection}{appendix}

\setcounter{tocdepth}{2}
\tableofcontents
\bigskip

\section{Extended Related Work}\label{app:related}

Mechanistic analyses of Muon and the heterogeneity of the neural-network Hessian are discussed in \cref{sec:related}; this appendix covers the remaining line of work.

\paragraph{Preconditioning methods for neural-network optimization.}
Muon belongs to the family of optimizers defined by a linear minimization oracle (LMO) over a norm ball~\citep{pethick2025training,chen2025muon,kovalev2025understanding}, which also contains Scion and Gluon, unifies sign, spectral and Schatten-$p$ updates~\citep{lau2025polargrad,an2025asgo, pang2026htmuonimprovingmuonheavytailed, pethick2026use}, and comes with convergence theory for the idealized and the practical algorithm~\citep{li2025note,shen2025convergence,kovalev2026non}. Two-sided Kronecker preconditioners such as Shampoo and SOAP whiten both the row and the column side of a matrix gradient~\citep{gupta2018shampoo,morwani2024shampoo,vyas2024soap}; \citet{anil2020scalable} make this second-order preconditioning practical at the scale of modern deep learning, and \citet{anil2019memory} go in the opposite direction, compressing the state of adaptive optimizers to a memory-efficient factored form. A parallel line keeps only the row side and drops optimizer state by preconditioning the gradient row by row: SWAN, gradient multi-normalization, a minimalist optimizer and SRON replace Adam's state with a row-wise $\ell_2$ normalization~\citep{ma2025swan,scetbon2025gradient,glentis2025minimalist,wen2025sron}; \citet{gu2026mano,xu2026width} study its manifold-optimization and width-scaling sides; \citet{pethick2025training} reach it from norm-constrained steepest descent; \citet{lion2026muown} make the row magnitudes an explicit optimizer variable, which \citet{hubler2026muown} show implicitly decays the angular step size; and \citet{deng2026rmnp} show that row-momentum normalization matches Muon at scale. Our analysis explains the common thread: once $\bm{V}\bm{V}^\top$ is diagonally dominant, row normalization and orthogonalization are the same operation (\cref{sec:equiv}).

\section{Symmetric Matrix Factorization}\label{app:mf}

\subsection{Setup}\label{app:mf:setup}

Let $d,k,r$ be positive integers with $k\ge r$. The over-parameterized symmetric matrix-factorization loss is
\begin{equation}\label{eq:mfa-loss}
\bL(\bm{W})\;\eqdef\;\tfrac14\,\norm{\bm{W}\bm{W}^\top-\Mstar}_\Frob^2,\qquad\bm{W}\in\R^{d\times k},
\end{equation}
with target $\Mstar\in\R^{d\times d}$ symmetric positive semidefinite of rank $r$. A direct computation gives the gradient
\begin{equation}\label{eq:mfa-grad}
\bm{G}\;\eqdef\;\nabla\bL(\bm{W})\;=\;(\bm{W}\bm{W}^\top-\Mstar)\,\bm{W}\;\in\;\R^{d\times k},
\end{equation}
and the object of interest is the $d\times d$ matrix $\E[\bm{G}\bm{G}^\top]$.

\subsection{Standing assumption}\label{app:mf:assump}

\begin{assumption}\label{ass:mfa}
The entries $W_{ij}$ are i.i.d.\ $\bN(0,1)$, and $\Mstar$ is deterministic and independent of $\bm{W}$. Hence $\bm{S}\eqdef\bm{W}\bm{W}^\top$ is Wishart with $k$ degrees of freedom and identity scale, $\bm{S}\sim\mathcal{W}_d(k,\bm{I}_d)$.
\end{assumption}

\subsection{Main results restated}\label{app:mf:results}

\begin{theoremrst}{\cref{thm:mf}, restated}
Under \cref{ass:mfa},
\begin{equation}\label{eq:mfa-main}
\E[\bm{G}\bm{G}^\top]\;=\;\alpha_3(d,k)\,\bm{I}_d\;-\;2\,\alpha_2(d,k)\,\Mstar\;+\;k\,(\Mstar)^2,
\end{equation}
where the scalars are the diagonal entries of the first three Wishart moments,
\begin{align}
\alpha_1(d,k) &= k, \label{eq:mfa-a1}\\
\alpha_2(d,k) &= k(k+d+1), \label{eq:mfa-a2}\\
\alpha_3(d,k) &= k^3+3(d+1)k^2+(d^2+3d+4)k. \label{eq:mfa-a3}
\end{align}
\end{theoremrst}

\begin{corollary}\label{cor:mfa-scaling}
Under \cref{ass:mfa}, $\E[\bm{G}\bm{G}^\top]$ is invariant under any orthogonal change of basis fixing $\Mstar$, with
\begin{align}
\E\bigl[(\bm{G}\bm{G}^\top)_{ii}\bigr] &= \alpha_3-2\alpha_2\Mstar_{ii}+k(\Mstar)^2_{ii}, \label{eq:mfa-diag}\\
\E\bigl[(\bm{G}\bm{G}^\top)_{ij}\bigr] &= -2\alpha_2\Mstar_{ij}+k(\Mstar)^2_{ij}\qquad(i\neq j). \label{eq:mfa-off}
\end{align}
As $k\to\infty$, $\E[(\bm{G}\bm{G}^\top)_{ii}]=\Theta(k^3)$ and $\E[(\bm{G}\bm{G}^\top)_{ij}]=\Theta(k^2)$ whenever $\Mstar_{ij}\neq0$, so the dominance ratio is $\Theta(k/\Mstar_{ij})$.
\end{corollary}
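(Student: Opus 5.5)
The corollary follows directly from the restated \cref{thm:mf}. I would first establish the invariance claim, then read off the entries of \eqref{eq:mfa-main}, and finally compare orders in $k$.

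\emph{Invariance.} Let $\bm{O}$ be orthogonal with $\bm{O}\Mstar\bm{O}^\top=\Mstar$. Replacing $\bm{W}$ by $\bm{O}\bm{W}$ leaves the law of $\bm{W}$ unchanged, since the entries are i.i.d.\ $\bN(0,1)$ and the Gaussian matrix ensemble is left-orthogonally invariant. Under this replacement the gradient becomes
\[
(\bm{O}\bm{W}\bm{W}^\top\bm{O}^\top-\Mstar)\,\bm{O}\bm{W}=\bm{O}(\bm{W}\bm{W}^\top-\Mstar)\bm{W}=\bm{O}\bm{G},
\]
where the middle step uses $\Mstar=\bm{O}\Mstar\bm{O}^\top$. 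Hence $\E[\bm{G}\bm{G}^\top]=\bm{O}\,\E[\bm{G}\bm{G}^\top]\,\bm{O}^\top$. Alternatively, the invariance can be read off \eqref{eq:mfa-main}, because $\bm{I}_d$, $\Mstar$ and $(\Mstar)^2$ all commute with such $\bm{O}$.

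\emph{Entries.} Taking the $(i,i)$ and $(i,j)$ entries of \eqref{eq:mfa-main} gives \eqref{eq:mfa-diag} and \eqref{eq:mfa-off} immediately, since $(\bm{I}_d)_{ij}=\delta_{ij}$.

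\emph{Orders.} Hold $d$ and $\Mstar$ fixed and let $k\to\infty$.
\begin{itemize}
\item From \eqref{eq:mfa-a2}, $\alpha_2=k^2+(d+1)k$. From \eqref{eq:mfa-a3}, $\alpha_3=k^3+O(k^2)$.
\item The diagonal entry is therefore $k^3+O(k^2)$, which is $\Theta(k^3)$ for all large $k$, because the correction terms $-2\alpha_2\Mstar_{ii}+k(\Mstar)^2_{ii}$ are $O(k^2)$.
\item The off-diagonal entry equals $-2\Mstar_{ij}\,k^2+O(k)$, with the $O(k)$ term coming from $-2(d+1)k\Mstar_{ij}+k(\Mstar)^2_{ij}$. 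When $\Mstar_{ij}\neq0$ the leading coefficient is nonzero, so the entry is $\Theta(k^2)$. Note that $(\Mstar)^2_{ij}$ may be nonzero even when $\Mstar_{ij}=0$, but it then only contributes $O(k)$.
\end{itemize}

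\emph{Ratio.} Dividing the two asymptotic forms gives $k^3/(2|\Mstar_{ij}|k^2)\,(1+o(1))=k/(2|\Mstar_{ij}|)\,(1+o(1))$, which is the stated $\Theta(k/\Mstar_{ij})$ with the absolute value understood.

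The only delicate step is bookkeeping: the $\Theta$ statements hold for $k$ large relative to $d$ and to the entries of $\Mstar$. For finite $k$, the term $-2\alpha_2\Mstar_{ii}$ could make the diagonal entry smaller or even negative. I would state this explicitly rather than claim a uniform bound.
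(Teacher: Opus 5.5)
Your proposal is correct and follows the paper's own argument. You read the entries off the closed form \eqref{eq:mfa-main} and compare orders in $k$ with $d,\Mstar$ fixed, and your invariance step uses the same $\bm{O}\bm{W}\stackrel{d}{=}\bm{W}$ symmetry behind \cref{lem:mfa-isotropy}; your treatment of $k(\Mstar)^2_{ij}$ as an $O(k)$ term, possibly zero, is slightly more careful than the paper's. One small correction to your closing remark: the diagonal entry can never be negative, since $\E[(\bm{G}\bm{G}^\top)_{ii}]=\E\norm{\bm{G}_{i,:}}^2\ge0$ for every $k$, so only the $\Theta(k^3)$ comparison is asymptotic.
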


\subsection{Preliminary lemmas}\label{app:mf:lemmas}

\begin{lemma}\label{lem:mfa-quad}
Let $\bm{X}\in\R^{m\times n}$ have i.i.d.\ $\bN(0,1)$ entries. Then for deterministic $\bm{M}\in\R^{n\times n}$ and $\bm{N}\in\R^{m\times m}$,
\begin{equation}\label{eq:mfa-quad}
\E[\bm{X}\bm{M}\bm{X}^\top]=\tr(\bm{M})\,\bm{I}_m,\qquad
\E[\bm{X}^\top\bm{N}\bm{X}]=\tr(\bm{N})\,\bm{I}_n.
\end{equation}
\end{lemma}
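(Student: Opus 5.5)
The plan is to compute both identities entrywise and use only independence and the first two moments of the entries: $\E[X_{pq}]=0$ and $\E[X_{pq}X_{p'q'}]=\delta_{pp'}\delta_{qq'}$. For the first identity I would write the $(a,b)$ entry of $\bm{X}\bm{M}\bm{X}^\top$ as
\[
(\bm{X}\bm{M}\bm{X}^\top)_{ab}=\sum_{q,q'=1}^{n}X_{aq}M_{qq'}X_{bq'}.
\]
Since $\bm{M}$ is deterministic, linearity of expectation lets it come out of the expectation.

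Taking expectations term by term gives $\sum_{q,q'}M_{qq'}\,\delta_{ab}\delta_{qq'}$. This collapses to $\delta_{ab}\sum_q M_{qq}=\delta_{ab}\tr(\bm{M})$. Hence $\E[\bm{X}\bm{M}\bm{X}^\top]=\tr(\bm{M})\,\bm{I}_m$.

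For the second identity I would not repeat the computation. Instead I would apply the first identity to $\bm{X}^\top\in\R^{n\times m}$, which also has i.i.d.\ $\bN(0,1)$ entries, with $\bm{N}$ in place of $\bm{M}$. This gives $\E[\bm{X}^\top\bm{N}\bm{X}]=\tr(\bm{N})\,\bm{I}_n$ directly.

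There is no genuine obstacle here. The only care needed is the index bookkeeping, namely matching the row indices $a,b$ and the column indices $q,q'$ correctly in the Kronecker deltas. No Gaussianity beyond unit variance and independence is used, so the lemma holds for any i.i.d.\ mean-zero, unit-variance entries.
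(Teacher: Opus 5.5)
Your proposal is correct and matches the paper's proof: the same entrywise computation with $\E[X_{aq}X_{bq'}]=\delta_{ab}\delta_{qq'}$ collapsing to $\delta_{ab}\tr(\bm{M})$, and the second identity obtained by applying the first to $\bm{X}^\top$. Your remark that only mean-zero, unit-variance, uncorrelated entries are needed is also accurate.
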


\begin{proof}
Using $\E[\bm{X}_{ap}\bm{X}_{bq}]=\delta_{ab}\delta_{pq}$,
\[
\E[(\bm{X}\bm{M}\bm{X}^\top)_{ab}]=\sum_{p,q}\bm{M}_{pq}\,\delta_{ab}\delta_{pq}=\delta_{ab}\tr(\bm{M}),
\]
which is the first identity; the second follows by applying it to $\bm{X}^\top$ (entries i.i.d.\ $\bN(0,1)$).
\end{proof}

\begin{lemma}\label{lem:mfa-isotropy}
Let $\bm{S}=\bm{W}\bm{W}^\top$ with $\bm{W}$ as in \cref{ass:mfa}. For every $m\in\mathbb{N}$ there is a scalar $\alpha_m(d,k)$ with
\begin{equation}\label{eq:mfa-iso}
\E[\bm{S}^m]=\alpha_m(d,k)\,\bm{I}_d,\qquad \alpha_m(d,k)=\tfrac1d\,\E[\tr(\bm{S}^m)].
\end{equation}
\end{lemma}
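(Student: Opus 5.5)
The plan is to prove \cref{lem:mfa-isotropy} by orthogonal invariance of the Gaussian matrix $\bm{W}$. Fix any orthogonal $\bm{O}\in\R^{d\times d}$. Since the entries of $\bm{W}$ are i.i.d.\ $\bN(0,1)$, the columns of $\bm{W}$ are i.i.d.\ $\bN(\bm{0},\bm{I}_d)$. The columns of $\bm{O}\bm{W}$ are then i.i.d.\ $\bN(\bm{0},\bm{O}\bm{O}^\top)=\bN(\bm{0},\bm{I}_d)$, so $\bm{O}\bm{W}\overset{d}{=}\bm{W}$. Consequently
\[
\bm{O}\bm{S}\bm{O}^\top=(\bm{O}\bm{W})(\bm{O}\bm{W})^\top\overset{d}{=}\bm{S},
\qquad\text{hence}\qquad
\bm{O}\bm{S}^m\bm{O}^\top=(\bm{O}\bm{S}\bm{O}^\top)^m\overset{d}{=}\bm{S}^m .
\]
Here we use $\bm{O}^\top\bm{O}=\bm{I}_d$ to telescope the product.

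Next I would check integrability. The entries of $\bm{S}^m$ are polynomials of degree $2m$ in Gaussian variables, so $\E[\bm{S}^m]$ is finite. Setting $\bm{E}\eqdef\E[\bm{S}^m]$, linearity of expectation gives $\bm{O}\bm{E}\bm{O}^\top=\bm{E}$ for every orthogonal $\bm{O}$. The remaining step is the standard fact that a matrix commuting with all orthogonal matrices is a scalar multiple of the identity. I would prove it concretely with two families of orthogonal matrices.

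\begin{itemize}
\item \emph{Diagonal sign flips.} Take $\bm{O}=\diag(\epsilon_1,\dots,\epsilon_d)$ with $\epsilon_i\in\{\pm1\}$. Then $\bm{E}_{ij}=\epsilon_i\epsilon_j\bm{E}_{ij}$. Choosing $\epsilon_i=-\epsilon_j$ gives $\bm{E}_{ij}=0$ for $i\ne j$, so $\bm{E}$ is diagonal.
\item \emph{Permutation matrices.} Conjugating by a permutation matrix permutes the diagonal entries of $\bm{E}$. Invariance therefore forces all diagonal entries to be equal.
\end{itemize}

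Thus $\bm{E}=\alpha_m\bm{I}_d$ for some scalar $\alpha_m$. Taking traces and using $\tr\E=\E\tr$ gives $d\,\alpha_m=\E[\tr(\bm{S}^m)]$, which is the stated formula.

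No step is a genuine obstacle. The only points needing care are the distributional identity $\bm{O}\bm{W}\overset{d}{=}\bm{W}$ and the finiteness of the moments, both handled above. The explicit values $\alpha_1,\alpha_2,\alpha_3$ are not needed for this lemma; they would follow afterwards from $\E[\tr\bm{S}^m]$ via Wick/Isserlis moment counting.
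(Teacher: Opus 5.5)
Your proof is correct and is essentially the paper's argument: orthogonal invariance $\bm{O}\bm{W}\overset{d}{=}\bm{W}$ gives $\bm{O}\,\E[\bm{S}^m]\,\bm{O}^\top=\E[\bm{S}^m]$, and taking the trace identifies $\alpha_m$. The only differences are that you check integrability explicitly, and that where the paper simply invokes Schur's lemma you verify the scalar conclusion directly with sign flips and permutations, which also shows that invariance under signed permutations alone is enough.
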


\begin{proof}
For $\bm{Q}\in O(d)$, $\bm{Q}\bm{W}\stackrel{d}{=}\bm{W}$, hence $\bm{Q}\bm{S}^m\bm{Q}^\top\stackrel{d}{=}\bm{S}^m$ and
\[
\E[\bm{S}^m]=\bm{Q}\,\E[\bm{S}^m]\,\bm{Q}^\top\qquad\text{for all }\bm{Q}\in O(d).
\]
By Schur's lemma $\E[\bm{S}^m]=\alpha_m\bm{I}_d$; taking the trace gives $d\alpha_m=\E[\tr(\bm{S}^m)]$.
\end{proof}

\begin{lemma}[Wick/Isserlis]\label{lem:mfa-wick}
Let $X_1,\dots,X_{2n}$ be jointly centered Gaussian. Then
\begin{equation}\label{eq:mfa-wick}
\E[X_1\cdots X_{2n}]=\sum_{P}\prod_{\{i,j\}\in P}\E[X_iX_j],
\end{equation}
the sum over all pairings $P$ of $\{1,\dots,2n\}$; odd moments vanish~\citep{isserlis1918formula,wick1950evaluation}.
\end{lemma}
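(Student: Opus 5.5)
We prove the identity by induction on $n$, using Gaussian integration by parts (Stein's lemma). The odd case is a one-line symmetry argument and needs no induction: the vector $(X_1,\dots,X_{2n+1})$ is centered Gaussian, so it has the same law as $(-X_1,\dots,-X_{2n+1})$. Hence $\E[X_1\cdots X_{2n+1}]=(-1)^{2n+1}\E[X_1\cdots X_{2n+1}]$, which forces it to vanish. Integrability is never an issue, since Gaussian vectors have moments of all orders.

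\emph{Representation.} To handle a possibly singular covariance $\bm{\Sigma}$, write $\bm{X}=\bm{A}\bm{Z}$ with $\bm{Z}\sim\bN(\bm{0},\bm{I}_p)$ and $\bm{A}\bm{A}^\top=\bm{\Sigma}$, for instance $\bm{A}=\bm{\Sigma}^{1/2}$.

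\emph{Stein's lemma.} We establish the multivariate identity
\[
\E[X_1F(\bm{X})]=\sum_{j}\E[X_1X_j]\,\E[\partial_jF(\bm{X})]
\]
for polynomial $F$. First, the scalar identity $\E[Z g(Z)]=\E[g'(Z)]$ for $Z\sim\bN(0,1)$ follows from integrating by parts against $\varphi(z)$, using $\varphi'(z)=-z\varphi(z)$; the boundary terms vanish because $g$ is polynomial. Applying it coordinatewise to $Z_l$, with the other coordinates held fixed by Fubini, gives
\[
\E[X_1F(\bm{X})]=\sum_l A_{1l}\,\E\bigl[\partial_{Z_l}F(\bm{A}\bm{Z})\bigr].
\]
The chain rule gives $\partial_{Z_l}F=\sum_j A_{jl}\,\partial_jF$, and $\sum_l A_{1l}A_{jl}=\Sigma_{1j}=\E[X_1X_j]$, which is the claimed identity.

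\emph{Induction step.} Take $F=X_2\cdots X_{2n}$ as a function of formal variables, so that $\partial_jF=\prod_{i\ge2,\,i\neq j}X_i$ for $j\ge2$ and $\partial_1F=0$. Repeated indices are not an obstacle: treat the $2n$ factors as distinct coordinates of a $2n$-dimensional Gaussian vector, which is legitimate because the representation above allows singular covariance. This gives the recursion
\[
\E[X_1\cdots X_{2n}]=\sum_{j=2}^{2n}\E[X_1X_j]\;\E\Bigl[\textstyle\prod_{i\notin\{1,j\}}X_i\Bigr].
\]
Applying the induction hypothesis to each $(2n-2)$-fold moment on the right expands it as a sum over pairings of $\{2,\dots,2n\}\setminus\{j\}$. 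Finally, every pairing $P$ of $\{1,\dots,2n\}$ is obtained exactly once by choosing the partner $j$ of $1$ and then a pairing of the remaining indices. The recursion therefore reproduces the full sum in \eqref{eq:mfa-wick}. The base case $n=1$ is trivial.

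The main point needing care is the multivariate Stein step for degenerate covariances, which the representation $\bm{X}=\bm{A}\bm{Z}$ resolves. The combinatorial bijection in the last step is routine.
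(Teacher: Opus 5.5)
Your proof is correct, but it takes a different route from the paper. The paper's proof uses the moment generating function: it differentiates $\E[\exp(\bm{t}^\top\bm{X})]=\exp(\tfrac12\bm{t}^\top\bm{\Sigma}\bm{t})$ once in each of $t_1,\dots,t_{2n}$ at $\bm{t}=\bm{0}$. This picks out the coefficient of $t_1\cdots t_{2n}$. Only the degree-$n$ term $(\bm{t}^\top\bm{\Sigma}\bm{t})^n/(2^nn!)$ contributes, and the $2^nn!$ orderings and orientations of the pairs cancel the prefactor, leaving the sum over pairings.

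That route is non-inductive, and it handles odd moments and singular covariances automatically: the exponent is even in $\bm{t}$, and the formula holds for every positive semidefinite $\bm{\Sigma}$. In exchange, it presupposes the closed-form Gaussian MGF, differentiation under the expectation sign, and a coefficient count.

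Your route needs only one-dimensional integration by parts against polynomials, where integrability is automatic. It also produces the recursion $\E[X_1\cdots X_{2n}]=\sum_{j\ge2}\E[X_1X_j]\,\E\bigl[\prod_{i\notin\{1,j\}}X_i\bigr]$ as an intermediate result. It is the same Gaussian integration-by-parts mechanism the paper itself uses later, in \cref{lem:rela-stein} and in the Wishart trace recursion of \cref{lem:atta-wm}. The cost is the explicit factorization $\bm{X}=\bm{A}\bm{Z}$ to cover degenerate covariances, plus a separate sign-symmetry argument for odd moments.

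Both proofs treat repeated variables the same way, as distinct coordinates of a possibly degenerate Gaussian vector. That is exactly how the lemma is applied, e.g.\ in \cref{lem:dla-fourth}.
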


\begin{proof}
Differentiate $\E[\exp(\bm{t}^\top\bm{X})]=\exp(\tfrac12\bm{t}^\top\bm{\Sigma}\bm{t})$ in $t_1,\dots,t_{2n}$ at $\bm{t}=\bm{0}$; cf.\ \citet[\S1.4]{janson1997gaussian}.
\end{proof}

\subsection{Proofs}\label{app:mf:proofs}

\begin{lemma}\label{lem:mfa-expand}
With $\bm{S}\eqdef\bm{W}\bm{W}^\top$ and $\bm{B}\eqdef\bm{S}-\Mstar$,
\begin{equation}\label{eq:mfa-expand}
\bm{G}\bm{G}^\top=\bm{B}\bm{S}\bm{B}=\bm{S}^3-\bm{S}^2\Mstar-\Mstar\bm{S}^2+\Mstar\bm{S}\Mstar.
\end{equation}
\end{lemma}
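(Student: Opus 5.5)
The identity in \cref{lem:mfa-expand} is purely algebraic, with no expectation taken, so the plan is a direct computation from the gradient formula \eqref{eq:mfa-grad}. First I would rewrite the gradient as $\bm{G}=(\bm{S}-\Mstar)\bm{W}=\bm{B}\bm{W}$, using the definitions $\bm{S}=\bm{W}\bm{W}^\top$ and $\bm{B}=\bm{S}-\Mstar$.

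Next I would form the outer product,
\[
\bm{G}\bm{G}^\top=\bm{B}\bm{W}\bm{W}^\top\bm{B}^\top=\bm{B}\bm{S}\bm{B}^\top .
\]
Since $\bm{S}$ is symmetric by construction and $\Mstar$ is symmetric by the setup, $\bm{B}$ is symmetric. This gives $\bm{B}^\top=\bm{B}$ and hence the first equality $\bm{G}\bm{G}^\top=\bm{B}\bm{S}\bm{B}$.

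For the second equality I would expand $(\bm{S}-\Mstar)\bm{S}(\bm{S}-\Mstar)$ by distributivity:
\[
(\bm{S}-\Mstar)\bm{S}(\bm{S}-\Mstar)=\bm{S}^3-\bm{S}^2\Mstar-\Mstar\bm{S}^2+\Mstar\bm{S}\Mstar .
\]
The only point needing care is that $\bm{S}$ and $\Mstar$ need not commute. The two cross terms $\bm{S}^2\Mstar$ and $\Mstar\bm{S}^2$ must therefore be kept separate rather than merged. This causes no difficulty later, since \cref{lem:mfa-isotropy} makes their expectations equal, $\alpha_2\Mstar$ each.

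The statement involves no real obstacle. It is useful because it isolates the quantities whose expectations \cref{thm:mf} needs: $\E[\bm{S}^3]$ and $\E[\bm{S}^2]$, supplied by \cref{lem:mfa-isotropy}, and $\E[\bm{S}]=k\bm{I}_d$, which gives $k(\Mstar)^2$.
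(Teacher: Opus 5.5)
Your proposal is correct and follows the paper's proof: you write $\bm{G}=\bm{B}\bm{W}$, use the symmetry of $\bm{B}$ to get $\bm{G}\bm{G}^\top=\bm{B}\bm{S}\bm{B}$, and expand without assuming that $\bm{S}$ and $\Mstar$ commute. Your side remark is also right: because $\Mstar$ is deterministic, both cross terms have expectation $\alpha_2\Mstar$ by \cref{lem:mfa-isotropy}, which is how the paper uses this lemma in the proof of \cref{thm:mf}.
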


\begin{proof}
Since $\bm{B}=\bm{B}^\top$,
\[
\bm{G}\bm{G}^\top=\bm{B}\bm{W}(\bm{B}\bm{W})^\top=\bm{B}\bm{W}\bm{W}^\top\bm{B}^\top=\bm{B}\bm{S}\bm{B},
\]
and substituting $\bm{B}=\bm{S}-\Mstar$ and expanding gives \eqref{eq:mfa-expand}.
\end{proof}

\begin{lemma}\label{lem:mfa-a1}
Under \cref{ass:mfa}, $\E[\bm{S}]=k\,\bm{I}_d$, hence $\alpha_1(d,k)=k$.
\end{lemma}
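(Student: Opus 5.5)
The plan is to apply \cref{lem:mfa-quad} directly, with $\bm{X}=\bm{W}\in\R^{d\times k}$ and $\bm{M}=\bm{I}_k$. Write $\bm{S}=\bm{W}\bm{I}_k\bm{W}^\top$. The first identity in \eqref{eq:mfa-quad} then gives
\[
\E[\bm{S}]=\E[\bm{W}\bm{I}_k\bm{W}^\top]=\tr(\bm{I}_k)\,\bm{I}_d=k\,\bm{I}_d.
\]
If one prefers not to cite the lemma, the same result follows entrywise. Since $S_{ab}=\sum_{p=1}^k W_{ap}W_{bp}$ and $\E[W_{ap}W_{bp}]=\delta_{ab}$, linearity of expectation gives $\E[S_{ab}]=k\,\delta_{ab}$.

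To identify $\alpha_1$, I would then compare this with \cref{lem:mfa-isotropy} at $m=1$. That lemma states $\E[\bm{S}]=\alpha_1(d,k)\,\bm{I}_d$ with $\alpha_1=\tfrac1d\E[\tr\bm{S}]$. Taking traces, $\E[\tr\bm{S}]=\E\norm{\bm{W}}_\Frob^2=dk$, so $\alpha_1=k$, which agrees with \eqref{eq:mfa-a1}.

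There is no real obstacle here; the only point needing care is the dimension bookkeeping. The trace is taken over the hidden $k\times k$ identity, which is what makes the factor $k$ rather than $d$. This lemma is the base case for the harder moments $\alpha_2$ and $\alpha_3$, which will need Wick pairings (\cref{lem:mfa-wick}).
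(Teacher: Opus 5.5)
Your proof is correct and takes essentially the paper's approach. The paper writes $\bm{S}=\sum_{a=1}^k\bm{w}_a\bm{w}_a^\top$ over the columns of $\bm{W}$ and uses $\E[\bm{w}_a\bm{w}_a^\top]=\bm{I}_d$, which is your entrywise linearity computation; citing \cref{lem:mfa-quad} with $\bm{M}=\bm{I}_k$ repackages the same argument, and the trace check for $\alpha_1$ is redundant but harmless.
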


\begin{proof}
Let $\bm{w}_a$ denote the $a$-th column of $\bm{W}$. Then
\[
\E[\bm{S}]=\E\!\Bigl[\sum_{a=1}^k\bm{w}_a\bm{w}_a^\top\Bigr]=\sum_{a=1}^k\E[\bm{w}_a\bm{w}_a^\top]=k\,\bm{I}_d.
\]
\end{proof}

\begin{lemma}\label{lem:mfa-a2}
Under \cref{ass:mfa}, $\E[\bm{S}^2]=k(k+d+1)\,\bm{I}_d$, hence $\alpha_2(d,k)=k(k+d+1)$.
\end{lemma}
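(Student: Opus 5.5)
By \cref{lem:mfa-isotropy} we already know that $\E[\bm{S}^2]=\alpha_2\bm{I}_d$ with $\alpha_2=\tfrac1d\E[\tr(\bm{S}^2)]$. It therefore suffices to compute a single entry, and the cleanest choice is the diagonal entry $(1,1)$ of $\E[\bm{S}^2]$. Writing $\bm{S}=\sum_{a=1}^k\bm{w}_a\bm{w}_a^\top$ with $\bm{w}_a$ the i.i.d.\ $\bN(\bm{0},\bm{I}_d)$ columns of $\bm{W}$ (as in the proof of \cref{lem:mfa-a1}), we expand
\[
\bm{S}^2=\sum_{a,b=1}^k(\bm{w}_a^\top\bm{w}_b)\,\bm{w}_a\bm{w}_b^\top,
\]
and then split the double sum into the diagonal part $a=b$ and the off-diagonal part $a\neq b$.

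For $a\neq b$, independence of $\bm{w}_a$ and $\bm{w}_b$ lets us condition on $\bm{w}_b$. Then
\[
\E\bigl[(\bm{w}_a^\top\bm{w}_b)\bm{w}_a\bm{w}_b^\top\mid\bm{w}_b\bigr]=\E[\bm{w}_a\bm{w}_a^\top]\,\bm{w}_b\bm{w}_b^\top=\bm{w}_b\bm{w}_b^\top,
\]
and taking expectation over $\bm{w}_b$ gives $\bm{I}_d$. There are $k(k-1)$ ordered pairs, so this part contributes $k(k-1)\bm{I}_d$.

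For $a=b$ we need $\E[\norm{\bm{w}}^2\bm{w}\bm{w}^\top]$ for $\bm{w}\sim\bN(\bm{0},\bm{I}_d)$. Its $(i,j)$ entry is $\sum_p\E[w_p^2w_iw_j]$, which we evaluate with \cref{lem:mfa-wick}: the fourth moment $\E[w_p^2w_iw_j]$ equals $\delta_{ij}+2\delta_{pi}\delta_{pj}$. Summing over $p$ gives $(d+2)\delta_{ij}$, so the diagonal part contributes $k(d+2)\bm{I}_d$.

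Adding the two parts yields $\E[\bm{S}^2]=\bigl(k(k-1)+k(d+2)\bigr)\bm{I}_d=k(k+d+1)\bm{I}_d$, which is the claim. There is no real obstacle here; the only care needed is the bookkeeping of the Wick pairings in the fourth moment, in particular not missing the factor $2$ in $\E[w_i^4]=3$ when $p=i=j$.
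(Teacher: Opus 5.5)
Your proof is correct and follows the paper's argument essentially step for step. You use the same column expansion $\bm{S}^2=\sum_{a,b}(\bm{w}_a^\top\bm{w}_b)\,\bm{w}_a\bm{w}_b^\top$ and the same split into $k$ terms with $a=b$, each contributing $(d+2)\bm{I}_d$ by Wick, and $k(k-1)$ terms with $a\neq b$, each contributing $\bm{I}_d$ by independence. Conditioning on $\bm{w}_b$ is just a rephrasing of the paper's coordinatewise factorization, and since you compute the full matrix, the appeal to isotropy (and the remark about a single entry) is not actually needed.
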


\begin{proof}
By \cref{lem:mfa-isotropy}, $\E[\bm{S}^2]=\alpha_2\bm{I}_d$. With columns $\bm{w}_a\stackrel{\text{iid}}{\sim}\bN(\bm{0},\bm{I}_d)$,
\[
\bm{S}^2=\sum_{a,b=1}^k\langle\bm{w}_a,\bm{w}_b\rangle\,\bm{w}_a\bm{w}_b^\top.
\]
Split into the $k$ terms with $a=b$ and the $k(k-1)$ with $a\neq b$. By \cref{lem:mfa-wick} and independence,
\begin{align*}
\E[\norm{\bm{w}_a}^2\bm{w}_a\bm{w}_a^\top] &= (d+2)\,\bm{I}_d,\\
\E[\langle\bm{w}_a,\bm{w}_b\rangle\,\bm{w}_a\bm{w}_b^\top]
&= \sum_p\E[w_{a,p}\bm{w}_a]\,\E[w_{b,p}\bm{w}_b^\top]
= \sum_p\bm{e}_p\bm{e}_p^\top
= \bm{I}_d.
\end{align*}
Summing, $\E[\bm{S}^2]=k(d+2)\bm{I}_d+k(k-1)\bm{I}_d=k(k+d+1)\bm{I}_d$.
\end{proof}

\begin{lemma}\label{lem:mfa-a3}
Under \cref{ass:mfa}, $\E[\bm{S}^3]=\bigl[k^3+3(d+1)k^2+(d^2+3d+4)k\bigr]\bm{I}_d$, hence $\alpha_3(d,k)$ as in \eqref{eq:mfa-a3}.
\end{lemma}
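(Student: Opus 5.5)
By \cref{lem:mfa-isotropy} we already know $\E[\bm{S}^3]=\alpha_3\bm{I}_d$ with $\alpha_3=\tfrac1d\E[\tr(\bm{S}^3)]$. So it suffices to compute the scalar $\E[\tr(\bm{S}^3)]$, or equivalently any single diagonal entry. We mirror the column expansion used in \cref{lem:mfa-a2}. Writing $\bm{S}=\sum_{a=1}^k\bm{w}_a\bm{w}_a^\top$ with i.i.d.\ columns $\bm{w}_a\sim\bN(\bm{0},\bm{I}_d)$, we get
\[
\tr(\bm{S}^3)=\sum_{a,b,c=1}^k\langle\bm{w}_a,\bm{w}_b\rangle\langle\bm{w}_b,\bm{w}_c\rangle\langle\bm{w}_c,\bm{w}_a\rangle .
\]
We then split the triple sum by the coincidence pattern of the indices $(a,b,c)$.

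\textbf{Case 1: all indices equal ($k$ terms).} The summand is $\norm{\bm{w}}^6$. Its expectation is the sixth moment of a $\chi_d$ variable, $\E\norm{\bm{w}}^6=d(d+2)(d+4)$. This can be obtained from \cref{lem:mfa-wick} or directly from the chi-square moments.

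\textbf{Case 2: exactly two indices equal ($3k(k-1)$ terms).} By cyclic symmetry each such term equals, say with $a=b\neq c$, $\norm{\bm{w}_a}^2\langle\bm{w}_a,\bm{w}_c\rangle^2$. Conditioning on $\bm{w}_a$ gives $\E[\langle\bm{w}_a,\bm{w}_c\rangle^2\mid\bm{w}_a]=\norm{\bm{w}_a}^2$, so the expectation is $\E\norm{\bm{w}_a}^4=d(d+2)$.

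\textbf{Case 3: all indices distinct ($k(k-1)(k-2)$ terms).} We compute $\E[\bm{w}_a^\top\bm{w}_b\bm{w}_b^\top\bm{w}_c\bm{w}_c^\top\bm{w}_a]$ by independence. Taking expectations over $\bm{w}_b$ and $\bm{w}_c$ first replaces each outer product by $\bm{I}_d$, leaving $\E\norm{\bm{w}_a}^2=d$.

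Summing the three cases,
\[
\E[\tr(\bm{S}^3)]=k\,d(d+2)(d+4)+3k(k-1)\,d(d+2)+k(k-1)(k-2)\,d .
\]
Dividing by $d$ and expanding gives
\[
k(d^2+6d+8)+3(k^2-k)(d+2)+(k^3-3k^2+2k)=k^3+3(d+1)k^2+(d^2+3d+4)k,
\]
which matches \eqref{eq:mfa-a3}: the $k^2$ coefficient is $3d+6-3=3(d+1)$, and the $k$ coefficient is $d^2+6d+8-3d-6+2=d^2+3d+4$.

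The only step needing real care is the bookkeeping in Case 2. We must check that all three placements of the repeated index reduce to the same quantity $\norm{\bm{w}}^4$ after conditioning. For example, $a=c\neq b$ gives $\langle\bm{w}_a,\bm{w}_b\rangle^2\norm{\bm{w}_a}^2$, which is the same form. We must also confirm the count $3k(k-1)$. The chi moments are standard: $\E\chi_d^{2m}=d(d+2)\cdots(d+2m-2)$.
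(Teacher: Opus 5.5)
Your proposal is correct and is essentially the paper's proof: you reduce to $\tfrac1d\E[\tr(\bm{S}^3)]$ via \cref{lem:mfa-isotropy}, expand over column triples $(a,b,c)$, and split into the $k$, $3k(k-1)$ and $k(k-1)(k-2)$ coincidence classes with values $d(d+2)(d+4)$, $d(d+2)$ and $d$. The only cosmetic difference is in the all-distinct case, where you average $\bm{w}_b\bm{w}_b^\top$ and $\bm{w}_c\bm{w}_c^\top$ to $\bm{I}_d$, while the paper writes the same step as the index contraction $\sum_{p,q,r}\delta_{pr}\delta_{pq}\delta_{qr}=d$.
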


\begin{proof}
By \cref{lem:mfa-isotropy}, $\alpha_3=\tfrac1d\E[\tr(\bm{S}^3)]$, and with $\bm{S}=\sum_{a}\bm{w}_a\bm{w}_a^\top$ and cyclicity of the trace,
\[
\tr(\bm{S}^3)=\sum_{a,b,c}\langle\bm{w}_a,\bm{w}_b\rangle\,\langle\bm{w}_b,\bm{w}_c\rangle\,\langle\bm{w}_c,\bm{w}_a\rangle.
\]
Partition $(a,b,c)\in\{1,\dots,k\}^3$ by multiplicity pattern:
\begin{align*}
\#\{a=b=c\}=k:\qquad & \E[\norm{\bm{w}_a}^6]=d(d+2)(d+4),\\
\#\{|\{a,b,c\}|=2\}=3k(k-1):\qquad & \E[\norm{\bm{w}_a}^2\langle\bm{w}_a,\bm{w}_c\rangle^2]=\E[\norm{\bm{w}_a}^4]=d(d+2),\\
\#\{a,b,c\text{ distinct}\}=k(k-1)(k-2):\qquad & \textstyle\sum_{p,q,r}\delta_{pr}\delta_{pq}\delta_{qr}=d,
\end{align*}
where for the middle pattern, e.g.\ $a=b\neq c$ gives $\norm{\bm{w}_a}^2\langle\bm{w}_a,\bm{w}_c\rangle^2$ and, conditioning on $\bm{w}_a$, $\langle\bm{w}_a,\bm{w}_c\rangle\mid\bm{w}_a\sim\bN(0,\norm{\bm{w}_a}^2)$; the other two cases are symmetric. Hence
\begin{align*}
d\,\alpha_3 &= k\,d(d+2)(d+4)+3k(k-1)\,d(d+2)+k(k-1)(k-2)\,d,\\
\alpha_3 &= k(d+2)(d+4)+3k(k-1)(d+2)+k(k-1)(k-2)
        = k^3+3(d+1)k^2+(d^2+3d+4)k.
\end{align*}
\end{proof}

\begin{proof}[Proof of \cref{thm:mf}]
Take expectations in \eqref{eq:mfa-expand}. Since $\Mstar$ is deterministic, \cref{lem:mfa-isotropy} gives
\begin{align*}
\E[\bm{S}^3] &= \alpha_3\bm{I}_d,\\
\E[\bm{S}^2\Mstar]=\E[\Mstar\bm{S}^2] &= \alpha_2\Mstar,\\
\E[\Mstar\bm{S}\Mstar]=\Mstar\,\E[\bm{S}]\,\Mstar &= k\,(\Mstar)^2,
\end{align*}
so summing gives \eqref{eq:mfa-main}; the values of $\alpha_1,\alpha_2,\alpha_3$ are \cref{lem:mfa-a1,lem:mfa-a2,lem:mfa-a3}. \Cref{cor:mfa-scaling} follows from the entries of \eqref{eq:mfa-main}: with $d,\Mstar$ fixed, $\alpha_3=\Theta(k^3)$, $\alpha_2=\Theta(k^2)$, $k(\Mstar)^2_\bullet=\Theta(k)$.
\end{proof}

\begin{remark}
This computation answers the question raised by \citet{ma2026preconditioning} of how gradient orthogonalization meets the curvature of training: the $\Theta(k)$ gap between the diagonal and off-diagonal scales is the regime in which orthogonalization, which suppresses the isotropic $\Theta(k^3)\bm{I}_d$ part, acts as a real preconditioner relative to the target-aligned signal $\Mstar$.
\end{remark}

\section{Deep Linear Networks}\label{app:dl}

\subsection{Setup}\label{app:dl:setup}

Let $L\ge1$ and $d_0,\dots,d_L$ be positive integers. The depth-$L$ linear network
\begin{equation}\label{eq:dla-net}
f(\bm{x})=\bm{W}_L\bm{W}_{L-1}\cdots\bm{W}_1\bm{x},\qquad\bm{W}_i\in\R^{d_i\times d_{i-1}},
\end{equation}
learns a deterministic linear target $\bm{\Phi}\in\R^{d_L\times d_0}$ under the population squared loss
\begin{equation}\label{eq:dla-loss}
\bL(\bm{W}_1,\dots,\bm{W}_L)\;\eqdef\;\tfrac12\,\E_{\bm{x}}\norm{\bm{W}_L\cdots\bm{W}_1\bm{x}-\bm{\Phi}\bm{x}}^2.
\end{equation}
Define the layer-$i$ forward, backward and residual factors
\begin{equation}\label{eq:dla-ABR}
\bm{B}_i\eqdef\bm{W}_{i-1}\cdots\bm{W}_1\in\R^{d_{i-1}\times d_0},\qquad
\bm{A}_i\eqdef\bm{W}_L\cdots\bm{W}_{i+1}\in\R^{d_L\times d_i},\qquad
\bm{R}\eqdef\bm{W}_L\cdots\bm{W}_1-\bm{\Phi},
\end{equation}
with $\bm{B}_1\eqdef\bm{I}_{d_0}$ and $\bm{A}_L\eqdef\bm{I}_{d_L}$. The layer-$i$ gradient is
\begin{equation}\label{eq:dla-grad}
\bm{G}_i\;\eqdef\;\frac{\partial\bL}{\partial\bm{W}_i}\;=\;\bm{A}_i^\top\bm{R}\,\bm{B}_i^\top\;\in\;\R^{d_i\times d_{i-1}}.
\end{equation}
The closed form below uses two pairs of dimension-only scalar sequences. For the backward factors $\{\bm{B}_i\}$,
\begin{align}
r_1 &= 1, & v_1 &= d_0^2, \label{eq:dla-rv-init}\\
r_i &= v_{i-1}+(d_{i-1}+1)d_{i-2}r_{i-1}, & v_i &= d_{i-1}^2v_{i-1}+2d_{i-1}d_{i-2}r_{i-1}\quad(i\ge2); \label{eq:dla-rv-rec}
\end{align}
for the forward factors $\{\bm{A}_i\}$,
\begin{align}
s_L &= 1, & u_L &= d_L^2, \label{eq:dla-su-init}\\
s_{i-1} &= u_i+(d_{i-1}+1)d_is_i, & u_{i-1} &= d_{i-1}^2u_i+2d_{i-1}d_is_i\quad(i\le L); \label{eq:dla-su-rec}
\end{align}
and the composite Frobenius coefficient is
\begin{equation}\label{eq:dla-Vi}
V_i\;\eqdef\;d_{i-1}r_i\;=\;\E\bigl[\norm{\bm{B}_i\bm{B}_i^\top}_\Frob^2\bigr].
\end{equation}

\subsection{Standing assumption}\label{app:dl:assump}

\begin{assumption}\label{ass:dla}
The entries $(\bm{W}_i)_{ab}$ are i.i.d.\ $\bN(0,1)$ across all $i,a,b$; $\bm{x}\sim\bN(\bm{0},\bm{I}_{d_0})$; and $\bm{\Phi}$ is deterministic and independent of $\bm{W}_1,\dots,\bm{W}_L$. Expectations $\E[\cdot]$ are over $\bm{W}_1,\dots,\bm{W}_L$ ($\bm{x}$ has been integrated out in $\bL$).
\end{assumption}

\subsection{Main results restated}\label{app:dl:results}

\begin{theoremrst}{\cref{thm:dl}, restated}
Under \cref{ass:dla}, for every layer $i\in\{1,\dots,L\}$,
\begin{equation}\label{eq:dla-main}
\E[\bm{G}_i\bm{G}_i^\top]=V_i\,s_i\,\bm{I}_{d_i}+\bm{T}_i,\qquad
\bm{T}_i=\begin{cases}\Bigl(\prod_{j=1}^{L-1}d_j\Bigr)\bm{\Phi}\bm{\Phi}^\top,&i=L,\\[6pt]
\Bigl(\prod_{j=1,\,j\neq i}^{L-1}d_j\Bigr)\norm{\bm{\Phi}}_\Frob^2\,\bm{I}_{d_i},&1\le i<L,\end{cases}
\end{equation}
with $V_i,s_i$ from \eqref{eq:dla-rv-init}--\eqref{eq:dla-Vi}.
\end{theoremrst}

\begin{corollary}\label{cor:dla-scaling}
Under \cref{ass:dla}, the diagonal entries of $\E[\bm{G}_i\bm{G}_i^\top]$ are
\begin{equation}\label{eq:dla-diag}
\E\bigl[(\bm{G}_i\bm{G}_i^\top)_{aa}\bigr]=\begin{cases}V_is_i+\bigl(\prod_{j\neq i,\,1\le j\le L-1}d_j\bigr)\norm{\bm{\Phi}}_\Frob^2,&1\le i<L,\\[4pt]
V_Ls_L+\bigl(\prod_{j=1}^{L-1}d_j\bigr)(\bm{\Phi}\bm{\Phi}^\top)_{aa},&i=L,\end{cases}
\end{equation}
and, for $a\neq b$, the off-diagonal entries are $\E[(\bm{G}_i\bm{G}_i^\top)_{ab}]=0$ if $1\le i<L$ and $\bigl(\prod_{j=1}^{L-1}d_j\bigr)(\bm{\Phi}\bm{\Phi}^\top)_{ab}$ if $i=L$. In particular $\E[\bm{G}_i\bm{G}_i^\top]$ is a scalar multiple of $\bm{I}_{d_i}$ for every $i<L$. Moreover,
\begin{equation}\label{eq:dla-order}
V_is_i=\Theta\Bigl(d_i\prod_{j=1,\,j\neq i}^{L-1}d_j^2\Bigr)\quad(1\le i<L),\qquad
V_Ls_L=\Theta\Bigl(\prod_{j=1}^{L-1}d_j^2\Bigr).
\end{equation}
\end{corollary}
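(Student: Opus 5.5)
The plan is to take the restated \cref{thm:dl} as given. The diagonal and off-diagonal formulas of \cref{cor:dla-scaling} then come from reading off the entries of \eqref{eq:dla-main}. The only genuine work is the order estimate \eqref{eq:dla-order} for the dimension-only scalars $V_is_i$, which I would establish from the recursions \eqref{eq:dla-rv-init}--\eqref{eq:dla-su-rec}.

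\textbf{Entries.} For $i<L$, the matrix $\bm{T}_i$ is a scalar multiple of $\bm{I}_{d_i}$, so $\E[\bm{G}_i\bm{G}_i^\top]$ is a scalar multiple of the identity. Its diagonal entry is $V_is_i+\bigl(\prod_{j\neq i}d_j\bigr)\norm{\bm{\Phi}}_\Frob^2$ and every off-diagonal entry vanishes. For $i=L$ the $(a,b)$ entry is $V_Ls_L\delta_{ab}+\bigl(\prod_{j=1}^{L-1}d_j\bigr)(\bm{\Phi}\bm{\Phi}^\top)_{ab}$, which is \eqref{eq:dla-diag} together with the stated off-diagonal. No probabilistic input beyond the theorem is needed for this part.

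\textbf{Orders.} The asymptotic regime treats $d_1,\dots,d_{L-1}$ as growing, with $d_0,d_L,L$ fixed, and all sequences are positive. First I would show by induction on $i$ that
\[
r_i=\Theta\Bigl(\prod_{j=1}^{i-2}d_j^2\cdot d_{i-1}\Bigr),\qquad v_i=\Theta\Bigl(\prod_{j=1}^{i-1}d_j^2\Bigr).
\]
For $v_i$, the recursion gives $v_i=d_{i-1}^2v_{i-1}+2d_{i-1}d_{i-2}r_{i-1}$. The first term is $\Theta(\prod_{j\le i-1}d_j^2)$. The second is $d_{i-1}d_{i-2}\cdot\Theta(\prod_{j\le i-3}d_j^2\,d_{i-2})$, which is smaller by the factor $d_{i-1}$; it is at most of the same order and positive, so $\Theta$ is preserved. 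For $r_i$, the recursion gives $r_i=v_{i-1}+(d_{i-1}+1)d_{i-2}r_{i-1}$. The first term is $\Theta(\prod_{j\le i-2}d_j^2)$, and the second is $\Theta(d_{i-1}\prod_{j\le i-2}d_j^2)$, which dominates.

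The small-index base cases need care, since $d_0$ is fixed. They are $r_1=1$, $v_1=d_0^2=\Theta(1)$, $r_2=d_0^2+(d_1+1)d_0=\Theta(d_1)$ and $v_2=d_1^2d_0^2+2d_1d_0=\Theta(d_1^2)$, consistent with the claim. Hence
\[
V_i=d_{i-1}r_i=\Theta\Bigl(\prod_{j=1}^{i-1}d_j^2\Bigr).
\]
Symmetrically, a downward induction from $s_L=1$, $u_L=d_L^2$ gives
\[
s_i=\Theta\Bigl(d_i\prod_{j=i+1}^{L-1}d_j^2\Bigr)\quad\text{for } i<L .
\]
Multiplying, $V_is_i=\Theta\bigl(d_i\prod_{j\neq i}d_j^2\bigr)$ for $i<L$, and $V_Ls_L=V_L=\Theta\bigl(\prod_{j=1}^{L-1}d_j^2\bigr)$ because $s_L=1$. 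This is \eqref{eq:dla-order}.

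\textbf{Main obstacle.} I expect the main obstacle to be the bookkeeping at the boundary indices. These are $i=1,2$ for the forward recursion and $i=L,L-1$ for the backward one, where $d_0$ or $d_L$ enters but does not grow. The task is to check that the constants hidden in $\Theta$ depend only on $d_0,d_L,L$. I would state the inductive hypothesis with explicit two-sided constants, e.g.\ $v_i\in[c,C]\prod d_j^2$, and verify that positivity of every summand yields the lower bounds trivially. The upper bounds then follow using only $d_j\ge1$.
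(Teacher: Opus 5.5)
Your proposal is correct and follows the same structure as the paper: the entry formulas are read off from \eqref{eq:dla-main}, and \eqref{eq:dla-order} comes from induction on the recursions \eqref{eq:dla-rv-init}--\eqref{eq:dla-su-rec}. The difference is in how the induction is normalized. You track the orders of $r_i,v_i$ (and $s_i,u_i$) in the hidden dimensions with $d_0,d_L,L$ fixed, so your constants depend on $d_0$ and $d_L$. The paper instead rewrites the forward recursion in terms of $V_i=d_{i-1}r_i$. It sets $a_i=v_i/\pi_i^2$ and $b_i=V_i/(\pi_i^2\sigma_i)$, where $\pi_i=\prod_{j=0}^{i-1}d_j$ and $\sigma_i=\sum_{m=0}^{i-1}d_m^{-1}$. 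Its key observation is that $b_i$ is a convex combination of $a_{i-1}$ and $(1+d_{i-1}^{-1})b_{i-1}$. This gives $1\le a_i,b_i\le(2L)^L$, i.e.\ $v_i\asymp\pi_i^2$ and $V_i\asymp\pi_i^2\sigma_i$, with constants depending only on $L$ and uniform in every dimension, $d_0$ included. With the output-side analogues $\rho_i=\prod_{j=i}^{L}d_j$ and $\tau_i=\sum_{m=i}^{L}d_m^{-1}$, this yields $V_is_i\asymp\pi_i^2\rho_i^2\sigma_i\tau_i/d_i$. The widths $d_0,d_L$ enter only at the very end, through $\sigma_i\tau_i\ge(d_0d_L)^{-1}$ and the absorbed factor $d_0^2d_L^2$. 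Your direct order-tracking is more elementary and is exactly what the corollary needs in its regime. The paper's normalization gives in addition a two-sided estimate that stays valid when the input and output widths also grow.
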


\subsection{Preliminary lemmas}\label{app:dl:lemmas}

\begin{lemma}\label{lem:dla-quad}
Same as \cref{lem:mfa-quad}.
\end{lemma}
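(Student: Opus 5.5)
The statement is \cref{lem:dla-quad}, which is \cref{lem:mfa-quad} verbatim, so the plan is to reuse that argument unchanged. Take $\bm{X}\in\R^{m\times n}$ with i.i.d.\ $\bN(0,1)$ entries. The only probabilistic input is the second-moment identity $\E[\bm{X}_{ap}\bm{X}_{bq}]=\delta_{ab}\delta_{pq}$, which follows from independence and unit variance.

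For the first identity, expand an entry as
\[
(\bm{X}\bm{M}\bm{X}^\top)_{ab}=\sum_{p,q}\bm{X}_{ap}\bm{M}_{pq}\bm{X}_{bq}.
\]
Because $\bm{M}$ is deterministic, linearity of expectation gives
\[
\E\bigl[(\bm{X}\bm{M}\bm{X}^\top)_{ab}\bigr]=\sum_{p,q}\bm{M}_{pq}\,\delta_{ab}\delta_{pq}=\delta_{ab}\tr(\bm{M}).
\]
This is exactly $\tr(\bm{M})\bm{I}_m$.

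For the second identity, apply the first to $\bm{X}^\top\in\R^{n\times m}$, which again has i.i.d.\ standard Gaussian entries, with $\bm{N}\in\R^{m\times m}$ in place of $\bm{M}$. This yields $\E[\bm{X}^\top\bm{N}\bm{X}]=\tr(\bm{N})\bm{I}_n$.

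There is no real obstacle here. The only point that needs care when the lemma is used later for the deep linear network is the conditioning step. In expressions such as $\E[\bm{G}_i\bm{G}_i^\top]$, the sandwiched matrix ($\bm{M}$ or $\bm{N}$) is a product of the other, independent layers. One therefore first conditions on those layers, so that the matrix is deterministic given them, applies the lemma, and then takes the outer expectation.
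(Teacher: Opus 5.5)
Your proof is correct and is exactly the paper's argument: the entrywise computation with $\E[\bm{X}_{ap}\bm{X}_{bq}]=\delta_{ab}\delta_{pq}$ gives $\delta_{ab}\tr(\bm{M})$, and the second identity follows by applying the first to $\bm{X}^\top$. Your remark about first conditioning on the independent layers also matches how the paper applies the lemma, for instance in \cref{lem:dla-Wexp,lem:dla-Phi}.
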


\begin{lemma}[Wick/Isserlis]\label{lem:dla-wick}
Same as \cref{lem:mfa-wick}.
\end{lemma}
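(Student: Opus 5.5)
The statement is the Wick/Isserlis formula already established as \cref{lem:mfa-wick}, so the quickest route is to invoke that proof verbatim. To keep the argument self-contained, I would give an inductive proof by Gaussian integration by parts, which avoids bookkeeping over derivatives of the moment generating function.

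Let $\bm{X}=(X_1,\dots,X_N)$ be centered jointly Gaussian with covariance $\bm{\Sigma}$. The key tool is Stein's identity: for smooth $g$ of polynomial growth,
\[
\E[X_1\,g(\bm{X})]=\sum_{j=1}^{N}\E[X_1X_j]\,\E[\partial_j g(\bm{X})].
\]
I would prove this first in the nondegenerate case. Write $\bm{X}=\bm{\Sigma}^{1/2}\bm{Z}$ with $\bm{Z}$ standard Gaussian. Apply the one-dimensional identity $\E[Z_kh(Z_k)]=\E[h'(Z_k)]$ coordinatewise; it follows from integrating by parts against $e^{-z^2/2}$. Then use the chain rule. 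The degenerate case follows by the same representation with a possibly singular square root, since only $\bm{X}=\bm{L}\bm{Z}$ for some matrix $\bm{L}$ is needed.

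Next I would apply Stein's identity with $g=X_2\cdots X_{2n}$ (take $N=2n$). Then $\partial_j g=\prod_{i\neq1,j}X_i$, which gives the recursion
\[
\E[X_1\cdots X_{2n}]=\sum_{j=2}^{2n}\E[X_1X_j]\,\E\Bigl[\prod_{i\notin\{1,j\}}X_i\Bigr].
\]
Induction on $n$ now closes the argument. The base case $n=1$ is the identity $\E[X_1X_2]=\E[X_1X_2]$. For the inductive step, every pairing of $\{1,\dots,2n\}$ is uniquely determined by the partner $j$ of $1$ together with a pairing of the remaining $2n-2$ indices. Hence the right-hand side of the recursion equals the sum over all pairings of $\{1,\dots,2n\}$. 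The same recursion applied to an odd number of factors ends at $\E[X_i]=0$, so odd moments vanish.

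The only delicate step is justifying Stein's identity when $\bm{\Sigma}$ is singular, along with the integrability of the polynomial test function. Both are handled by the representation $\bm{X}=\bm{L}\bm{Z}$ and the finiteness of all Gaussian moments. Everything else is combinatorial bookkeeping that matches the pairing count $(2n-1)!!$.
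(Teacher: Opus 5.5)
Your argument is correct, but it takes a different route from the paper. The paper's proof of \cref{lem:mfa-wick}, to which this lemma defers, is a one-line pointer: differentiate the moment generating function $\E[\exp(\bm{t}^\top\bm{X})]=\exp(\tfrac12\bm{t}^\top\bm{\Sigma}\bm{t})$ in $t_1,\dots,t_{2n}$ at $\bm{t}=\bm{0}$. Carried out, this reads off the coefficient of $t_1\cdots t_{2n}$ in $(\tfrac12\bm{t}^\top\bm{\Sigma}\bm{t})^n/n!$, which is the sum over pairings. Odd moments vanish because the series contains only even-degree monomials.

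That route is a single computation and needs no factorization of $\bm{\Sigma}$, since the MGF formula holds for every positive semidefinite $\bm{\Sigma}$. Its drawback is that the pairing combinatorics is hidden inside a coefficient extraction. Your Stein recursion makes the combinatorics explicit (the partner of index $1$ plus a pairing of the rest) and gets odd moments from the same induction. It also uses the same Gaussian integration by parts the paper relies on in \cref{lem:rela-stein} and in the moment recursion of \cref{lem:atta-wm}. Its only extra cost is transferring the identity from $\bm{Z}$ to $\bm{X}=\bm{L}\bm{Z}$.

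That transfer needs only $\bm{L}\bm{L}^\top=\bm{\Sigma}$, not invertibility, so your split into nondegenerate and degenerate cases can be dropped. The singular case is genuinely needed here. \cref{lem:dla-fourth} applies the lemma to $\E[\bm{X}_{ap}\bm{X}_{bq}\bm{X}_{cr}\bm{X}_{ds}]$ for all index values, including coinciding entries. There $g$ must be read as a polynomial in the formal coordinates $x_2,\dots,x_{2n}$, so that $\partial_1 g=0$ is what removes the $j=1$ term from your recursion. As written, your formula $\partial_j g=\prod_{i\neq1,j}X_i$ is valid only for $j\ge2$.
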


\begin{lemma}\label{lem:dla-fourth}
Let $\bm{X}\in\R^{m\times n}$ have i.i.d.\ $\bN(0,1)$ entries and let $\bm{M}\in\R^{n\times n}$, $\bm{N}\in\R^{m\times m}$ be deterministic symmetric. Then
\begin{align}
\E\bigl[\bm{X}\bm{M}\bm{X}^\top\bm{X}\bm{M}\bm{X}^\top\bigr] &= \bigl(\tr(\bm{M})^2+(m+1)\norm{\bm{M}}_\Frob^2\bigr)\bm{I}_m, \label{eq:dla-fourth-left}\\
\E\bigl[\bm{X}^\top\bm{N}\bm{X}\bm{X}^\top\bm{N}\bm{X}\bigr] &= \bigl(\tr(\bm{N})^2+(n+1)\norm{\bm{N}}_\Frob^2\bigr)\bm{I}_n. \label{eq:dla-fourth-right}
\end{align}
\end{lemma}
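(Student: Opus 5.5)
Both identities follow from one index computation with Wick's formula (\cref{lem:dla-wick}); the second is the first applied to $\bm{X}^\top$, with the roles of $m$ and $n$ swapped. So I would prove only \eqref{eq:dla-fourth-left}. The first step is to establish isotropy. For any $\bm{Q}\in O(m)$ we have $\bm{Q}\bm{X}\stackrel{d}{=}\bm{X}$, hence
\[
\bm{Q}\,\E[\bm{X}\bm{M}\bm{X}^\top\bm{X}\bm{M}\bm{X}^\top]\,\bm{Q}^\top=\E[\bm{X}\bm{M}\bm{X}^\top\bm{X}\bm{M}\bm{X}^\top].
\]
This gives a scalar multiple $c\,\bm{I}_m$, exactly as in \cref{lem:mfa-isotropy}. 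It therefore suffices to compute one diagonal entry, say the $(a,a)$ entry, or equivalently $c=\tfrac1m\E[\tr(\cdot)]$.

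Next I expand the $(a,a)$ entry as
\[
\sum_{b}\sum_{p,q,r,s}M_{pq}M_{rs}\,\E\bigl[X_{ap}X_{bq}X_{br}X_{as}\bigr]
\]
and apply Wick with $\E[X_{ip}X_{jq}]=\delta_{ij}\delta_{pq}$. There are three pairings.
\begin{itemize}
\item Pairing $(ap,bq)(br,as)$ forces $b=a$, $p=q$, $r=s$. It contributes $\tr(\bm{M})^2$, with $b$ summed over the single value $a$.
\item Pairing $(ap,br)(bq,as)$ forces $b=a$, $p=r$, $q=s$. It contributes $\sum M_{pq}M_{pq}=\norm{\bm{M}}_\Frob^2$.
\item Pairing $(ap,as)(bq,br)$ places no constraint on $b$ and forces $p=s$, $q=r$. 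It contributes $\sum_b\sum_{p,q}M_{pq}M_{qp}=m\,\tr(\bm{M}^2)$.
\end{itemize}
Since $\bm{M}$ is symmetric, $\tr(\bm{M}^2)=\norm{\bm{M}}_\Frob^2$, and the total is $\tr(\bm{M})^2+(m+1)\norm{\bm{M}}_\Frob^2$.

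As a cross-check, I would rederive the same value by conditioning. Write $\bm{X}\bm{M}\bm{X}^\top\bm{X}\bm{M}\bm{X}^\top=\bm{X}\bm{M}\bm{S}'\bm{M}\bm{X}^\top$ with $\bm{S}'=\bm{X}^\top\bm{X}$. With $\bm{M}=\bm{I}$ this reduces to \cref{lem:mfa-a2}: $m\cdot n(n+m+1)$ total trace, i.e. diagonal $n^2+(m+1)n$, which matches.

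The main obstacle is only bookkeeping. The $b$-sum survives unconstrained in exactly one pairing, and that is what produces the factor $m$. One must also be careful that symmetry of $\bm{M}$ is used to merge $\tr(\bm{M}^2)$ and $\norm{\bm{M}}_\Frob^2$; without symmetry, the two terms would be distinct. For the right identity, I transpose: $\bm{X}^\top$ has i.i.d. entries and size $n\times m$, so $m$ becomes $n$ and $\bm{M}$ becomes $\bm{N}$.
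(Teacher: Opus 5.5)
Your proof is correct and follows the paper's route: Wick's formula on the fourfold product, three pairings of which exactly one leaves the $b$-sum free (giving the factor $m$), symmetry of $\bm{M}$ to identify $\tr(\bm{M}^2)$ with $\norm{\bm{M}}_\Frob^2$, and transposition for the second identity. The only difference is cosmetic: you first use $\bm{Q}\bm{X}\stackrel{d}{=}\bm{X}$ to reduce to a single diagonal entry, whereas the paper computes the general $(a,c)$ entry and reads off the factor $\delta_{ac}$ directly from the pairings.
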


\begin{proof}
We prove \eqref{eq:dla-fourth-left}; \eqref{eq:dla-fourth-right} follows by applying it to $\bm{X}^\top$. For $a,b,c,d\in\{1,\dots,m\}$, \cref{lem:dla-wick} gives
\[
\E[\bm{X}_{ap}\bm{X}_{bq}\bm{X}_{cr}\bm{X}_{ds}]=\delta_{ab}\delta_{pq}\delta_{cd}\delta_{rs}+\delta_{ac}\delta_{pr}\delta_{bd}\delta_{qs}+\delta_{ad}\delta_{ps}\delta_{bc}\delta_{qr},
\]
and contracting against $\bm{M}_{pq}\bm{M}_{rs}$ with $\bm{M}=\bm{M}^\top$,
\[
\E[(\bm{X}\bm{M}\bm{X}^\top)_{ab}(\bm{X}\bm{M}\bm{X}^\top)_{cd}]=\delta_{ab}\delta_{cd}\tr(\bm{M})^2+(\delta_{ac}\delta_{bd}+\delta_{ad}\delta_{bc})\norm{\bm{M}}_\Frob^2.
\]
Summing over $b=c$,
\[
\E[(\bm{X}\bm{M}\bm{X}^\top\bm{X}\bm{M}\bm{X}^\top)_{ac}]=\sum_b\E[(\bm{X}\bm{M}\bm{X}^\top)_{ab}(\bm{X}\bm{M}\bm{X}^\top)_{bc}]=\delta_{ac}\bigl(\tr(\bm{M})^2+(m+1)\norm{\bm{M}}_\Frob^2\bigr).
\]
\end{proof}

\begin{lemma}\label{lem:dla-trsq}
Let $\bm{X}\in\R^{m\times n}$ have i.i.d.\ $\bN(0,1)$ entries and $\bm{M}\in\R^{n\times n}$ be deterministic symmetric. Then
\begin{equation}\label{eq:dla-trsq}
\E\bigl[\tr(\bm{X}\bm{M}\bm{X}^\top)^2\bigr]=m^2\tr(\bm{M})^2+2m\norm{\bm{M}}_\Frob^2.
\end{equation}
\end{lemma}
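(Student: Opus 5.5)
The plan is to reduce the trace to a quadratic form in a Gaussian vector whose covariance is $\bm{M}\otimes\bm{I}_m$, and then use the standard second-moment formula for Gaussian quadratic forms. Write $\bm{X}$ by its rows $\bm{x}_1,\dots,\bm{x}_m\in\R^n$, which are i.i.d.\ $\bN(\bm{0},\bm{I}_n)$. Then
\[
\tr(\bm{X}\bm{M}\bm{X}^\top)=\sum_{a=1}^m \bm{x}_a^\top\bm{M}\bm{x}_a ,
\]
which is a sum of $m$ i.i.d.\ scalars $q_a\eqdef\bm{x}_a^\top\bm{M}\bm{x}_a$. This gives
\[
\E\Bigl[\bigl(\textstyle\sum_a q_a\bigr)^2\Bigr]=m\,\E[q_1^2]+m(m-1)\,\E[q_1]^2 .
\]
By \cref{lem:dla-quad}, or directly, $\E[q_1]=\tr(\bm{M})$. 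It therefore remains to compute $\E[q_1^2]$.

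For $\E[q_1^2]$ I would use \cref{lem:dla-wick}. Expand
\[
q_1^2=\sum_{p,q,r,s}\bm{M}_{pq}\bm{M}_{rs}\,x_px_qx_rx_s .
\]
The three pairings contribute $\delta_{pq}\delta_{rs}$, $\delta_{pr}\delta_{qs}$ and $\delta_{ps}\delta_{qr}$. After contracting against $\bm{M}_{pq}\bm{M}_{rs}$ and using $\bm{M}=\bm{M}^\top$, these give $\tr(\bm{M})^2$, $\norm{\bm{M}}_\Frob^2$ and $\norm{\bm{M}}_\Frob^2$ respectively. Hence
\[
\E[q_1^2]=\tr(\bm{M})^2+2\norm{\bm{M}}_\Frob^2 .
\]
This is the same contraction already carried out in the proof of \cref{lem:dla-fourth}, restricted to a single row ($a=b=c=d$).

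Combining the two pieces gives
\[
m\bigl(\tr(\bm{M})^2+2\norm{\bm{M}}_\Frob^2\bigr)+m(m-1)\tr(\bm{M})^2=m^2\tr(\bm{M})^2+2m\norm{\bm{M}}_\Frob^2,
\]
which is \eqref{eq:dla-trsq}.

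There is no real obstacle here. The only points needing care are that symmetry of $\bm{M}$ is used to identify $\sum_{p,q}\bm{M}_{pq}\bm{M}_{qp}$ with $\norm{\bm{M}}_\Frob^2$, and that the cross terms $a\neq b$ factor by independence of the rows.
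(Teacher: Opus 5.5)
Your argument is correct and is essentially the paper's. Both rest on Wick's formula for the fourth moment, whose three pairings give $\tr(\bm{M})^2$, $\norm{\bm{M}}_\Frob^2$ and $\tr(\bm{M}^2)=\norm{\bm{M}}_\Frob^2$; the paper applies it to the full sum over row indices $a,a'$, so the row Kronecker deltas produce the factors $m^2$ and $m$ directly, while you split into same-row and different-row terms and use independence for the latter (one harmless wording slip: $\bm{M}\otimes\bm{I}_m$ is the matrix of the quadratic form in $\mathrm{vec}(\bm{X})$, not its covariance, which is the identity, but your proof never relies on that framing).
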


\begin{proof}
Expanding,
\[
\tr(\bm{X}\bm{M}\bm{X}^\top)^2=\sum_{a,a',p,q,p',q'}\bm{X}_{ap}\bm{M}_{pq}\bm{X}_{aq}\,\bm{X}_{a'p'}\bm{M}_{p'q'}\bm{X}_{a'q'};
\]
by \cref{lem:dla-wick} the three pair contractions of $\E[\bm{X}_{ap}\bm{X}_{aq}\bm{X}_{a'p'}\bm{X}_{a'q'}]$ contribute (after summing over $\bm{M}\bm{M}$) $m^2\tr(\bm{M})^2$, $m\norm{\bm{M}}_\Frob^2$ and $m\tr(\bm{M}^2)=m\norm{\bm{M}}_\Frob^2$, which sum to \eqref{eq:dla-trsq}.
\end{proof}

\subsection{Proofs}\label{app:dl:proofs}

The proof of \cref{thm:dl} has three steps: a $\bm{W}_i$-expansion; recursive moments of $\bm{B}_i\bm{B}_i^\top$ and $\bm{A}_i^\top\bm{A}_i$; and the target-dependent term.

\begin{lemma}\label{lem:dla-Wexp}
Under \cref{ass:dla}, for every $i$,
\begin{equation}\label{eq:dla-Wexp}
\E\bigl[\bm{G}_i\bm{G}_i^\top\bigm|\bm{A}_i,\bm{B}_i\bigr]=\tr\bigl((\bm{B}_i\bm{B}_i^\top)^2\bigr)\,(\bm{A}_i^\top\bm{A}_i)^2+\bm{A}_i^\top\bm{\Phi}\bm{B}_i^\top\bm{B}_i\bm{\Phi}^\top\bm{A}_i.
\end{equation}
\end{lemma}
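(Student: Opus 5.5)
We condition on $\bm{A}_i$ and $\bm{B}_i$, which depend only on $\{\bm{W}_j\}_{j\neq i}$ and are therefore independent of $\bm{W}_i$. The only randomness left is $\bm{W}_i$, which enters linearly through
\[
\bm{R}=\bm{A}_i\bm{W}_i\bm{B}_i-\bm{\Phi}.
\]
Substituting into \eqref{eq:dla-grad} gives $\bm{G}_i=\bm{A}_i^\top\bm{A}_i\bm{W}_i\bm{B}_i\bm{B}_i^\top-\bm{A}_i^\top\bm{\Phi}\bm{B}_i^\top$. Writing $\bm{P}\eqdef\bm{A}_i^\top\bm{A}_i$ and $\bm{Q}\eqdef\bm{B}_i\bm{B}_i^\top$, both symmetric and fixed under the conditioning, we expand $\bm{G}_i\bm{G}_i^\top$ into four terms:
\[
\bm{P}\bm{W}_i\bm{Q}^2\bm{W}_i^\top\bm{P}
-\bm{P}\bm{W}_i\bm{Q}\bm{B}_i\bm{\Phi}^\top\bm{A}_i
-\bm{A}_i^\top\bm{\Phi}\bm{B}_i^\top\bm{Q}\bm{W}_i^\top\bm{P}
+\bm{A}_i^\top\bm{\Phi}\bm{B}_i^\top\bm{B}_i\bm{\Phi}^\top\bm{A}_i .
\]

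Next we take the conditional expectation term by term.
\begin{itemize}
\item The two cross terms are linear in $\bm{W}_i$. Since $\E[\bm{W}_i]=\bm{0}$ and $\bm{W}_i$ is independent of the conditioning, both vanish.
\item The last term contains no $\bm{W}_i$ and survives unchanged; it is exactly the second summand of \eqref{eq:dla-Wexp}.
\item For the quadratic term we pull $\bm{P}$ out on both sides and apply the first identity of \cref{lem:mfa-quad} (restated as \cref{lem:dla-quad}) with $\bm{X}=\bm{W}_i\in\R^{d_i\times d_{i-1}}$ and $\bm{M}=\bm{Q}^2$. This gives $\E[\bm{W}_i\bm{Q}^2\bm{W}_i^\top]=\tr(\bm{Q}^2)\bm{I}_{d_i}$, so the term becomes $\tr\bigl((\bm{B}_i\bm{B}_i^\top)^2\bigr)\,\bm{P}^2=\tr\bigl((\bm{B}_i\bm{B}_i^\top)^2\bigr)(\bm{A}_i^\top\bm{A}_i)^2$, the first summand of \eqref{eq:dla-Wexp}.
\end{itemize}
Adding the surviving pieces gives \eqref{eq:dla-Wexp}.

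There is no real obstacle. The one point to check carefully is that \cref{lem:mfa-quad} may be applied conditionally. This holds because $\bm{Q}$ is measurable with respect to $\bm{W}_1,\dots,\bm{W}_{i-1}$, which are independent of $\bm{W}_i$, so conditioning leaves the law of $\bm{W}_i$ unchanged. The edge cases $i=1$ (where $\bm{B}_1=\bm{I}$) and $i=L$ (where $\bm{A}_L=\bm{I}$) go through verbatim.
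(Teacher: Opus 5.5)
Your proof is correct and follows the paper's argument essentially step for step. You substitute $\bm{R}=\bm{A}_i\bm{W}_i\bm{B}_i-\bm{\Phi}$ and drop the two terms linear in $\bm{W}_i$, using $\E[\bm{W}_i]=\bm{0}$ with independence where the paper invokes $\bm{W}_i\stackrel{d}{=}-\bm{W}_i$, which amounts to the same thing; you then evaluate the quadratic term with \cref{lem:dla-quad} applied to $\bm{M}=(\bm{B}_i\bm{B}_i^\top)^2$.
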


\begin{proof}
Substituting $\bm{R}=\bm{A}_i\bm{W}_i\bm{B}_i-\bm{\Phi}$ into \eqref{eq:dla-grad},
\[
\bm{G}_i\bm{G}_i^\top=(\bm{A}_i^\top\bm{A}_i)\bm{W}_i(\bm{B}_i\bm{B}_i^\top\bm{B}_i\bm{B}_i^\top)\bm{W}_i^\top(\bm{A}_i^\top\bm{A}_i)+(\text{cross})+\bm{A}_i^\top\bm{\Phi}\bm{B}_i^\top\bm{B}_i\bm{\Phi}^\top\bm{A}_i,
\]
where $(\text{cross})$ collects the two terms linear in $\bm{W}_i$; since $\bm{W}_i\stackrel{d}{=}-\bm{W}_i$ is independent of $(\bm{A}_i,\bm{B}_i)$, $\E[(\text{cross})\mid\bm{A}_i,\bm{B}_i]=\bm{0}$. By \cref{lem:dla-quad} with $\bm{K}=\bm{B}_i\bm{B}_i^\top\bm{B}_i\bm{B}_i^\top$,
\[
\E_{\bm{W}_i}\bigl[\bm{W}_i\bm{K}\bm{W}_i^\top\bigm|\bm{A}_i,\bm{B}_i\bigr]=\tr(\bm{K})\,\bm{I}_{d_i}=\tr\bigl((\bm{B}_i\bm{B}_i^\top)^2\bigr)\bm{I}_{d_i},
\]
which yields \eqref{eq:dla-Wexp}.
\end{proof}

\begin{lemma}\label{lem:dla-Bmom}
Under \cref{ass:dla}, for every $i\in\{1,\dots,L\}$,
\begin{equation}\label{eq:dla-Bmom}
\E\bigl[(\bm{B}_i\bm{B}_i^\top)^2\bigr]=r_i\,\bm{I}_{d_{i-1}},\qquad
\E\bigl[\tr(\bm{B}_i\bm{B}_i^\top)^2\bigr]=v_i,
\end{equation}
with $(r_i,v_i)$ from \eqref{eq:dla-rv-init}--\eqref{eq:dla-rv-rec}.
\end{lemma}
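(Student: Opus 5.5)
We argue by induction on $i$, using the factorization $\bm{B}_i=\bm{W}_{i-1}\bm{B}_{i-1}$ for $i\ge2$. Condition on the earlier layers and apply the Gaussian fourth-moment identities of \cref{lem:dla-fourth,lem:dla-trsq} to the fresh layer $\bm{W}_{i-1}$. The induction hypothesis then turns the conditional moments into the scalars $r_{i-1},v_{i-1}$.

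\emph{Base case.} For $i=1$ we have $\bm{B}_1=\bm{I}_{d_0}$. Hence $(\bm{B}_1\bm{B}_1^\top)^2=\bm{I}_{d_0}$, so $r_1=1$, and $\tr(\bm{B}_1\bm{B}_1^\top)^2=d_0^2$, so $v_1=d_0^2$. This matches \eqref{eq:dla-rv-init}.

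\emph{Inductive step.} Fix $i\ge2$ and set $\bm{M}\eqdef\bm{B}_{i-1}\bm{B}_{i-1}^\top\in\R^{d_{i-2}\times d_{i-2}}$. This matrix is symmetric and is a function of $\bm{W}_1,\dots,\bm{W}_{i-2}$ only. By \cref{ass:dla} it is therefore independent of $\bm{W}_{i-1}\in\R^{d_{i-1}\times d_{i-2}}$, whose entries are i.i.d.\ $\bN(0,1)$. We can write $\bm{B}_i\bm{B}_i^\top=\bm{W}_{i-1}\bm{M}\bm{W}_{i-1}^\top$.

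Conditioning on $\bm{M}$ and applying \eqref{eq:dla-fourth-left} with $m=d_{i-1}$ gives
\[
\E\bigl[(\bm{B}_i\bm{B}_i^\top)^2\bigm|\bm{M}\bigr]=\bigl(\tr(\bm{M})^2+(d_{i-1}+1)\norm{\bm{M}}_\Frob^2\bigr)\bm{I}_{d_{i-1}}.
\]
Likewise, \eqref{eq:dla-trsq} gives
\[
\E\bigl[\tr(\bm{B}_i\bm{B}_i^\top)^2\bigm|\bm{M}\bigr]=d_{i-1}^2\tr(\bm{M})^2+2d_{i-1}\norm{\bm{M}}_\Frob^2.
\]

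\emph{Closing the recursion.} Take the outer expectation over $\bm{W}_1,\dots,\bm{W}_{i-2}$. By the induction hypothesis $\E[\tr(\bm{M})^2]=v_{i-1}$. Since $\bm{M}$ is symmetric,
\[
\E\norm{\bm{M}}_\Frob^2=\E\tr(\bm{M}^2)=\tr\bigl(\E[(\bm{B}_{i-1}\bm{B}_{i-1}^\top)^2]\bigr)=\tr(r_{i-1}\bm{I}_{d_{i-2}})=d_{i-2}r_{i-1}.
\]
Substituting these two values yields
\[
r_i=v_{i-1}+(d_{i-1}+1)d_{i-2}r_{i-1},\qquad v_i=d_{i-1}^2v_{i-1}+2d_{i-1}d_{i-2}r_{i-1},
\]
which is exactly \eqref{eq:dla-rv-rec}.

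There is no real obstacle in this argument. The only points requiring care are the independence of $\bm{W}_{i-1}$ from $\bm{B}_{i-1}$, which justifies the conditional use of the deterministic-$\bm{M}$ lemmas, and the dimension bookkeeping: $\bm{M}$ is $d_{i-2}\times d_{i-2}$ while the outer dimension is $m=d_{i-1}$. As a by-product, $\E\norm{\bm{B}_i\bm{B}_i^\top}_\Frob^2=d_{i-1}r_i$, which is the identity used to define $V_i$ in \eqref{eq:dla-Vi}.
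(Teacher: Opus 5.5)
Your proposal is correct and follows the paper's proof essentially step for step. Both use induction through $\bm{B}_i=\bm{W}_{i-1}\bm{B}_{i-1}$, condition on $\bm{M}=\bm{B}_{i-1}\bm{B}_{i-1}^\top$, apply \cref{lem:dla-fourth,lem:dla-trsq} with $m=d_{i-1}$, and close the recursion with $\E[\tr(\bm{M}^2)]=d_{i-2}r_{i-1}$.
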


\begin{proof}
At $i=1$, $\bm{B}_1=\bm{I}_{d_0}$, so $(\bm{B}_1\bm{B}_1^\top)^2=\bm{I}_{d_0}$ and $\tr(\bm{B}_1\bm{B}_1^\top)^2=d_0^2$, matching $r_1=1$, $v_1=d_0^2$. For $i\ge2$, $\bm{B}_i=\bm{W}_{i-1}\bm{B}_{i-1}$ with $\bm{W}_{i-1}\in\R^{d_{i-1}\times d_{i-2}}$ independent of $\bm{B}_{i-1}$, so $\bm{B}_i\bm{B}_i^\top=\bm{W}_{i-1}\bm{M}\bm{W}_{i-1}^\top$ with $\bm{M}\eqdef\bm{B}_{i-1}\bm{B}_{i-1}^\top$. By \cref{lem:dla-fourth},
\[
\E\bigl[(\bm{B}_i\bm{B}_i^\top)^2\bigm|\bm{B}_{i-1}\bigr]=\bigl(\tr(\bm{M})^2+(d_{i-1}+1)\norm{\bm{M}}_\Frob^2\bigr)\bm{I}_{d_{i-1}},
\]
and taking $\E$ over $\bm{B}_{i-1}$ with $\E[\tr(\bm{M}^2)]=d_{i-2}r_{i-1}$ gives $r_i=v_{i-1}+(d_{i-1}+1)d_{i-2}r_{i-1}$. Likewise, by \cref{lem:dla-trsq},
\[
\E\bigl[\tr(\bm{B}_i\bm{B}_i^\top)^2\bigm|\bm{B}_{i-1}\bigr]=d_{i-1}^2\tr(\bm{M})^2+2d_{i-1}\norm{\bm{M}}_\Frob^2,
\]
so $v_i=d_{i-1}^2v_{i-1}+2d_{i-1}d_{i-2}r_{i-1}$.
\end{proof}

\begin{lemma}\label{lem:dla-Amom}
Under \cref{ass:dla}, for every $i\in\{0,\dots,L\}$,
\begin{equation}\label{eq:dla-Amom}
\E\bigl[(\bm{A}_i^\top\bm{A}_i)^2\bigr]=s_i\,\bm{I}_{d_i},\qquad
\E\bigl[\tr(\bm{A}_i^\top\bm{A}_i)^2\bigr]=u_i,
\end{equation}
with $(s_i,u_i)$ from \eqref{eq:dla-su-init}--\eqref{eq:dla-su-rec}.
\end{lemma}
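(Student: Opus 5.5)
We argue by downward induction on $i$, mirroring the proof of \cref{lem:dla-Bmom} with the roles of left and right exchanged.

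\textbf{Base case.} At $i=L$ we have $\bm{A}_L=\bm{I}_{d_L}$. Hence $(\bm{A}_L^\top\bm{A}_L)^2=\bm{I}_{d_L}$ and $\tr(\bm{A}_L^\top\bm{A}_L)^2=d_L^2$, which matches $s_L=1$ and $u_L=d_L^2$ in \eqref{eq:dla-su-init}.

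\textbf{Inductive step.} Fix $i\le L$ and assume \eqref{eq:dla-Amom} at index $i$. Write $\bm{A}_{i-1}=\bm{A}_i\bm{W}_i$, where $\bm{W}_i\in\R^{d_i\times d_{i-1}}$ is independent of $\bm{A}_i$, since $\bm{A}_i$ involves only $\bm{W}_{i+1},\dots,\bm{W}_L$. Then
\[
\bm{A}_{i-1}^\top\bm{A}_{i-1}=\bm{W}_i^\top\bm{N}\bm{W}_i,\qquad \bm{N}\eqdef\bm{A}_i^\top\bm{A}_i\in\R^{d_i\times d_i},
\]
and $\bm{N}$ is symmetric. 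Conditioning on $\bm{A}_i$, we apply the right-hand identity \eqref{eq:dla-fourth-right} of \cref{lem:dla-fourth} with $\bm{X}=\bm{W}_i$, $m=d_i$, $n=d_{i-1}$:
\[
\E\bigl[(\bm{A}_{i-1}^\top\bm{A}_{i-1})^2\bigm|\bm{A}_i\bigr]=\bigl(\tr(\bm{N})^2+(d_{i-1}+1)\norm{\bm{N}}_\Frob^2\bigr)\bm{I}_{d_{i-1}}.
\]
For the squared trace, we use the cyclic identity $\tr(\bm{W}_i^\top\bm{N}\bm{W}_i)=\tr(\bm{W}_i\bm{W}_i^\top\bm{N})$. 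Then \cref{lem:dla-trsq}, applied to $\bm{X}=\bm{W}_i^\top\in\R^{d_{i-1}\times d_i}$ with $\bm{M}=\bm{N}$ (so $m=d_{i-1}$), gives
\[
\E\bigl[\tr(\bm{A}_{i-1}^\top\bm{A}_{i-1})^2\bigm|\bm{A}_i\bigr]=d_{i-1}^2\tr(\bm{N})^2+2d_{i-1}\norm{\bm{N}}_\Frob^2.
\]

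\textbf{Averaging over $\bm{A}_i$.} By the inductive hypothesis,
\[
\E[\tr(\bm{N})^2]=u_i,\qquad \E\norm{\bm{N}}_\Frob^2=\E\tr(\bm{N}^2)=\tr(s_i\bm{I}_{d_i})=d_is_i.
\]
Substituting these into the two conditional expectations yields
\[
s_{i-1}=u_i+(d_{i-1}+1)d_is_i,\qquad u_{i-1}=d_{i-1}^2u_i+2d_{i-1}d_is_i,
\]
which is exactly \eqref{eq:dla-su-rec}.

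\textbf{Range of indices and the main subtlety.} The statement allows $i=0$, where $\bm{A}_0=\bm{W}_L\cdots\bm{W}_1$. The recursion still applies there: it consumes $\bm{W}_1$ and produces $s_0,u_0$, so no separate argument is needed. The computation itself is routine. The only point requiring care is the dimension bookkeeping. In \eqref{eq:dla-fourth-right} the coefficient is $n+1=d_{i-1}+1$, the column dimension of $\bm{W}_i$. In \cref{lem:dla-trsq}, after transposing, the coefficient $m$ is again $d_{i-1}$. The factor $d_i$ enters only through $\E\norm{\bm{N}}_\Frob^2=d_is_i$. Getting these three placements right is precisely what reproduces \eqref{eq:dla-su-rec}.
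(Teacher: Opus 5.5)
Your proof is correct and follows the paper's argument: downward induction from $\bm{A}_L=\bm{I}_{d_L}$ via $\bm{A}_{i-1}=\bm{A}_i\bm{W}_i$, with the right identity of \cref{lem:dla-fourth} giving the $s$-recursion and \cref{lem:dla-trsq} applied to $\bm{W}_i^\top$ giving the $u$-recursion. The paper obtains the $u$-recursion only by analogy with the $v_i$ recursion, whereas you derive it explicitly. Your cyclic-trace step is unnecessary, since $\tr(\bm{W}_i^\top\bm{N}\bm{W}_i)$ already has the form $\tr(\bm{X}\bm{M}\bm{X}^\top)$ with $\bm{X}=\bm{W}_i^\top$.
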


\begin{proof}
At $i=L$, $\bm{A}_L=\bm{I}_{d_L}$, matching $s_L=1$, $u_L=d_L^2$. For $i<L$, $\bm{A}_i=\bm{A}_{i+1}\bm{W}_{i+1}$ with $\bm{W}_{i+1}\in\R^{d_{i+1}\times d_i}$ independent of $\bm{A}_{i+1}$, so $\bm{A}_i^\top\bm{A}_i=\bm{W}_{i+1}^\top\bm{N}\bm{W}_{i+1}$ with $\bm{N}\eqdef\bm{A}_{i+1}^\top\bm{A}_{i+1}$. By \cref{lem:dla-fourth} (right identity),
\[
\E\bigl[(\bm{A}_i^\top\bm{A}_i)^2\bigm|\bm{A}_{i+1}\bigr]=\bigl(\tr(\bm{N})^2+(d_i+1)\norm{\bm{N}}_\Frob^2\bigr)\bm{I}_{d_i},
\]
so $s_i=u_{i+1}+(d_i+1)d_{i+1}s_{i+1}$; the recursion for $u_i$ is that of $v_i$ with $d_{i-1}\leftrightarrow d_i$ and $d_{i-2}\leftrightarrow d_{i+1}$, giving $u_{i-1}=d_{i-1}^2u_i+2d_{i-1}d_is_i$.
\end{proof}

\begin{lemma}\label{lem:dla-Phi}
Under \cref{ass:dla}, $\E[\bm{A}_i^\top\bm{\Phi}\bm{B}_i^\top\bm{B}_i\bm{\Phi}^\top\bm{A}_i]=\bm{T}_i$, with $\bm{T}_i$ as in \eqref{eq:dla-main}.
\end{lemma}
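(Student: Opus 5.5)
The plan is to exploit the independence of the two factors flanking $\bm{\Phi}$. The matrix $\bm{B}_i=\bm{W}_{i-1}\cdots\bm{W}_1$ depends only on $\bm{W}_1,\dots,\bm{W}_{i-1}$, while $\bm{A}_i=\bm{W}_L\cdots\bm{W}_{i+1}$ depends only on $\bm{W}_{i+1},\dots,\bm{W}_L$. Under \cref{ass:dla} these are independent, and $\bm{\Phi}$ is deterministic. By the tower property I will first average over $\bm{B}_i$ with $\bm{A}_i$ held fixed, which gives
\[
\E\bigl[\bm{A}_i^\top\bm{\Phi}\bm{B}_i^\top\bm{B}_i\bm{\Phi}^\top\bm{A}_i\bigr]=\E_{\bm{A}_i}\bigl[\bm{A}_i^\top\bm{\Phi}\,\E[\bm{B}_i^\top\bm{B}_i]\,\bm{\Phi}^\top\bm{A}_i\bigr].
\]
The two inner factors can then be handled separately, each by repeated use of \cref{lem:dla-quad}.

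\textbf{Step 1 (the forward product).} The claim is $\E[\bm{B}_i^\top\bm{B}_i]=\bigl(\prod_{j=1}^{i-1}d_j\bigr)\bm{I}_{d_0}$, with the empty product equal to $1$ when $i=1$ since $\bm{B}_1=\bm{I}$. I will prove it by induction on $i$. Write $\bm{B}_{i}^\top\bm{B}_{i}=\bm{B}_{i-1}^\top(\bm{W}_{i-1}^\top\bm{W}_{i-1})\bm{B}_{i-1}$. Conditioning on $\bm{B}_{i-1}$ and applying the second identity of \cref{lem:dla-quad} with $\bm{N}=\bm{I}_{d_{i-1}}$ gives $\E[\bm{W}_{i-1}^\top\bm{W}_{i-1}]=d_{i-1}\bm{I}_{d_{i-2}}$. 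Hence $\E[\bm{B}_i^\top\bm{B}_i]=d_{i-1}\E[\bm{B}_{i-1}^\top\bm{B}_{i-1}]$, and the induction closes. After this step the target expression becomes $\bigl(\prod_{j<i}d_j\bigr)\E[\bm{A}_i^\top\bm{\Phi}\bm{\Phi}^\top\bm{A}_i]$.

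\textbf{Step 2 (the backward product).} If $i=L$, then $\bm{A}_L=\bm{I}$ and the result is immediately $\bigl(\prod_{j=1}^{L-1}d_j\bigr)\bm{\Phi}\bm{\Phi}^\top$, which is the first case of $\bm{T}_L$.

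If $i<L$, I expand $\bm{A}_i^\top\bm{\Phi}\bm{\Phi}^\top\bm{A}_i=\bm{W}_{i+1}^\top\cdots\bm{W}_L^\top(\bm{\Phi}\bm{\Phi}^\top)\bm{W}_L\cdots\bm{W}_{i+1}$ and integrate from the inside out.
\begin{enumerate}[leftmargin=1.4em,itemsep=2pt,topsep=2pt]
\item Integrating out $\bm{W}_L$ first, \cref{lem:dla-quad} turns $\bm{\Phi}\bm{\Phi}^\top$ into $\tr(\bm{\Phi}\bm{\Phi}^\top)\bm{I}_{d_{L-1}}=\norm{\bm{\Phi}}_\Frob^2\bm{I}_{d_{L-1}}$.
\item Each subsequent $\bm{W}_j$, for $j=L-1,\dots,i+1$, then sees an identity $\bm{I}_{d_j}$ in the middle. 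It therefore contributes a factor $\tr(\bm{I}_{d_j})=d_j$ and leaves $\bm{I}_{d_{j-1}}$.
\end{enumerate}
This yields $\norm{\bm{\Phi}}_\Frob^2\bigl(\prod_{j=i+1}^{L-1}d_j\bigr)\bm{I}_{d_i}$. Combining with the factor $\prod_{j=1}^{i-1}d_j$ from Step 1 gives exactly $\bigl(\prod_{j\neq i,\,1\le j\le L-1}d_j\bigr)\norm{\bm{\Phi}}_\Frob^2\bm{I}_{d_i}$, which is the second case of $\bm{T}_i$.

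\textbf{Main obstacle.} There is no real analytic obstacle; the step needing care is the bookkeeping. First, at each stage the matrix integrated out must be independent of everything already conditioned on. This is why the integration order matters: innermost first, namely $\bm{W}_{i-1}$ outward in $\bm{B}_i$, and $\bm{W}_L$ first in $\bm{A}_i$. Second, the indices must be tracked correctly. The factor $d_L$ never appears, because $\bm{W}_L$ produces $\norm{\bm{\Phi}}_\Frob^2$ rather than a dimension. The factor $d_i$ is absent for $i<L$, because neither $\bm{A}_i$ nor $\bm{B}_i$ contains $\bm{W}_i$. The edge cases $i=1$ and $i=L$ are covered by empty products.
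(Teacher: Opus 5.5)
Your proof is correct and is essentially the paper's argument: use the independence of $\bm{A}_i$ and $\bm{B}_i$, then obtain both $\E[\bm{B}_i^\top\bm{B}_i]=\bigl(\prod_{j=1}^{i-1}d_j\bigr)\bm{I}_{d_0}$ and the $\bm{A}_i$-sandwich by iterating \cref{lem:dla-quad}. The only difference is that the paper peels off the outermost factor $\bm{W}_{i+1}$ first, through the recursion $h_i(\bm{M})=\tr\bigl(h_{i+1}(\bm{M})\bigr)\bm{I}_{d_i}$ with $h_i(\bm{M})=\E[\bm{A}_i^\top\bm{M}\bm{A}_i]$, whereas you integrate $\bm{W}_L$ first; this also shows your remark that the integration order ``matters'' is too strong, since either order works as long as the factor integrated out is independent of what is conditioned on.
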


\begin{proof}
By independence of $\bm{A}_i,\bm{B}_i$ and \cref{lem:dla-quad} iterated through $\bm{B}_i=\bm{W}_{i-1}\cdots\bm{W}_1$,
\[
\E[\bm{B}_i^\top\bm{B}_i]=\Bigl(\prod_{j=1}^{i-1}d_j\Bigr)\bm{I}_{d_0}
\qquad\Longrightarrow\qquad
\E[\bm{A}_i^\top\bm{\Phi}\bm{B}_i^\top\bm{B}_i\bm{\Phi}^\top\bm{A}_i]=\Bigl(\prod_{j=1}^{i-1}d_j\Bigr)\,\E[\bm{A}_i^\top\bm{\Phi}\bm{\Phi}^\top\bm{A}_i]
\]
(the empty product is $1$ when $i=1$). Let $h_i(\bm{M})\eqdef\E[\bm{A}_i^\top\bm{M}\bm{A}_i]$ for symmetric $\bm{M}\in\R^{d_L\times d_L}$. Then $h_L(\bm{M})=\bm{M}$, and for $i<L$, by \cref{lem:dla-quad},
\[
h_i(\bm{M})=\E\bigl[\tr(\bm{A}_{i+1}^\top\bm{M}\bm{A}_{i+1})\bigr]\bm{I}_{d_i}=\tr\bigl(h_{i+1}(\bm{M})\bigr)\bm{I}_{d_i}.
\]
Iterating,
\[
h_i(\bm{M})=\begin{cases}\bm{M},&i=L,\\[2pt]\bigl(\prod_{j=i+1}^{L-1}d_j\bigr)\tr(\bm{M})\,\bm{I}_{d_i},&i<L,\end{cases}
\]
with $\prod_{j=L}^{L-1}d_j=1$. Taking $\bm{M}=\bm{\Phi}\bm{\Phi}^\top$ (so $\tr(\bm{M})=\norm{\bm{\Phi}}_\Frob^2$) and multiplying by $\prod_{j=1}^{i-1}d_j$ gives $\bm{T}_i$.
\end{proof}

\begin{proof}[Proof of \cref{thm:dl}]
By \cref{lem:dla-Wexp},
\[
\E[\bm{G}_i\bm{G}_i^\top]=\E\bigl[\tr((\bm{B}_i\bm{B}_i^\top)^2)\,(\bm{A}_i^\top\bm{A}_i)^2\bigr]+\E\bigl[\bm{A}_i^\top\bm{\Phi}\bm{B}_i^\top\bm{B}_i\bm{\Phi}^\top\bm{A}_i\bigr].
\]
Since $\bm{A}_i,\bm{B}_i$ depend on disjoint sets of $\bm{W}_j$, they are independent, so by \cref{lem:dla-Bmom,lem:dla-Amom},
\[
\E\bigl[\tr((\bm{B}_i\bm{B}_i^\top)^2)\,(\bm{A}_i^\top\bm{A}_i)^2\bigr]=\E\bigl[\tr((\bm{B}_i\bm{B}_i^\top)^2)\bigr]\,\E\bigl[(\bm{A}_i^\top\bm{A}_i)^2\bigr]=d_{i-1}r_i\cdot s_i\,\bm{I}_{d_i}=V_is_i\,\bm{I}_{d_i},
\]
and the second term equals $\bm{T}_i$ by \cref{lem:dla-Phi}. \Cref{cor:dla-scaling} follows from the entries of \eqref{eq:dla-main}: for $i<L$, $\bm{T}_i$ is a multiple of $\bm{I}_{d_i}$ so the off-diagonal vanishes; for $i=L$, $\bm{T}_L$ contributes $(\bm{\Phi}\bm{\Phi}^\top)_{ab}$ entry-wise.

It remains to prove \eqref{eq:dla-order}. Let $\pi_i\eqdef\prod_{j=0}^{i-1}d_j$ and $\sigma_i\eqdef\sum_{m=0}^{i-1}d_m^{-1}$, so that $\pi_i=d_{i-1}\pi_{i-1}$ and $\sigma_i=\sigma_{i-1}+d_{i-1}^{-1}$. Since $V_i=d_{i-1}r_i$, the recursion \eqref{eq:dla-rv-init}--\eqref{eq:dla-rv-rec} reads
\begin{gather*}
V_1=d_0,\qquad v_1=d_0^2,\\
V_i=d_{i-1}v_{i-1}+d_{i-1}(d_{i-1}+1)V_{i-1},\qquad
v_i=d_{i-1}^2v_{i-1}+2d_{i-1}V_{i-1}\qquad(i\ge2).
\end{gather*}
Let $a_i\eqdef v_i/\pi_i^2$ and $b_i\eqdef V_i/(\pi_i^2\sigma_i)$, so $a_1=b_1=1$. Dividing the two recursions by $\pi_i^2$ and by $\pi_i^2\sigma_i$ gives
\[
a_i=a_{i-1}+\frac{2\sigma_{i-1}}{d_{i-1}}\,b_{i-1},
\qquad
b_i=\frac{d_{i-1}^{-1}}{\sigma_i}\,a_{i-1}+\frac{\sigma_{i-1}}{\sigma_i}\,\bigl(1+d_{i-1}^{-1}\bigr)b_{i-1}.
\]
The second identity expresses $b_i$ as a convex combination of $a_{i-1}$ and $(1+d_{i-1}^{-1})b_{i-1}$, so $\min(a_{i-1},b_{i-1})\le b_i\le2\max(a_{i-1},b_{i-1})$; the first gives $a_{i-1}\le a_i\le a_{i-1}+2(i-1)b_{i-1}$, using $\sigma_{i-1}\le i-1$ and $d_{i-1}\ge1$. By induction, $1\le a_i,b_i\le(2L)^{L}$ for every $i\le L$, that is,
\[
v_i=\Theta(\pi_i^2),\qquad V_i=\Theta(\pi_i^2\sigma_i),
\]
The recursion \eqref{eq:dla-su-init}--\eqref{eq:dla-su-rec} has the same form read from the output side: with $S_i\eqdef d_is_i$, $\rho_i\eqdef\prod_{j=i}^{L}d_j$ and $\tau_i\eqdef\sum_{m=i}^{L}d_m^{-1}$,
\begin{gather*}
S_L=d_L,\qquad u_L=d_L^2,\\
S_{i-1}=d_{i-1}u_i+d_{i-1}(d_{i-1}+1)S_i,\qquad
u_{i-1}=d_{i-1}^2u_i+2d_{i-1}S_i\qquad(i\le L),
\end{gather*}
and the same argument gives $u_i=\Theta(\rho_i^2)$ and $S_i=\Theta(\rho_i^2\tau_i)$. Therefore
\[
V_is_i=\frac{V_iS_i}{d_i}=\Theta\Bigl(\frac{\pi_i^2\rho_i^2}{d_i}\,\sigma_i\tau_i\Bigr),
\qquad
\frac{\pi_i^2\rho_i^2}{d_i}=\frac{1}{d_i}\Bigl(\prod_{j=0}^{L}d_j\Bigr)^2 .
\]
Since $d_0^{-1}\le\sigma_i\le L$ and $d_L^{-1}\le\tau_i\le L$, the factor $\sigma_i\tau_i$ is bounded above and below, and $\bigl(\prod_{j=0}^{L}d_j\bigr)^2/d_i$ equals $d_0^2d_L^2\,d_i\prod_{j=1,\,j\neq i}^{L-1}d_j^2$ for $i<L$ and $d_0^2d_L\prod_{j=1}^{L-1}d_j^2$ for $i=L$, which gives \eqref{eq:dla-order}.
\end{proof}

\section{Two-Layer ReLU Networks}\label{app:relu}

\subsection{Setup}\label{app:relu:setup}

Let $d_0,d_1,d_2$ be positive integers. The two-layer ReLU network
\begin{equation}\label{eq:rela-net}
f(\bm{x})=\bm{W}_2\,\sigma(\bm{W}_1\bm{x}),\qquad \bm{W}_1\in\R^{d_1\times d_0},\quad \bm{W}_2\in\R^{d_2\times d_1},
\end{equation}
acts on $\bm{x}\in\R^{d_0}$ via $\sigma(z)=\max(z,0)$, $\sigma'(z)=\1[z>0]$. Let $\bm{h}(\bm{x})\eqdef\sigma(\bm{W}_1\bm{x})\in\R^{d_1}$, and let $\bm{w}_a\in\R^{d_0}$ denote the $a$-th row of $\bm{W}_1$. The population loss against a deterministic linear target $\bm{\Phi}\in\R^{d_2\times d_0}$ is
\begin{equation}\label{eq:rela-loss}
\bL(\bm{W}_1,\bm{W}_2)\;\eqdef\;\tfrac12\,\E_{\bm{x}}\norm{\bm{W}_2\sigma(\bm{W}_1\bm{x})-\bm{\Phi} \bm{x}}^2,
\end{equation}
and $\bm{G}_i\eqdef\partial\bL/\partial \bm{W}_i$, $i\in\{1,2\}$. For independent $\bm{u},\bm{v}\sim\bN(0,\bm{I}_{d_0})$ let $\theta\eqdef\angle(\bm{u},\bm{v})\in[0,\pi]$ be their angle, so that $\cos\theta=\langle \bm{u},\bm{v}\rangle/(\norm{\bm{u}}\norm{\bm{v}})$, and define
\begin{align}
\kappa_1 &\eqdef \E\!\left[\frac{\sin^2\theta+(\pi-\theta)^2}{4\pi^2}+\frac{\sin\theta\cos\theta(\pi-\theta)}{2\pi^2}\right], \label{eq:rela-kappa1}\\
\kappa_2 &\eqdef \frac{\E[\sin\theta(\pi-\theta)]}{2\pi^2}. \label{eq:rela-kappa2}
\end{align}

\subsection{Standing assumption}\label{app:relu:assump}

\begin{assumption}\label{ass:rela}
$(\bm{W}_1)_{ij}$ and $(\bm{W}_2)_{ij}$ are i.i.d.\ $\bN(0,1)$; $\bm{x}\sim\bN(0,\bm{I}_{d_0})$; $\bm{\Phi}$ is deterministic and independent of $\bm{W}_1,\bm{W}_2$. Expectations $\E[\cdot]$ are over $\bm{W}_1,\bm{W}_2$ ($\bm{x}$ has been integrated out in $\bL$).
\end{assumption}

\subsection{Main results restated}\label{app:relu:results}

\begin{theoremrst}{\cref{thm:relu}, restated}
Under \cref{ass:rela}, with $\bm{H}\eqdef\E_{\bm{x}}[\bm{h}(\bm{x})\bm{h}(\bm{x})^\top]$ and $\alpha\eqdef\E_{\bm{W}_1}[\tr(\bm{H}^2)]$,
\begin{equation}\label{eq:rela-G2}
\E[\bm{G}_2\bm{G}_2^\top]=\alpha\,\bm{I}_{d_2}+\tfrac{d_1}{4}\,\bm{\Phi}\bm{\Phi}^\top,\qquad\alpha=\Theta(d_1^2),
\end{equation}
and $\E[\bm{G}_1\bm{G}_1^\top]$ is permutation-invariant in the hidden index with
\begin{align}
\E\bigl[(\bm{G}_1\bm{G}_1^\top)_{aa}\bigr] &= \tfrac{d_0d_2^2}{4}+d_0d_2\bigl[(d_1-1)\kappa_1+\tfrac12\bigr]+\tfrac{\norm{\bm{\Phi}}_\Frob^2}{4}, \label{eq:rela-G1diag}\\
\E\bigl[(\bm{G}_1\bm{G}_1^\top)_{ac}\bigr] &= d_0d_2\,\kappa_2+O(d_2\sqrt{d_0})\qquad(a\neq c). \label{eq:rela-G1off}
\end{align}
As $d_0\to\infty$, $\kappa_1\to\tfrac{1}{4\pi^2}+\tfrac{1}{16}$ and $\kappa_2\to\tfrac{1}{4\pi}$.
\end{theoremrst}

\subsection{Preliminary lemmas}\label{app:relu:lemmas}

\begin{lemma}\label{lem:rela-quad}
Same as \cref{lem:mfa-quad}.
\end{lemma}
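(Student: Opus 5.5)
The statement to be proved is \cref{lem:rela-quad}, which the paper declares identical to \cref{lem:mfa-quad}. My plan is to reuse that proof verbatim; it is a one-line second-moment computation. Let $\bm{X}\in\R^{m\times n}$ have i.i.d.\ $\bN(0,1)$ entries, and let $\bm{M}\in\R^{n\times n}$, $\bm{N}\in\R^{m\times m}$ be deterministic.

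First expand the $(a,b)$ entry:
\[
(\bm{X}\bm{M}\bm{X}^\top)_{ab}=\sum_{p,q}\bm{X}_{ap}\bm{M}_{pq}\bm{X}_{bq}.
\]
By linearity of expectation, with $\bm{M}$ deterministic, only the covariance $\E[\bm{X}_{ap}\bm{X}_{bq}]=\delta_{ab}\delta_{pq}$ is needed. This holds because distinct entries are independent and centered, and each entry has unit variance. Summing gives
\[
\E[(\bm{X}\bm{M}\bm{X}^\top)_{ab}]=\delta_{ab}\sum_p\bm{M}_{pp}=\delta_{ab}\tr(\bm{M}),
\]
so $\E[\bm{X}\bm{M}\bm{X}^\top]=\tr(\bm{M})\bm{I}_m$.

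For the second identity, apply the first to $\bm{X}^\top\in\R^{n\times m}$, whose entries are again i.i.d.\ $\bN(0,1)$, with $\bm{N}$ in place of $\bm{M}$. This yields $\E[\bm{X}^\top\bm{N}\bm{X}]=\tr(\bm{N})\bm{I}_n$.

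No symmetry of $\bm{M}$ or $\bm{N}$ is required, and Wick's formula is not needed beyond the second moment. There is essentially no obstacle. The only point to watch, when the lemma is later used conditionally (for example with $\bm{M}$ built from other layers' weights in the ReLU proofs), is that $\bm{M}$ is independent of $\bm{X}$. The identity then holds conditionally and extends by the tower property.
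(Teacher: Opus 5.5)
Your proof is correct and is the paper's own proof of \cref{lem:mfa-quad}: expand the $(a,b)$ entry, use $\E[\bm{X}_{ap}\bm{X}_{bq}]=\delta_{ab}\delta_{pq}$ to get $\delta_{ab}\tr(\bm{M})$, then get the second identity by applying the first to $\bm{X}^\top$. Your remarks that no symmetry is needed and that conditional use requires only independence (plus the tower property) are also accurate.
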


\begin{lemma}[Gaussian integration by parts]\label{lem:rela-stein}
Let $\bm{x}\sim\bN(0,\bm{I}_n)$ and $f:\R^n\to\R$ be locally absolutely continuous along almost every line parallel to the $j$-th axis, with weak derivative $\partial_jf$, and $\E|x_jf(\bm{x})|<\infty$, $\E|\partial_jf(\bm{x})|<\infty$. Then
\begin{equation}\label{eq:rela-stein}
\E[x_jf(\bm{x})]=\E[\partial_jf(\bm{x})].
\end{equation}
In particular this holds whenever $f$ is Lipschitz.
\end{lemma}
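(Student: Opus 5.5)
This is the classical Gaussian integration-by-parts identity, and I would prove it by reducing to one dimension with Fubini. Write $\bm{x}=(x_j,\bm{x}_{-j})$, where $\bm{x}_{-j}\sim\bN(0,\bm{I}_{n-1})$ is independent of $x_j\sim\bN(0,1)$, and let $\phi$ be the standard normal density.

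\textbf{Fubini and conditioning.} Since $\E|x_jf(\bm{x})|<\infty$ and $\E|\partial_jf(\bm{x})|<\infty$, Fubini--Tonelli lets us fix $\bm{x}_{-j}$ outside a null set. For such $\bm{x}_{-j}$ the section $g(t)\eqdef f(t,\bm{x}_{-j})$ satisfies three things: it is locally absolutely continuous, with $g'=\partial_jf(\cdot,\bm{x}_{-j})$ a.e.; $\int|t\,g(t)|\phi(t)\,dt<\infty$; and $\int|g'(t)|\phi(t)\,dt<\infty$. It therefore suffices to prove the one-dimensional identity $\int t\,g(t)\phi(t)\,dt=\int g'(t)\phi(t)\,dt$ and then integrate over $\bm{x}_{-j}$.

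\textbf{One-dimensional identity.} I would avoid boundary terms by using $\phi'(t)=-t\phi(t)$, which gives $\phi(t)=\int_t^\infty s\phi(s)\,ds$ for $t\ge0$ and $\phi(t)=-\int_{-\infty}^t s\phi(s)\,ds$ for $t\le0$. Then
\[
\int_0^\infty g'(t)\phi(t)\,dt=\int_0^\infty g'(t)\int_t^\infty s\phi(s)\,ds\,dt=\int_0^\infty s\phi(s)\bigl(g(s)-g(0)\bigr)\,ds .
\]
The swap of integrals is justified by Tonelli applied to $|g'(t)|\,s\phi(s)$ on $\{0\le t\le s\}$, whose integral equals $\int_0^\infty|g'|\phi<\infty$. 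The second equality uses absolute continuity, $g(s)-g(0)=\int_0^s g'$. The negative half-line is handled symmetrically and gives $\int_{-\infty}^0 s\phi(s)\bigl(g(s)-g(0)\bigr)\,ds$. Adding the two halves, the $g(0)$ terms cancel because $\int s\phi(s)\,ds=0$. Each split integral converges by the assumption $\int|sg(s)|\phi(s)\,ds<\infty$, so the sum is $\int s\,g(s)\phi(s)\,ds$, as required.

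\textbf{The Lipschitz case.} If $f$ is $K$-Lipschitz, then each section is Lipschitz, hence absolutely continuous, and $|\partial_jf|\le K$ a.e. by Rademacher's theorem or directly from the difference quotients. Also $|f(\bm{x})|\le|f(\bm{0})|+K\norm{\bm{x}}$, so $\E|x_jf(\bm{x})|\le|f(\bm{0})|\,\E|x_j|+K\,\E\bigl[|x_j|\norm{\bm{x}}\bigr]<\infty$ by Cauchy--Schwarz. Both integrability hypotheses therefore hold.

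\textbf{Main obstacle.} The only delicate step is the measure-theoretic bookkeeping. One must check that the almost-everywhere-line absolute continuity and the integrability survive on a full-measure set of sections, and that the Tonelli swap is justified. The rest is routine.
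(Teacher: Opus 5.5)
Your proof is correct. It shares the paper's outer structure: Fubini reduces to sections in $x_j$, and a one-dimensional identity is proved on each section. The one-dimensional step, however, takes a genuinely different route.

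The paper integrates by parts against the density, using $\partial_j\varphi=-x_j\varphi$, and asserts that the boundary term vanishes ``by integrability.'' Making that rigorous takes a small extra limit argument. From $g(R)\varphi(R)-g(0)\varphi(0)=\int_0^R g'\varphi-\int_0^R tg\varphi$, the limit of $g(R)\varphi(R)$ exists. A nonzero limit would contradict $\int|tg|\varphi<\infty$, so the limit is zero.

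You instead use Stein's original device. You write $\phi(t)=\int_t^\infty s\phi(s)\,ds$, together with its mirror image for $t\le0$, and swap the integrals by Tonelli. No boundary term ever appears, and the only analytic inputs are absolute continuity and $\int|g'|\phi<\infty$. In fact your Tonelli bound already gives $\int_0^\infty s\phi(s)\,\bigl|g(s)-g(0)\bigr|\,ds\le\int_0^\infty|g'|\phi<\infty$. So on each section the integrability of $t\,g(t)\phi(t)$ comes for free rather than being needed as an input.

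Your Lipschitz case is also more complete than the paper's. Rademacher's theorem supplies only the a.e.\ derivative. You explicitly verify both integrability hypotheses through $|\partial_jf|\le K$ and the linear-growth bound $|f(\bm{x})|\le|f(\bm{0})|+K\norm{\bm{x}}$.

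The paper's version is shorter. Yours is fully self-contained, with no hidden limit argument.
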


\begin{proof}
A standard identity~\citep{stein1981estimation,nourdin2012normal}. The standard Gaussian density $\varphi$ satisfies $\partial_j\varphi(\bm{x})=-x_j\varphi(\bm{x})$, so by Fubini and one-dimensional integration by parts in $x_j$ (the boundary term vanishes by integrability),
\[
\E[x_jf(\bm{x})]=-\int f\,\partial_j\varphi=\int\partial_jf\,\varphi=\E[\partial_jf(\bm{x})].
\]
The Lipschitz case follows from Rademacher's theorem.
\end{proof}

\begin{remark}\label{rem:rela-stein}
$z\mapsto\sigma(z)$ is $1$-Lipschitz, so $f(\bm{x})=\sigma(\bm{w}^\top \bm{x})$ falls under \cref{lem:rela-stein} with $\partial_jf(\bm{x})=\1[\bm{w}^\top \bm{x}>0]\,w_j$ a.e. By contrast $g(\bm{x})=\1[\bm{w}^\top \bm{x}>0]q(\bm{x})$ with $q$ not vanishing on $\{\bm{w}^\top \bm{x}=0\}$ is discontinuous across that hyperplane and is \emph{not} covered by \cref{lem:rela-stein}; for such integrands we use a half-space argument (\cref{lem:rela-halfspace}) or differentiate a separately established Lipschitz identity (\cref{lem:rela-cho}).
\end{remark}

\begin{lemma}\label{lem:rela-relu}
For $Z\sim\bN(0,s^2)$ with $s>0$,
\begin{equation}\label{eq:rela-relu}
\E[\sigma(Z)]=\frac{s}{\sqrt{2\pi}},\qquad \E[\sigma(Z)^2]=\frac{s^2}{2},\qquad \E[\sigma'(Z)]=\frac12.
\end{equation}
\end{lemma}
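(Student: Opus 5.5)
The statement to prove is \cref{lem:rela-relu}: for $Z\sim\bN(0,s^2)$, compute $\E[\sigma(Z)]$, $\E[\sigma(Z)^2]$ and $\E[\sigma'(Z)]$. The plan is to reduce to the standard case by scaling and then use elementary one-dimensional Gaussian integrals together with the symmetry $Z\stackrel{d}{=}-Z$.

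First, write $Z=sY$ with $Y\sim\bN(0,1)$. Since $s>0$, positive homogeneity of the ReLU gives $\sigma(sY)=s\,\sigma(Y)$ and $\sigma'(sY)=\1[Y>0]$. It therefore suffices to show
\[
\E[\sigma(Y)]=\tfrac{1}{\sqrt{2\pi}},\qquad \E[\sigma(Y)^2]=\tfrac12,\qquad \E[\1[Y>0]]=\tfrac12 .
\]

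The third identity holds because $Y$ is symmetric and $\Pr(Y=0)=0$, so $\Pr(Y>0)=\tfrac12$.

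For the second identity, note that $\sigma(Y)^2=Y^2\1[Y>0]$. By symmetry, $\E[Y^2\1[Y>0]]=\E[Y^2\1[Y<0]]$, and the two sum to $\E[Y^2]=1$, so each equals $\tfrac12$.

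For the first identity, compute directly:
\[
\int_0^\infty y\,\tfrac{1}{\sqrt{2\pi}}e^{-y^2/2}\,dy=\tfrac{1}{\sqrt{2\pi}}\Bigl[-e^{-y^2/2}\Bigr]_0^\infty=\tfrac{1}{\sqrt{2\pi}} .
\]
Alternatively, one can apply \cref{lem:rela-stein} with the Lipschitz function $f=\sigma$: this gives $\E[Y\sigma(Y)]=\E[\sigma'(Y)]=\tfrac12$, which rederives the second identity since $Y\sigma(Y)=\sigma(Y)^2$.

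Multiplying back by $s$ and $s^2$ yields \eqref{eq:rela-relu}. No step here is a real obstacle; the only point requiring care is that $\sigma'$ is defined at $0$ only by convention, which is harmless because $\Pr(Z=0)=0$.
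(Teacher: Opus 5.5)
Your proposal is correct and follows essentially the paper's route: the paper also computes the three quantities directly as one-dimensional half-line Gaussian integrals, with $\E[\sigma'(Z)]=\Pr(Z>0)=\tfrac12$. Your rescaling to $Y\sim\bN(0,1)$ and your use of the symmetry $Y\stackrel{d}{=}-Y$ for the second moment are only cosmetic variations on that computation.
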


\begin{proof}
Direct computation: $\E[\sigma(Z)]=\int_0^\infty z\varphi_s(z)\,dz=s/\sqrt{2\pi}$; $\E[\sigma(Z)^2]=\int_0^\infty z^2\varphi_s(z)\,dz=s^2/2$; $\E[\sigma'(Z)]=\Pr(Z>0)=1/2$.
\end{proof}

\begin{lemma}[Cho--Saul kernel]\label{lem:rela-cho}
Let $\bm{u},\bm{v}\in\R^n\setminus\{0\}$, $\bm{x}\sim\bN(0,\bm{I}_n)$, $\theta\eqdef\angle(\bm{u},\bm{v})$. Then~\citep{cho2009kernel}
\begin{align}
\E[\sigma(\bm{u}^\top \bm{x})\sigma(\bm{v}^\top \bm{x})] &= \frac{\norm{\bm{u}}\norm{\bm{v}}}{2\pi}\bigl(\sin\theta+(\pi-\theta)\cos\theta\bigr), \label{eq:rela-K1}\\
\E[\sigma'(\bm{u}^\top \bm{x})\sigma'(\bm{v}^\top \bm{x})] &= \frac{\pi-\theta}{2\pi}. \label{eq:rela-K2}
\end{align}
\end{lemma}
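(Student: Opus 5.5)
By rotation invariance of the standard Gaussian, I would reduce to a two-dimensional computation. Let $\hat{\bm{u}}=\bm{u}/\norm{\bm{u}}$ and choose an orthonormal basis of $\mathrm{span}(\bm{u},\bm{v})$ in which
\[
\hat{\bm{u}}=(1,0),\qquad \hat{\bm{v}}=(\cos\theta,\sin\theta).
\]
The projection of $\bm{x}$ onto this plane is a standard two-dimensional Gaussian $(x_1,x_2)$. In polar coordinates $(\rho,\phi)$, $\rho$ is independent of $\phi$, and $\phi$ is uniform on $[0,2\pi)$. By homogeneity of $\sigma$, $\sigma(\bm{u}^\top\bm{x})=\norm{\bm{u}}\rho\,\sigma(\cos\phi)$, and $\sigma'(\bm{u}^\top\bm{x})=\1[\cos\phi>0]$ independently of $\rho$; the same holds for $\bm{v}$ with $\cos(\phi-\theta)$. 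Both identities then become one-dimensional integrals over the angle $\phi$.

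\textbf{Identity \eqref{eq:rela-K2}.} The product $\sigma'(\bm{u}^\top\bm{x})\sigma'(\bm{v}^\top\bm{x})$ is the indicator that $\phi$ lies in both open half-circles $\{\cos\phi>0\}$ and $\{\cos(\phi-\theta)>0\}$. These are arcs of length $\pi$ centred at $0$ and at $\theta$, so for $\theta\in[0,\pi]$ their intersection has length $\pi-\theta$. Dividing by $2\pi$ gives $(\pi-\theta)/(2\pi)$.

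\textbf{Identity \eqref{eq:rela-K1}.} With $\E[\rho^2]=2$, I would write
\[
\E[\sigma(\bm{u}^\top\bm{x})\sigma(\bm{v}^\top\bm{x})]=\frac{2\norm{\bm{u}}\norm{\bm{v}}}{2\pi}\int_{\theta-\pi/2}^{\pi/2}\cos\phi\cos(\phi-\theta)\,d\phi .
\]
Using $\cos\phi\cos(\phi-\theta)=\tfrac12[\cos\theta+\cos(2\phi-\theta)]$, the integral over an interval of length $\pi-\theta$ equals
\[
\tfrac12(\pi-\theta)\cos\theta+\tfrac14\bigl[\sin(\pi-\theta)-\sin(\theta-\pi)\bigr]=\tfrac12\bigl[(\pi-\theta)\cos\theta+\sin\theta\bigr].
\]
Multiplying by $\norm{\bm{u}}\norm{\bm{v}}/\pi$ gives the claimed formula.

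\textbf{Degenerate cases and main obstacle.} When $\theta=0$ or $\theta=\pi$ the span is one-dimensional. These cases follow either by direct evaluation or by continuity of both sides, justified by dominated convergence since the integrands are bounded by $\norm{\bm{u}}\norm{\bm{v}}\norm{\bm{x}}^2$. I expect no real obstacle; the only care needed is getting the integration limits of the arc intersection right.
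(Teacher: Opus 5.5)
Your proof is correct, but for \eqref{eq:rela-K1} it takes a different route from the paper. For \eqref{eq:rela-K2} you argue as the paper does, via the angular measure $(\pi-\theta)/(2\pi)$ of the planar cone $\{\bm{u}^\top\bm{x}>0,\ \bm{v}^\top\bm{x}>0\}$.

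For \eqref{eq:rela-K1} the paper never integrates the ReLU product directly. It regards the expectation as a function of the correlation $\rho$ of $(\bm{u}^\top\bm{x},\bm{v}^\top\bm{x})$ at fixed variances. It then uses Price's theorem to write $\partial_\rho\E[\sigma(\bm{u}^\top\bm{x})\sigma(\bm{v}^\top\bm{x})]=\norm{\bm{u}}\norm{\bm{v}}\,\E[\sigma'(\bm{u}^\top\bm{x})\sigma'(\bm{v}^\top\bm{x})]$, inserts \eqref{eq:rela-K2}, and integrates from $\rho=-1$ (where the product vanishes a.s.) up to $\cos\theta$.

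You instead factor the integrand into the radial moment $\E[x_1^2+x_2^2]=2$ times an elementary trigonometric integral over the arc $(\theta-\pi/2,\pi/2)$. Your version is self-contained and spares you checking the hypotheses of Price's theorem for the non-smooth $\sigma$. The paper's version is shorter once that tool is granted, and it makes explicit that \eqref{eq:rela-K1} is the correlation-antiderivative of \eqref{eq:rela-K2}.

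One small simplification: for $n\ge2$ your planar construction already covers $\theta\in\{0,\pi\}$, since you can complete $\hat{\bm{u}}$ with any orthogonal unit vector. So only $n=1$ needs a direct check, and no continuity argument is required.
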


\begin{proof}
Both quantities depend on $(\bm{u}^\top \bm{x},\bm{v}^\top \bm{x})$, bivariate Gaussian with variances $\norm{\bm{u}}^2,\norm{\bm{v}}^2$ and correlation $\cos\theta$. Equation \eqref{eq:rela-K2} is $\Pr(\bm{u}^\top \bm{x}>0,\,\bm{v}^\top \bm{x}>0)$, the angular measure of the cone $\{Z_1>0,Z_2>0\}$ for a standard bivariate Gaussian with correlation $\cos\theta$, namely $(\pi-\theta)/(2\pi)$. For \eqref{eq:rela-K1}, by Price's theorem~\citep{price1958useful},
\[
\frac{\partial}{\partial\rho}\E[\sigma(\bm{u}^\top \bm{x})\sigma(\bm{v}^\top \bm{x})]=\norm{\bm{u}}\norm{\bm{v}}\,\E[\sigma'(\bm{u}^\top \bm{x})\sigma'(\bm{v}^\top \bm{x})]=\frac{\norm{\bm{u}}\norm{\bm{v}}(\pi-\theta)}{2\pi};
\]
at $\rho=-1$ (i.e.\ $\theta=\pi$) the product vanishes a.s., and integrating $\partial_\rho$ from $-1$ to $\cos\theta$ gives \eqref{eq:rela-K1}.
\end{proof}

\begin{lemma}[Wick/Isserlis]\label{lem:rela-wick}
Same as \cref{lem:mfa-wick}.
\end{lemma}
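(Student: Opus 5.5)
The statement is the Wick/Isserlis lemma (\cref{lem:rela-wick}), identical to \cref{lem:mfa-wick}. The plan is to reuse that proof via the moment generating function. Let $\bm{X}=(X_1,\dots,X_{2n})$ be centered Gaussian with covariance $\bm{\Sigma}$, $\Sigma_{ij}=\E[X_iX_j]$. Then $\E[\exp(\bm{t}^\top\bm{X})]=\exp(\tfrac12\bm{t}^\top\bm{\Sigma}\bm{t})$ for all $\bm{t}$. This holds even when $\bm{\Sigma}$ is singular, since $\bm{t}^\top\bm{X}$ is a centered scalar Gaussian with variance $\bm{t}^\top\bm{\Sigma}\bm{t}$.

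The first step identifies the moment as a derivative. Both sides are entire in $\bm{t}$, and the Gaussian tails of $\exp(\bm{t}^\top\bm{X})$ let us differentiate under the expectation. Hence
\[
\E[X_1\cdots X_{2n}]=\partial_{t_1}\cdots\partial_{t_{2n}}\exp(\tfrac12\bm{t}^\top\bm{\Sigma}\bm{t})\big|_{\bm{t}=\bm{0}} .
\]
Repeated indices are allowed: the variables $X_i$ need not be distinct, and the formula applies to the list as given.

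The second step extracts the coefficient. Expand $\exp(q)=\sum_m q^m/m!$ with $q=\tfrac12\sum_{i,j}\Sigma_{ij}t_it_j$. The mixed derivative $\partial_{t_1}\cdots\partial_{t_{2n}}$ at $\bm{0}$ picks out the coefficient of the multilinear monomial $t_1\cdots t_{2n}$, treating the $t_i$ as formal distinct variables. Only the term $m=n$ contributes, because each factor of $q$ has degree $2$.

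In $q^n/n!$, such a monomial arises by choosing, for each of the $n$ factors, an unordered pair $\{i,j\}$ of distinct labels, with the $n$ pairs partitioning $\{1,\dots,2n\}$. The factor $\tfrac12$ cancels the two orderings $(i,j),(j,i)$, and the $1/n!$ cancels the ordering of the factors. So the coefficient is exactly $\sum_P\prod_{\{i,j\}\in P}\Sigma_{ij}$, which is \eqref{eq:mfa-wick}.

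For odd order $2n+1$, no term $q^m$ has odd degree, so the derivative vanishes; alternatively, $\bm{X}\stackrel{d}{=}-\bm{X}$. The only mildly delicate point is the bookkeeping of the factors $\tfrac12$ and $1/n!$ against pairings, and the justification for differentiating under $\E$. Both are standard, and we cite \citet[\S1.4]{janson1997gaussian}.
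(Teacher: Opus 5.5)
Your proposal is correct and follows the paper's own argument: differentiate the moment generating function $\E[\exp(\bm{t}^\top\bm{X})]=\exp(\tfrac12\bm{t}^\top\bm{\Sigma}\bm{t})$ once in each of $t_1,\dots,t_{2n}$ at $\bm{t}=\bm{0}$, as in \citet[\S1.4]{janson1997gaussian}. You additionally spell out the bookkeeping the paper leaves to the citation: each pairing arises from $2^n n!$ ordered choices, which cancel the factors $\tfrac12$ and $1/n!$.
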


\begin{lemma}\label{lem:rela-halfspace}
Let $\bm{x}\sim\bN(0,\bm{I}_n)$ and $\bm{u}\in\R^n\setminus\{0\}$. Then
\begin{equation}\label{eq:rela-halfspace}
\E\bigl[\bm{x}\bm{x}^\top\,\1[\bm{u}^\top \bm{x}>0]\bigr]=\tfrac12\,\bm{I}_n.
\end{equation}
\end{lemma}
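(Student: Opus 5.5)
The plan is to reduce the identity to a symmetry argument plus a one-dimensional computation along $\bm{u}$. By rotation invariance of $\bm{x}\sim\bN(0,\bm{I}_n)$ it suffices to take $\bm{u}=\norm{\bm{u}}\bm{e}_1$. Concretely, choose $\bm{Q}\in O(n)$ with $\bm{Q}\bm{u}=\norm{\bm{u}}\bm{e}_1$ and substitute $\bm{y}=\bm{Q}\bm{x}\stackrel{d}{=}\bm{x}$. Then
\[
\E\bigl[\bm{x}\bm{x}^\top\1[\bm{u}^\top\bm{x}>0]\bigr]=\bm{Q}^\top\E\bigl[\bm{y}\bm{y}^\top\1[y_1>0]\bigr]\bm{Q},
\]
so if the inner matrix equals $\tfrac12\bm{I}_n$, conjugation by $\bm{Q}$ preserves it and the claim follows. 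The indicator is unchanged by the positive scaling $\norm{\bm{u}}$.

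With $\bm{u}$ aligned with $\bm{e}_1$, compute the entries of $\E[\bm{y}\bm{y}^\top\1[y_1>0]]$ using independence of the coordinates of $\bm{y}$:
\begin{itemize}
\item For $(1,1)$, $\E[y_1^2\1[y_1>0]]=\tfrac12\E[y_1^2]=\tfrac12$, by the symmetry $y_1\stackrel{d}{=}-y_1$ (equivalently, \cref{lem:rela-relu} with $s=1$, since $y_1^2\1[y_1>0]=\sigma(y_1)^2$).
\item For $(j,j)$ with $j\ge2$, independence gives $\E[y_j^2]\Pr(y_1>0)=\tfrac12$.
\item For $(1,j)$ with $j\ge2$, the entry factors as $\E[y_1\1[y_1>0]]\,\E[y_j]=0$.
\item For $(j,l)$ with $j\neq l$ and $j,l\ge2$, it factors as $\E[y_j]\E[y_l]\Pr(y_1>0)=0$.
\end{itemize}
This gives $\tfrac12\bm{I}_n$.

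A shortcut avoids coordinates altogether. Since $\bm{x}\stackrel{d}{=}-\bm{x}$ and $\bm{x}\bm{x}^\top$ is even, the expectation on the event $\{\bm{u}^\top\bm{x}>0\}$ equals the expectation on $\{\bm{u}^\top\bm{x}<0\}$. The two events cover everything up to the null set $\{\bm{u}^\top\bm{x}=0\}$, so the two expectations sum to $\E[\bm{x}\bm{x}^\top]=\bm{I}_n$. Each is therefore $\tfrac12\bm{I}_n$. I would present this one-line argument as the proof, with the coordinate computation as a check.

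There is no real obstacle. The only point to note is that the indicator is discontinuous, so \cref{lem:rela-stein} does not apply, as \cref{rem:rela-stein} anticipates. The symmetry argument bypasses this, and integrability is immediate because $\norm{\bm{x}}^2$ has finite mean.
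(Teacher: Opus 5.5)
Your proposal is correct. Your coordinate check is essentially the paper's proof. The paper works in an orthonormal basis containing $\bm{u}/\norm{\bm{u}}$ and kills the off-diagonal entries by reflection symmetry of $\bm{x}_\perp$. It gets the diagonal $\tfrac12$'s from the one-dimensional computation along $\bm{u}$ and from independence of $\bm{x}_\perp$ and $\bm{u}^\top\bm{x}$. This matches your rotation to $\bm{e}_1$ followed by the four entry types.

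The argument you would actually present, via $\bm{x}\stackrel{d}{=}-\bm{x}$, is a genuinely different and more elementary route. It uses only three facts: central symmetry of the law, finite second moments, and $\Pr(\bm{u}^\top\bm{x}=0)=0$. It therefore proves the more general identity $\E\bigl[\bm{x}\bm{x}^\top\1[\bm{u}^\top\bm{x}>0]\bigr]=\tfrac12\E[\bm{x}\bm{x}^\top]$ for any such $\bm{x}$. It needs no rotation invariance and no independence of orthogonal Gaussian projections.

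What the paper's decomposition buys in exchange is robustness to changing the region. Consider a shifted half-space $\{\hat{\bm{u}}^\top\bm{x}>t\}$ with $t\neq0$ and $\hat{\bm{u}}=\bm{u}/\norm{\bm{u}}$, or an intersection of two half-spaces as in the Cho--Saul kernel. There the reflection no longer maps the region onto its complement, and your trick no longer determines the value. The adapted-basis computation still goes through; for the shifted half-space it gives $\Pr(Z>t)\,\bm{I}_n+\bigl(\E[Z^2\1[Z>t]]-\Pr(Z>t)\bigr)\hat{\bm{u}}\hat{\bm{u}}^\top$ with $Z=\hat{\bm{u}}^\top\bm{x}$. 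For the lemma as stated, your one-line symmetry argument is complete and cleaner.
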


\begin{proof}
Let $\bm{A}\eqdef\E[\bm{x}\bm{x}^\top\1[\bm{u}^\top \bm{x}>0]]$, and decompose $\bm{x}=(\bm{u}^\top \bm{x}/\norm{\bm{u}}^2)\bm{u}+\bm{x}_\perp$ with $\bm{x}_\perp$ the projection onto $\bm{u}^\perp$. In an orthonormal basis containing $\bm{u}/\norm{\bm{u}}$, the off-diagonal entries of $\bm{A}$ vanish by the reflection symmetry of $\bm{x}_\perp$; the entry along $\bm{u}/\norm{\bm{u}}$ is $\E[(\bm{u}^\top \bm{x}/\norm{\bm{u}})^2\1[\bm{u}^\top \bm{x}>0]]=\tfrac12$ ($\bm{u}^\top \bm{x}$ symmetric), and each remaining diagonal entry is $\tfrac12\E[x_j^2]=\tfrac12$ by independence of $\bm{x}_\perp$ from $\bm{u}^\top \bm{x}$ and $\Pr(\bm{u}^\top \bm{x}>0)=\tfrac12$. Hence $\bm{A}=\tfrac12\bm{I}_n$.
\end{proof}

\subsection{Proof of the gradient self outer-product of \texorpdfstring{$\bm{W}_2$}{W2}}\label{app:relu:proofG2}

\begin{lemma}\label{lem:rela-G2form}
$\bm{G}_2=\bm{W}_2\bm{H}-\bm{\Phi} \bm{C}^\top$, where $\bm{H}\eqdef\E_{\bm{x}}[\bm{h}(\bm{x})\bm{h}(\bm{x})^\top]\in\R^{d_1\times d_1}$ and $\bm{C}\eqdef\E_{\bm{x}}[\bm{h}(\bm{x})\bm{x}^\top]\in\R^{d_1\times d_0}$.
\end{lemma}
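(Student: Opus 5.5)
The plan is to start from the definition $\bm{G}_2=\E_{\bm{x}}[\bm{r}(\bm{x})\bm{h}(\bm{x})^\top]$ in \eqref{eq:relu-grad} and substitute the residual $\bm{r}(\bm{x})=\bm{W}_2\bm{h}(\bm{x})-\bm{\Phi}\bm{x}$. Since $\bm{W}_2$ and $\bm{\Phi}$ do not depend on $\bm{x}$, they come out of the $\bm{x}$-expectation, and linearity gives
\[
\bm{G}_2=\bm{W}_2\,\E_{\bm{x}}[\bm{h}(\bm{x})\bm{h}(\bm{x})^\top]-\bm{\Phi}\,\E_{\bm{x}}[\bm{x}\bm{h}(\bm{x})^\top]=\bm{W}_2\bm{H}-\bm{\Phi}\bm{C}^\top,
\]
where $\E_{\bm{x}}[\bm{x}\bm{h}(\bm{x})^\top]=(\E_{\bm{x}}[\bm{h}(\bm{x})\bm{x}^\top])^\top=\bm{C}^\top$.

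The only thing to check is that both expectations are finite, so that the interchange is legitimate. Each entry $h_a(\bm{x})=\sigma(\bm{w}_a^\top\bm{x})$ satisfies $|h_a(\bm{x})|\le\norm{\bm{w}_a}\norm{\bm{x}}$. Hence the entries of $\bm{h}\bm{h}^\top$ and of $\bm{h}\bm{x}^\top$ are bounded by a constant (depending on $\bm{W}_1$) times $\norm{\bm{x}}^2$, which is integrable under $\bm{x}\sim\bN(0,\bm{I}_{d_0})$.

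No real obstacle is expected; this is bookkeeping. For later use, one can also note that $\bm{C}=\tfrac12\,\bm{W}_1$. This follows from Stein's lemma (\cref{lem:rela-stein}, via \cref{rem:rela-stein}): $\E[\sigma(\bm{w}_a^\top\bm{x})x_j]=\E[\1[\bm{w}_a^\top\bm{x}>0]]\,w_{a,j}=\tfrac12 w_{a,j}$. This identity is what will produce the $\tfrac{d_1}{4}\bm{\Phi}\bm{\Phi}^\top$ term in \eqref{eq:rela-G2}.
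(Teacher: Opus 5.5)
Your proposal is correct and follows the paper's proof: substitute $\bm{r}(\bm{x})=\bm{W}_2\bm{h}(\bm{x})-\bm{\Phi}\bm{x}$ into $\E_{\bm{x}}[\bm{r}\bm{h}^\top]$ and pull out the $\bm{x}$-independent matrices by linearity. Your integrability check via $|h_a(\bm{x})|\le\norm{\bm{w}_a}\norm{\bm{x}}$ makes explicit a point the paper leaves implicit, and your remark $\bm{C}=\tfrac12\bm{W}_1$ is exactly the paper's next lemma (\cref{lem:rela-Cform}).
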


\begin{proof}
Differentiating \eqref{eq:rela-loss} in $\bm{W}_2$,
\[
\bm{G}_2=\E_{\bm{x}}[(\bm{W}_2\bm{h}(\bm{x})-\bm{\Phi} \bm{x})\bm{h}(\bm{x})^\top]=\bm{W}_2\,\E_{\bm{x}}[\bm{h}(\bm{x})\bm{h}(\bm{x})^\top]-\bm{\Phi}\,\E_{\bm{x}}[\bm{x}\bm{h}(\bm{x})^\top]=\bm{W}_2\bm{H}-\bm{\Phi} \bm{C}^\top.
\]
\end{proof}

\begin{lemma}\label{lem:rela-Cform}
$\bm{C}=\tfrac12\bm{W}_1$.
\end{lemma}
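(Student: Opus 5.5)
We need to show $\bm{C}=\E_{\bm{x}}[\bm{h}(\bm{x})\bm{x}^\top]=\tfrac12\bm{W}_1$. Since $\bm{W}_1$ is held fixed and only $\bm{x}$ is integrated, the argument is row by row and entry by entry. The $(a,j)$ entry of $\bm{C}$ is
\[
\bm{C}_{aj}=\E_{\bm{x}}\bigl[\sigma(\bm{w}_a^\top\bm{x})\,x_j\bigr],
\]
with $\bm{w}_a$ the $a$-th row of $\bm{W}_1$. I would apply Gaussian integration by parts (\cref{lem:rela-stein}) to $f(\bm{x})=\sigma(\bm{w}_a^\top\bm{x})$.

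By \cref{rem:rela-stein}, $f$ is $1$-Lipschitz with a.e.\ weak derivative $\partial_jf(\bm{x})=\1[\bm{w}_a^\top\bm{x}>0]\,(\bm{w}_a)_j$. The integrability conditions are immediate: $|x_jf(\bm{x})|\le\norm{\bm{w}_a}\norm{\bm{x}}^2$, which has finite Gaussian moments, and $|\partial_jf|\le|(\bm{w}_a)_j|$. \Cref{lem:rela-stein} then gives
\[
\bm{C}_{aj}=(\bm{w}_a)_j\,\Pr\bigl(\bm{w}_a^\top\bm{x}>0\bigr).
\]

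For $\bm{w}_a\neq\bm{0}$, the variable $\bm{w}_a^\top\bm{x}\sim\bN(0,\norm{\bm{w}_a}^2)$ is symmetric and continuous, so this probability is $\tfrac12$ (equivalently, the third identity of \cref{lem:rela-relu}). Hence $\bm{C}_{aj}=\tfrac12(\bm{W}_1)_{aj}$. In the degenerate case $\bm{w}_a=\bm{0}$, which has probability zero under \cref{ass:rela}, both sides vanish trivially. Collecting all entries gives $\bm{C}=\tfrac12\bm{W}_1$.

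An alternative check avoids Stein's lemma. Decompose $\bm{x}$ along $\bm{u}=\bm{w}_a/\norm{\bm{w}_a}$. The orthogonal component is independent of $\bm{u}^\top\bm{x}$ and has mean zero, so it drops out. What remains is $\E[\sigma(Z)Z]\,\bm{u}$ with $Z$ standard normal, and $\E[\sigma(Z)Z]=\E[Z^2\1[Z>0]]=\tfrac12$. The row therefore equals $\norm{\bm{w}_a}\cdot\tfrac12\,\bm{u}=\tfrac12\bm{w}_a$.

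The only point needing care is justifying Stein's lemma for the nonsmooth ReLU. \Cref{rem:rela-stein} already covers this because $\sigma$ is Lipschitz, unlike the indicator integrands treated later, so I expect no real obstacle.
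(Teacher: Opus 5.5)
Your proof is correct and matches the paper's argument essentially verbatim: \cref{lem:rela-stein} applied to the Lipschitz map $\bm{x}\mapsto\sigma(\bm{w}_a^\top\bm{x})$, followed by $\E_{\bm{x}}[\sigma'(\bm{w}_a^\top\bm{x})]=\tfrac12$ from \cref{lem:rela-relu}. Your explicit integrability bounds, the remark on the null event $\bm{w}_a=\bm{0}$, and the alternative orthogonal-decomposition check (which uses the positive homogeneity $\sigma(\norm{\bm{w}_a}Z)=\norm{\bm{w}_a}\sigma(Z)$) are all sound, but the paper does not need them.
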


\begin{proof}
$\bm{C}_{aj}=\E_{\bm{x}}[\sigma(\bm{w}_a^\top \bm{x})x_j]$, and $f(\bm{x})\eqdef\sigma(\bm{w}_a^\top \bm{x})$ is Lipschitz with $\partial_jf(\bm{x})=\1[\bm{w}_a^\top \bm{x}>0](\bm{w}_a)_j$ a.e. By \cref{lem:rela-stein} and \cref{lem:rela-relu},
\[
\bm{C}_{aj}=\E_{\bm{x}}[\partial_jf(\bm{x})]=(\bm{w}_a)_j\,\E_{\bm{x}}[\sigma'(\bm{w}_a^\top \bm{x})]=\tfrac12(\bm{w}_a)_j.
\]
\end{proof}

\begin{proof}[{Proof of \eqref{eq:rela-G2}}]
By \cref{lem:rela-G2form},
\[
\bm{G}_2\bm{G}_2^\top=\bm{W}_2\bm{H}^2\bm{W}_2^\top-\bm{W}_2\bm{H}\bm{C}\bm{\Phi}^\top-\bm{\Phi} \bm{C}^\top \bm{H}\bm{W}_2^\top+\bm{\Phi} \bm{C}^\top \bm{C}\bm{\Phi}^\top.
\]
Since $\E[\bm{W}_2]=0$, the cross terms vanish under $\E_{\bm{W}_2}[\cdot\mid \bm{W}_1]$, and by \cref{lem:rela-quad} (left identity, $\bm{M}=\bm{H}^2$),
\begin{equation}\label{eq:rela-G2cond}
\E[\bm{G}_2\bm{G}_2^\top\mid \bm{W}_1]=\tr(\bm{H}^2)\,\bm{I}_{d_2}+\bm{\Phi} \bm{C}^\top \bm{C}\bm{\Phi}^\top.
\end{equation}
By \cref{lem:rela-Cform}, $\bm{C}^\top \bm{C}=\tfrac14\bm{W}_1^\top \bm{W}_1$, and \cref{lem:rela-quad} (right identity, $\bm{N}=\bm{I}_{d_1}$) gives $\E_{\bm{W}_1}[\bm{W}_1^\top \bm{W}_1]=d_1\bm{I}_{d_0}$, so $\E_{\bm{W}_1}[\bm{C}^\top \bm{C}]=\tfrac{d_1}{4}\bm{I}_{d_0}$. Taking $\E_{\bm{W}_1}$ of \eqref{eq:rela-G2cond},
\[
\E[\bm{G}_2\bm{G}_2^\top]=\E_{\bm{W}_1}[\tr(\bm{H}^2)]\,\bm{I}_{d_2}+\tfrac{d_1}{4}\bm{\Phi}\bm{\Phi}^\top=\alpha\,\bm{I}_{d_2}+\tfrac{d_1}{4}\bm{\Phi}\bm{\Phi}^\top.
\]
For the order of $\alpha$, split $\tr(\bm{H}^2)=\sum_aH_{aa}^2+\sum_{a\neq b}\bm{H}_{ab}^2$. By \cref{lem:rela-relu}, $\bm{H}_{aa}=\tfrac12\norm{\bm{w}_a}^2$ with $\norm{\bm{w}_a}^2\sim\chi^2_{d_0}$, so
\[
\sum_a\E[\bm{H}_{aa}^2]=\frac{d_1d_0(d_0+2)}{4}=\Theta(d_1).
\]
By \eqref{eq:rela-K1}, $\bm{H}_{ab}=\norm{\bm{w}_a}\norm{\bm{w}_b}\,g(\theta_{ab})$ with $g(\theta)=(\sin\theta+(\pi-\theta)\cos\theta)/(2\pi)$ and $\theta_{ab}=\angle(\bm{w}_a,\bm{w}_b)$ independent of $\norm{\bm{w}_a},\norm{\bm{w}_b}$, so $\E[\bm{H}_{ab}^2]=d_0^2\,\E[g(\theta)^2]$, where $\E[g(\theta)^2]>0$, and
\[
\sum_{a\neq b}\E[\bm{H}_{ab}^2]=d_1(d_1-1)\,d_0^2\,\E[g(\theta)^2]=\Theta(d_1^2).
\]
The off-diagonal term dominates, so $\alpha=\Theta(d_1^2)$.
\end{proof}

\subsection{Proof of the gradient self outer-product of \texorpdfstring{$\bm{W}_1$}{W1}}\label{app:relu:proofG1}

By the chain rule, with residual $\bm{r}(\bm{x})\eqdef \bm{W}_2\sigma(\bm{W}_1\bm{x})-\bm{\Phi} \bm{x}$ and $\bm{D}(\bm{x})\eqdef\diag(\sigma'(\bm{W}_1\bm{x}))$,
\begin{equation}\label{eq:rela-G1decomp}
\begin{gathered}
\bm{G}_1=\bm{G}_1^{(1)}-\bm{G}_1^{(2)},\\
\bm{G}_1^{(1)}\eqdef\E_{\bm{x}}[\bm{D}(\bm{x})\bm{W}_2^\top \bm{W}_2\sigma(\bm{W}_1\bm{x})\bm{x}^\top],\qquad
\bm{G}_1^{(2)}\eqdef\E_{\bm{x}}[\bm{D}(\bm{x})\bm{W}_2^\top\bm{\Phi} \bm{x}\bm{x}^\top].
\end{gathered}
\end{equation}
$\bm{G}_1^{(1)}$ is quadratic in $\bm{W}_2$ and $\bm{G}_1^{(2)}$ is linear, so the cross terms in $\bm{G}_1\bm{G}_1^\top$ are odd in $\bm{W}_2$ and vanish under $\E_{\bm{W}_2}[\cdot\mid \bm{W}_1]$:
\begin{equation}\label{eq:rela-parity}
\E_{\bm{W}_2}[\bm{G}_1\bm{G}_1^\top\mid \bm{W}_1]=\E_{\bm{W}_2}[\bm{G}_1^{(1)}(\bm{G}_1^{(1)})^\top\mid \bm{W}_1]+\E_{\bm{W}_2}[\bm{G}_1^{(2)}(\bm{G}_1^{(2)})^\top\mid \bm{W}_1].
\end{equation}

\begin{lemma}\label{lem:rela-G12}
$\bm{G}_1^{(2)}=\tfrac12\bm{W}_2^\top\bm{\Phi}$, and $\E_{\bm{W}_2}[\bm{G}_1^{(2)}(\bm{G}_1^{(2)})^\top\mid \bm{W}_1]=\tfrac{\norm{\bm{\Phi}}_\Frob^2}{4}\,\bm{I}_{d_1}$.
\end{lemma}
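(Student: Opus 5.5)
The plan is to compute $\bm{G}_1^{(2)}$ entrywise and then take the expectation over $\bm{W}_2$ using \cref{lem:rela-quad}. Since $\bm{D}(\bm{x})$ is diagonal with entries $\sigma'(\bm{w}_a^\top\bm{x})$, the $a$-th row of $\bm{G}_1^{(2)}$ is
\[
(\bm{W}_2^\top\bm{\Phi})_{a,:}\;\E_{\bm{x}}\bigl[\bm{x}\bm{x}^\top\1[\bm{w}_a^\top\bm{x}>0]\bigr].
\]
Here $(\bm{W}_2^\top\bm{\Phi})_{a,:}$ is a row vector that does not depend on $\bm{x}$, so it pulls out of $\E_{\bm{x}}$.

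The key step is evaluating the remaining matrix expectation. The integrand $\bm{x}\bm{x}^\top\1[\bm{w}_a^\top\bm{x}>0]$ is discontinuous across the hyperplane $\{\bm{w}_a^\top\bm{x}=0\}$. As \cref{rem:rela-stein} warns, Stein's lemma does not apply to it. Instead I would invoke the half-space identity \cref{lem:rela-halfspace} with $\bm{u}=\bm{w}_a$, which holds almost surely since $\bm{w}_a\neq0$ a.s. It gives $\E_{\bm{x}}[\bm{x}\bm{x}^\top\1[\bm{w}_a^\top\bm{x}>0]]=\tfrac12\bm{I}_{d_0}$. The result is the same for every row $a$, which yields $\bm{G}_1^{(2)}=\tfrac12\bm{W}_2^\top\bm{\Phi}$, independent of $\bm{W}_1$.

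For the second claim, write $\bm{G}_1^{(2)}(\bm{G}_1^{(2)})^\top=\tfrac14\bm{W}_2^\top\bm{\Phi}\bm{\Phi}^\top\bm{W}_2$. Apply \cref{lem:rela-quad} (right identity, with $\bm{X}=\bm{W}_2$ and $\bm{N}=\bm{\Phi}\bm{\Phi}^\top$). This gives $\tfrac14\tr(\bm{\Phi}\bm{\Phi}^\top)\bm{I}_{d_1}=\tfrac{\norm{\bm{\Phi}}_\Frob^2}{4}\bm{I}_{d_1}$. Because $\bm{G}_1^{(2)}$ does not depend on $\bm{W}_1$, the conditioning on $\bm{W}_1$ is harmless; independence of $\bm{W}_1$ and $\bm{W}_2$ under \cref{ass:rela} ensures the conditional law of $\bm{W}_2$ is still i.i.d.\ Gaussian.

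The only delicate point is the first step, namely justifying the interchange of the $\bm{x}$-integral with the fixed matrix factor and handling the indicator correctly. Both are covered by \cref{lem:rela-halfspace} together with integrability of $\bm{x}\bm{x}^\top$. Everything else is direct.
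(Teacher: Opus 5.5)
Your proposal is correct and follows the paper's proof: you pull the $\bm{x}$-independent row $(\bm{W}_2^\top\bm{\Phi})_{a,:}$ out of the expectation, evaluate $\E_{\bm{x}}[\bm{x}\bm{x}^\top\1[\bm{w}_a^\top\bm{x}>0]]=\tfrac12\bm{I}_{d_0}$ with \cref{lem:rela-halfspace}, and finish with the right identity of \cref{lem:rela-quad} for $\bm{N}=\bm{\Phi}\bm{\Phi}^\top$. The paper writes the first step entrywise through $\bm{v}_a=\bm{\Phi}^\top(\bm{W}_2)_{:,a}$, which is the same computation.
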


\begin{proof}
Let $\bm{v}_a\eqdef\bm{\Phi}^\top(\bm{W}_2)_{:,a}\in\R^{d_0}$, so $(\bm{W}_2^\top\bm{\Phi})_{ab}=(\bm{v}_a)_b$. Then, by \cref{lem:rela-halfspace},
\[
(\bm{G}_1^{(2)})_{ab}=\E_{\bm{x}}[\1[\bm{w}_a^\top \bm{x}>0]\,x_b\,(\bm{x}^\top \bm{v}_a)]=\bigl(\E_{\bm{x}}[\1[\bm{w}_a^\top \bm{x}>0]\,\bm{x}\bm{x}^\top]\,\bm{v}_a\bigr)_b=\tfrac12(\bm{v}_a)_b,
\]
so $\bm{G}_1^{(2)}=\tfrac12\bm{W}_2^\top\bm{\Phi}$ and $\bm{G}_1^{(2)}(\bm{G}_1^{(2)})^\top=\tfrac14\bm{W}_2^\top\bm{\Phi}\bm{\Phi}^\top \bm{W}_2$; \cref{lem:rela-quad} (right identity, $\bm{N}=\bm{\Phi}\bm{\Phi}^\top$) gives $\E_{\bm{W}_2}[\bm{W}_2^\top\bm{\Phi}\bm{\Phi}^\top \bm{W}_2]=\norm{\bm{\Phi}}_\Frob^2\bm{I}_{d_1}$.
\end{proof}

\begin{lemma}\label{lem:rela-Fb}
Define $F_b(\bm{u},\bm{v})\eqdef\E_{\bm{x}}[\sigma'(\bm{u}^\top \bm{x})\sigma(\bm{v}^\top \bm{x})x_b]$. Then $F_b(\bm{u},\bm{u})=\tfrac12u_b$, and for $\bm{u}\notin\mathrm{span}(\bm{v})$,
\begin{equation}\label{eq:rela-Fclosed}
F_b(\bm{u},\bm{v})=u_b\cdot\frac{\norm{\bm{v}}\sin\theta}{2\pi\norm{\bm{u}}}+v_b\cdot\frac{\pi-\theta}{2\pi},\qquad\theta\eqdef\angle(\bm{u},\bm{v}).
\end{equation}
\end{lemma}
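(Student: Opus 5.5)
The plan is to follow Remark~\ref{rem:rela-stein} and \emph{not} apply Stein's lemma to the discontinuous integrand $\sigma'(\bm{u}^\top\bm{x})\sigma(\bm{v}^\top\bm{x})x_b$. Instead we differentiate an identity that is already established and Lipschitz. The diagonal case is immediate. Since $\sigma'(z)\sigma(z)=\sigma(z)$, we get $F_b(\bm{u},\bm{u})=\E_{\bm{x}}[\sigma(\bm{u}^\top\bm{x})x_b]$. Stein's lemma (\cref{lem:rela-stein}) applies to the Lipschitz function $\sigma(\bm{u}^\top\bm{x})$, and together with \cref{lem:rela-relu} it gives $u_b\,\E[\sigma'(\bm{u}^\top\bm{x})]=\tfrac12u_b$, exactly as in \cref{lem:rela-Cform}.

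For $\bm{u}\notin\mathrm{span}(\bm{v})$ the key observation is that $F_b(\bm{u},\bm{v})=\partial_{u_b}K(\bm{u},\bm{v})$ with $K(\bm{u},\bm{v})\eqdef\E_{\bm{x}}[\sigma(\bm{u}^\top\bm{x})\sigma(\bm{v}^\top\bm{x})]$. Formally this holds because $\partial_{u_b}\sigma(\bm{u}^\top\bm{x})=\sigma'(\bm{u}^\top\bm{x})x_b$ a.e. Rigorously, the difference quotient $[\sigma((\bm{u}+t\bm{e}_b)^\top\bm{x})-\sigma(\bm{u}^\top\bm{x})]/t$ is bounded by $|x_b|$, so dominated convergence (with the integrable dominant $|x_b|\,\sigma(\bm{v}^\top\bm{x})$) justifies exchanging derivative and expectation. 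The derivative exists pointwise except on the null set $\{\bm{u}^\top\bm{x}=0\}$. Using \cref{lem:rela-cho}, $K(\bm{u},\bm{v})=\frac{1}{2\pi}\bigl(\norm{\bm{u}}\norm{\bm{v}}\sin\theta+(\pi-\theta)\,\bm{u}^\top\bm{v}\bigr)$. This is smooth in $\bm{u}$ away from $\mathrm{span}(\bm{v})$, where $\theta\in(0,\pi)$, so we may simply differentiate the closed form.

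The computation runs as follows. Write $\cos\theta=\bm{u}^\top\bm{v}/(\norm{\bm{u}}\norm{\bm{v}})$. Differentiating gives $\partial_{u_b}\theta=-\frac{1}{\sin\theta}\bigl(\frac{v_b}{\norm{\bm{u}}\norm{\bm{v}}}-\frac{\cos\theta\,u_b}{\norm{\bm{u}}^2}\bigr)$. The derivative of $\norm{\bm{u}}\norm{\bm{v}}\sin\theta$ is $\frac{u_b}{\norm{\bm{u}}}\norm{\bm{v}}\sin\theta+\norm{\bm{u}}\norm{\bm{v}}\cos\theta\,\partial_{u_b}\theta$. The derivative of $(\pi-\theta)\bm{u}^\top\bm{v}$ is $(\pi-\theta)v_b-\bm{u}^\top\bm{v}\,\partial_{u_b}\theta$. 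Since $\bm{u}^\top\bm{v}=\norm{\bm{u}}\norm{\bm{v}}\cos\theta$, the two $\partial_{u_b}\theta$ terms cancel exactly. This leaves $\frac{1}{2\pi}\bigl(u_b\frac{\norm{\bm{v}}\sin\theta}{\norm{\bm{u}}}+(\pi-\theta)v_b\bigr)$, which is \eqref{eq:rela-Fclosed}.

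The main obstacle is justifying the interchange of differentiation and expectation. This is delicate precisely because of the discontinuity flagged in Remark~\ref{rem:rela-stein}: we need $F_b$ to equal the classical partial derivative of $K$, not merely its a.e.\ derivative. I would handle this through the dominated-convergence bound above, which gives differentiability of $K$ along $\bm{e}_b$ at every $\bm{u}\neq0$ with the derivative equal to $F_b$. Matching with the smooth closed form on $\bm{u}\notin\mathrm{span}(\bm{v})$ then finishes the argument. As a sanity check, letting $\theta\to0$ recovers the diagonal value $\tfrac12u_b$ when $\bm{v}=\bm{u}$.
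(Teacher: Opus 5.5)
Your proposal is correct and follows the paper's proof step for step. The diagonal case goes through $\sigma'\sigma=\sigma$, Stein's lemma and $\E[\sigma'(\bm{u}^\top\bm{x})]=\tfrac12$; the off-diagonal case identifies $F_b$ with $\partial_{u_b}K$ by dominated convergence and then differentiates the Cho--Saul closed form on $\bm{u}\notin\mathrm{span}(\bm{v})$. Your explicit dominating function $|x_b|\,\sigma(\bm{v}^\top\bm{x})$ and your observation that the two $\partial_{u_b}\theta$ terms cancel because $\bm{u}^\top\bm{v}=\norm{\bm{u}}\norm{\bm{v}}\cos\theta$ just fill in what the paper compresses into ``simplifying.''
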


\begin{proof}
For $\bm{u}=\bm{v}$, $\sigma'(\bm{u}^\top \bm{x})\sigma(\bm{u}^\top \bm{x})=\sigma(\bm{u}^\top \bm{x})$ a.e., so by \cref{lem:rela-stein} (applied to the Lipschitz $\bm{x}\mapsto\sigma(\bm{u}^\top \bm{x})$) and \cref{lem:rela-relu}, $F_b(\bm{u},\bm{u})=\E_{\bm{x}}[\sigma(\bm{u}^\top \bm{x})x_b]=\E_{\bm{x}}[\sigma'(\bm{u}^\top \bm{x})]u_b=\tfrac12u_b$. For $\bm{u}\neq \bm{v}$, $f(\bm{x})=\sigma'(\bm{u}^\top \bm{x})\sigma(\bm{v}^\top \bm{x})$ is discontinuous across $\{\bm{u}^\top \bm{x}=0\}$, so we differentiate the Cho--Saul kernel instead. With $K(\bm{u},\bm{v})\eqdef\E_{\bm{x}}[\sigma(\bm{u}^\top \bm{x})\sigma(\bm{v}^\top \bm{x})]$, dominated convergence (since $\sigma$ is Lipschitz with linear growth) gives
\[
\frac{\partial K}{\partial u_b}(\bm{u},\bm{v})=\E_{\bm{x}}[\sigma'(\bm{u}^\top \bm{x})x_b\sigma(\bm{v}^\top \bm{x})]=F_b(\bm{u},\bm{v}).
\]
By \cref{lem:rela-cho}, $K(\bm{u},\bm{v})=\tfrac{\norm{\bm{u}}\norm{\bm{v}}}{2\pi}(\sin\theta+(\pi-\theta)\cos\theta)$, smooth for $\bm{u}\notin\mathrm{span}(\bm{v})$; differentiating with $\partial\norm{\bm{u}}/\partial u_b=u_b/\norm{\bm{u}}$, $\partial\cos\theta/\partial u_b=\tfrac1{\norm{\bm{u}}}(v_b/\norm{\bm{v}}-\cos\theta\,u_b/\norm{\bm{u}})$, $\partial\theta/\partial u_b=-\tfrac1{\sin\theta}\partial\cos\theta/\partial u_b$, and simplifying yields \eqref{eq:rela-Fclosed}.
\end{proof}

\begin{lemma}\label{lem:rela-WtW}
$\E_{\bm{W}_2}[(\bm{W}_2^\top \bm{W}_2)_{ak}(\bm{W}_2^\top \bm{W}_2)_{cl}]=d_2^2\,\delta_{ak}\delta_{cl}+d_2\,\delta_{ac}\delta_{kl}+d_2\,\delta_{al}\delta_{ck}$.
\end{lemma}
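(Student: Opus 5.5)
This is a fourth Gaussian moment, so the plan is to expand the entries of $\bm{W}_2^\top\bm{W}_2$ in coordinates and apply Wick/Isserlis (\cref{lem:rela-wick}). Writing $\bm{W}_2=(w_{pa})$ with $p\in\{1,\dots,d_2\}$ indexing rows and $a\in\{1,\dots,d_1\}$ indexing columns, we have $(\bm{W}_2^\top\bm{W}_2)_{ak}=\sum_{p}w_{pa}w_{pk}$. Hence
\[
\E\bigl[(\bm{W}_2^\top\bm{W}_2)_{ak}(\bm{W}_2^\top\bm{W}_2)_{cl}\bigr]=\sum_{p,q=1}^{d_2}\E\bigl[w_{pa}w_{pk}w_{qc}w_{ql}\bigr].
\]
All entries are i.i.d.\ $\bN(0,1)$, so $\E[w_{pa}w_{qb}]=\delta_{pq}\delta_{ab}$.

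Applying \cref{lem:rela-wick} to the four-fold product gives three pairings.
\begin{itemize}
\item Pairing $(pa,pk)(qc,ql)$ contributes $\delta_{ak}\delta_{cl}$ for every $(p,q)$. Summed over $p,q$ it gives $d_2^2\,\delta_{ak}\delta_{cl}$.
\item Pairing $(pa,qc)(pk,ql)$ contributes $\delta_{pq}\delta_{ac}\,\delta_{pq}\delta_{kl}=\delta_{pq}\delta_{ac}\delta_{kl}$. Summed over $p,q$ it gives $d_2\,\delta_{ac}\delta_{kl}$.
\item Pairing $(pa,ql)(pk,qc)$ similarly contributes $\delta_{pq}\delta_{al}\delta_{kc}$. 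Summed over $p,q$ it gives $d_2\,\delta_{al}\delta_{ck}$.
\end{itemize}
Adding the three contributions gives the stated identity.

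There is no real obstacle. The only point to check is that Wick's formula handles coincident indices automatically. For example, when $a=k=c=l$ and $p=q$, the formula gives $3=\E[w^4]$, which is correct. So no separate case analysis is needed, and the exact bookkeeping of the Kronecker deltas is the only care required. Equivalently, one could condition on nothing and note that the rows of $\bm{W}_2$ are i.i.d.\ $\bN(\bm{0},\bm{I}_{d_1})$. The $p\neq q$ terms then factor into $\delta_{ak}\delta_{cl}$, while the $p=q$ terms give $\delta_{ak}\delta_{cl}+\delta_{ac}\delta_{kl}+\delta_{al}\delta_{ck}$. This yields the same total, $d_2^2\,\delta_{ak}\delta_{cl}+d_2(\delta_{ac}\delta_{kl}+\delta_{al}\delta_{ck})$.
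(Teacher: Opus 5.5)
Your proposal is correct and takes the same approach as the paper: you expand $(\bm{W}_2^\top\bm{W}_2)_{ak}=\sum_p w_{pa}w_{pk}$, apply Wick/Isserlis to the four Gaussian factors, and sum the three pairings over the row indices to get $d_2^2\delta_{ak}\delta_{cl}$, $d_2\delta_{ac}\delta_{kl}$ and $d_2\delta_{al}\delta_{ck}$. Your coincident-index check and the alternative split into $p\neq q$ and $p=q$ terms are also correct, though the proof does not need them.
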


\begin{proof}
$(\bm{W}_2^\top \bm{W}_2)_{ak}=\sum_{i=1}^{d_2}(\bm{W}_2)_{ia}(\bm{W}_2)_{ik}$, and applying \cref{lem:rela-wick} to the four Gaussian factors $(\bm{W}_2)_{ia}$, $(\bm{W}_2)_{ik}$, $(\bm{W}_2)_{jc}$, $(\bm{W}_2)_{jl}$ (summed over $i,j$) the three pairings give
\[
\delta_{ak}\delta_{cl}d_2^2+\delta_{ac}\delta_{kl}d_2+\delta_{al}\delta_{ck}d_2 .
\]
\end{proof}

\begin{proof}[{Proof of \eqref{eq:rela-G1diag}--\eqref{eq:rela-G1off}}]
Permutation invariance follows from the exchangeability of the rows of $\bm{W}_1$ and columns of $\bm{W}_2$. The $\bm{G}_1^{(2)}$ contribution to \eqref{eq:rela-parity} is $\tfrac{\norm{\bm{\Phi}}_\Frob^2}{4}\bm{I}_{d_1}$ by \cref{lem:rela-G12}. For $\bm{G}_1^{(1)}$, we have $(\bm{G}_1^{(1)})_{ab}=\sum_{k=1}^{d_1}(\bm{W}_2^\top \bm{W}_2)_{ak}F_b(\bm{w}_a,\bm{w}_k)$, so
\[
\E_{\bm{W}_2}\bigl[(\bm{G}_1^{(1)}(\bm{G}_1^{(1)})^\top)_{ac}\bigm|\bm{W}_1\bigr]=\sum_{b,k,l}\E_{\bm{W}_2}[(\bm{W}_2^\top \bm{W}_2)_{ak}(\bm{W}_2^\top \bm{W}_2)_{cl}]\,F_b(\bm{w}_a,\bm{w}_k)F_b(\bm{w}_c,\bm{w}_l),
\]
and substituting \cref{lem:rela-WtW} gives $T_1^{(ac)}+T_2^{(ac)}+T_3^{(ac)}$ with
\begin{align}
T_1^{(ac)} &= d_2^2\textstyle\sum_bF_b(\bm{w}_a,\bm{w}_a)F_b(\bm{w}_c,\bm{w}_c), \label{eq:rela-T1}\\
T_2^{(ac)} &= d_2\,\delta_{ac}\textstyle\sum_{k=1}^{d_1}\sum_bF_b(\bm{w}_a,\bm{w}_k)^2, \label{eq:rela-T2}\\
T_3^{(ac)} &= d_2\textstyle\sum_bF_b(\bm{w}_a,\bm{w}_c)F_b(\bm{w}_c,\bm{w}_a). \label{eq:rela-T3}
\end{align}

\emph{Term $T_1$.} By \cref{lem:rela-Fb}, $F_b(\bm{w}_a,\bm{w}_a)=\tfrac12(\bm{w}_a)_b$, so $T_1^{(ac)}=\tfrac{d_2^2}{4}\bm{w}_a^\top \bm{w}_c$, giving $\E[T_1^{(aa)}]=\tfrac{d_0d_2^2}{4}$ and $\E[T_1^{(ac)}]=0$ for $a\neq c$.

\emph{Term $T_2$.} For $\bm{u}\neq \bm{v}$, let $A\eqdef\tfrac{\norm{\bm{v}}\sin\theta}{2\pi\norm{\bm{u}}}$ and $B\eqdef\tfrac{\pi-\theta}{2\pi}$ in \eqref{eq:rela-Fclosed}; using $\bm{u}^\top \bm{v}=\norm{\bm{u}}\norm{\bm{v}}\cos\theta$,
\begin{gather*}
\sum_bF_b(\bm{u},\bm{v})^2=A^2\norm{\bm{u}}^2+2AB(\bm{u}^\top \bm{v})+B^2\norm{\bm{v}}^2=\norm{\bm{v}}^2\,\psi(\theta),\\
\psi(\theta)\eqdef\frac{\sin^2\theta+(\pi-\theta)^2}{4\pi^2}+\frac{\sin\theta\cos\theta(\pi-\theta)}{2\pi^2}.
\end{gather*}
In $T_2^{(aa)}$ the term $k=a$ contributes $\sum_bF_b(\bm{w}_a,\bm{w}_a)^2=\tfrac14\norm{\bm{w}_a}^2$ (expectation $\tfrac{d_0}{4}$), and each $k\neq a$ contributes $\E[\norm{\bm{w}_k}^2]\E[\psi(\theta_{ak})]=d_0\kappa_1$, so
\[
\E[T_2^{(aa)}]=d_2\Bigl[\tfrac{d_0}{4}+(d_1-1)d_0\kappa_1\Bigr]=d_0d_2\bigl[(d_1-1)\kappa_1+\tfrac14\bigr],
\]
and $T_2^{(ac)}=0$ for $a\neq c$.

\emph{Term $T_3$.} For $a=c$, the sum $\sum_bF_b(\bm{w}_a,\bm{w}_a)^2$ equals $\tfrac14\norm{\bm{w}_a}^2$, so $\E[T_3^{(aa)}]=\tfrac{d_0d_2}{4}$. For $a\neq c$, expanding $F_b(\bm{w}_a,\bm{w}_c)F_b(\bm{w}_c,\bm{w}_a)$ via \eqref{eq:rela-Fclosed} (with $\theta\eqdef\theta_{ac}$) and summing over $b$,
\[
\sum_bF_b(\bm{w}_a,\bm{w}_c)F_b(\bm{w}_c,\bm{w}_a)=\frac{\norm{\bm{w}_a}\norm{\bm{w}_c}}{4\pi^2}\Bigl[\cos\theta\bigl(\sin^2\theta+(\pi-\theta)^2\bigr)+2\sin\theta(\pi-\theta)\Bigr].
\]
Here $\norm{\bm{w}_a}\norm{\bm{w}_c}$ is independent of $\theta_{ac}$ with $\E[\norm{\bm{w}_a}\norm{\bm{w}_c}]=d_0+O(1)$; as $d_0\to\infty$, $\theta_{ac}$ concentrates at $\pi/2$ with $\E[\cos\theta_{ac}]=0$ and $\Var(\cos\theta_{ac})=1/d_0$, so $|\E[\cos\theta_{ac}\,q(\theta_{ac})]|=O(1/\sqrt{d_0})$ for bounded $q$, while $\E[\sin\theta_{ac}(\pi-\theta_{ac})]=\Theta(1)$. The $\cos\theta(\cdot)$ part contributes $O(d_2\sqrt{d_0})$ and the $2\sin\theta(\pi-\theta)$ part gives $d_0d_2\kappa_2+O(d_2)$, so $\E[T_3^{(ac)}]=d_0d_2\kappa_2+O(d_2\sqrt{d_0})$.

\emph{Combining.} For $a=c$, adding the three terms and the $\bm{G}_1^{(2)}$ contribution and merging the two $\tfrac{d_0d_2}{4}$ pieces,
\[
\E\bigl[(\bm{G}_1\bm{G}_1^\top)_{aa}\bigr]=\tfrac{d_0d_2^2}{4}+d_0d_2\bigl[(d_1-1)\kappa_1+\tfrac12\bigr]+\tfrac{\norm{\bm{\Phi}}_\Frob^2}{4};
\]
for $a\neq c$ only $T_3^{(ac)}$ survives, $\E[(\bm{G}_1\bm{G}_1^\top)_{ac}]=d_0d_2\kappa_2+O(d_2\sqrt{d_0})$. Finally $\psi(\pi/2)=\tfrac{1+(\pi/2)^2}{4\pi^2}=\tfrac1{4\pi^2}+\tfrac1{16}$ and $\sin(\pi/2)(\pi-\pi/2)/(2\pi^2)=\tfrac1{4\pi}$ give the high-dimensional limits of $\kappa_1,\kappa_2$.
\end{proof}

\subsection{Discussion}\label{app:relu:discussion}

At initialization the gradient of $\bm{W}_2$ is mostly target-independent: the $\Theta(d_1^2)$ leading order of $\E[(\bm{G}_2\bm{G}_2^\top)_{aa}]$ comes entirely from $\tr(\bm{H}^2)$, which depends only on $\bm{W}_1$, while the target term $\tfrac{d_1}{4}(\bm{\Phi}\bm{\Phi}^\top)_{aa}$ is dominated unless $\norm{\bm{\Phi}}_\Frob^2$ is exceptionally large. The nonzero $\Theta(1)$ off-diagonal of $\E[\bm{G}_1\bm{G}_1^\top]$ comes from $T_3$ and reflects ReLU's lack of odd symmetry: in a deep linear network the symmetry $\bm{W}_1\mapsto \bm{D}\bm{W}_1$, $\bm{W}_2\mapsto \bm{W}_2\bm{D}$ for $\bm{D}=\diag(\pm1)$ preserves the loss and forces off-diagonal expectations to zero (cf.\ \cref{app:dl}), whereas $\sigma(-z)\neq-\sigma(z)$ breaks it. The dominance is therefore asymptotic in $d_1$ rather than exact.
\begin{addblock}
\section{Single-Layer Linear Attention}\label{app:att}

\subsection{Setup}\label{app:att:setup}

Let $T,d_0,d_k,d_v,d_2$ be positive integers. Single-layer linear attention with tokens $\bm{X}\in\R^{T\times d_0}$ (one per row) and weights $\WQ,\WK\in\R^{d_k\times d_0}$, $\WV\in\R^{d_v\times d_0}$, $\WO\in\R^{d_2\times d_v}$, and its loss against a deterministic linear target $\bm{\Phi}\in\R^{d_2\times d_0}$, are
\begin{equation}\label{eq:atta-layer}
\bm{A}\eqdef\frac{1}{\sqrt{d_k}}\,\bm{X}\WQ^\top\WK\bm{X}^\top,
\qquad
f(\bm{X})=\bm{A}\,\bm{X}\WV^\top\WO^\top,
\qquad
\bL\eqdef\tfrac12\,\norm{f(\bm{X})-\bm{X}\bm{\Phi}^\top}_\Frob^2 .
\end{equation}
Let
\begin{equation}\label{eq:atta-notation}
\bm{Z}\eqdef\bm{X}\WV^\top,\qquad
\bm{R}\eqdef f(\bm{X})-\bm{X}\bm{\Phi}^\top,\qquad
\bm{Y}\eqdef\bm{X}\bm{\Phi}^\top,\qquad
\bm{C}\eqdef\bm{X}^\top\bm{X},\qquad
c_p\eqdef\tr(\bm{C}^p),
\end{equation}
with $c_0=\tr(\bm{C}^0)=d_0$, and define the three data moments
\begin{equation}\label{eq:atta-Adef}
\begin{gathered}
\mA_1\eqdef\E[c_1^2c_4{+}c_2c_4{+}c_2^3],\qquad
\mA_2\eqdef\E[c_1^2c_2^2{+}2c_2c_4],\\
\mA_3\eqdef\E[c_1^2c_4{+}c_1^2c_2^2{+}3c_2c_4{+}c_2^3]=\mA_1+\mA_2 ,
\end{gathered}
\end{equation}
which are explicit polynomials in $(T,d_0)$ (\cref{lem:atta-wm}). The gradients $\bm{G}_\bullet\eqdef\partial\bL/\partial\bm{W}_\bullet$, $\bullet\in\{Q,K,V,O\}$, are
\begin{equation}\label{eq:atta-grads}
\begin{aligned}
\GQ&=\tfrac{1}{\sqrt{d_k}}\,\WK\bm{X}^\top\bm{Z}\WO^\top\bm{R}^\top\bm{X},
&\qquad
\GK&=\tfrac{1}{\sqrt{d_k}}\,\WQ\bm{X}^\top\bm{R}\WO\bm{Z}^\top\bm{X},\\
\GV&=\WO^\top\bm{R}^\top\bm{A}\bm{X},
&\qquad
\GO&=\bm{R}^\top\bm{A}\bm{Z} .
\end{aligned}
\end{equation}
Muon acts per weight matrix, so the objects of study are the four gradient self outer-products $\GQ\GQ^\top$ and $\GK\GK^\top$ (both $d_k\times d_k$), $\GV\GV^\top$ ($d_v\times d_v$) and $\GO\GO^\top$ ($d_2\times d_2$), as in \cref{app:mf,app:dl,app:relu}.

\subsection{Standing assumption}\label{app:att:assump}

\begin{assumption}\label{ass:atta}
\sloppy
$\WQ,\WK,\WV,\WO$ are mutually independent with i.i.d.\ $\bN(0,1)$ entries; the rows of $\bm{X}$ are i.i.d.\ $\bN(\bm{0},\bm{I}_{d_0})$, independent of the weights, so $\bm{C}\sim\mathcal{W}_{d_0}(T,\bm{I}_{d_0})$; $\bm{\Phi}$ is deterministic. Expectations $\E[\cdot]$ are over data and weights jointly, and $\E[\cdot\,|\,\bm{X}]$ over the weights only.
\end{assumption}

This is \cref{ass:att} of the main text; fan-in scaling is treated in \cref{rem:atta-scale}.

\subsection{Main results restated}\label{app:att:results}

Every $d_k$ dependence below enters through the single ratio $\mA_2/\mA_1$ of the moments \eqref{eq:atta-Adef}.

\begin{theoremrst}{\cref{thm:att}, restated}
Under \cref{ass:atta}, $\E[\GQ\GQ^\top]=c_Q\bm{I}_{d_k}$, $\E[\GK\GK^\top]=c_K\bm{I}_{d_k}$ and $\E[\GV\GV^\top]=c_V\bm{I}_{d_v}$ with
\begin{equation}\label{eq:atta-cbars}
\begin{aligned}
c_Q&=\frac{1}{d_k}\Bigl(\E[c_2\beta_1]+d_v\gamma\norm{\bm{\Phi}}_\Frob^2\Bigr)
+\frac{1}{d_k^2}\Bigl(\E[c_2\beta_1]+\E[c_2\beta_2]\Bigr),\\
c_K&=\frac{1}{d_k}\Bigl(\E[c_2\beta_1]+d_v\gamma\norm{\bm{\Phi}}_\Frob^2\Bigr)
+\frac{1}{d_k^2}\,\E[(c_1^2{+}c_2)\beta_1],\\
c_V&=d_2(d_2{+}d_v{+}1)\Bigl(\mA_1+\tfrac1{d_k}\mA_3\Bigr)
+\gamma\norm{\bm{\Phi}}_\Frob^2,
\end{aligned}
\end{equation}
where $\beta_1,\beta_2$ are the polynomials \eqref{eq:atta-betas}; and
\begin{equation}\label{eq:atta-main}
\E\bigl[\GO\GO^\top\bigr]=\bar\alpha_O\,\bm{I}_{d_2}+d_v\,\gamma\;\bm{\Phi}\bm{\Phi}^\top,
\end{equation}
with
\begin{align}
\gamma&=\tfrac1{d_0}\,\E[c_2^2]
=T\Bigl[(Td_0{+}8)(T{+}d_0)^2+2(Td_0{+}10)(T{+}d_0)+5(Td_0{+}4)\Bigr],\label{eq:atta-gamma}\\
\bar\alpha_O
&=d_v(d_v{+}1)\,\mA_1+d_v\,\mA_2
+\frac{d_v(d_v{+}1)}{d_k}\,\mA_3+\frac{2d_v}{d_k}\,\mA_1
=\Theta(d_v^2d_0^6).\label{eq:atta-abar}
\end{align}
\end{theoremrst}

\begin{corollary}\label{cor:atta-scaling}
\sloppy
Under \cref{ass:atta} and for fixed $\bm{\Phi}$,
\[
c_Q\asymp c_K=\Theta\bigl(d_2^2d_v^2d_0^6/d_k\bigr),\qquad c_V=\Theta\bigl(d_2(d_2{+}d_v)\,d_0^6\bigr),\qquad \bar\alpha_O=\Theta\bigl(d_v^2d_0^6\bigr),
\]
with $c_K-c_Q=\bigl(\E[c_1^2\beta_1]-\E[c_2\beta_2]\bigr)/d_k^2$ of relative size $O(1/d_k)$; and for $a\neq b$ with $(\bm{\Phi}\bm{\Phi}^\top)_{ab}\neq0$,
\[
\frac{\E[(\GO\GO^\top)_{aa}]}{\E[(\GO\GO^\top)_{ab}]}=\Theta\bigl(d_vd_0^3\bigr).
\]
Thus the hidden dimension $d_v$ and the token dimension $d_0$ drive the diagonal dominance of $\E[\GO\GO^\top]$, while $d_k$ enters $c_V$, $\bar\alpha_O$ and the $\WO$ ratio only through bounded factors.
\end{corollary}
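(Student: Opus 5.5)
The corollary is a bookkeeping consequence of the restated \cref{thm:att}: each closed-form coefficient is a finite combination of Wishart trace moments. So the plan is to read off the leading order of every such moment in $(T,d_0)$ and then in $(d_k,d_v,d_2)$.

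\textbf{Orders of the data moments.} With $T$ fixed, I would first establish the following from the explicit polynomials of \cref{lem:atta-wm}. The traces satisfy $c_1=\tr\bm{C}\approx Td_0$, $c_2\approx Td_0(T+d_0)=\Theta(d_0^2)$ and $c_4=\Theta(d_0^4)$. In each case the leading term comes from the maximal number of free index sums. This gives
\[
\E[c_1^2c_4]=\Theta(d_0^6),\qquad \E[c_2^3]=\Theta(d_0^6),\qquad \E[c_1^2c_2^2]=\Theta(d_0^6),
\]
while $\E[c_2c_4]=\Theta(d_0^6)$ at most. Since all terms have positive coefficients, $\mA_1,\mA_2,\mA_3=\Theta(d_0^6)$ with no cancellation, and their ratios are bounded above and below. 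Similarly, \eqref{eq:atta-gamma} gives $\gamma=\Theta(d_0^3)$: the leading term is $T\cdot Td_0\cdot d_0^2$.

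\textbf{Reading off the coefficients.} Using these orders in \eqref{eq:atta-abar} gives $\bar\alpha_O=d_v(d_v+1)\mA_1(1+O(1/d_k))+O(d_v d_0^6)=\Theta(d_v^2d_0^6)$. Only $d_v$ and $d_0$ matter; $d_k\ge1$ enters through the bounded factor $1+\mA_3/(d_k\mA_1)\in[1,1+\mA_3/\mA_1]$. For $c_V$, $d_2(d_2+d_v+1)(\mA_1+\mA_3/d_k)=\Theta(d_2(d_2+d_v)d_0^6)$, and the $\gamma\norm{\bm{\Phi}}_\Frob^2=O(d_0^3)$ term is lower order. For $c_Q$ and $c_K$, I would go through the polynomials $\beta_1,\beta_2$ of \eqref{eq:atta-betas}. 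I expect $\E[c_2\beta_1]=\Theta(d_2^2d_v^2d_0^6)$, with $\beta_1$ carrying the $\WO,\WV$ Gaussian moments $d_2^2d_v^2$ times degree-four data traces. The $1/d_k^2$ terms are then smaller by a factor $1/d_k$, which gives $c_Q\asymp c_K=\Theta(d_2^2d_v^2d_0^6/d_k)$. Subtracting the two lines of \eqref{eq:atta-cbars} and using $\E[(c_1^2+c_2)\beta_1]-\E[c_2\beta_1]-\E[c_2\beta_2]=\E[c_1^2\beta_1]-\E[c_2\beta_2]$ gives the stated $c_K-c_Q$. Dividing by $\E[c_2\beta_1]/d_k$ shows that this difference has relative size $O(1/d_k)$, provided $\E[c_1^2\beta_1]$ and $\E[c_2\beta_2]$ are of the same order as $\E[c_2\beta_1]$.

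\textbf{The $\WO$ ratio.} From \eqref{eq:atta-main}, the diagonal entry is $\bar\alpha_O+d_v\gamma(\bm{\Phi}\bm{\Phi}^\top)_{aa}=\Theta(d_v^2d_0^6)$, because the second term is $O(d_vd_0^3)$. The off-diagonal entry is $d_v\gamma(\bm{\Phi}\bm{\Phi}^\top)_{ab}=\Theta(d_vd_0^3)$ for fixed nonzero $(\bm{\Phi}\bm{\Phi}^\top)_{ab}$. Their ratio is therefore $\Theta(d_vd_0^3)$, uniformly in $d_k$.

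\textbf{Main obstacle.} The main obstacle is the $c_Q,c_K$ step. It requires unpacking $\beta_1,\beta_2$ and verifying that the leading $d_2^2d_v^2d_0^6$ term has a positive coefficient that no subtraction cancels. It also requires checking the sharp $d_0$ order of mixed moments such as $\E[c_2c_4]$ and $\E[c_1^2\beta_1]$. I would handle this by arguing term-by-term positivity from \cref{lem:atta-wm}, using that all Wishart trace moments are polynomials in $(T,d_0)$ with nonnegative coefficients. Lower bounds then follow from a single explicit leading monomial, and upper bounds from the total degree.
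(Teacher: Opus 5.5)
Your proposal is correct and follows essentially the same route as the paper. Like the paper, you read off the $\Theta(d_0^6)$ orders of the degree-six Wishart moments and $\gamma=\Theta(d_0^3)$ from \cref{lem:atta-wm}, use positivity of every term to sandwich $c_Q,c_K,c_V,\bar\alpha_O$ (with $d_k$ entering only through bounded factors such as $1+\mA_3/(d_k\mA_1)$), and divide the entries of \eqref{eq:atta-main}. The one point to tighten is that the lower bound $\E[c_2\beta_1]\gtrsim d_2^2d_v^2d_0^6$ needs $\E[c_2c_4]=\Omega(d_0^6)$, not merely an upper bound; this follows directly from the explicit degree-six polynomial \eqref{eq:atta-wm-c2c4}, whose coefficients are positive.
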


\subsection{Preliminary lemmas}\label{app:att:lemmas}

\Cref{lem:atta-wishart,lem:atta-N,lem:atta-S,lem:atta-E} are stated conditionally on $\bm{X}$, i.e.\ with $\bm{C}$ fixed and $c_p=\tr(\bm{C}^p)$; \cref{lem:atta-wm} supplies the moments of $\bm{C}\sim\mathcal{W}_{d_0}(T,\bm{I}_{d_0})$ needed to average over $\bm{X}$.

\begin{lemma}\label{lem:atta-quad}
Same as \cref{lem:mfa-quad}.
\end{lemma}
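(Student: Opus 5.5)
This is \cref{lem:mfa-quad} applied verbatim in the attention setting, so the plan is simply to re-derive it by the second-moment computation used there. Let $\bm{X}\in\R^{m\times n}$ have i.i.d.\ $\bN(0,1)$ entries, and let $\bm{M}\in\R^{n\times n}$ and $\bm{N}\in\R^{m\times m}$ be deterministic. I would first write the $(a,b)$ entry of $\bm{X}\bm{M}\bm{X}^\top$ as $\sum_{p,q}\bm{X}_{ap}\bm{M}_{pq}\bm{X}_{bq}$. Linearity of expectation together with $\E[\bm{X}_{ap}\bm{X}_{bq}]=\delta_{ab}\delta_{pq}$ then collapses the double sum to $\delta_{ab}\tr(\bm{M})$, which gives $\E[\bm{X}\bm{M}\bm{X}^\top]=\tr(\bm{M})\bm{I}_m$.

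The second identity follows by applying the first to $\bm{X}^\top$, whose entries are again i.i.d.\ standard Gaussian. No symmetry of $\bm{M}$ or $\bm{N}$ is needed.

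In the attention appendix the lemma will be invoked conditionally, for example with the weight matrix Gaussian and $\bm{M}$ a function of $\bm{X}$ and of the other independent weights. That use is justified because the computation above holds for any $\bm{M}$ independent of the Gaussian matrix: condition on everything else and then apply the tower property. Integrability is automatic, since each entry is a finite polynomial in Gaussians.

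I expect no real obstacle; the only point to state explicitly is this independence requirement that permits the conditioning.
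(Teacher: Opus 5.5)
Your proposal is correct and matches the paper's proof: the entrywise computation with $\E[\bm{X}_{ap}\bm{X}_{bq}]=\delta_{ab}\delta_{pq}$ gives the first identity, and applying it to $\bm{X}^\top$ gives the second. Your added remark on conditioning and the tower property is sound, and it is how the attention appendix uses the lemma, with $\bm{M}$ a function of $\bm{X}$ and the other independent weights.
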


\begin{lemma}[Wick/Isserlis]\label{lem:atta-wick}
Same as \cref{lem:mfa-wick}.
\end{lemma}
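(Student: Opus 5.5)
This statement is the Wick/Isserlis formula of \cref{lem:mfa-wick}, reused for the attention computations. We would prove it by induction on $n$ using Gaussian integration by parts. An alternative is to read it off the moment generating function, as sketched after \cref{lem:mfa-wick}.

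\emph{Reduction.} Write the centered Gaussian vector $(X_1,\dots,X_{2n})$ with covariance $\bm{\Sigma}$ as $\bm{X}=\bm{\Sigma}^{1/2}\bm{g}$ with $\bm{g}\sim\bN(\bm{0},\bm{I})$. This also covers a degenerate $\bm{\Sigma}$, using the PSD square root.

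\emph{Covariance form of Stein's lemma.} For smooth $f$ of polynomial growth, apply \cref{lem:rela-stein} coordinatewise in $\bm{g}$ and then the chain rule. This gives
\[
\E[X_1 f(\bm{X})]=\sum_{j}\E[X_1X_j]\,\E[\partial_j f(\bm{X})].
\]

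\emph{Induction step.} Take $f(\bm{X})=X_2\cdots X_{2n}$. Then $\partial_j f$ is the product with $X_j$ removed, so
\[
\E[X_1\cdots X_{2n}]=\sum_{j=2}^{2n}\E[X_1X_j]\,\E\Bigl[\textstyle\prod_{i\neq 1,j}X_i\Bigr].
\]
By the induction hypothesis, the inner expectation is the sum over pairings of the remaining $2n-2$ indices. Every pairing of $\{1,\dots,2n\}$ arises exactly once, from the partner $j$ of $1$ together with a pairing of the rest. This yields the formula for $2n$ factors. The base case $n=1$ is the definition of covariance.

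\emph{Odd moments.} Since $\bm{X}\stackrel{d}{=}-\bm{X}$, any product of an odd number of the $X_i$ has expectation equal to its own negative, hence zero.

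\emph{Main obstacle.} The only delicate point is justifying Stein's identity for polynomial $f$, which is not Lipschitz. The integrability hypotheses of \cref{lem:rela-stein} hold because Gaussian moments of all orders are finite, so the boundary terms vanish and the Lipschitz condition is not needed. If one prefers to avoid this, the generating function route works directly. Expand $\exp(\tfrac12\bm{t}^\top\bm{\Sigma}\bm{t})=\sum_m(\tfrac12\bm{t}^\top\bm{\Sigma}\bm{t})^m/m!$. The coefficient of $t_1\cdots t_{2n}$ comes only from $m=n$. Counting the $n!\,2^n$ orderings of pairs and sides shows that this coefficient is exactly the sum over pairings of $\prod\Sigma_{ij}$.
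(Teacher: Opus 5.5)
Your proof is correct, but your main route is not the paper's. The paper proves \cref{lem:mfa-wick}, of which this lemma is a restatement, in one line. It differentiates $\E[\exp(\bm{t}^\top\bm{X})]=\exp(\tfrac12\bm{t}^\top\bm{\Sigma}\bm{t})$ in $t_1,\dots,t_{2n}$ at $\bm{t}=\bm{0}$ and leaves the pairing count to the cited reference. Your fallback paragraph is exactly this argument with the count made explicit: each pairing occurs $2^nn!$ times in $(\tfrac12\bm{t}^\top\bm{\Sigma}\bm{t})^n$, cancelling the $1/(2^nn!)$. On that route the odd moments vanish because the generating function is even in $\bm{t}$.

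Your primary route instead extracts the recursion $\E[X_1\cdots X_{2n}]=\sum_{j\ge2}\Sigma_{1j}\,\E\bigl[\prod_{i\ne1,j}X_i\bigr]$ from the covariance form of Gaussian integration by parts, and then inducts on $n$. Each step checks out:
\begin{enumerate}
\item The representation $\bm{X}\stackrel{d}{=}\bm{\Sigma}^{1/2}\bm{g}$ handles degenerate $\bm{\Sigma}$, including repeated variables.
\item Polynomials satisfy the absolute-continuity and integrability hypotheses of \cref{lem:rela-stein}; as you say, its Lipschitz clause is only a sufficient case.
\item That lemma is proved by one-dimensional integration by parts without using Wick, so nothing is circular.
\end{enumerate}

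The paper's route buys brevity. It needs only that the generating function is finite everywhere, so that differentiating under the expectation is legitimate. Yours buys the recursion itself, an elementary induction with no generating functions. It is also the same integration-by-parts mechanism the paper later applies directly to polynomials of Gaussian entries in the proof of \cref{lem:atta-wm}.
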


\begin{lemma}\label{lem:atta-wishart}
Let $\bm{W}\in\R^{n\times d}$ have i.i.d.\ $\bN(0,1)$ entries, let $\bm{S}\eqdef\bm{W}^\top\bm{W}\sim\mathcal{W}_d(n,\bm{I}_d)$, and let $\bm{M}\in\R^{d\times d}$ be deterministic symmetric. Then
\begin{equation}\label{eq:atta-wishart}
\E[S_{ab}S_{cd}]=n^2\delta_{ab}\delta_{cd}+n(\delta_{ac}\delta_{bd}+\delta_{ad}\delta_{bc}),
\qquad
\E[\bm{S}\bm{M}\bm{S}]=n(n{+}1)\bm{M}+n\tr(\bm{M})\,\bm{I}_d,
\end{equation}
hence $\E[\tr(\bm{M}\bm{S}\bm{M}\bm{S})]=n(n{+}1)\tr(\bm{M}^2)+n(\tr\bm{M})^2$ and $\E[\bm{S}^2]=n(n{+}d{+}1)\bm{I}_d$.
\end{lemma}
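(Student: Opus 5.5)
The plan is a direct Wick computation on the columns of $\bm{W}$, followed by contraction against $\bm{M}$. Write $\bm{S}=\sum_{i=1}^n\bm{w}_i\bm{w}_i^\top$, where $\bm{w}_i\in\R^d$ is the $i$-th row of $\bm{W}$ (as a column vector); these are i.i.d.\ $\bN(\bm{0},\bm{I}_d)$. Then $S_{ab}=\sum_i w_{i,a}w_{i,b}$ and
\[
\E[S_{ab}S_{cd}]=\sum_{i,j}\E[w_{i,a}w_{i,b}w_{j,c}w_{j,d}].
\]
Apply \cref{lem:atta-wick} with $\E[w_{i,p}w_{j,q}]=\delta_{ij}\delta_{pq}$. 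This gives three pairings:
\begin{itemize}
\item $(ab)(cd)$ contributes $\delta_{ab}\delta_{cd}$ for every pair $(i,j)$, hence $n^2\delta_{ab}\delta_{cd}$;
\item $(ac)(bd)$ forces $i=j$ and contributes $n\delta_{ac}\delta_{bd}$;
\item $(ad)(bc)$ likewise contributes $n\delta_{ad}\delta_{bc}$.
\end{itemize}
This is the first identity in \eqref{eq:atta-wishart}. It is the same contraction as in \cref{lem:rela-WtW}, with $\bm{W}_2$ replaced by $\bm{W}$ and $d_2$ by $n$.

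For the matrix identity, take entries:
\[
\E[(\bm{S}\bm{M}\bm{S})_{ad}]=\sum_{b,c}M_{bc}\,\E[S_{ab}S_{cd}].
\]
Substituting the first identity term by term:
\begin{itemize}
\item the $n^2\delta_{ab}\delta_{cd}$ term gives $n^2M_{ad}$;
\item the $n\delta_{ac}\delta_{bd}$ term gives $nM_{da}=nM_{ad}$, using symmetry of $\bm{M}$;
\item the $n\delta_{ad}\delta_{bc}$ term gives $n\,\delta_{ad}\tr(\bm{M})$.
\end{itemize}
Summing yields $n(n+1)\bm{M}+n\tr(\bm{M})\bm{I}_d$.

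The two consequences follow directly. Multiply the matrix identity on the left by the deterministic $\bm{M}$ and take the trace; by linearity of expectation,
\[
\E[\tr(\bm{M}\bm{S}\bm{M}\bm{S})]=n(n+1)\tr(\bm{M}^2)+n(\tr\bm{M})^2.
\]
Setting $\bm{M}=\bm{I}_d$ in the matrix identity gives $\E[\bm{S}^2]=(n(n+1)+nd)\bm{I}_d=n(n+d+1)\bm{I}_d$. This agrees with \cref{lem:mfa-a2} under $k\leftrightarrow n$, which serves as a sanity check.

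There is no genuine obstacle here. The only point needing care is the bookkeeping of which Wick pairings force $i=j$, since that determines the $n^2$ versus $n$ factors. Symmetry of $\bm{M}$ is used exactly once, to merge the two $n\bm{M}$ contributions into the coefficient $n(n+1)$.
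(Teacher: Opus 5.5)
Your proposal is correct and follows the paper's proof essentially step for step. Both apply Wick's theorem to $S_{ab}=\sum_\alpha W_{\alpha a}W_{\alpha b}$, get the $n^2,n,n$ pairings, contract against $M_{bc}$ using $\bm{M}=\bm{M}^\top$, then trace once more against $\bm{M}$ and set $\bm{M}=\bm{I}_d$. The only blemish is the opening phrase ``columns of $\bm{W}$''; you actually (and correctly) sum over the rows $\bm{w}_i$ of $\bm{W}\in\R^{n\times d}$.
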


\begin{proof}
Since $S_{ab}=\sum_{\alpha}W_{\alpha a}W_{\alpha b}$, \cref{lem:atta-wick} gives
\[
\E[S_{ab}S_{cd}]=\sum_{\alpha,\beta}\E[W_{\alpha a}W_{\alpha b}W_{\beta c}W_{\beta d}]
=\sum_{\alpha,\beta}\bigl(\delta_{ab}\delta_{cd}+\delta_{\alpha\beta}\delta_{ac}\delta_{bd}+\delta_{\alpha\beta}\delta_{ad}\delta_{bc}\bigr),
\]
which is the first identity. Contracting it against $M_{bc}$ and using $\bm{M}=\bm{M}^\top$,
\[
\E[(\bm{S}\bm{M}\bm{S})_{ad}]=\sum_{b,c}M_{bc}\,\E[S_{ab}S_{cd}]=n^2M_{ad}+nM_{ad}+n\,\delta_{ad}\tr(\bm{M}),
\]
which is the second identity; contracting once more against $M_{da}$ gives the trace identity, and $\bm{M}=\bm{I}_d$ gives $\E[\bm{S}^2]$.
\end{proof}

\begin{lemma}\label{lem:atta-N}
Under \cref{ass:atta}, conditionally on $\bm{X}$, let $\bm{N}\eqdef\WQ\bm{C}\WQ^\top\in\R^{d_k\times d_k}$. Then
\begin{equation}\label{eq:atta-N}
\begin{gathered}
\E[N_{ij}N_{kl}\,|\,\bm{X}]=c_1^2\,\delta_{ij}\delta_{kl}+c_2(\delta_{ik}\delta_{jl}+\delta_{il}\delta_{jk}),\\
\E[(\tr\bm{N})^2\,|\,\bm{X}]=d_k^2c_1^2+2d_kc_2,
\qquad
\E[\tr\bm{N}^2\,|\,\bm{X}]=d_k^2c_2+d_k(c_1^2{+}c_2).
\end{gathered}
\end{equation}
The same identities hold with $\bm{C}$ replaced by any deterministic symmetric $\bm{E}\in\R^{d_0\times d_0}$ and $(c_1,c_2)$ by $(\tr\bm{E},\tr\bm{E}^2)$, and with $\WQ$ replaced by $\WK$.
\end{lemma}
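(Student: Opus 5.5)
The plan is to condition on $\bm{X}$, so that $\bm{C}=\bm{X}^\top\bm{X}$ is a fixed symmetric matrix, and to reduce $\bm{N}=\WQ\bm{C}\WQ^\top$ to a Wishart-type quadratic form in the rows of $\WQ$. Write $\bm{q}_i\in\R^{d_0}$ for the $i$-th row of $\WQ$. Then $N_{ij}=\bm{q}_i^\top\bm{C}\bm{q}_j=\sum_{p,r}Q_{ip}C_{pr}Q_{jr}$ with i.i.d.\ $\bN(0,1)$ entries $Q_{ip}$ independent of $\bm{X}$.

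The key step is a direct Wick expansion (\cref{lem:atta-wick}) of
\[
\E[Q_{ip}Q_{jr}Q_{ks}Q_{lt}]=\delta_{ij}\delta_{pr}\delta_{kl}\delta_{st}+\delta_{ik}\delta_{ps}\delta_{jl}\delta_{rt}+\delta_{il}\delta_{pt}\delta_{jk}\delta_{rs},
\]
contracted against $C_{pr}C_{st}$. The first pairing gives $\delta_{ij}\delta_{kl}(\tr\bm{C})^2=c_1^2\delta_{ij}\delta_{kl}$. The second gives $\delta_{ik}\delta_{jl}\sum_{p,r}C_{pr}C_{pr}=\delta_{ik}\delta_{jl}\tr(\bm{C}\bm{C}^\top)=c_2\,\delta_{ik}\delta_{jl}$. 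The third gives $\delta_{il}\delta_{jk}\sum_{p,r}C_{pr}C_{rp}=c_2\,\delta_{il}\delta_{jk}$. Both of the last two use the symmetry of $\bm{C}$ to identify $\|\bm{C}\|_\Frob^2=\tr\bm{C}^2$. This yields the first display.

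The trace identities then follow by contraction. For $(\tr\bm{N})^2$, set $i=j$ and $k=l$ and sum over $i,k$. This gives $d_k^2c_1^2$ from the first term and $d_kc_2+d_kc_2$ from the other two. For $\tr\bm{N}^2=\sum_{i,j}N_{ij}N_{ji}$, set $k=j$ and $l=i$ and sum over $i,j$. The first term gives $c_1^2\sum_{i,j}\delta_{ij}=d_kc_1^2$. The second gives $c_2\sum_{i,j}\delta_{ij}=d_kc_2$. The third gives $c_2\sum_{i,j}1=d_k^2c_2$. Together these are $d_k^2c_2+d_k(c_1^2+c_2)$.

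The extensions need only minor checks. The argument used nothing about $\bm{C}$ beyond its being deterministic given $\bm{X}$ and symmetric, so it applies verbatim to any deterministic symmetric $\bm{E}$, with $(c_1,c_2)$ replaced by $(\tr\bm{E},\tr\bm{E}^2)$. It also applies with $\WK$ in place of $\WQ$, since $\WK$ has the same law and is also independent of $\bm{X}$. No step is a genuine obstacle; the only care needed is the index bookkeeping in the third Wick pairing, where symmetry of $\bm{C}$ is what turns $\sum C_{pr}C_{rp}$ into $c_2$.
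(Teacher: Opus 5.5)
Your proof is correct and follows the paper's argument: the Wick expansion of the fourth moment of $\WQ$, contracted against $C_{pr}C_{st}$, then the contractions $i=j,\,k=l$ and $k=j,\,l=i$, with the extensions holding because only the symmetry of $\bm{C}$ and the law of $\WQ$ are used. One small correction to your closing remark: $\sum_{p,r}C_{pr}C_{rp}=\tr(\bm{C}^2)$ holds for any square matrix, so symmetry is actually needed in the second pairing, where $\sum_{p,r}C_{pr}C_{pr}=\tr(\bm{C}\bm{C}^\top)$, not the third.
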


\begin{proof}
Since $N_{ij}=\sum_{a,b}(\WQ)_{ia}C_{ab}(\WQ)_{jb}$, \cref{lem:atta-wick} gives
\[
\E[(\WQ)_{ia}(\WQ)_{jb}(\WQ)_{kc}(\WQ)_{ld}]=\delta_{ij}\delta_{ab}\delta_{kl}\delta_{cd}+\delta_{ik}\delta_{ac}\delta_{jl}\delta_{bd}+\delta_{il}\delta_{ad}\delta_{jk}\delta_{bc},
\]
and contracting against $C_{ab}C_{cd}$, with $\sum_{a,b}C_{ab}C_{ab}=\tr(\bm{C}^2)=c_2$, gives the first line of \eqref{eq:atta-N}. Summing it over $i=j$ and $k=l$ gives $\E[(\tr\bm{N})^2\,|\,\bm{X}]$; summing it over $j=k$ and $l=i$ gives $\E[\tr\bm{N}^2\,|\,\bm{X}]$. Only the symmetry of $\bm{C}$ and the law of $\WQ$ were used, which gives the last assertion. (These are \cref{lem:dla-trsq,lem:dla-fourth} with $m=d_k$ and $\bm{M}=\bm{C}$.)
\end{proof}

\begin{lemma}\label{lem:atta-S}
Under \cref{ass:atta}, conditionally on $\bm{X}$, let $\bm{N}\in\R^{d_k\times d_k}$ be symmetric and independent of $\WK$, and let $\bm{S}\eqdef\WK^\top\bm{N}\WK$. Then
$\E[S_{ab}S_{cd}\,|\,\bm{X},\bm{N}]=(\tr\bm{N})^2\delta_{ab}\delta_{cd}+\tr(\bm{N}^2)(\delta_{ac}\delta_{bd}+\delta_{ad}\delta_{bc})$, and
\begin{equation}\label{eq:atta-cond}
\begin{aligned}
\E[\tr(\bm{S}\bm{C}^2\bm{S}\bm{C}^2)\,|\,\bm{X},\bm{N}]&=(\tr\bm{N})^2c_4+\tr(\bm{N}^2)(c_4{+}c_2^2),\\
\E[(\tr(\bm{S}\bm{C}^2))^2\,|\,\bm{X},\bm{N}]&=(\tr\bm{N})^2c_2^2+2\tr(\bm{N}^2)\,c_4.
\end{aligned}
\end{equation}
\end{lemma}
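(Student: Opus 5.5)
The plan is to reduce everything to the fourth-moment identity for Gaussian quadratic forms already proved as \cref{lem:atta-N}, applied with the roles of the two dimensions swapped. Conditionally on $(\bm{X},\bm{N})$, $\WK\in\R^{d_k\times d_0}$ has i.i.d.\ $\bN(0,1)$ entries and is independent of $\bm{N}$. Writing $S_{ab}=\sum_{i,j}(\WK)_{ia}N_{ij}(\WK)_{jb}$, \cref{lem:atta-wick} gives
\[
\E[(\WK)_{ia}(\WK)_{jb}(\WK)_{kc}(\WK)_{ld}]=\delta_{ij}\delta_{ab}\delta_{kl}\delta_{cd}+\delta_{ik}\delta_{ac}\delta_{jl}\delta_{bd}+\delta_{il}\delta_{ad}\delta_{jk}\delta_{bc}.
\]
Contracting against $N_{ij}N_{kl}$ yields $(\tr\bm{N})^2$, $\sum_{ij}N_{ij}^2=\tr(\bm{N}^2)$ and $\sum_{ij}N_{ij}N_{ji}=\tr(\bm{N}^2)$ by symmetry. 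This is the first claim, namely the transposed form of \cref{lem:atta-N} (equivalently \cref{lem:dla-fourth} applied to $\bm{X}^\top$).

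For the two trace identities, set $\bm{E}\eqdef\bm{C}^2$, which is fixed and symmetric given $\bm{X}$, with $\tr\bm{E}=c_2$, $\tr\bm{E}^2=c_4$. Expand $\tr(\bm{S}\bm{E}\bm{S}\bm{E})=\sum_{a,b,c,d}S_{ab}E_{bc}S_{cd}E_{da}$ and insert the covariance formula term by term.
\begin{itemize}
\item The $\delta_{ab}\delta_{cd}$ term gives $\sum E_{ac}E_{ca}=c_4$.
\item The $\delta_{ac}\delta_{bd}$ term gives $\sum_{a,b}E_{ba}E_{ba}=c_4$.
\item The $\delta_{ad}\delta_{bc}$ term gives $(\tr\bm{E})^2=c_2^2$.
\end{itemize}
Together these give $(\tr\bm{N})^2c_4+\tr(\bm{N}^2)(c_4+c_2^2)$. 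Similarly, $(\tr(\bm{S}\bm{E}))^2=\sum S_{ab}E_{ba}S_{cd}E_{dc}$.
\begin{itemize}
\item The $\delta_{ab}\delta_{cd}$ term gives $(\tr\bm{E})^2=c_2^2$.
\item The other two pairings each give $\sum E_{ba}E_{ab}$ or $\sum E_{ba}E_{ba}$, that is, $c_4$ each.
\end{itemize}
Together these give $(\tr\bm{N})^2c_2^2+2\tr(\bm{N}^2)c_4$.

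There is no real obstacle here; the only care needed is bookkeeping of index contractions. Each pairing must be matched to the correct trace, using the symmetry of $\bm{C}^2$ so that $\sum E_{ab}^2=\tr(\bm{E}^2)$. The conditioning must also be justified: independence of $\bm{N}$ from $\WK$ (for instance, $\bm{N}=\WQ\bm{C}\WQ^\top$ under \cref{ass:atta}) lets us treat $\bm{N}$ as deterministic inside the Wick expansion.
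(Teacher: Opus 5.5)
Your proposal is correct and matches the paper's proof step for step. You apply Wick's formula to the four entries of $\WK$ and contract against $N_{ij}N_{kl}$ using $\bm{N}=\bm{N}^\top$ to get the covariance identity, then contract that identity against $(\bm{C}^2)_{bc}(\bm{C}^2)_{da}$ and against $(\bm{C}^2)_{ba}(\bm{C}^2)_{dc}$, with exactly the pairing-to-trace assignments the paper uses.
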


\begin{proof}
Since $S_{ab}=\sum_{i,j}(\WK)_{ia}N_{ij}(\WK)_{jb}$, \cref{lem:atta-wick} applied to $\E[(\WK)_{ia}(\WK)_{jb}(\WK)_{kc}(\WK)_{ld}]$ as in the proof of \cref{lem:atta-N}, followed by contraction against $N_{ij}N_{kl}$ with $\bm{N}=\bm{N}^\top$, gives the first identity. Contracting the first identity against $(\bm{C}^2)_{bc}(\bm{C}^2)_{da}$, the three terms give $(\tr\bm{N})^2\tr(\bm{C}^4)$, $\tr(\bm{N}^2)\tr(\bm{C}^4)$ and $\tr(\bm{N}^2)\tr(\bm{C}^2)^2$, which is the first line of \eqref{eq:atta-cond}; contracting against $(\bm{C}^2)_{ba}(\bm{C}^2)_{dc}$ instead, they give $(\tr\bm{N})^2\tr(\bm{C}^2)^2$, $\tr(\bm{N}^2)\tr(\bm{C}^4)$ and $\tr(\bm{N}^2)\tr(\bm{C}^4)$, which is the second line.
\end{proof}

\begin{lemma}\label{lem:atta-E}
\sloppy
Under \cref{ass:atta}, conditionally on $\bm{X}$, let $\bm{H}_O\eqdef\WO^\top\WO\sim\mathcal{W}_{d_v}(d_2,\bm{I}_{d_v})$ and $\bm{E}\eqdef\bm{C}\WV^\top\bm{H}_O\WV\bm{C}$, and define
\begin{equation}\label{eq:atta-betas}
\begin{aligned}
\beta_1&\eqdef d_2d_v\Bigl[(d_2d_v{+}d_2{+}d_v{+}3)\,c_4+(d_2{+}d_v{+}1)\,c_2^2\Bigr],\\
\beta_2&\eqdef d_2d_v\Bigl[(d_2d_v{+}2)\,c_2^2+2(d_2{+}d_v{+}1)\,c_4\Bigr].
\end{aligned}
\end{equation}
Then $\E[\tr(\bm{E}^2)\,|\,\bm{X}]=\beta_1$ and $\E[(\tr\bm{E})^2\,|\,\bm{X}]=\beta_2$.
\end{lemma}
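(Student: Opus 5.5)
The plan is to peel off the two independent Gaussian layers one at a time, conditionally on $\bm{X}$ (so $\bm{C}$ is fixed). Write $\bm{K}\eqdef\WV^\top\bm{H}_O\WV\in\R^{d_0\times d_0}$, so that $\bm{E}=\bm{C}\bm{K}\bm{C}$. Then
\[
\tr(\bm{E}^2)=\tr(\bm{K}\bm{C}^2\bm{K}\bm{C}^2),\qquad \tr\bm{E}=\tr(\bm{K}\bm{C}^2).
\]
These have exactly the shape handled by \cref{lem:atta-S}. There the role of $\WK\in\R^{d_k\times d_0}$ is played by $\WV\in\R^{d_v\times d_0}$, and the role of $\bm{N}$ by $\bm{H}_O$, which is symmetric and independent of $\WV$. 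The proof of \cref{lem:atta-S} uses only the i.i.d.\ Gaussian law of the sandwiching matrix and the symmetry of the inner matrix, so it transfers verbatim. Conditioning on $(\bm{X},\WO)$ it gives
\[
\E[\tr(\bm{E}^2)\mid\bm{X},\WO]=(\tr\bm{H}_O)^2c_4+\tr(\bm{H}_O^2)(c_4+c_2^2),
\]
\[
\E[(\tr\bm{E})^2\mid\bm{X},\WO]=(\tr\bm{H}_O)^2c_2^2+2\tr(\bm{H}_O^2)c_4 .
\]

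Next I average over $\WO$. Since $\bm{H}_O=\WO^\top\WO$ with $\WO\in\R^{d_2\times d_v}$, \cref{lem:atta-N} applies with $\bm{E}=\bm{I}_{d_v}$, $m=d_2$ and trace data $(\tr\bm{I},\tr\bm{I}^2)=(d_v,d_v)$. Equivalently, use \cref{lem:dla-trsq} and \cref{lem:atta-wishart}. This yields
\[
\E[(\tr\bm{H}_O)^2]=d_2^2d_v^2+2d_2d_v,
\]
\[
\E[\tr\bm{H}_O^2]=d_2^2d_v+d_2(d_v^2+d_v)=d_2d_v(d_2+d_v+1),
\]
where the last line also follows from the trace of $\E[\bm{S}^2]=n(n+d+1)\bm{I}_d$ in \cref{lem:atta-wishart}.

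Substituting these moments and collecting coefficients of $c_4$ and $c_2^2$ gives the two polynomials. For $\beta_1$, the coefficient of $c_4$ is $d_2d_v(d_2d_v+2)+d_2d_v(d_2+d_v+1)=d_2d_v(d_2d_v+d_2+d_v+3)$, and the coefficient of $c_2^2$ is $d_2d_v(d_2+d_v+1)$. For $\beta_2$, the coefficient of $c_2^2$ is $d_2d_v(d_2d_v+2)$, and the coefficient of $c_4$ is $2d_2d_v(d_2+d_v+1)$. Both match \eqref{eq:atta-betas}.

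The only real care point is the first step: checking that the index contractions in \cref{lem:atta-S} go through with $\bm{C}^2$ in place of the generic symmetric matrices. The transposes also need attention, since $\bm{K}$ is symmetric because $\bm{H}_O$ is. Finally, the tower property must be used correctly, averaging over $\WV$ first and then over $\WO$; this is legitimate because the two are independent of each other and of $\bm{X}$.
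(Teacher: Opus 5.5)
Your proposal is correct and is essentially the paper's proof: apply \cref{lem:atta-S} with $(\WV,\bm{H}_O)$ in place of $(\WK,\bm{N})$, then average using $\E[(\tr\bm{H}_O)^2]=d_2d_v(d_2d_v{+}2)$ and $\E[\tr\bm{H}_O^2]=d_2d_v(d_2{+}d_v{+}1)$ from \cref{lem:atta-wishart}. One small bookkeeping point: \cref{lem:atta-N} with $\bm{E}=\bm{I}_{d_v}$ and $m=d_2$ literally gives the moments of $\WO\WO^\top$ rather than of $\bm{H}_O=\WO^\top\WO$; this is harmless because $\tr\bigl((\WO\WO^\top)^p\bigr)=\tr(\bm{H}_O^p)$.
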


\begin{proof}
\sloppy
Let $\bm{S}_V\eqdef\WV^\top\bm{H}_O\WV$, so that $\tr\bm{E}=\tr(\bm{S}_V\bm{C}^2)$ and $\tr\bm{E}^2=\tr(\bm{S}_V\bm{C}^2\bm{S}_V\bm{C}^2)$. Since $\bm{H}_O$ is symmetric and independent of $\WV$, \cref{lem:atta-S} applies with $(\WV,\bm{H}_O)$ in place of $(\WK,\bm{N})$:
\[
\begin{gathered}
\E[\tr\bm{E}^2\,|\,\bm{X},\bm{H}_O]=(\tr\bm{H}_O)^2c_4+\tr(\bm{H}_O^2)(c_4{+}c_2^2),\\
\E[(\tr\bm{E})^2\,|\,\bm{X},\bm{H}_O]=(\tr\bm{H}_O)^2c_2^2+2\tr(\bm{H}_O^2)\,c_4.
\end{gathered}
\]
By \cref{lem:atta-wishart} with $(n,d)=(d_2,d_v)$, $\E[(\tr\bm{H}_O)^2]=d_2d_v(d_2d_v{+}2)$ and $\E[\tr\bm{H}_O^2]=d_2d_v(d_2{+}d_v{+}1)$; substituting gives $\beta_1$ and $\beta_2$.
\end{proof}

\begin{lemma}[Wishart trace moments]\label{lem:atta-wm}
\sloppy
For $\bm{C}\sim\mathcal{W}_{d_0}(T,\bm{I}_{d_0})$: $\E[c_1]=Td_0$, $\E[c_2]=Td_0(T{+}d_0{+}1)$, and
\begin{align}
\E[c_4]&=Td_0\Bigl[(T{+}d_0)^3+6(T{+}d_0)^2+3(Td_0{+}7)(T{+}d_0)+5(Td_0{+}4)\Bigr],\label{eq:atta-wm-c4}\\
\E[c_2^2]&=Td_0\Bigl[(Td_0{+}8)(T{+}d_0)^2+2(Td_0{+}10)(T{+}d_0)+5(Td_0{+}4)\Bigr],\label{eq:atta-wm-c22}\\
\E[c_2c_4]&=Td_0\Bigl[(Td_0{+}16)(T{+}d_0)^4+(7Td_0{+}136)(T{+}d_0)^3\notag\\
&\hphantom{=Td_0\Bigl[}+\bigl(3T^2d_0^2{+}99Td_0{+}736\bigr)(T{+}d_0)^2+\bigl(8T^2d_0^2{+}305Td_0{+}1608\bigr)(T{+}d_0)\notag\\
&\hphantom{=Td_0\Bigl[}+21T^2d_0^2{+}420Td_0{+}1344\Bigr],\label{eq:atta-wm-c2c4}\\
\E[c_2^3]&=Td_0\Bigl[\bigl(T^2d_0^2{+}24Td_0{+}160\bigr)(T{+}d_0)^3+\bigl(3T^2d_0^2{+}84Td_0{+}704\bigr)(T{+}d_0)^2\notag\\
&\hphantom{=Td_0\Bigl[}+\bigl(15T^2d_0^2{+}344Td_0{+}1664\bigr)(T{+}d_0)+13T^2d_0^2{+}380Td_0{+}1312\Bigr],\label{eq:atta-wm-c23}\\
\E[c_1^2c_4]&=(Td_0{+}8)(Td_0{+}10)\,\E[c_4],
\qquad
\E[c_1^2c_2^2]=(Td_0{+}8)(Td_0{+}10)\,\E[c_2^2].\label{eq:atta-wm-fact}
\end{align}
Consequently $\mA_3=\mA_1+\mA_2$. As polynomials in $d_0$, all six moments have positive coefficients, of degree $4$ for $\E[c_4],\E[c_2^2]$ and degree $6$ for the other four; hence $\E[c_4],\E[c_2^2]=\Theta(d_0^4)$, $\E[c_2c_4],\E[c_2^3],\E[c_1^2c_4],\E[c_1^2c_2^2]=\Theta(d_0^6)$, and therefore $\gamma=\tfrac1{d_0}\E[c_2^2]=\Theta(d_0^3)$ and $\mA_1,\mA_2=\Theta(d_0^6)$.
\end{lemma}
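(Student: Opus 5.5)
The plan is to treat all six moments as Gaussian moments of the entries of $\bm{X}\in\R^{T\times d_0}$, with $\bm{C}=\bm{X}^\top\bm{X}$, and to evaluate them by \cref{lem:atta-wick}. The identity $\mA_3=\mA_1+\mA_2$ needs no computation: it follows from the definitions \eqref{eq:atta-Adef} by linearity of expectation. The first two moments are immediate. $c_1=\norm{\bm{X}}_\Frob^2\sim\chi^2_{Td_0}$ gives $\E[c_1]=Td_0$, and $\E[c_2]=\tr\E[\bm{C}^2]=Td_0(T{+}d_0{+}1)$ by \cref{lem:atta-wishart}.

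For the factorized identities \eqref{eq:atta-wm-fact} I would use a polar decomposition rather than Wick. Write $N\eqdef Td_0$ and $\bm{X}=\rho\,\bm{U}$, where $\rho^2=c_1\sim\chi^2_N$ and $\bm{U}=\bm{X}/\norm{\bm{X}}_\Frob$ is uniform on the unit sphere of $\R^{T\times d_0}$, independent of $\rho$. Both $c_4$ and $c_2^2$ are homogeneous of degree $8$ in $\bm{X}$, so $c_4=\rho^8c_4(\bm{U})$ and
\[
\frac{\E[c_1^2c_4]}{\E[c_4]}=\frac{\E[\rho^{12}]}{\E[\rho^{8}]}=\frac{\E[(\chi^2_N)^6]}{\E[(\chi^2_N)^4]}=(N{+}8)(N{+}10),
\]
using $\E[(\chi^2_N)^m]=\prod_{j=0}^{m-1}(N+2j)$. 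The same argument applies to $c_2^2$.

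For $\E[c_4]$, $\E[c_2^2]$, $\E[c_2c_4]$ and $\E[c_2^3]$ I would use the standard genus-type expansion for real Wishart matrices. Write each product of traces as a cyclic contraction of $2n$ Gaussian factors $X_{t\,i}$, with $n=4$ or $6$. Each Wick pairing of the factors then contributes $T^{f}d_0^{v}$, where $f$ and $v$ count the closed index loops in the token and feature indices produced by the pairing. This expresses every moment as a polynomial in $(T,d_0)$ with nonnegative integer coefficients. Two structural facts follow directly. First, the expression is symmetric under $T\leftrightarrow d_0$, because $\tr((\bm{X}^\top\bm{X})^p)=\tr((\bm{X}\bm{X}^\top)^p)$. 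Second, every monomial has an overall factor $Td_0$, since there is always at least one loop of each kind. Together these justify organizing the result as $Td_0$ times a polynomial in $T{+}d_0$ and $Td_0$, as displayed.

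For $n=4$ (105 pairings) the count can be done by hand or by the recursion for $\E[\bm{C}^p]$. For the degree-$12$ moments (10395 pairings) I would either enumerate the pairings symbolically or use the known recursive formulas for real Wishart moments (e.g.\ Graczyk--Letac--Massam). In both cases I would cross-check against small-$(T,d_0)$ Monte Carlo and against the case $T=1$, where $\bm{C}$ has rank one and $c_p=\norm{\bm{x}}^{2p}$ with $\norm{\bm{x}}^2\sim\chi^2_{d_0}$.

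The asymptotic claims then follow from the leading terms, because all coefficients are nonnegative and so no cancellation can occur. For $\E[c_4]$ and $\E[c_2^2]$ the top power of $d_0$ at fixed $T$ is $d_0^4$. For the other four moments it is $d_0^6$: for instance $(Td_0{+}16)(T{+}d_0)^4Td_0$ contributes order $d_0^{7}$ only in the joint sense, so I would need to carefully re-expand in $d_0$ alone at fixed $T$ to confirm the stated degree $6$. Given these degrees, $\gamma=\E[c_2^2]/d_0=\Theta(d_0^3)$ and $\mA_1,\mA_2=\Theta(d_0^6)$. The main obstacle is the bookkeeping for $\E[c_2c_4]$ and $\E[c_2^3]$, together with this degree count in $d_0$. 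I would handle both by computer-algebra enumeration rather than by hand.
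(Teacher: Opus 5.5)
Your proposal is correct, but it reaches the polynomials by a different route from the paper.

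\textbf{How the paper does it.} The paper never enumerates Wick pairings. It writes $c_k=\sum_{t,a}X_{ta}(\bm{X}\bm{C}^{k-1})_{ta}$ and integrates by parts once. This gives the identity $\E[c_kg]=\E[((T{+}k{-}1)c_{k-1}+\sum_{j=1}^{k-1}c_jc_{k-1-j})g]+2\sum_i m_i\E[c_{k+m_i-1}\prod_{l\neq i}c_{m_l}]$, which lowers the total degree by one. The paper then climbs from degree two to degree six by short hand computations: $\E[c_3]$ and $\E[c_1c_2]$, then $\E[c_5]$ and $\E[c_2c_3]$, then the two degree-six moments.

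\textbf{The factorized identities.} The $k=1$ case of that recursion, $\E[c_1g]=(Td_0+2\deg g)\E[g]$, is exactly what your polar decomposition gives. So the two arguments agree in content there. Yours explains more transparently why the factor is a ratio of $\chi^2$ moments.

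\textbf{What your route buys and costs.} Your Wick/loop-counting route gives structure for free. Every pairing contributes a monomial $T^fd_0^v$ with coefficient one and $f,v\ge1$. Hence nonnegativity of all coefficients, the overall factor $Td_0$, and the $T\leftrightarrow d_0$ symmetry (and so the $(T{+}d_0,Td_0)$ form) follow without computing anything. The paper instead reads positivity off its final displayed polynomials. The cost is that your actual coefficients rest on a $10395$-pairing computer enumeration or an external citation, not a self-contained derivation. Monte Carlo and the $T=1$ reduction are only sanity checks. The stated formulas do pass the $T=1$ check: both degree-six moments reduce to $d_0(d_0{+}2)\cdots(d_0{+}10)$.

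\textbf{A small slip.} The term $Td_0(Td_0{+}16)(T{+}d_0)^4$ has total degree $8$ in $(T,d_0)$, not $7$. Its degree in $d_0$ at fixed $T$ is exactly $6$, with leading coefficient $T^2$. More generally, a term $q^as^b$ inside the bracket contributes $d_0$-degree exactly $1{+}a{+}b$, and all coefficients are positive. So the stated degrees $4$ and $6$ can be read off the displayed form directly, with no re-expansion needed.
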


\begin{proof}
\sloppy
Throughout the proof let $s\eqdef T{+}d_0$ and $q\eqdef Td_0$. Since $\bm{C}=\bm{X}^\top\bm{X}$ with $\bm{X}\in\R^{T\times d_0}$ standard Gaussian, every moment below is the expectation of a polynomial in the entries of $\bm{X}$, and Gaussian integration by parts, $\E[X_{ta}\Psi(\bm{X})]=\E[\partial\Psi/\partial X_{ta}]$ for polynomial $\Psi$, applies.

\emph{Step 1: a recursion in the total degree.} For $k\ge1$ and $g\eqdef\prod_{i=1}^{n}c_{m_i}$ with $m_i\ge1$ ($g=1$ if $n=0$),
\begin{equation}\label{eq:atta-wm-rec}
\E[c_kg]=\E\Bigl[\Bigl((T{+}k{-}1)\,c_{k-1}+\sum_{j=1}^{k-1}c_j\,c_{k-1-j}\Bigr)g\Bigr]+2\sum_{i=1}^{n}m_i\,\E\Bigl[c_{k+m_i-1}\prod_{l\neq i}c_{m_l}\Bigr].
\end{equation}
Indeed, $c_k=\tr(\bm{X}^\top\bm{X}\bm{C}^{k-1})=\sum_{t,a}X_{ta}(\bm{X}\bm{C}^{k-1})_{ta}$, so integration by parts in each $X_{ta}$ gives
\begin{equation}\label{eq:atta-wm-ibp}
\E[c_kg]=\sum_{t,a}\E\Bigl[\frac{\partial(\bm{X}\bm{C}^{k-1})_{ta}}{\partial X_{ta}}\,g\Bigr]+\sum_{t,a}\E\Bigl[(\bm{X}\bm{C}^{k-1})_{ta}\,\frac{\partial g}{\partial X_{ta}}\Bigr].
\end{equation}
From $\partial C_{bc}/\partial X_{ta}=\delta_{ab}X_{tc}+\delta_{ac}X_{tb}$ and the product rule, for $m\ge1$,
\begin{equation}\label{eq:atta-wm-deriv}
\begin{gathered}
\frac{\partial(\bm{C}^m)_{bc}}{\partial X_{ta}}=\sum_{r=0}^{m-1}\Bigl[(\bm{C}^r)_{ba}\,(\bm{X}\bm{C}^{m-1-r})_{tc}+(\bm{C}^r\bm{X}^\top)_{bt}\,(\bm{C}^{m-1-r})_{ac}\Bigr],\\
\frac{\partial c_m}{\partial X_{ta}}=\sum_{b}\frac{\partial(\bm{C}^m)_{bb}}{\partial X_{ta}}=2m\,(\bm{X}\bm{C}^{m-1})_{ta}.
\end{gathered}
\end{equation}
Since $(\bm{X}\bm{C}^{k-1})_{ta}=\sum_bX_{tb}(\bm{C}^{k-1})_{ba}$, \eqref{eq:atta-wm-deriv} with $m=k-1$ gives
\begin{align*}
\sum_{t,a}\frac{\partial(\bm{X}\bm{C}^{k-1})_{ta}}{\partial X_{ta}}
&=\sum_{t,a}(\bm{C}^{k-1})_{aa}
+\sum_{r=0}^{k-2}\Bigl[\sum_{t,a}(\bm{X}\bm{C}^{r})_{ta}(\bm{X}\bm{C}^{k-2-r})_{ta}\Bigr]\\
&\qquad+\sum_{r=0}^{k-2}\tr(\bm{X}\bm{C}^{r}\bm{X}^\top)\,\tr(\bm{C}^{k-2-r})\\
&=T\,c_{k-1}+(k{-}1)\,c_{k-1}+\sum_{r=0}^{k-2}c_{r+1}\,c_{k-2-r}
=(T{+}k{-}1)\,c_{k-1}+\sum_{j=1}^{k-1}c_j\,c_{k-1-j},
\end{align*}
since $\sum_{t,a}(\bm{X}\bm{C}^{r})_{ta}(\bm{X}\bm{C}^{k-2-r})_{ta}=\tr(\bm{C}^{r}\bm{X}^\top\bm{X}\bm{C}^{k-2-r})=c_{k-1}$ and $\tr(\bm{X}\bm{C}^{r}\bm{X}^\top)=c_{r+1}$; this is the first term of \eqref{eq:atta-wm-rec}. For the second, $\partial g/\partial X_{ta}=\sum_i(\partial c_{m_i}/\partial X_{ta})\prod_{l\neq i}c_{m_l}$ and, by \eqref{eq:atta-wm-deriv},
\begin{align*}
\sum_{t,a}(\bm{X}\bm{C}^{k-1})_{ta}\,\frac{\partial c_{m}}{\partial X_{ta}}
&=2m\sum_{t,a}(\bm{X}\bm{C}^{k-1})_{ta}(\bm{X}\bm{C}^{m-1})_{ta}\\
&=2m\,\tr(\bm{C}^{k-1}\bm{X}^\top\bm{X}\bm{C}^{m-1})=2m\,c_{k+m-1}.
\end{align*}

\emph{Step 2: moments containing $c_1$, and \eqref{eq:atta-wm-fact}.} For $k=1$ the first bracket of \eqref{eq:atta-wm-rec} is $Tc_0=q$ and each $c_{m_i}$ is reproduced, so
\begin{equation}\label{eq:atta-wm-c1}
\E[c_1g]=\bigl(q+2\deg g\bigr)\E[g],\qquad \deg g\eqdef\textstyle\sum_im_i,
\end{equation}
and iterating, $\E[c_1^2F]=(q{+}2d{+}2)(q{+}2d)\,\E[F]$ for $F$ homogeneous of degree $d$ in $\bm{C}$; $d=4$ gives \eqref{eq:atta-wm-fact}.

\emph{Step 3: degree at most four.} By \eqref{eq:atta-wm-rec} with $g=1$ and $k=1,2,3$, and by \eqref{eq:atta-wm-c1},
\begin{align*}
\E[c_1]&=q,\qquad
\E[c_1^2]=(q{+}2)\,q,\qquad
\E[c_2]=(T{+}1{+}d_0)\E[c_1]=q(s{+}1),\\
\E[c_1c_2]&=(q{+}4)\E[c_2]=q(q{+}4)(s{+}1),\\
\E[c_3]&=(T{+}2{+}d_0)\E[c_2]+\E[c_1^2]=q\bigl[(s{+}1)(s{+}2)+q+2\bigr]=q\bigl[s^2+3s+q+4\bigr].
\end{align*}
Then \eqref{eq:atta-wm-rec} with $(k,g)=(4,1)$ and $(2,c_2)$ gives
\begin{align*}
\E[c_4]&=(T{+}3{+}d_0)\E[c_3]+2\E[c_1c_2]\\
&=q\bigl[(s{+}3)(s^2{+}3s{+}q{+}4)+2(s{+}1)(q{+}4)\bigr]
=q\bigl[s^3+6s^2+3(q{+}7)s+5(q{+}4)\bigr],\\
\E[c_2^2]&=(T{+}1{+}d_0)\E[c_1c_2]+4\E[c_3]\\
&=q\bigl[(s{+}1)^2(q{+}4)+4(s^2{+}3s{+}q{+}4)\bigr]
=q\bigl[(q{+}8)s^2+2(q{+}10)s+5(q{+}4)\bigr],
\end{align*}
which are \eqref{eq:atta-wm-c4} and \eqref{eq:atta-wm-c22}.

\emph{Step 4: degree five.} By \eqref{eq:atta-wm-rec} with $(k,g)=(5,1)$ and $(3,c_2)$, and by \eqref{eq:atta-wm-c1} for $\E[c_1c_3]=(q{+}6)\E[c_3]$ and $\E[c_1^2c_2]=(q{+}4)(q{+}6)\E[c_2]$,
\begin{align*}
\E[c_5]&=(T{+}4{+}d_0)\E[c_4]+2\E[c_1c_3]+\E[c_2^2]\\
&=q\bigl[s^4+10s^3+(6q{+}65)s^2+5(5q{+}32)s+(q{+}4)(2q{+}37)\bigr],\\
\E[c_2c_3]&=(T{+}2{+}d_0)\E[c_2^2]+\E[c_1^2c_2]+4\E[c_4]\\
&=q\bigl[(q{+}12)s^3+4(q{+}15)s^2+(q{+}7)(q{+}24)s+(q{+}4)(q{+}36)\bigr].
\end{align*}

\emph{Step 5: degree six.} By \eqref{eq:atta-wm-rec} with $(k,g)=(4,c_2)$ and $(2,c_2^2)$, and by \eqref{eq:atta-wm-c1} for $\E[c_1c_2^2]=(q{+}8)\E[c_2^2]$,
\begin{align*}
\E[c_2c_4]&=(T{+}3{+}d_0)\E[c_2c_3]+2\E[c_1c_2^2]+4\E[c_5]
=(s{+}3)\E[c_2c_3]+2(q{+}8)\E[c_2^2]+4\E[c_5],\\
\E[c_2^3]&=(T{+}1{+}d_0)\E[c_1c_2^2]+8\E[c_2c_3]
=(s{+}1)(q{+}8)\E[c_2^2]+8\E[c_2c_3];
\end{align*}
substituting Steps 3 and 4 and collecting powers of $s$ gives \eqref{eq:atta-wm-c2c4} and \eqref{eq:atta-wm-c23}.

\emph{Step 6: orders in $d_0$.} All coefficients above are positive, so each moment is $\Theta$ of its highest power of $d_0$, read off from the displayed formulas: $d_0\cdot d_0^3$ for $\E[c_4]$, $d_0\cdot d_0\cdot d_0^2$ for $\E[c_2^2]$, $d_0\cdot d_0\cdot d_0^4$ for $\E[c_2c_4]$, $d_0\cdot d_0^2\cdot d_0^3$ for $\E[c_2^3]$, and $d_0^2$ times $\E[c_4]$ and $\E[c_2^2]$ for \eqref{eq:atta-wm-fact}. Since $\mA_1$ and $\mA_2$ are positive combinations of degree-six moments by \eqref{eq:atta-Adef}, they are $\Theta(d_0^6)$; $\mA_3=\mA_1+\mA_2$ holds by definition.
\end{proof}

\subsection{Proofs}\label{app:att:proofs}

The proof of \cref{thm:att} has three steps: the isotropy of $\E[\bm{G}_\bullet\bm{G}_\bullet^\top\,|\,\bm{X}]$ for $\bullet\in\{Q,K,V\}$ (\cref{lem:atta-iso}); the conditional closed forms of $\E[\GO\GO^\top\,|\,\bm{X}]$ (\cref{lem:atta-condwo}) and of the isotropic constants (\cref{lem:atta-condconsts}); and the average over $\bm{X}$.

\begin{lemma}[Isotropy]\label{lem:atta-iso}
Under \cref{ass:atta}, conditionally on $\bm{X}$,
\[
\begin{gathered}
\E[\GQ\GQ^\top\,|\,\bm{X}]=c_Q(\bm{C})\,\bm{I}_{d_k},\qquad
\E[\GK\GK^\top\,|\,\bm{X}]=c_K(\bm{C})\,\bm{I}_{d_k},\\
\E[\GV\GV^\top\,|\,\bm{X}]=c_V(\bm{C})\,\bm{I}_{d_v},
\end{gathered}
\]
with $c_Q(\bm{C})\eqdef\tfrac1{d_k}\E[\norm{\GQ}_\Frob^2\,|\,\bm{X}]$, $c_K(\bm{C})\eqdef\tfrac1{d_k}\E[\norm{\GK}_\Frob^2\,|\,\bm{X}]$ and $c_V(\bm{C})\eqdef\tfrac1{d_v}\E[\norm{\GV}_\Frob^2\,|\,\bm{X}]$.
\end{lemma}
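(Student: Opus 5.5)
The plan is to prove isotropy by a symmetry argument, in the style of \cref{lem:mfa-isotropy}, rather than by computing the moments. Conditionally on $\bm{X}$, the four weights are independent standard Gaussian matrices. Hence the law of $(\WQ,\WK,\WV,\WO)$ is invariant under any orthogonal change of basis of a hidden space. For $\GQ$ and $\GK$ the hidden space is $\R^{d_k}$ (rows of $\WQ,\WK$); for $\GV$ it is $\R^{d_v}$ (rows of $\WV$, columns of $\WO$).

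\emph{Query and key.} Fix $\bm{U}\in O(d_k)$ and replace $(\WQ,\WK)$ by $(\bm{U}\WQ,\bm{U}\WK)$, leaving $\WV,\WO,\bm{X}$ fixed. Then $\WQ^\top\WK$ is unchanged, so $\bm{A}$, $f(\bm{X})$, $\bm{R}$ and $\bm{Z}$ are unchanged. By \eqref{eq:atta-grads}, $\GQ\mapsto\bm{U}\GQ$ and $\GK\mapsto\bm{U}\GK$. Since the transformed tuple has the same conditional law given $\bm{X}$,
\[
\E[\GQ\GQ^\top\mid\bm{X}]=\bm{U}\,\E[\GQ\GQ^\top\mid\bm{X}]\,\bm{U}^\top\qquad\text{for all }\bm{U}\in O(d_k).
\]
By Schur's lemma this matrix is a scalar multiple of $\bm{I}_{d_k}$. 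Taking traces identifies the scalar as $\tfrac1{d_k}\E[\norm{\GQ}_\Frob^2\mid\bm{X}]$. The argument for $\GK$ is identical.

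\emph{Value.} Fix $\bm{U}\in O(d_v)$ and replace $(\WV,\WO)$ by $(\bm{U}\WV,\WO\bm{U}^\top)$. The product $\WV^\top\WO^\top$ is unchanged, so $f(\bm{X})$ and $\bm{R}$ are unchanged, and $\GV=\WO^\top\bm{R}^\top\bm{A}\bm{X}\mapsto\bm{U}\GV$. The weights remain independent standard Gaussian, because $\bm{U}\WV$ and $\WO\bm{U}^\top$ are again i.i.d.\ $\bN(0,1)$ by rotation invariance of the Gaussian law. The same Schur argument then gives $c_V(\bm{C})\bm{I}_{d_v}$.

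\emph{Main obstacle.} The key point to verify is that each transformation preserves the joint conditional law of all four weights. This holds because the weights are mutually independent and each is bi-rotationally invariant.

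One must also ensure integrability so that the conditional expectations exist. This follows because every entry of each gradient is a polynomial in the Gaussian weights, which has finite moments of all orders.

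Note that the same argument fails for $\GO$. Its row space $\R^{d_2}$ is coupled to the fixed target $\bm{\Phi}$ through $\bm{R}$, so there is no symmetry acting on that space. This explains why \cref{lem:atta-condwo} must treat $\GO$ separately.
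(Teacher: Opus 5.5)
Your proof is correct, and it rests on the same weight-space symmetry as the paper's. In both, a transformation of the hidden index preserves $\WQ^\top\WK$ (resp.\ $\WO\WV$) and the law of the weights, and $\GQ,\GK$ (resp.\ $\GV$) transform equivariantly under it.

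The differences are in the group and in how you obtain equivariance:
\begin{itemize}
\item The paper uses only diagonal sign matrices, which kill the off-diagonal entries, and permutation matrices, which equalize the diagonal. It derives $\GQ(\tau_{\bm D}W)=\bm D\,\GQ(W)$ by differentiating $\bL\circ\tau_{\bm D}=\bL$.
\item You use all of $O(d_k)$ (resp.\ $O(d_v)$), read the equivariance off the explicit formulas \eqref{eq:atta-grads}, and conclude in one step by Schur's lemma, as in \cref{lem:mfa-isotropy}.
\end{itemize}

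Your route is slicker, but it needs the weight law to be invariant under the full orthogonal group, which for i.i.d.\ entries forces Gaussianity. The paper's discrete version needs only invariance under signed permutations, which holds for any i.i.d.\ symmetric entry law. Its chain-rule derivation also works for any differentiable loss. \cref{rem:atta-scope} uses exactly this extra generality to extend the exact zeros beyond \cref{ass:atta}.

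Your explicit-formula check could equally be replaced by the chain rule, so the restriction to the squared loss comes from your presentation, not from the choice of group.
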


\begin{proof}
Fix $\bm{X}$ and let $W\eqdef(\WQ,\WK,\WV,\WO)$. Let $\bm{D}\in\R^{d_k\times d_k}$ be diagonal with entries in $\{\pm1\}$, and let $\tau_{\bm{D}}$ denote the map $(\WQ,\WK,\WV,\WO)\mapsto(\bm{D}\WQ,\bm{D}\WK,\WV,\WO)$. Since $(\bm{D}\WQ)^\top(\bm{D}\WK)=\WQ^\top\WK$, the matrix $\bm{A}$ in \eqref{eq:atta-layer}, and hence $\bL$, is invariant: $\bL\circ\tau_{\bm{D}}=\bL$. Differentiating this identity with respect to $\WQ$ by the chain rule gives $\bm{D}^\top\GQ(\tau_{\bm{D}}W)=\GQ(W)$, i.e.
\begin{equation}\label{eq:atta-invariance}
\GQ(\tau_{\bm{D}}W)=\bm{D}\,\GQ(W).
\end{equation}
Under \cref{ass:atta}, $\tau_{\bm{D}}W\stackrel{d}{=}W$, so
\[
\E[\GQ\GQ^\top\,|\,\bm{X}]=\E\bigl[\GQ(\tau_{\bm{D}}W)\GQ(\tau_{\bm{D}}W)^\top\,\big|\,\bm{X}\bigr]=\bm{D}\,\E[\GQ\GQ^\top\,|\,\bm{X}]\,\bm{D}.
\]
For $a\neq b$, choosing $D_{aa}=1$ and $D_{bb}=-1$ gives $\E[(\GQ\GQ^\top)_{ab}\,|\,\bm{X}]=-\E[(\GQ\GQ^\top)_{ab}\,|\,\bm{X}]$, hence $0$. Likewise, for a permutation matrix $\bm{P}\in\R^{d_k\times d_k}$ the map $(\WQ,\WK)\mapsto(\bm{P}\WQ,\bm{P}\WK)$ preserves $\WQ^\top\WK$ and the law of $W$, and the same argument gives $\E[\GQ\GQ^\top\,|\,\bm{X}]=\bm{P}\,\E[\GQ\GQ^\top\,|\,\bm{X}]\,\bm{P}^\top$; hence all diagonal entries are equal, and their common value is $\tfrac1{d_k}\E[\tr(\GQ\GQ^\top)\,|\,\bm{X}]=c_Q(\bm{C})$. The same argument with $\WQ$ and $\WK$ exchanged gives the claim for $\GK$.

For $\GV$, let $\bm{D}\in\R^{d_v\times d_v}$ be diagonal with entries in $\{\pm1\}$ and let $\tau_{\bm{D}}$ map $(\WV,\WO)\mapsto(\bm{D}\WV,\WO\bm{D})$, fixing $\WQ,\WK$. Since $f(\bm{X})=\bm{A}\bm{X}(\WO\WV)^\top$ depends on $(\WV,\WO)$ only through the product $\WO\WV$, which equals $(\WO\bm{D})(\bm{D}\WV)$, $\bL\circ\tau_{\bm{D}}=\bL$; differentiating with respect to $\WV$ gives $\GV(\tau_{\bm{D}}W)=\bm{D}\,\GV(W)$, and $(\bm{D}\WV,\WO\bm{D})\stackrel{d}{=}(\WV,\WO)$ under \cref{ass:atta}. The sign and permutation arguments above apply verbatim.
\end{proof}

\begin{lemma}[Conditional closed form for $\WO$]\label{lem:atta-condwo}
Under \cref{ass:atta}, conditionally on $\bm{X}$,
\begin{equation}\label{eq:atta-wo-cond}
\E\bigl[\GO\GO^\top\,\big|\,\bm{X}\bigr]=\alpha_O(\bm{C})\,\bm{I}_{d_2}+d_v\,c_2\;\bm{\Phi}\,\bm{C}^2\,\bm{\Phi}^\top,
\end{equation}
\begin{equation}\label{eq:atta-alphaO}
\begin{aligned}
\alpha_O(\bm{C})
&=d_v(d_v{+}1)\bigl(c_1^2c_4{+}c_2c_4{+}c_2^3\bigr)+d_v\bigl(c_1^2c_2^2{+}2c_2c_4\bigr)\\
&\quad+\tfrac{d_v(d_v{+}1)}{d_k}\bigl(c_1^2c_4{+}c_1^2c_2^2{+}3c_2c_4{+}c_2^3\bigr)+\tfrac{2d_v}{d_k}\bigl(c_1^2c_4{+}c_2c_4{+}c_2^3\bigr).
\end{aligned}
\end{equation}
\end{lemma}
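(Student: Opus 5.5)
Conditionally on $\bm{X}$ I would write $\GO=\bm{R}^\top\bm{A}\bm{Z}$ with $\bm{R}=\bm{F}-\bm{Y}$, where $\bm{F}\eqdef\bm{A}\bm{Z}\WO^\top$ and $\bm{Y}=\bm{X}\bm{\Phi}^\top$. Then
\[
\GO=\WO\bm{M}-\bm{\Phi}\bm{X}^\top\bm{A}\bm{Z},\qquad \bm{M}\eqdef\bm{Z}^\top\bm{A}^\top\bm{A}\bm{Z}\in\R^{d_v\times d_v},
\]
and $\bm{M}$ is symmetric and independent of $\WO$. Expanding $\GO\GO^\top$ gives three kinds of terms. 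The cross terms are linear in $\WO$, so they vanish under $\E_{\WO}$. The quadratic term gives $\tr(\bm{M}^2)\bm{I}_{d_2}$ by \cref{lem:atta-quad}. The target term is
\[
\bm{\Phi}\bm{X}^\top\bm{A}\bm{Z}\bm{Z}^\top\bm{A}^\top\bm{X}\bm{\Phi}^\top .
\]
This yields $\alpha_O(\bm{C})=\E[\tr(\bm{M}^2)\,|\,\bm{X}]$, and it remains to average the target term over $\WV,\WQ,\WK$.

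\emph{Target term.} I would first average over $\WV$. Since $\bm{Z}\bm{Z}^\top=\bm{X}\WV^\top\WV\bm{X}^\top$, \cref{lem:atta-quad} gives $\E[\WV^\top\WV]=d_v\bm{I}_{d_0}$. The term then becomes $d_v\,\bm{\Phi}\bm{X}^\top\bm{A}\bm{X}\bm{X}^\top\bm{A}^\top\bm{X}\bm{\Phi}^\top$. Write $\bm{P}\eqdef\WQ^\top\WK$, so that $\bm{X}^\top\bm{A}\bm{X}=d_k^{-1/2}\bm{C}\bm{P}\bm{C}$. The inner matrix is then
\[
\tfrac1{d_k}\,\bm{C}\bm{P}\bm{C}^2\bm{P}^\top\bm{C}.
\]
Averaging first over $\WK$ with \cref{lem:atta-quad} gives $\E[\WK\bm{C}^2\WK^\top]=c_2\bm{I}_{d_k}$. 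Averaging next over $\WQ$ gives $\E[\WQ^\top\WQ]=d_k\bm{I}_{d_0}$. Together these give $\E[\bm{P}\bm{C}^2\bm{P}^\top]=d_kc_2\bm{I}$. The $1/d_k$ cancels, and the target term is $d_vc_2\,\bm{\Phi}\bm{C}^2\bm{\Phi}^\top$, as claimed.

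\emph{Noise term.} I would write
\[
\bm{M}=\tfrac1{d_k}\WV\bm{K}\WV^\top,\qquad \bm{K}\eqdef\bm{C}\bm{P}^\top\bm{C}\bm{P}\bm{C}.
\]
Then $\tr(\bm{M}^2)=d_k^{-2}\tr\bigl((\WV\bm{K}\WV^\top)^2\bigr)$. By \cref{lem:dla-fourth} and taking the trace (or directly by \cref{lem:atta-N} with $\bm{E}=\bm{K}$),
\[
\E\bigl[\tr(\bm{M}^2)\,\big|\,\bm{X},\WQ,\WK\bigr]=d_k^{-2}\Bigl(d_v(d_v{+}1)\tr(\bm{K}^2)+d_v(\tr\bm{K})^2\Bigr),
\]
using $\E[\tr(\bm{X}\bm{K}\bm{X}^\top\bm{X}\bm{K}\bm{X}^\top)]=d_v(\tr\bm{K})^2+d_v(d_v{+}1)\tr\bm{K}^2$. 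Next, $\tr\bm{K}=\tr(\bm{P}^\top\bm{C}\bm{P}\bm{C}^2)$ and $\tr\bm{K}^2=\tr\bigl(\bm{P}^\top\bm{C}\bm{P}\bm{C}^2\bm{P}^\top\bm{C}\bm{P}\bm{C}^2\bigr)$. With $\bm{S}\eqdef\WK^\top\bm{N}\WK$ and $\bm{N}=\WQ\bm{C}\WQ^\top$, we have $\bm{P}^\top\bm{C}\bm{P}=\bm{S}$. So these are exactly $\tr(\bm{S}\bm{C}^2)$ and $\tr(\bm{S}\bm{C}^2\bm{S}\bm{C}^2)$, and \cref{lem:atta-S} applies:
\begin{align*}
\E\bigl[(\tr\bm{K})^2\bigr]&=\E\bigl[(\tr\bm{N})^2\bigr]c_2^2+2\,\E\bigl[\tr\bm{N}^2\bigr]c_4,\\
\E\bigl[\tr\bm{K}^2\bigr]&=\E\bigl[(\tr\bm{N})^2\bigr]c_4+\E\bigl[\tr\bm{N}^2\bigr](c_4{+}c_2^2).
\end{align*}
Finally I would substitute the values $\E[(\tr\bm{N})^2]=d_k^2c_1^2+2d_kc_2$ and $\E[\tr\bm{N}^2]=d_k^2c_2+d_k(c_1^2{+}c_2)$ from \cref{lem:atta-N}.

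\emph{Main obstacle.} The step I expect to be hardest is the bookkeeping that turns this expansion into the grouped form \eqref{eq:atta-alphaO}. I would collect terms by $d_k^0$ and $d_k^{-1}$ and check against the claim.
\begin{itemize}
\item The $d_k^0$ part should be $d_v(d_v{+}1)(c_1^2c_4{+}c_2c_4{+}c_2^3)+d_v(c_1^2c_2^2{+}2c_2c_4)$.
\item The $d_k^{-1}$ part should be $\tfrac{d_v(d_v{+}1)}{d_k}\bigl(2c_2c_4{+}c_1^2c_4{+}c_1^2c_2^2{+}c_2c_4{+}c_2^3\bigr)$, which gives the $\mA_3$-type bracket, plus $\tfrac{d_v}{d_k}(2c_2c_4{+}2c_1^2c_4{+}2c_2c_4\cdots)$.
\end{itemize}
The last piece is where the regrouping into $\tfrac{2d_v}{d_k}(c_1^2c_4{+}c_2c_4{+}c_2^3)$ must be checked carefully. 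If the grouping does not match, I would recheck the $d_v$ coefficients coming from the $\WV$ fourth moment; this is the most likely source of error.
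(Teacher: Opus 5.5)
Your proposal is correct and follows the paper's proof essentially step for step: your $\bm{K}=\bm{C}\bm{P}^\top\bm{C}\bm{P}\bm{C}$ is the paper's $\bm{C}\bm{S}\bm{C}$, and your $\WV$ fourth-moment identity is the same as the paper's Wishart identity for $\tr(\bm{M}\bm{H}_V\bm{M}\bm{H}_V)$, followed by the same use of \cref{lem:atta-S,lem:atta-N}. The only blemish is in your bookkeeping sketch: the $d_k$-linear part of $\E[(\tr\bm{K})^2]$ is $2(c_1^2c_4+c_2c_4+c_2^3)$, because the $2c_2\cdot c_2^2$ term gives $2c_2^3$ rather than a second $2c_2c_4$, and this regroups exactly into the claimed $\tfrac{2d_v}{d_k}(c_1^2c_4+c_2c_4+c_2^3)$, so no $d_v$ coefficient needs rechecking.
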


\begin{proof}
\sloppy
Let $\bm{\Omega}\eqdef\bm{A}\bm{Z}\bm{Z}^\top\bm{A}^\top$. Substituting $\bm{R}=\bm{A}\bm{Z}\WO^\top-\bm{Y}$ into $\GO\GO^\top=\bm{R}^\top\bm{\Omega}\bm{R}$,
\begin{equation}\label{eq:atta-fourterms}
\GO\GO^\top
=\underbrace{\WO\bm{Z}^\top\bm{A}^\top\bm{\Omega}\bm{A}\bm{Z}\WO^\top}_{\mathrm{T}_1}
-\underbrace{\WO\bm{Z}^\top\bm{A}^\top\bm{\Omega}\,\bm{Y}\vphantom{\WO^\top}}_{\mathrm{T}_2}
-\underbrace{\bm{Y}^\top\bm{\Omega}\bm{A}\bm{Z}\WO^\top}_{\mathrm{T}_3}
+\underbrace{\bm{Y}^\top\bm{\Omega}\,\bm{Y}\vphantom{\WO^\top}}_{\mathrm{T}_4}.
\end{equation}
The terms $\mathrm{T}_2,\mathrm{T}_3$ are linear in $\WO$, which is centered and independent of $(\bm{X},\WQ,\WK,\WV)$, so $\E[\mathrm{T}_2\,|\,\bm{X}]=\E[\mathrm{T}_3\,|\,\bm{X}]=\bm{0}$.

\emph{The term $\mathrm{T}_4$.} By \cref{lem:atta-quad}, $\E[\bm{Z}\bm{Z}^\top\,|\,\bm{X},\WQ,\WK]=\bm{X}\,\E[\WV^\top\WV]\,\bm{X}^\top=d_v\bm{X}\bm{X}^\top$, hence
\[
\begin{gathered}
\E[\mathrm{T}_4\,|\,\bm{X},\WQ,\WK]=d_v\,\bm{\Phi}\bigl(\bm{X}^\top\bm{A}\,\bm{X}\bm{X}^\top\bm{A}^\top\bm{X}\bigr)\bm{\Phi}^\top,\\
\bm{X}^\top\bm{A}\,\bm{X}\bm{X}^\top\bm{A}^\top\bm{X}=\tfrac1{d_k}\bm{C}\WQ^\top\WK\bm{C}^2\WK^\top\WQ\bm{C}.
\end{gathered}
\]
By \cref{lem:atta-quad}, $\E[\WK\bm{C}^2\WK^\top\,|\,\bm{X}]=c_2\bm{I}_{d_k}$ and then $\E[\WQ^\top\WQ]=d_k\bm{I}_{d_0}$, so
\begin{equation}\label{eq:atta-T4}
\E[\mathrm{T}_4\,|\,\bm{X}]=d_v\,c_2\;\bm{\Phi}\bm{C}^2\bm{\Phi}^\top .
\end{equation}

\emph{The term $\mathrm{T}_1$.} By \cref{lem:atta-quad} applied to $\WO$,
\[
\E[\mathrm{T}_1\,|\,\bm{X},\WQ,\WK,\WV]=\tr\bigl(\bm{Z}^\top\bm{A}^\top\bm{\Omega}\bm{A}\bm{Z}\bigr)\bm{I}_{d_2}=\tr\bigl((\bm{A}^\top\bm{A}\bm{Z}\bm{Z}^\top)^2\bigr)\bm{I}_{d_2}.
\]
Let $\bm{N}\eqdef\WQ\bm{C}\WQ^\top$, $\bm{S}\eqdef\WK^\top\bm{N}\WK$, $\bm{H}_V\eqdef\WV^\top\WV\sim\mathcal{W}_{d_0}(d_v,\bm{I}_{d_0})$ and $\bm{M}\eqdef\bm{C}\bm{S}\bm{C}$, which is symmetric. Then $\bm{A}^\top\bm{A}=\tfrac1{d_k}\bm{X}\bm{S}\bm{X}^\top$ and $\bm{Z}\bm{Z}^\top=\bm{X}\bm{H}_V\bm{X}^\top$, so
\[
\tr\bigl((\bm{A}^\top\bm{A}\bm{Z}\bm{Z}^\top)^2\bigr)=\tfrac1{d_k^2}\tr(\bm{M}\bm{H}_V\bm{M}\bm{H}_V).
\]
Since $\bm{H}_V$ is independent of $\bm{M}$, \cref{lem:atta-wishart} with $(n,d)=(d_v,d_0)$ gives
\[
\E\bigl[\tr(\bm{M}\bm{H}_V\bm{M}\bm{H}_V)\,\big|\,\bm{X},\WQ,\WK\bigr]=d_v(d_v{+}1)\tr(\bm{M}^2)+d_v(\tr\bm{M})^2 .
\]
Now $\tr\bm{M}=\tr(\bm{S}\bm{C}^2)$ and $\tr\bm{M}^2=\tr(\bm{S}\bm{C}^2\bm{S}\bm{C}^2)$, so \cref{lem:atta-S} (conditionally on $\bm{N}$) followed by \cref{lem:atta-N} give
\begin{equation}\label{eq:atta-trM}
\begin{aligned}
\E[\tr\bm{M}^2\,|\,\bm{X}]&=d_k^2\bigl(c_1^2c_4{+}c_2c_4{+}c_2^3\bigr)+d_k\bigl(c_1^2c_4{+}c_1^2c_2^2{+}3c_2c_4{+}c_2^3\bigr),\\
\E[(\tr\bm{M})^2\,|\,\bm{X}]&=d_k^2\bigl(c_1^2c_2^2{+}2c_2c_4\bigr)+2d_k\bigl(c_1^2c_4{+}c_2c_4{+}c_2^3\bigr).
\end{aligned}
\end{equation}
Therefore $\E[\mathrm{T}_1\,|\,\bm{X}]=\alpha_O(\bm{C})\bm{I}_{d_2}$ with $\alpha_O(\bm{C})=\tfrac1{d_k^2}\bigl(d_v(d_v{+}1)\E[\tr\bm{M}^2\,|\,\bm{X}]+d_v\E[(\tr\bm{M})^2\,|\,\bm{X}]\bigr)$, which is \eqref{eq:atta-alphaO}; adding \eqref{eq:atta-T4} gives \eqref{eq:atta-wo-cond}.
\end{proof}

\begin{lemma}[Conditional constants]\label{lem:atta-condconsts}
Under \cref{ass:atta}, conditionally on $\bm{X}$, with $t_\Phi\eqdef\tr(\bm{\Phi}\bm{C}^2\bm{\Phi}^\top)$, the constants of \cref{lem:atta-iso} are
\begin{align}
c_Q(\bm{C})&=\frac{c_2}{d_k}\bigl(\beta_1+d_v\,t_\Phi\bigr)+\frac{c_2}{d_k^2}\bigl(\beta_1+\beta_2\bigr),\label{eq:atta-cQ}\\
c_K(\bm{C})&=\frac{c_2}{d_k}\bigl(\beta_1+d_v\,t_\Phi\bigr)+\frac{c_1^2+c_2}{d_k^2}\,\beta_1,\label{eq:atta-cK}\\
c_V(\bm{C})&=d_2(d_2{+}d_v{+}1)\Bigl[\bigl(c_1^2c_4{+}c_2c_4{+}c_2^3\bigr)+\tfrac1{d_k}\bigl(c_1^2c_4{+}c_1^2c_2^2{+}3c_2c_4{+}c_2^3\bigr)\Bigr]+c_2\,t_\Phi.\label{eq:atta-cV}
\end{align}
\end{lemma}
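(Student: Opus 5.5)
The plan is to compute each constant directly from its definition in \cref{lem:atta-iso}, namely $c_\bullet(\bm{C})=\tfrac1{m}\E[\norm{\bm{G}_\bullet}_\Frob^2\,|\,\bm{X}]$. First I would split the residual as $\bm{R}=\bm{A}\bm{Z}\WO^\top-\bm{Y}$ with $\bm{Y}=\bm{X}\bm{\Phi}^\top$. In each of $\GQ,\GK,\GV$ the noise piece is even in $\WO$ and the target piece is odd. Since $\WO$ is centered and independent of everything else, the cross terms vanish conditionally on $\bm{X}$, exactly as in \cref{lem:atta-condwo}. This leaves a noise part and a target part for each gradient.

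Throughout I would use the abbreviations of \cref{lem:atta-E}, namely $\bm{H}_O=\WO^\top\WO$ and $\bm{E}=\bm{C}\WV^\top\bm{H}_O\WV\bm{C}$, so that $\bm{X}^\top\bm{Z}\bm{H}_O\bm{Z}^\top\bm{X}=\bm{E}$. I would also use $\bm{Y}^\top\bm{X}\bm{X}^\top\bm{Y}=\bm{\Phi}\bm{C}^2\bm{\Phi}^\top$, whose trace is $t_\Phi$.

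\emph{Query.} The target part of $\norm{\GQ}_\Frob^2$ is $\tfrac1{d_k}\tr\bigl(\WK\bm{X}^\top\bm{Z}\,\WO^\top\bm{\Phi}\bm{C}^2\bm{\Phi}^\top\WO\,\bm{Z}^\top\bm{X}\WK^\top\bigr)$. I would average in three stages using \cref{lem:atta-quad}: first $\WO$, which gives $t_\Phi$; then $\WV$, via $\E[\bm{Z}\bm{Z}^\top\,|\,\bm{X}]=d_v\bm{X}\bm{X}^\top$, which gives $d_v\bm{C}^2$; then $\WK$, which gives $d_kc_2$. Dividing by $d_k$ yields $\tfrac{c_2}{d_k}d_vt_\Phi$.

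For the noise part I use $\bm{X}^\top\bm{A}\bm{Z}=\tfrac1{\sqrt{d_k}}\bm{C}\WQ^\top\WK\bm{C}\WV^\top$, so that
\[
\norm{\GQ^{\mathrm{noise}}}_\Frob^2=\tfrac1{d_k^2}\tr\bigl(\WK\bm{E}\WK^\top\,\WQ\bm{C}^2\WQ^\top\,\WK\bm{E}\WK^\top\bigr).
\]
Since $\WQ$ enters only through $\WQ\bm{C}^2\WQ^\top$, averaging it first gives $c_2\bm{I}_{d_k}$ by \cref{lem:atta-quad}. What remains is $\tfrac{c_2}{d_k^2}\tr\bigl((\WK^\top\WK\bm{E})^2\bigr)$. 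Applying \cref{lem:atta-wishart} with $(n,d)=(d_k,d_0)$ conditionally on $\bm{E}$ gives $d_k(d_k{+}1)\tr\bm{E}^2+d_k(\tr\bm{E})^2$. \Cref{lem:atta-E} then turns this into $d_k(d_k{+}1)\beta_1+d_k\beta_2$, and dividing by $d_k$ gives \eqref{eq:atta-cQ}.

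\emph{Key.} The target part is identical after swapping the roles of $\WQ$ and $\WK$. For the noise part, the analogous identity is $\WQ\bm{X}^\top\bm{A}\bm{Z}\bm{H}_O\bm{Z}^\top\bm{X}=\tfrac1{\sqrt{d_k}}\bm{N}\WK\bm{E}$ with $\bm{N}=\WQ\bm{C}\WQ^\top$. Hence the squared norm is $\tfrac1{d_k^2}\tr(\bm{N}\WK\bm{E}^2\WK^\top\bm{N})$. Averaging $\WK$ gives $\tr(\bm{E}^2)\tr(\bm{N}^2)$, and \cref{lem:atta-N,lem:atta-E} supply $\E[\tr\bm{N}^2\,|\,\bm{X}]=d_k^2c_2+d_k(c_1^2{+}c_2)$ and $\E[\tr\bm{E}^2\,|\,\bm{X}]=\beta_1$. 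Here I must check that $\bm{E}$ and $\bm{N}$ are conditionally independent given $\bm{X}$, which holds because they involve disjoint weights. Dividing by $d_k$ gives \eqref{eq:atta-cK}.

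\emph{Value.} The target part is $\tr\bigl(\WO^\top\bm{\Phi}\bm{X}^\top\bm{A}\bm{X}\bm{X}^\top\bm{A}^\top\bm{X}\bm{\Phi}^\top\WO\bigr)$. I would average $\WO$, which gives a factor $d_v$. The inner average is the same computation as in \eqref{eq:atta-T4}, $\E[\bm{X}^\top\bm{A}\bm{X}\bm{X}^\top\bm{A}^\top\bm{X}\,|\,\bm{X}]=c_2\bm{C}^2$. Dividing by $d_v$ gives $c_2t_\Phi$.

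For the noise part, write $\bm{J}\eqdef\WV\bm{X}^\top\bm{A}^\top\bm{A}\bm{X}$, so that $\GV^{\mathrm{noise}}=\bm{H}_O\bm{J}$ and $\norm{\GV^{\mathrm{noise}}}_\Frob^2=\tr\bigl(\bm{H}_O\,\bm{J}\bm{J}^\top\bm{H}_O\bigr)$. Averaging over $\bm{H}_O\sim\mathcal{W}_{d_v}(d_2,\bm{I})$ with \cref{lem:atta-wishart} gives $\tr\E[\bm{H}_O\bm{K}\bm{H}_O]=d_2(d_2{+}d_v{+}1)\tr\bm{K}$ with $\bm{K}=\bm{J}\bm{J}^\top$. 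Next I average $\WV$, which gives $\E[\tr\bm{K}]=d_v\E[\tr\bm{M}^2]/d_k^2$, where $\bm{M}=\bm{C}\bm{S}\bm{C}$ is as in \cref{lem:atta-condwo}. Finally \eqref{eq:atta-trM} supplies $\E[\tr\bm{M}^2\,|\,\bm{X}]$, and dividing by $d_v$ yields \eqref{eq:atta-cV}.

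I expect the main obstacle to be bookkeeping rather than any conceptual point. For each gradient I must write the noise part so that exactly one weight matrix appears in a pure quadratic form, and average that one first so that \cref{lem:atta-quad} applies; the rest then reduces to \cref{lem:atta-wishart,lem:atta-N,lem:atta-S,lem:atta-E}. The delicate step is keeping the powers of $d_k$ coming from $\bm{A}$ consistent. I also need to make sure that I use the Wishart identity of \cref{lem:atta-wishart} only where a genuine Wishart factor appears, namely $\WK^\top\WK$ for the query and $\bm{H}_O$ for the value, and the plain quadratic-form identity elsewhere.
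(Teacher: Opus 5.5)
Your proposal is correct and follows essentially the same route as the paper. You split $\bm{R}=\bm{A}\bm{Z}\WO^\top-\bm{Y}$, remove the cross terms by parity in $\WO$, and evaluate the noise and target parts with \cref{lem:atta-quad,lem:atta-wishart,lem:atta-N,lem:atta-E}, \eqref{eq:atta-T4} and \eqref{eq:atta-trM}; the only cosmetic difference is in the query noise term, where you apply \cref{lem:atta-wishart} to $\WK^\top\WK$ rather than \cref{lem:atta-N} to $\WK\bm{E}\WK^\top$, and both give $d_k(d_k{+}1)\tr\bm{E}^2+d_k(\tr\bm{E})^2$.
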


\begin{proof}
\sloppy
Substituting $\bm{R}=\bm{A}\bm{Z}\WO^\top-\bm{Y}$ into \eqref{eq:atta-grads} splits each gradient as $\bm{G}_\bullet=\bm{a}_\bullet-\bm{b}_\bullet$, where $\bm{a}_\bullet$ collects the term containing $\bm{A}\bm{Z}\WO^\top$ and $\bm{b}_\bullet$ the term containing $\bm{Y}$. For $\bullet\in\{Q,K\}$, $\bm{a}_\bullet$ is quadratic and $\bm{b}_\bullet$ linear in $\WO$; for $\bullet=V$, $\bm{a}_V$ is quadratic and $\bm{b}_V$ linear in $\WO$ as well. Since $\WO$ is centered and independent of everything else, $\E[\tr(\bm{a}_\bullet\bm{b}_\bullet^\top)\,|\,\bm{X}]=0$ and
\[
\E[\norm{\bm{G}_\bullet}_\Frob^2\,|\,\bm{X}]=\E[\norm{\bm{a}_\bullet}_\Frob^2\,|\,\bm{X}]+\E[\norm{\bm{b}_\bullet}_\Frob^2\,|\,\bm{X}].
\]
Throughout, $\bm{H}_O\eqdef\WO^\top\WO$, $\bm{E}\eqdef\bm{C}\WV^\top\bm{H}_O\WV\bm{C}$ (as in \cref{lem:atta-E}), $\bm{N}\eqdef\WQ\bm{C}\WQ^\top$ (as in \cref{lem:atta-N}), and we use the identities $\bm{X}^\top\bm{Z}=\bm{C}\WV^\top$ and $\bm{X}^\top\bm{A}^\top\bm{X}=\tfrac1{\sqrt{d_k}}\bm{C}\WK^\top\WQ\bm{C}$.

\emph{Target parts.} From \eqref{eq:atta-grads}, $\bm{b}_Q=\tfrac1{\sqrt{d_k}}\WK\bm{C}\WV^\top\WO^\top\bm{\Phi}\bm{C}$. By \cref{lem:atta-quad} applied to $\WK$ and cyclicity of the trace,
\[
\E[\norm{\bm{b}_Q}_\Frob^2\,|\,\bm{X},\WV,\WO]=\tr\bigl(\WV\bm{C}^2\WV^\top\cdot\WO^\top\bm{\Phi}\bm{C}^2\bm{\Phi}^\top\WO\bigr),
\]
and \cref{lem:atta-quad} applied to $\WV$ ($\E[\WV\bm{C}^2\WV^\top]=c_2\bm{I}_{d_v}$) and to $\WO$ ($\E[\WO^\top\bm{\Phi}\bm{C}^2\bm{\Phi}^\top\WO]=t_\Phi\bm{I}_{d_v}$) gives $\E[\norm{\bm{b}_Q}_\Frob^2\,|\,\bm{X}]=d_vc_2t_\Phi$. The same computation applies to $\bm{b}_K=\tfrac1{\sqrt{d_k}}\WQ\bm{C}\bm{\Phi}^\top\WO\WV\bm{C}$ (\cref{lem:atta-quad} applied to $\WQ$) and to $\bm{b}_V=\WO^\top\bm{Y}^\top\bm{A}\bm{X}$ (\cref{lem:atta-quad} applied to $\WO$, then \eqref{eq:atta-T4} for the average of $\bm{X}^\top\bm{A}\bm{X}\bm{X}^\top\bm{A}^\top\bm{X}$ over $\WK,\WQ$), so
\[
\E[\norm{\bm{b}_Q}_\Frob^2\,|\,\bm{X}]=\E[\norm{\bm{b}_K}_\Frob^2\,|\,\bm{X}]=\E[\norm{\bm{b}_V}_\Frob^2\,|\,\bm{X}]=d_v\,c_2\,t_\Phi .
\]

\emph{Model part of $\GQ$.} Using $\bm{X}^\top\bm{Z}=\bm{C}\WV^\top$ and $\bm{Z}^\top\bm{A}^\top\bm{X}=\WV\bm{X}^\top\bm{A}^\top\bm{X}=\tfrac1{\sqrt{d_k}}\WV\bm{C}\WK^\top\WQ\bm{C}$,
\[
\bm{a}_Q=\tfrac1{d_k}\WK\bm{E}\WK^\top\WQ\bm{C}=\tfrac1{d_k}\bm{J}\WQ\bm{C},
\qquad
\norm{\bm{a}_Q}_\Frob^2=\tfrac1{d_k^2}\tr\bigl(\bm{J}^2\widetilde{\bm{N}}\bigr),
\]
where $\bm{J}\eqdef\WK\bm{E}\WK^\top$ and $\widetilde{\bm{N}}\eqdef\WQ\bm{C}^2\WQ^\top$. Given $\bm{X}$, $\bm{J}$ and $\widetilde{\bm{N}}$ are independent, and $\E[\widetilde{\bm{N}}\,|\,\bm{X}]=c_2\bm{I}_{d_k}$ by \cref{lem:atta-quad}, so $\E[\norm{\bm{a}_Q}_\Frob^2\,|\,\bm{X}]=\tfrac{c_2}{d_k^2}\E[\tr\bm{J}^2\,|\,\bm{X}]$. By \cref{lem:atta-N} with $\bm{E}$ in place of $\bm{C}$ and $\WK$ in place of $\WQ$, $\E[\tr\bm{J}^2\,|\,\bm{X},\bm{E}]=d_k^2\tr(\bm{E}^2)+d_k\bigl((\tr\bm{E})^2+\tr(\bm{E}^2)\bigr)$, and by \cref{lem:atta-E}, $\E[\tr\bm{J}^2\,|\,\bm{X}]=d_k^2\beta_1+d_k(\beta_1{+}\beta_2)$. Hence $\E[\norm{\bm{a}_Q}_\Frob^2\,|\,\bm{X}]=c_2\beta_1+\tfrac{c_2}{d_k}(\beta_1{+}\beta_2)$, and $c_Q(\bm{C})=\tfrac1{d_k}\bigl(\E[\norm{\bm{a}_Q}_\Frob^2\,|\,\bm{X}]+\E[\norm{\bm{b}_Q}_\Frob^2\,|\,\bm{X}]\bigr)$ is \eqref{eq:atta-cQ}.

\emph{Model part of $\GK$.} Using $\bm{X}^\top\bm{A}\bm{Z}=\tfrac1{\sqrt{d_k}}\bm{C}\WQ^\top\WK\bm{C}\WV^\top$ and $\bm{Z}^\top\bm{X}=\WV\bm{C}$,
\[
\bm{a}_K=\tfrac1{d_k}\bm{N}\WK\bm{E},
\qquad
\norm{\bm{a}_K}_\Frob^2=\tfrac1{d_k^2}\tr\bigl(\bm{N}^2\,\WK\bm{E}^2\WK^\top\bigr).
\]
Given $\bm{X}$, the matrices $\bm{N}$, $\WK$ and $\bm{E}$ are independent. By \cref{lem:atta-quad}, $\E[\WK\bm{E}^2\WK^\top\,|\,\bm{X},\bm{E}]=\tr(\bm{E}^2)\bm{I}_{d_k}$; by \cref{lem:atta-N}, $\E[\tr\bm{N}^2\,|\,\bm{X}]=d_k^2c_2+d_k(c_1^2{+}c_2)$; and by \cref{lem:atta-E}, $\E[\tr\bm{E}^2\,|\,\bm{X}]=\beta_1$. Hence $\E[\norm{\bm{a}_K}_\Frob^2\,|\,\bm{X}]=\bigl(c_2+\tfrac{c_1^2+c_2}{d_k}\bigr)\beta_1$, which together with the target part gives \eqref{eq:atta-cK}.

\emph{Model part of $\GV$.} Here $\bm{a}_V=\WO^\top\WO\bm{Z}^\top\bm{A}^\top\bm{A}\bm{X}=\bm{H}_O\bm{Z}^\top\bm{A}^\top\bm{A}\bm{X}$, so
\[
\norm{\bm{a}_V}_\Frob^2=\tr(\bm{H}_O^2\bm{M}_V),
\qquad
\bm{M}_V\eqdef\bm{Z}^\top\bm{A}^\top\bm{A}\bm{X}\bm{X}^\top\bm{A}^\top\bm{A}\bm{Z}=\tfrac1{d_k^2}\WV\bm{C}\bm{S}\bm{C}^2\bm{S}\bm{C}\WV^\top,
\]
with $\bm{S}$ as in the proof of \cref{lem:atta-condwo}. Given $\bm{X}$, $\bm{H}_O$ is independent of $\bm{M}_V$ and $\E[\bm{H}_O^2]=d_2(d_2{+}d_v{+}1)\bm{I}_{d_v}$ by \cref{lem:atta-wishart}, so $\E[\norm{\bm{a}_V}_\Frob^2\,|\,\bm{X}]=d_2(d_2{+}d_v{+}1)\E[\tr\bm{M}_V\,|\,\bm{X}]$. Taking the expectation over $\WV$ ($\E[\WV^\top\WV]=d_v\bm{I}_{d_0}$) and then over $\WK,\WQ$ by \cref{lem:atta-S,lem:atta-N},
\[
\E[\tr\bm{M}_V\,|\,\bm{X}]=\tfrac{d_v}{d_k^2}\,\E[\tr(\bm{S}\bm{C}^2\bm{S}\bm{C}^2)\,|\,\bm{X}]=d_v\Bigl[\bigl(c_1^2c_4{+}c_2c_4{+}c_2^3\bigr)+\tfrac1{d_k}\bigl(c_1^2c_4{+}c_1^2c_2^2{+}3c_2c_4{+}c_2^3\bigr)\Bigr],
\]
which is the first line of \eqref{eq:atta-trM} divided by $d_k^2$. Adding the target part and dividing by $d_v$ gives \eqref{eq:atta-cV}.
\end{proof}

\begin{proof}[Proof of \cref{thm:att}]
Take expectations over $\bm{X}$ in \cref{lem:atta-iso,lem:atta-condwo,lem:atta-condconsts}; by the tower property and \cref{ass:atta},
\[
\E[\GO\GO^\top]=\E[\alpha_O(\bm{C})]\,\bm{I}_{d_2}+d_v\,\bm{\Phi}\,\E[c_2\bm{C}^2]\,\bm{\Phi}^\top,
\qquad
\E[\bm{G}_\bullet\bm{G}_\bullet^\top]=\E[c_\bullet(\bm{C})]\,\bm{I}\quad(\bullet\in\{Q,K,V\}).
\]
For $\bm{Q}\in O(d_0)$, $\bm{Q}\bm{C}\bm{Q}^\top\stackrel{d}{=}\bm{C}$ and $c_2$ is invariant, so $\E[c_2\bm{C}^2]=\bm{Q}\,\E[c_2\bm{C}^2]\,\bm{Q}^\top$ for all $\bm{Q}$, and by Schur's lemma $\E[c_2\bm{C}^2]=\gamma\,\bm{I}_{d_0}$ with $\gamma=\tfrac1{d_0}\E[c_2\tr\bm{C}^2]=\tfrac1{d_0}\E[c_2^2]$; \eqref{eq:atta-wm-c22} gives the polynomial and the order in \eqref{eq:atta-gamma}. This proves \eqref{eq:atta-main}, and averaging \eqref{eq:atta-alphaO} termwise with \eqref{eq:atta-Adef} gives the closed form of $\bar\alpha_O=\E[\alpha_O(\bm{C})]$ in \eqref{eq:atta-abar}. The same isotropy argument gives $\E[c_2t_\Phi]=\tr(\bm{\Phi}\,\E[c_2\bm{C}^2]\,\bm{\Phi}^\top)=\gamma\norm{\bm{\Phi}}_\Frob^2$, so averaging \eqref{eq:atta-cQ}--\eqref{eq:atta-cV} gives \eqref{eq:atta-cbars}, where by \eqref{eq:atta-betas}
\begin{align}
\E[c_2\beta_1]&=d_2d_v\Bigl[(d_2d_v{+}d_2{+}d_v{+}3)\,\E[c_2c_4]+(d_2{+}d_v{+}1)\,\E[c_2^3]\Bigr],\notag\\
\E[c_2\beta_2]&=d_2d_v\Bigl[(d_2d_v{+}2)\,\E[c_2^3]+2(d_2{+}d_v{+}1)\,\E[c_2c_4]\Bigr],\label{eq:atta-Ebeta}\\
\E[(c_1^2{+}c_2)\beta_1]&=d_2d_v\Bigl[(d_2d_v{+}d_2{+}d_v{+}3)\bigl(\E[c_1^2c_4]{+}\E[c_2c_4]\bigr)+(d_2{+}d_v{+}1)\bigl(\E[c_1^2c_2^2]{+}\E[c_2^3]\bigr)\Bigr],\notag
\end{align}
all explicit by \cref{lem:atta-wm}. Finally, substituting $\mA_3=\mA_1+\mA_2$ into the closed form of $\bar\alpha_O$,
\begin{equation}\label{eq:atta-abar-orders}
\bar\alpha_O=d_v\Bigl[(d_v{+}1)\Bigl(1+\tfrac1{d_k}\Bigr)+\tfrac2{d_k}\Bigr]\mA_1+d_v\Bigl[1+\tfrac{d_v{+}1}{d_k}\Bigr]\mA_2
=\Theta(d_v^2d_0^6),
\end{equation}
all terms being nonnegative and $\mA_1,\mA_2=\Theta(d_0^6)$ by \cref{lem:atta-wm}, which is the order stated in \eqref{eq:atta-abar}.
\end{proof}

\begin{proof}[Proof of \cref{cor:atta-scaling}]
All Wishart moments of \cref{lem:atta-wm} are positive, and $d_v,d_k\ge1$ throughout.

\emph{Step 1: the $\beta$-moments.} By \eqref{eq:atta-Ebeta}, each of
\begin{align*}
\E[c_2\beta_1]&=d_2d_v\Bigl[(d_2d_v{+}d_2{+}d_v{+}3)\,\E[c_2c_4]+(d_2{+}d_v{+}1)\,\E[c_2^3]\Bigr],\\
\E[c_2\beta_2]&=d_2d_v\Bigl[(d_2d_v{+}2)\,\E[c_2^3]+2(d_2{+}d_v{+}1)\,\E[c_2c_4]\Bigr],\\
\E[(c_1^2{+}c_2)\beta_1]&=d_2d_v\Bigl[(d_2d_v{+}d_2{+}d_v{+}3)\bigl(\E[c_1^2c_4]{+}\E[c_2c_4]\bigr)+(d_2{+}d_v{+}1)\bigl(\E[c_1^2c_2^2]{+}\E[c_2^3]\bigr)\Bigr]
\end{align*}
is $d_2d_v$ times a polynomial of degree one in $d_v$ whose coefficients are positive combinations of the degree-six moments $\E[c_2c_4],\E[c_2^3],\E[c_1^2c_4],\E[c_1^2c_2^2]=\Theta(d_0^6)$ of \cref{lem:atta-wm}; since $d_2d_v\ge\tfrac12(d_2{+}d_v)$, each is
\begin{equation}\label{eq:atta-beta-orders}
\E[c_2\beta_1]=\Theta(d_2^2d_v^2d_0^6),\qquad \E[c_2\beta_2]=\Theta(d_2^2d_v^2d_0^6),\qquad \E[(c_1^2{+}c_2)\beta_1]=\Theta(d_2^2d_v^2d_0^6).
\end{equation}

\emph{Step 2: $c_Q$ and $c_K$.} Substituting \eqref{eq:atta-beta-orders} into \eqref{eq:atta-cbars},
\begin{align*}
c_Q&=\frac{1}{d_k}\Bigl(\underbrace{\E[c_2\beta_1]}_{\Theta(d_2^2d_v^2d_0^6)}+\underbrace{d_v\gamma\norm{\bm{\Phi}}_\Frob^2}_{\Theta(d_vd_0^3)}\Bigr)
+\frac{1}{d_k^2}\underbrace{\bigl(\E[c_2\beta_1]+\E[c_2\beta_2]\bigr)}_{\Theta(d_2^2d_v^2d_0^6)},\\
c_K&=\frac{1}{d_k}\Bigl(\underbrace{\E[c_2\beta_1]}_{\Theta(d_2^2d_v^2d_0^6)}+\underbrace{d_v\gamma\norm{\bm{\Phi}}_\Frob^2}_{\Theta(d_vd_0^3)}\Bigr)
+\frac{1}{d_k^2}\underbrace{\E[(c_1^2{+}c_2)\beta_1]}_{\Theta(d_2^2d_v^2d_0^6)},
\end{align*}
where $\gamma=\Theta(d_0^3)$ by \cref{lem:atta-wm}. All terms are positive, and since $d_2,d_v,d_k,d_0\ge1$ the second and third terms are at most a constant multiple of the first; hence
\[
c_Q\asymp c_K=\Theta\bigl(d_2^2d_v^2d_0^6/d_k\bigr),
\qquad
c_K-c_Q=\frac{\E[c_1^2\beta_1]-\E[c_2\beta_2]}{d_k^2}=O\Bigl(\frac{d_2^2d_v^2d_0^6}{d_k^2}\Bigr)=O\Bigl(\frac{1}{d_k}\Bigr)\,c_Q .
\]

\emph{Step 3: $c_V$.} By \eqref{eq:atta-cbars} with $\mA_3=\mA_1+\mA_2$,
\[
c_V=d_2(d_2{+}d_v{+}1)\Bigl(\mA_1+\frac{\mA_3}{d_k}\Bigr)+\gamma\norm{\bm{\Phi}}_\Frob^2,
\qquad
d_2\mA_1\le d_2\Bigl(\mA_1+\frac{\mA_3}{d_k}\Bigr)\le d_2(\mA_1+\mA_3),
\]
so $c_V$ is affine in $d_v$ with slope between $d_2\mA_1$ and $d_2(\mA_1{+}\mA_3)$ and intercept between $d_2(d_2{+}1)\mA_1$ and $d_2(d_2{+}1)(\mA_1{+}\mA_3)+\gamma\norm{\bm{\Phi}}_\Frob^2$; since $\mA_1,\mA_2=\Theta(d_0^6)$ and $\gamma=\Theta(d_0^3)$ (\cref{lem:atta-wm}), $c_V=\Theta\bigl(d_2(d_2{+}d_v)\,d_0^6\bigr)$.

\emph{Step 4: $\bar\alpha_O$.} Let $\bar\alpha_O^{\infty}\eqdef d_v(d_v{+}1)\mA_1+d_v\mA_2$ be the $d_k$-free part of \eqref{eq:atta-abar-orders}. Then
\begin{align*}
\bar\alpha_O-\bar\alpha_O^{\infty}
&=\frac{1}{d_k}\Bigl[d_v(d_v{+}1)\mA_3+2d_v\mA_1\Bigr]
\le\frac{1}{d_k}\Bigl[d_v(d_v{+}1)(\mA_1{+}\mA_2)+2d_v(d_v{+}1)\mA_1\Bigr]\\
&=\frac{3\mA_1+\mA_2}{\mA_1}\cdot\frac{d_v(d_v{+}1)\mA_1}{d_k}
\le\frac{3\mA_1+\mA_2}{\mA_1}\cdot\frac{\bar\alpha_O^{\infty}}{d_k},
\end{align*}
so that
\begin{equation}\label{eq:atta-abar-sandwich}
\bar\alpha_O^{\infty}\le\bar\alpha_O\le\Bigl(1+\frac{3\mA_1+\mA_2}{\mA_1d_k}\Bigr)\bar\alpha_O^{\infty},
\qquad
d_v^2\mA_1\le\bar\alpha_O^{\infty}\le 2d_v^2\mA_1+d_v^2\mA_2 ,
\end{equation}
and therefore, with $\mA_1,\mA_2=\Theta(d_0^6)$, $\bar\alpha_O=\Theta(d_v^2d_0^6)$.

\emph{Step 5: the $\WO$ ratio.} By \eqref{eq:atta-main}, for $a\neq b$ with $(\bm{\Phi}\bm{\Phi}^\top)_{ab}\neq0$,
\begin{gather*}
\E[(\GO\GO^\top)_{aa}]=\bar\alpha_O+d_v\gamma(\bm{\Phi}\bm{\Phi}^\top)_{aa}=\Theta(d_v^2d_0^6),\\
\E[(\GO\GO^\top)_{ab}]=d_v\gamma(\bm{\Phi}\bm{\Phi}^\top)_{ab}=\Theta(d_vd_0^3),
\end{gather*}
and dividing gives the ratio $\Theta(d_vd_0^3)$. Finally, Steps 3 and 4 show that $d_k$ enters $c_V$ and $\bar\alpha_O$ only through the factors $1+\mA_3/(\mA_1d_k)\in[1,\,1+\mA_3/\mA_1]$ and the bracket in \eqref{eq:atta-abar-sandwich}, both bounded above and below; the same is then true of the ratio.
\end{proof}

\subsection{Discussion}\label{app:att:discussion}

\begin{remark}[Scope of the isotropy]\label{rem:atta-scope}
This remark records how far the exact zeros of \cref{thm:att} for $\WQ,\WK,\WV$ extend beyond the setting of \cref{ass:atta}. The proof of \cref{lem:atta-iso} uses only that $f$ depends on $(\WQ,\WK)$ through $\WQ^\top\WK$ and on $(\WV,\WO)$ through $\WO\WV$, and that the law of the weights is invariant under the sign and permutation maps. Hence the isotropy of $\GQ,\GK,\GV$ holds conditionally on every dataset, for any differentiable loss, for any initialization whose law has these symmetries, and, applying the sign matrix to stacked heads, for multi-head attention. RoPE inserts a position-dependent matrix between queries and keys and is not covered. For these three matrices $\bm{P}\propto\bm{I}$ in expectation, so orthogonalization degenerates to the scalar normalization $\bm{G}/\norm{\bm{G}}_2$.
\end{remark}

\begin{remark}[Role of data randomness]\label{rem:atta-data}
This remark identifies which part of the argument actually uses the Gaussian tokens, since the answer differs from the ReLU setting. Unlike the ReLU setting, where Gaussian data creates the cancellation ($\kappa_1,\kappa_2$ are input integrals), here the data enters no cancellation: the zeros are weight-space symmetries holding per dataset, and \eqref{eq:atta-wo-cond} is an identity for every dataset. For instance, since $\alpha_O(\bm{C})\ge d_v^2c_2^3$ and $c_2\ge\lambda_{\max}(\bm{C})^2$, \eqref{eq:atta-wo-cond} gives $\E[(\GO\GO^\top)_{aa}\,|\,\bm{X}]/\E[(\GO\GO^\top)_{ab}\,|\,\bm{X}]\ge d_v\lambda_{\max}(\bm{C})^2/(\norm{\bm{\varphi}_a}\norm{\bm{\varphi}_b})$ deterministically, with $\bm{\varphi}_a$ the rows of $\bm{\Phi}$. Gaussian tokens only translate $c_1,c_2,c_4$ into $(T,d_0)$ via \cref{lem:atta-wm}; what is genuinely needed is independence of the data from the initialization, since a data-dependent initialization breaks the symmetry argument.
\end{remark}

\begin{remark}[Sign of $c_K-c_Q$]\label{rem:atta-sign}
This remark settles which of the query and key constants is larger, a question that \cref{cor:atta-scaling} leaves open because it only bounds their difference. By \eqref{eq:atta-cbars} and \eqref{eq:atta-Ebeta}, $c_K-c_Q=d_2d_v\bigl[(d_2d_v{+}2)\Delta_1+(d_2{+}d_v{+}1)\Delta_2\bigr]/d_k^2$ with $\Delta_1\eqdef\E[c_1^2c_4]-\E[c_2^3]$ and $\Delta_2\eqdef\E[c_1^2c_4]+\E[c_1^2c_2^2]-2\E[c_2c_4]$. When $\min(T,d_0)=1$, $\bm{C}$ has rank one, all four moments equal $\E[(\chi^2_{\max(T,d_0)})^6]$, and $c_K=c_Q$. For every integer $2\le T,d_0\le512$ the polynomials of \cref{lem:atta-wm} give $\Delta_1,\Delta_2>0$, hence $c_K>c_Q$ for all $d_2,d_v$ in that range.
\end{remark}

\begin{remark}[Fan-in scaling]\label{rem:atta-scale}
This remark checks how the conclusions change under the standard fan-in initialization in place of the unit-variance entries of \cref{ass:atta}. With entry variances $1/d_0$ ($\WQ,\WK,\WV$) and $1/d_v$ ($\WO$), each application of \cref{lem:atta-quad} picks up the corresponding variance and the $d_v$ factor of the $\WO$ ratio cancels: the ratio becomes $\Theta(1)$ in $d_k$, $d_v$ and $d_0$ alike, its size being set by the sequence length through the data moments alone, so under fan-in scaling the sequence length, not the hidden dimension, drives diagonalization, consistent with the near-diagonal $\bm{G}\bm{G}^\top$ observed along GPT-2 and LLaMA trajectories by \citet{deng2026rmnp}.
\end{remark}
\end{addblock}

\section{Asymptotic Approximation between Orthogonalization and Row Normalization}\label{sec:equiv}

What does diagonal dominance say about the orthogonalization step itself? Suppose $\bm{V}_t\bm{V}_t^\top$ is diagonal. Then $\bm{V}_t\bm{V}_t^\top=\diag(\bm{V}_t\bm{V}_t^\top)$, so $(\bm{V}_t\bm{V}_t^\top)^{-1/2}=\diag(\bm{V}_t\bm{V}_t^\top)^{-1/2}$, and since $\diag(\bm{V}_t\bm{V}_t^\top)_{ii}=\norm{(\bm{V}_t)_{i,:}}_2^2$,
\begin{equation}\label{eq:rownorm}
\bigl[(\bm{V}_t\bm{V}_t^\top)^{-1/2}\bm{V}_t\bigr]_{i,:}=\frac{(\bm{V}_t)_{i,:}}{\norm{(\bm{V}_t)_{i,:}}_2}.
\end{equation}
That is, when $\bm{V}_t\bm{V}_t^\top$ is diagonal the orthogonalization in \eqref{eq:muon} is just each row of the momentum divided by its own $\ell_2$ norm, which is exactly the row normalization used by several recent stateless optimizers~\citep{ma2025swan,scetbon2025gradient,glentis2025minimalist,wen2025sron,pethick2025training,deng2026rmnp}; equivalently, its implicit preconditioner is $\diag(\bm{P}_t)^{1/2}\kron\bm{I}_n$, which equals $H_{\mathrm{Muon}}$ when $\bm{P}_t$ is diagonal. Putting \eqref{eq:rownorm} together with \addtxt{\cref{thm:mf,thm:dl,thm:relu,thm:att}}: at a Gaussian initialization, as the hidden dimension grows, Muon's $\Theta(mn\min(m,n))$ orthogonalization collapses to a $\Theta(mn)$ row-wise $\ell_2$ normalization with the same preconditioner structure\addtxt{, and for the query, key and value weights of attention the collapse is exact in expectation}. \citet{deng2026rmnp} check that this equivalence holds throughout the training of GPT-2 and LLaMA, and that the dominance ratio grows with model size, in line with the hidden-dimension dependence in \cref{thm:relu} \addtxt{and \cref{thm:att}}. Carrying the analysis past initialization, and to deep nonlinear networks, is left for future work.

\section{Simulation of Theoretical Results}
\label{app:empirical-verification}

We numerically verify the dimension-scaling predictions of
\cref{thm:mf,thm:dl,thm:relu,thm:att} under the Gaussian
initialization of \cref{sec:assump}. For each model, we perform two
experiments. First, at a representative configuration, we estimate
the expected gradient self outer-product and examine its diagonal and
off-diagonal structure. Second, we vary the relevant model dimensions
and measure how the diagonal magnitude, off-diagonal magnitude, and
their ratio change.

To summarize the structure of a matrix
\(\bm{P}\in\R^{m\times m}\), we define the mean diagonal magnitude,
mean off-diagonal magnitude, and diagonal-dominance ratio as
\begin{equation}
D(\bm{P})
=
\frac{1}{m}\sum_{i=1}^{m}|P_{ii}|,
\qquad
\mathrm{OD}(\bm{P})
=
\frac{1}{m(m-1)}
\sum_{i\neq j}|P_{ij}|,
\qquad
R(\bm{P})
=
\frac{D(\bm{P})}{\mathrm{OD}(\bm{P})}.
\label{eq:appG-DOR}
\end{equation}

Given \(S\) independent initializations with gradients
\(\bm{G}^{(1)},\ldots,\bm{G}^{(S)}\), we estimate the expected
gradient self outer-product by
\begin{equation}
\widehat{\E}[\bm{G}\bm{G}^\top]
=
\frac{1}{S}
\sum_{s=1}^{S}
\bm{G}^{(s)}(\bm{G}^{(s)})^\top.
\label{eq:empirical-gram}
\end{equation}

We use \(\widehat D\), \(\widehat{\mathrm{OD}}\), and
\(\widehat R\) to denote the corresponding empirical estimates of \(D(\E[\bm{G}\bm{G}^\top])\),
\(\mathrm{OD}(\E[\bm{G}\bm{G}^\top])\), and
\(R(\E[\bm{G}\bm{G}^\top])\), respectively.

\subsection{Symmetric Matrix Factorization}
\label{app:sim-mf}

\paragraph{Setup.}
We next verify the predictions of \cref{thm:mf,cor:mfa-scaling} for
symmetric matrix factorization. As defined in \eqref{eq:mfa-grad},
the gradient is
\[
\bm{G}
=
(\bm{W}\bm{W}^\top-\Mstar)\bm{W},
\]
where \(\bm{W}\in\R^{d\times k}\) and \(\Mstar\in\R^{d\times d}\) is
symmetric positive semidefinite. Under \cref{ass:mfa},
\cref{thm:mf} gives
\begin{equation}
\E[\bm{G}\bm{G}^\top]
=
\alpha_3 \bm{I}_d
-
2\alpha_2 \Mstar
+
k(\Mstar)^2,
\label{eq:sim-mf-theory}
\end{equation}
where \(\alpha_2=k(k+d+1)\) and
\(\alpha_3=k^3+3(d+1)k^2+(d^2+3d+4)k\).
\cref{cor:mfa-scaling} gives the corresponding diagonal and
off-diagonal entries,
\begin{equation}
\begin{aligned}
\E[(\bm{G}\bm{G}^\top)_{ii}]
&=
\alpha_3
-
2\alpha_2 \Mstar_{ii}
+
k\bigl((\Mstar)^2\bigr)_{ii},
\\
\E[(\bm{G}\bm{G}^\top)_{ij}]
&=
-2\alpha_2 \Mstar_{ij}
+
k\bigl((\Mstar)^2\bigr)_{ij},
\qquad i\neq j.
\end{aligned}
\label{eq:sim-mf-entrywise}
\end{equation}
For fixed \(d\) and a fixed target with nonzero off-diagonal
structure, this gives
\(D(\E[\bm{G}\bm{G}^\top])=\Theta(k^3)\) and
\(\mathrm{OD}(\E[\bm{G}\bm{G}^\top])=\Theta(k^2)\), and hence
\(R(\E[\bm{G}\bm{G}^\top])=\Theta(k)\).

\paragraph{Empirical structure of the expected gradient Gram.}
We first examine the structure of the expected gradient Gram at a
representative hidden dimension. We set \(d=25\), \(k=80\), and construct a fixed
rank-5 target \(\Mstar=\bm{Q}\bm{\Lambda}\bm{Q}^\top\), where
\(\bm{Q}\in\R^{25\times 5}\) has orthonormal columns obtained from
the QR factorization of a Gaussian matrix and
\(\bm{\Lambda}=\operatorname{diag}(3,2.5,2,1.5,1)\).
We draw \(S=10{,}000\) independent initializations
\(\bm{W}^{(1)},\ldots,\bm{W}^{(S)}\), with i.i.d. entries from
\(\mathcal{N}(0,1)\). For each \(s\in\{1,\ldots,S\}\), let
\(\bm{G}^{(s)}\) denote the gradient corresponding to
\(\bm{W}^{(s)}\). Using \eqref{eq:empirical-gram}, we estimate
\(\E[\bm{G}\bm{G}^\top]\) from these \(10{,}000\) initializations.

\begin{figure}[t]
    \centering
    \includegraphics[width=0.3\linewidth]
    {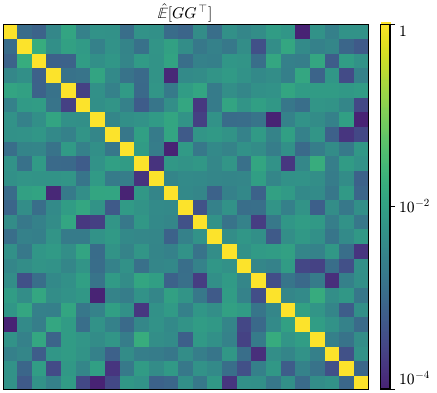}
    \caption{
    Empirical expected gradient self outer-product for symmetric matrix
    factorization with \(d=25\), \(k=80\), and a fixed rank-5 target,
    estimated over \(10{,}000\) independent initializations. Each heatmap is normalized by its empirical mean diagonal magnitude. The panel uses the logarithmic color scale as shown.
    }
    \label{fig:mf-gram-verification}
\end{figure}

Figure~\ref{fig:mf-gram-verification} shows a strongly diagonal
expected gradient Gram, with substantially smaller off-diagonal
entries. This is consistent with the diagonal-dominance behavior
predicted by \cref{thm:mf}.

\paragraph{Hidden-dimension scaling.}
We next vary \(k\) over 15 values from \(5\) to \(640\), while keeping \(d=25\) and \(\Mstar\) fixed, using \(S=10{,}000\) independent initializations at each hidden dimension.

To reduce Monte Carlo noise in the smaller off-diagonal entries, we
subtract a term whose expected off-diagonal contribution is zero. By \cref{lem:mfa-expand}, setting \(\Mstar=\bm{0}\) gives
\(\bm{G}(\bm{0})\bm{G}(\bm{0})^\top=\bm{S}^3\), where
\(\bm{S}=\bm{W}\bm{W}^\top\). Moreover,
\cref{lem:mfa-a3} gives
\(\E[\bm{S}^3]=\alpha_3\bm{I}_d\), so this term has zero expected
off-diagonal entries. We therefore define
\begin{equation}
\bm{\Delta}_{\mathrm{MF}}^{(s)}
=
\bm{G}^{(s)}(\Mstar)\bm{G}^{(s)}(\Mstar)^\top
-
\bm{G}^{(s)}(\bm{0})\bm{G}^{(s)}(\bm{0})^\top.
\label{eq:sim-mf-isolated}
\end{equation}
For \(i\neq j\),
\[
\E\!\left[
(\bm{\Delta}_{\mathrm{MF}}^{(s)})_{ij}
\right]
=
\E\!\left[
(\bm{G}\bm{G}^\top)_{ij}
\right].
\]
Thus, the paired difference removes the
\(\bm{S}^3\) contribution while preserving the expected
off-diagonal entries.

For each hidden dimension, we compute \(\widehat D\) from the full empirical
average \eqref{eq:empirical-gram}. For
\(\widehat{\mathrm{OD}}\), we first average
\(\bm{\Delta}_{\mathrm{MF}}^{(s)}\) over the same \(S\)
initializations and then apply \(\mathrm{OD}(\cdot)\). We then compute
\(\widehat R=\widehat D/\widehat{\mathrm{OD}}\). 

\begin{figure}[t]
    \centering
    \includegraphics[width=\linewidth]
    {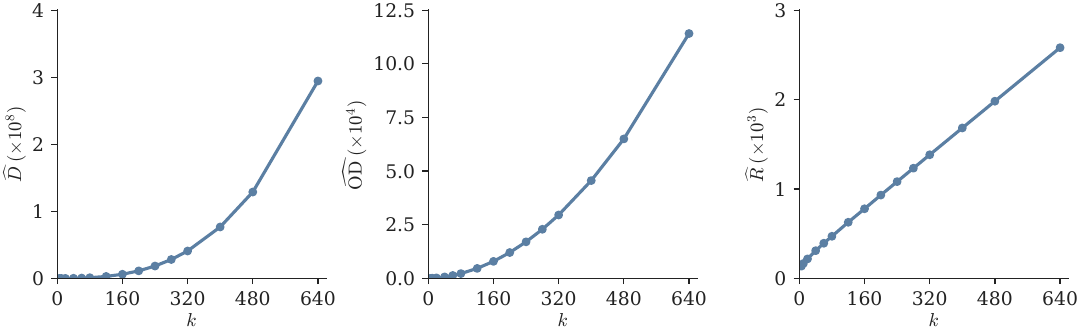}
    \caption{
    Hidden-dimension scaling for symmetric matrix factorization with \(d=25\) and
    a fixed rank-5 target. From left to right, the panels show
    \(\widehat D\), \(\widehat{\mathrm{OD}}\), and
    \(\widehat R=\widehat D/\widehat{\mathrm{OD}}\) as the
    factorization hidden dimension \(k\) is varied. Empirical estimates use
    \(10{,}000\) independent initializations at each hidden dimension;
    \(\widehat{\mathrm{OD}}\) uses the paired estimator in
    \eqref{eq:sim-mf-isolated}.
    }
    \label{fig:mf-width-scaling}
\end{figure}

Figure~\ref{fig:mf-width-scaling} shows that the diagonal magnitude
grows faster with \(k\) than the off-diagonal magnitude, while \(\widehat R\) grows approximately linearly with \(k\). This is
consistent with the theoretical predictions
\(D=\Theta(k^3)\), \(\mathrm{OD}=\Theta(k^2)\), and \(R=\Theta(k)\).

\subsection{Deep Linear Network}
\label{app:sim-dl}

\paragraph{Setup.}
We next verify the predictions of \cref{thm:dl,cor:dla-scaling} for
the deep linear network. Under the Gaussian initialization of
\cref{ass:dl}, \cref{thm:dl} gives
\begin{equation}
\E[\bm{G}_i\bm{G}_i^\top]
=
V_i s_i \bm{I}_{d_i}
+
\bm{T}_i,
\qquad
\bm{T}_i
=
\begin{cases}
\left(\displaystyle\prod_{j=1}^{L-1}d_j\right)
\bm{\Phi}\bm{\Phi}^\top,
& i=L, \\[6pt]
\left(\displaystyle\prod_{\substack{1\leq j\leq L-1\\j\neq i}}d_j\right)
\norm{\bm{\Phi}}_\Frob^2 \bm{I}_{d_i},
& i<L.
\end{cases}
\label{eq:sim-dl-theory}
\end{equation}
\cref{cor:dla-scaling} therefore gives zero expected off-diagonal
entries for every hidden layer \(i<L\), while for the output layer
\begin{equation}
\E[(\bm{G}_L\bm{G}_L^\top)_{ab}]
=
\left(\prod_{j=1}^{L-1}d_j\right)
(\bm{\Phi}\bm{\Phi}^\top)_{ab},
\qquad a\neq b.
\label{eq:sim-dl-output-offdiag}
\end{equation}

\paragraph{Empirical structure of the expected gradient Gram.}
We first examine the empirical expected gradient Grams at a
representative configuration. We use depth \(L=4\) with dimensions
\([d_0,d_1,d_2,d_3,d_4]=[512,80,80,80,512]\)
and a fixed target
\(\bm{\Phi}\in\R^{512\times512}\) with entries sampled from
\(\mathcal{N}(0,1)\). We use \(S=10{,}000\) independent Gaussian
weight initializations and estimate the expected Gram for each
\(\bm{G}_1,\ldots,\bm{G}_4\) using \eqref{eq:empirical-gram}.

\begin{figure}[t]
    \centering
    \includegraphics[width=\linewidth]
    {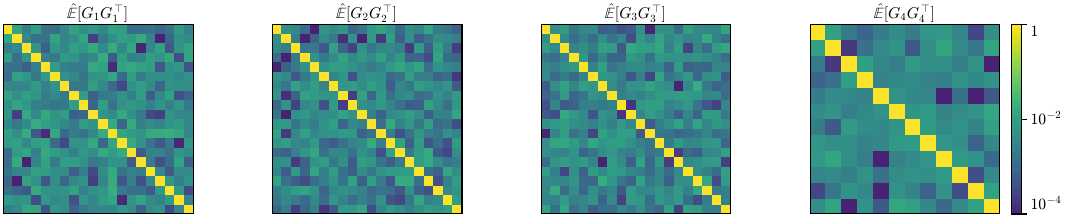}
    \caption{
    Empirical expected gradient self outer-products for the depth-$4$
    linear network with baseline dimensions
    \([512,80,80,80,512]\), estimated from \(10{,}000\) independent
    initializations. The four panels show
    \(\widehat{\E}[\bm{G}_i\bm{G}_i^\top]\) for
    \(i=1,\ldots,4\). Each heatmap is normalized by its empirical mean diagonal magnitude. A common logarithmic color scale is used across all panels.
    }
    \label{fig:dl-gram-verification}
\end{figure}

Figure~\ref{fig:dl-gram-verification} shows strongly diagonal
expected Grams for \(\bm{G}_1,\bm{G}_2,\bm{G}_3\), consistent with
\cref{cor:dla-scaling}, which gives zero expected off-diagonal entries
for the hidden layers. The output-layer Gram retains visible
off-diagonal structure, consistent with the
\(\bm{\Phi}\bm{\Phi}^\top\) term in \cref{thm:dl}.

\paragraph{Dimension scaling.}
We next vary each hidden dimension \(d_1,d_2,d_3\) individually while holding the remaining dimensions fixed at the baseline values above. Each hidden dimension is varied over a 12-point grid \(\{5,10,20,40,60,80,120,160,200,240,280,320\}\), and we evaluate
\(\bm{G}_1,\ldots,\bm{G}_4\). A fixed \(512\times512\) target is
sampled with entries from \(\mathcal{N}(0,1)\).
We use \(S=10{,}000\) independent initializations for every
configuration.

Because the expected off-diagonal entries are much smaller than the
diagonal entries, direct Monte Carlo estimation of
\(\mathrm{OD}\) has high variance. We therefore use a paired
construction to reduce this Monte Carlo variance while preserving the
expected off-diagonal entries. From \eqref{eq:dla-ABR} and \eqref{eq:dla-grad}, for fixed weights, the gradient is affine in \(\bm{\Phi}\). For each
\(s\in\{1,\ldots,S\}\), let
\(\bm{G}_i^{(s)}(\bm{\Phi})\) denote the layer-\(i\) gradient for the
\(s\)-th weight initialization and target \(\bm{\Phi}\). Using the
same weights, we evaluate the gradient at
\(\bm{\Phi}\), \(-\bm{\Phi}\), and \(\bm{0}\), and define
\begin{equation}
\bm{\Delta}_i^{(s)}
=
\frac{1}{2}
\left[
\bm{G}_i^{(s)}(\bm{\Phi})
\bm{G}_i^{(s)}(\bm{\Phi})^\top
+
\bm{G}_i^{(s)}(-\bm{\Phi})
\bm{G}_i^{(s)}(-\bm{\Phi})^\top
\right]
-
\bm{G}_i^{(s)}(\bm{0})
\bm{G}_i^{(s)}(\bm{0})^\top.
\label{eq:sim-dl-isolated}
\end{equation}
The symmetric combination cancels the target-independent and
target-linear terms. The remaining term is
\begin{equation}
\bm{\Delta}_i^{(s)}
=
(\bm{A}_i^{(s)})^\top
\bm{\Phi}
(\bm{B}_i^{(s)})^\top
\bm{B}_i^{(s)}
\bm{\Phi}^\top
\bm{A}_i^{(s)}.
\label{eq:sim-dl-isolated-expanded}
\end{equation}
By \cref{lem:dla-Phi},
\begin{equation}
\E[\bm{\Delta}_i^{(s)}]
=
\bm{T}_i.
\label{eq:sim-dl-isolated-expectation}
\end{equation}
Since the remaining term \(V_i s_i\bm{I}_{d_i}\) in
\cref{thm:dl} is diagonal, \eqref{eq:sim-dl-isolated} has exactly the
same expected off-diagonal entries as the full gradient Gram.

For each configuration, we compute \(\widehat D\) from the full
empirical expected Gram. For \(\widehat{\mathrm{OD}}\), we first
average \(\bm{\Delta}_i^{(s)}\) over the same \(S\) initializations
and then apply \(\mathrm{OD}(\cdot)\). We then compute
\(\widehat R=\widehat D/\widehat{\mathrm{OD}}\).

\begin{figure}[t]
    \centering
    \includegraphics[width=\linewidth]
    {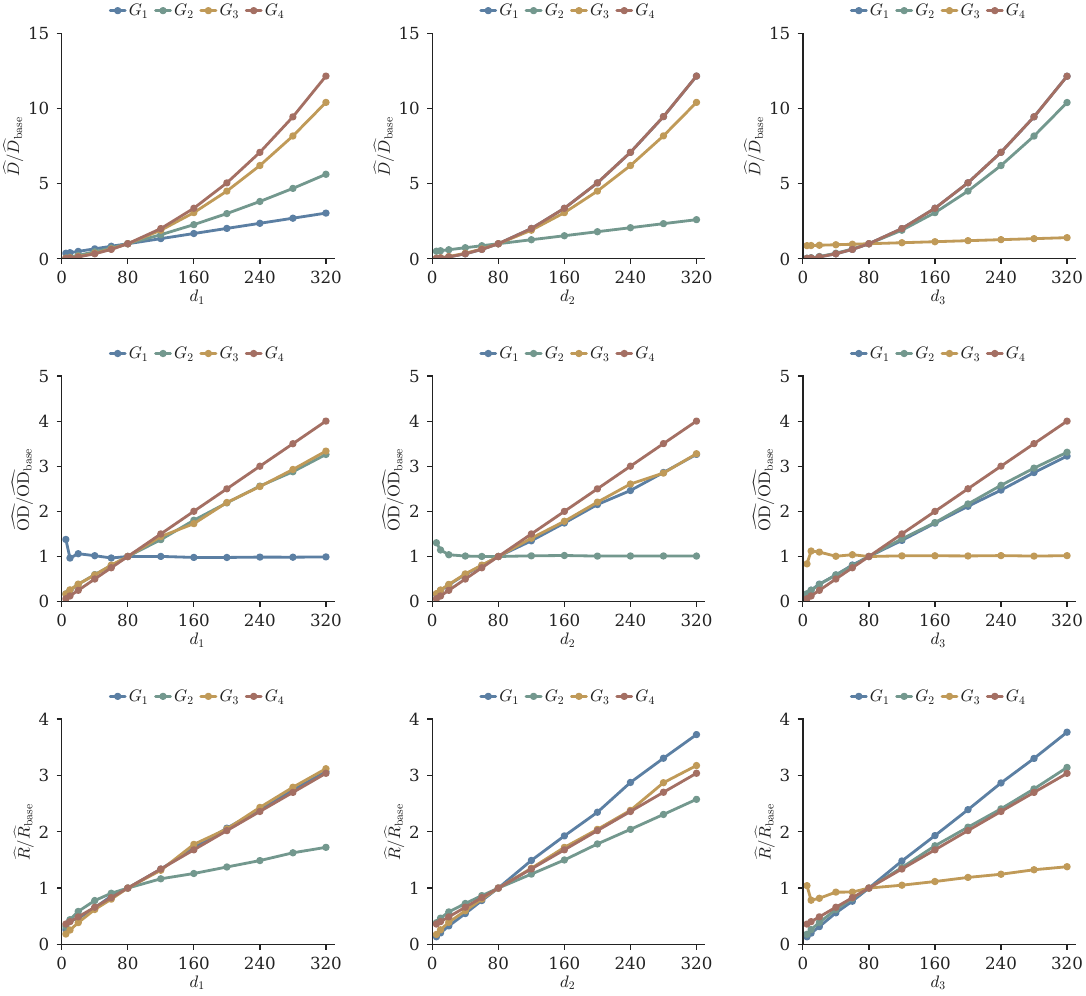}
    \caption{
    Empirical dimension scaling for the depth-$4$ linear network.
    Columns vary \(d_1,d_2,\) and \(d_3\) individually while holding the
    remaining dimensions fixed at their baseline values. Rows show
    \(\widehat D\), \(\widehat{\mathrm{OD}}\), and
    \(\widehat R=\widehat D/\widehat{\mathrm{OD}}\) for
    \(\bm{G}_1,\ldots,\bm{G}_4\), each normalized by its value at the
    baseline configuration
    \([d_0,d_1,d_2,d_3,d_4]=[512,80,80,80,512]\)  to facilitate comparison across layers with different absolute scales. Empirical estimates use \(10{,}000\) independent initializations per configuration. The off-diagonal
    statistic is estimated using \eqref{eq:sim-dl-isolated}.
    }
    \label{fig:dl-dimension-scaling}
\end{figure}

Figure~\ref{fig:dl-dimension-scaling} shows the layer-dependent
scaling predicted by \cref{thm:dl,cor:dla-scaling}. When \(d_j\) is varied, \(\widehat D\) grows more slowly for the corresponding
gradient \(\bm G_j\) than for the other hidden-layer gradients. This is consistent with \(D=\Theta(d_j)\) for \(\bm G_j\),
compared with \(D=\Theta(d_j^2)\) for
\(\bm G_i\), \(i\neq j\), when the remaining dimensions are fixed.

\cref{cor:dla-scaling} gives zero expected off-diagonal entries for \(i<L\), and the remaining nonzero values are finite-sample Monte
Carlo variances. For the output layer, the normalized \(\widehat D\) curves
show approximately quadratic growth, while the normalized \(\widehat{\mathrm{OD}}\) curves grow approximately linearly when any
one of \(d_1,d_2,d_3\) is varied. This is consistent with \(D=\Theta(d_j^2)\), \(\mathrm{OD}=\Theta(d_j)\), and hence
\(R=\Theta(d_j)\).

\subsection{Two-Layer ReLU Network}
\label{app:sim-relu}
\paragraph{Setup.}
We next verify the predictions of \cref{thm:relu} for the two-layer
ReLU network
\(f(\bm{x})=\bm{W}_2\sigma(\bm{W}_1\bm{x})\), with hidden dimension
\(d_1\). Unlike the hidden layers of the deep linear network, the expected gradient self outer-products are not exactly diagonal.

Under the Gaussian initialization of \cref{ass:relu},
\cref{thm:relu} gives, for \(\bm{G}_1\),
\begin{equation}
\E\!\left[
(\bm{G}_1\bm{G}_1^\top)_{aa}
\right]
=
\frac{d_0d_2^2}{4}
+
d_0d_2
\left(
(d_1-1)\kappa_1+\frac{1}{2}
\right)
+
\frac{\norm{\bm{\Phi}}_\Frob^2}{4},
\label{eq:sim-relu-g1-diag}
\end{equation}
while for \(a\neq c\),
\begin{equation}
\E\!\left[
(\bm{G}_1\bm{G}_1^\top)_{ac}
\right]
=
d_0d_2\kappa_2
+
O(d_2\sqrt{d_0}),
\label{eq:sim-relu-g1-offdiag}
\end{equation}
where \(\kappa_1\) and \(\kappa_2\) are defined in
\cref{thm:relu} and depend only on \(d_0\). Holding \(d_0\) and
\(d_2\) fixed while varying \(d_1\) therefore gives
\begin{equation}
D\!\left(\E[\bm{G}_1\bm{G}_1^\top]\right)
=
\Theta(d_1),
\qquad
\mathrm{OD}\!\left(\E[\bm{G}_1\bm{G}_1^\top]\right)
=
\Theta(1),
\qquad
R\!\left(\E[\bm{G}_1\bm{G}_1^\top]\right)
=
\Theta(d_1).
\label{eq:sim-relu-g1-scaling}
\end{equation}

For \(\bm{G}_2\), \cref{thm:relu} gives
\begin{equation}
\E[\bm{G}_2\bm{G}_2^\top]
=
\alpha\bm{I}_{d_2}
+
\frac{d_1}{4}\bm{\Phi}\bm{\Phi}^\top,
\qquad
\alpha=\Theta(d_1^2).
\label{eq:sim-relu-g2}
\end{equation}
Thus, for fixed \(d_0\), \(d_2\), and a fixed target with nonzero
off-diagonal structure,
\begin{equation}
D\!\left(\E[\bm{G}_2\bm{G}_2^\top]\right)
=
\Theta(d_1^2),
\qquad
\mathrm{OD}\!\left(\E[\bm{G}_2\bm{G}_2^\top]\right)
=
\Theta(d_1),
\qquad
R\!\left(\E[\bm{G}_2\bm{G}_2^\top]\right)
=
\Theta(d_1).
\label{eq:sim-relu-g2-scaling}
\end{equation}

\paragraph{Empirical structure of the expected gradient Gram.}
We first examine the empirical expected gradient Grams at a
representative configuration. We use
\([d_0,d_1,d_2]=[512,80,12]\) and a fixed target
\(\bm{\Phi}\in\R^{12\times512}\) whose entries are sampled from
\(\bN(0,9)\). For each of \(S=10{,}000\) independent Gaussian
weight initializations, we estimate the population gradients using
\(128{,}000\) independent inputs sampled from
\(\bN(\bm{0},\bm{I}_{d_0})\), and estimate the expected gradient
Grams using \eqref{eq:empirical-gram}.

\begin{figure}[t]
    \centering
    \includegraphics[width=0.5\linewidth]
    {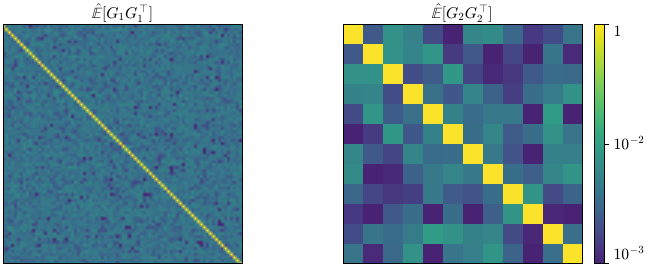}
    \caption{
    Empirical expected gradient self outer-products for the two-layer
    ReLU network with dimensions
    \([d_0,d_1,d_2]=[512,80,12]\), estimated from \(10{,}000\) independent weight initializations. The panels show
    \(\widehat{\E}[\bm{G}_1\bm{G}_1^\top]\) and
    \(\widehat{\E}[\bm{G}_2\bm{G}_2^\top]\). Each heatmap is normalized by its empirical mean diagonal magnitude. A common logarithmic color scale is used across all panels.
    }
    \label{fig:relu-gram-verification}
\end{figure}

Figure~\ref{fig:relu-gram-verification} shows diagonally dominant
empirical expected Grams for both \(\bm{G}_1\) and \(\bm{G}_2\),
with smaller but nonzero off-diagonal entries. This is consistent
with \cref{thm:relu}, which predicts asymptotic diagonal dominance.

\paragraph{Dimension scaling.}
We set \(d_0=512\) and \(d_2=12\), and vary
\(d_1\in\{5,\allowbreak 10,\allowbreak 20,\allowbreak 40,\allowbreak 60,\allowbreak 80,\allowbreak 120,\allowbreak 160,\allowbreak 200,\allowbreak 240,\allowbreak 280,\allowbreak 320,\allowbreak 400,\allowbreak 480,\allowbreak 640\}\). The same fixed target is used
throughout the sweep, and each population gradient is estimated from
\(128{,}000\) Gaussian inputs for each of \(S=10{,}000\) independent
weight initializations.

Because the expected off-diagonal entries are much smaller than the
diagonal entries, direct Monte Carlo estimation of
\(\mathrm{OD}\) is noisy. We therefore use the following expressions to
reduce this Monte Carlo variance while preserving the expected
off-diagonal entries.

For \(\bm{G}_1\), \eqref{eq:rela-G1decomp} gives
\begin{equation}
\bm{G}_1
=
\bm{G}_1^{(1)}
-
\bm{G}_1^{(2)}.
\label{eq:sim-relu-g1-decomp}
\end{equation}
By \eqref{eq:rela-parity}, the cross terms vanish in expectation, and
\cref{lem:rela-G12} gives
\begin{equation}
\bm{G}_1^{(2)}
=
\frac{1}{2}\bm{W}_2^\top\bm{\Phi},
\qquad
\E\!\left[
\bm{G}_1^{(2)}(\bm{G}_1^{(2)})^\top
\right]
=
\frac{\norm{\bm{\Phi}}_\Frob^2}{4}\bm{I}_{d_1}.
\label{eq:sim-relu-g1-target}
\end{equation}
Hence,
\begin{equation}
\E[\bm{G}_1\bm{G}_1^\top]
=
\E\!\left[
\bm{G}_1^{(1)}(\bm{G}_1^{(1)})^\top
\right]
+
\frac{\norm{\bm{\Phi}}_\Frob^2}{4}\bm{I}_{d_1}.
\label{eq:sim-relu-g1-split}
\end{equation}
The second term is diagonal, so all expected off-diagonal entries
come from \(\bm{G}_1^{(1)}\). Moreover, setting
\(\bm{\Phi}=\bm{0}\) in \eqref{eq:rela-G1decomp} gives
\(\bm{G}_1(\bm{0})=\bm{G}_1^{(1)}\). For each
\(s\in\{1,\ldots,S\}\), we therefore define
\begin{equation}
\bm{\Delta}_1^{(s)}
=
\bm{G}_1^{(s)}(\bm{0})
\bm{G}_1^{(s)}(\bm{0})^\top.
\label{eq:sim-relu-g1-isolated}
\end{equation}
For \(a\neq c\),
\begin{equation}
\E\!\left[
(\bm{\Delta}_1^{(s)})_{ac}
\right]
=
\E\!\left[
(\bm{G}_1\bm{G}_1^\top)_{ac}
\right].
\label{eq:sim-relu-g1-offdiag-preserved}
\end{equation}
Thus, \eqref{eq:sim-relu-g1-isolated} preserves the expected
off-diagonal entries while removing the diagonal target contribution.

For \(\bm{G}_2\), \cref{lem:rela-G2form} gives
\(
\bm{G}_2
=
\bm{W}_2\bm{H}
-
\bm{\Phi}\bm{C}^\top.
\)
For each \(s\in\{1,\ldots,S\}\), let
\(\bm{G}_2^{(s)}(\bm{\Phi})\) denote the output-layer gradient for
the \(s\)-th weight initialization and target \(\bm{\Phi}\). Using
the same weights and input samples, we evaluate the gradient at
\(\bm{\Phi}\), \(-\bm{\Phi}\), and \(\bm{0}\), and define
\begin{equation}
\bm{\Delta}_{2}^{(s)}
=
\frac{1}{2}
\left[
\bm{G}_2^{(s)}(\bm{\Phi})
\bm{G}_2^{(s)}(\bm{\Phi})^\top
+
\bm{G}_2^{(s)}(-\bm{\Phi})
\bm{G}_2^{(s)}(-\bm{\Phi})^\top
\right]
-
\bm{G}_2^{(s)}(\bm{0})
\bm{G}_2^{(s)}(\bm{0})^\top.
\label{eq:sim-relu-g2-isolated}
\end{equation}
Substituting \cref{lem:rela-G2form} into
\eqref{eq:sim-relu-g2-isolated} cancels the terms independent of
\(\bm{\Phi}\) and linear in \(\bm{\Phi}\), leaving
\begin{equation}
\bm{\Delta}_{2}^{(s)}
=
\bm{\Phi}
(\bm{C}^{(s)})^\top
\bm{C}^{(s)}
\bm{\Phi}^\top.
\label{eq:sim-relu-g2-isolated-expanded}
\end{equation}
By \cref{lem:rela-Cform},
\(\bm{C}=\tfrac12\bm{W}_1\), and
\cref{lem:rela-quad} gives
\(\E[\bm{W}_1^\top\bm{W}_1]=d_1\bm{I}_{d_0}\).
Hence
\begin{equation}
\E[\bm{\Delta}_{2}^{(s)}]
=
\frac{d_1}{4}\bm{\Phi}\bm{\Phi}^\top.
\label{eq:sim-relu-g2-isolated-expectation}
\end{equation}
Since the remaining term
\(\alpha\bm{I}_{d_2}\) in \eqref{eq:sim-relu-g2} is diagonal,
\eqref{eq:sim-relu-g2-isolated} has exactly the same expected
off-diagonal entries as the full \(\bm{G}_2\bm{G}_2^\top\).

For both layers, we compute \(\widehat D\) from the full empirical
expected Gram. For \(\bm{G}_1\), we compute
\(\widehat{\mathrm{OD}}\) from the zero-target empirical Gram. For
\(\bm{G}_2\), we first average
\(\bm{\Delta}_{2}^{(s)}\) over the same \(S\) initializations and
then apply \(\mathrm{OD}(\cdot)\). We then compute
\(\widehat R=\widehat D/\widehat{\mathrm{OD}}\).

\begin{figure}[t]
    \centering
    \includegraphics[width=\linewidth]
    {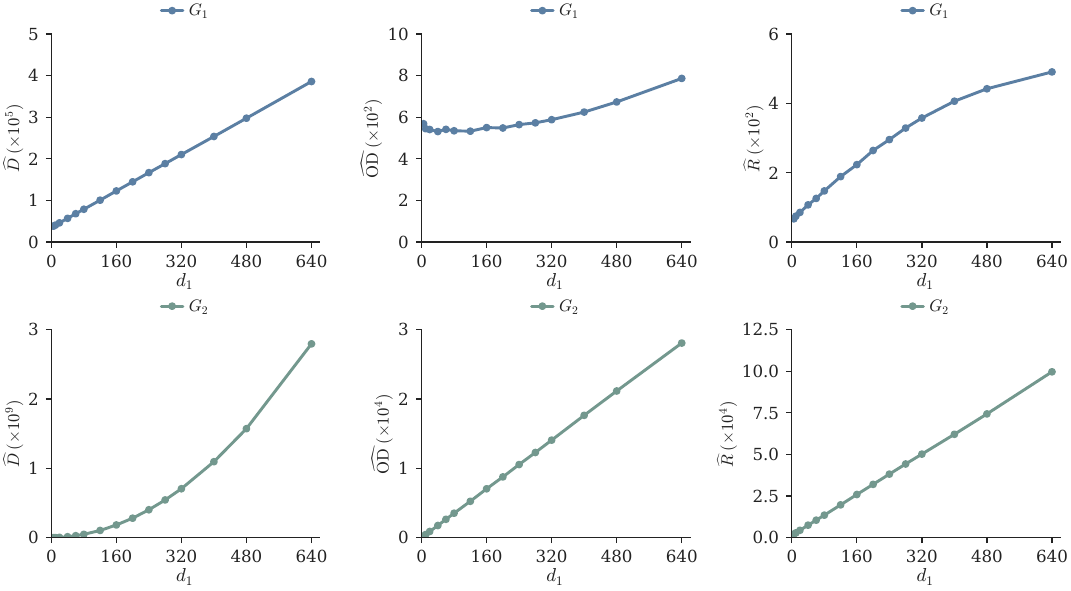}
    \caption{
    Hidden-dimension scaling for the two-layer ReLU network with
    \(d_0=512\) and \(d_2=12\). The top and bottom rows correspond to
    \(\bm{G}_1\) and \(\bm{G}_2\), respectively, and show \(\widehat D\), \(\widehat{\mathrm{OD}}\), and
    \(\widehat R=\widehat D/\widehat{\mathrm{OD}}\) as \(d_1\) is
    varied. For \(\bm{G}_1\),
    \(\widehat{\mathrm{OD}}\) is estimated from the zero-target Gram;
    for \(\bm{G}_2\), it is estimated using
    \eqref{eq:sim-relu-g2-isolated}.
    }
    \label{fig:relu-dimension-scaling}
\end{figure}

Figure~\ref{fig:relu-dimension-scaling} shows the different
hidden-dimension scalings predicted by \cref{thm:relu}. For \(\bm{G}_1\), \(\widehat D\) grows approximately linearly with
\(d_1\), while \(\widehat{\mathrm{OD}}\) remains approximately constant, consistent with \(D=\Theta(d_1)\) and
\(\mathrm{OD}=\Theta(1)\) in
\eqref{eq:sim-relu-g1-scaling}. For \(\bm{G}_2\),
\(\widehat D\) grows approximately quadratically with \(d_1\), while \(\widehat{\mathrm{OD}}\) grows approximately linearly,
consistent with \eqref{eq:sim-relu-g2-scaling}. Accordingly, \(\widehat R\) increases with \(d_1\) for both layers, consistent
with the predicted \(R=\Theta(d_1)\).

\subsection{Single-Layer Attention}
\label{app:sim-attention}
\paragraph{Setup.}
Finally, we verify the predictions of
\cref{thm:att,cor:atta-scaling} for single-layer attention
defined in \eqref{eq:att-layer}. We consider the gradients
\(\GQ,\GK,\GV,\GO\) of the query, key, value, and output
projections, respectively. Under the Gaussian initialization of
\cref{ass:att}, \cref{thm:att} gives
\begin{equation}
\begin{aligned}
\E[\GQ\GQ^\top] &= c_Q \bm{I}_{d_k}, \\
\E[\GK\GK^\top] &= c_K \bm{I}_{d_k}, \\
\E[\GV\GV^\top] &= c_V \bm{I}_{d_v}, \\
\E[\GO\GO^\top]
&=
\bar{\alpha}_O \bm{I}_{d_2}
+
d_v\gamma\,\bm{\Phi}\bm{\Phi}^\top,
\end{aligned}
\label{eq:sim-attention-theory}
\end{equation}
where \(c_Q,c_K,c_V,\bar{\alpha}_O\), and \(\gamma\) are given
explicitly in \cref{thm:att} and \cref{app:att:results}.
Thus, the expected gradient self outer-products for the query, key,
and value projections are exactly diagonal,
\begin{equation}
\mathrm{OD}\!\left(
\E[\bm{G}_p\bm{G}_p^\top]
\right)
=
0,
\qquad
p\in\{Q,K,V\}.
\label{eq:sim-attention-qkv}
\end{equation}
Combining \cref{thm:att,cor:atta-scaling}, for a fixed target with
\(\mathrm{OD}(\bm{\Phi}\bm{\Phi}^\top)>0\), we obtain
\begin{equation}
R\!\left(
\E[\GO\GO^\top]
\right)
=
\Theta(d_vd_0^3).
\label{eq:sim-attention-output-ratio}
\end{equation}
Hence increasing the hidden dimension \(d_v\) or the token dimension
\(d_0\) makes the expected output-projection Gram increasingly
diagonally dominant, while the hidden dimension \(d_k\) enters the
ratio only through a bounded factor.

\paragraph{Empirical structure of the expected gradient Gram.}
We first examine the empirical expected gradient Grams at a
representative configuration. We use linear attention with
\(T=16\), \(d_0=32\), \(d_k=d_v=20\), and \(d_2=12\).
The entries of the fixed target
\(\bm{\Phi}\in\R^{d_2\times d_0}\) are sampled from
\(\bN(0,1)\). We use \(S=10{,}000\) independent initializations.
For each initialization, the rows of
\(\bm{X}\in\R^{T\times d_0}\) are sampled independently from
\(\bN(\bm{0},\bm{I}_{d_0})\), independently of the Gaussian weights,
as specified in \cref{ass:att}. We estimate the expected gradient
Gram for each projection using \eqref{eq:empirical-gram}.

\begin{figure}[t]
    \centering
    \includegraphics[width=\linewidth]
    {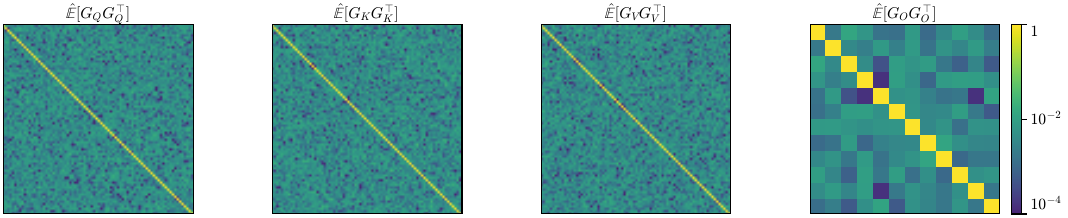}
    \caption{
    Empirical expected gradient self outer-products for single-layer
    attention with
    \(T=16\), \(d_0=32\), \(d_k=d_v=20\), and \(d_2=12\),
    estimated from \(10{,}000\) independent initializations. The four
    panels show
    \(\widehat{\E}[\bm{G}_p\bm{G}_p^\top]\) for
    \(p\in\{Q,K,V,O\}\). Each heatmap is normalized by its empirical mean diagonal magnitude. A common logarithmic color scale is used across all panels.
    }
    \label{fig:attention-gram-verification}
\end{figure}

Figure~\ref{fig:attention-gram-verification} shows strongly diagonal
empirical expected Grams for the query, key, and value projections,
consistent with the exact isotropy in \cref{thm:att}. The
output-projection Gram is also diagonally dominant but retains
nonzero off-diagonal structure, consistent with the
\(d_v\gamma\,\bm{\Phi}\bm{\Phi}^\top\) term in
\eqref{eq:sim-attention-theory}.

\paragraph{Dimension scaling.}
We examine four dimension sweeps: $d_k=d_v$ jointly, $d_k$ with $d_v=80$, $d_v$ with $d_k=80$, and $d_0=d_2$ jointly. For the first three sweeps, we set \(T=32\), \(d_0=512\), and \(d_2=12\), and use the 12-point grid
\(\{5,10,20,40,60,80,120,160,200,240,280,320\}\).
For the joint \(d_0=d_2\) sweep, we set
\(T=16\) and \(d_k=d_v=20\), and use the 13-point grid
\(\{32,64,128,192,256,320,384,448,512,640,768,896,1024\}\).
Further, for the joint \(d_0=d_2\) sweep, we rescale the target $\bm{\Phi}$ at each
dimension to have the same Frobenius norm as the \(32\times32\) reference target. This controls for the explicit target dependence in
\cref{thm:att}: the diagonal constants \(c_Q,c_K,c_V\) contain terms
proportional to \(\norm{\bm{\Phi}}_\Frob^2\) in
\eqref{eq:atta-cbars}, while the output Gram contains the term
\(d_v\gamma\,\bm{\Phi}\bm{\Phi}^\top\) in
\eqref{eq:atta-main}. Holding \(\norm{\bm{\Phi}}_\Frob\) fixed
therefore isolates the effect of increasing \(d_0=d_2\) from the increase in target magnitude from the larger target matrix. For every configuration, we use \(S=10{,}000\) independent
initializations.For each initialization, the rows of \(\bm{X}\) are
sampled independently from
\(\bN(\bm{0},\bm{I}_{d_0})\), independently of the weights.

Since the expected off-diagonal entries are much smaller than the
diagonal entries, direct Monte Carlo estimation of
\(\mathrm{OD}\) is noisy. We therefore use the following expressions to reduce this Monte Carlo variance while preserving the
expected off-diagonal entries. From \eqref{eq:atta-notation} and \eqref{eq:atta-grads}, each gradient
is affine in \(\bm{\Phi}\) when the weights and \(\bm{X}\) are fixed. For each
\(s\in\{1,\ldots,S\}\), let
\(\bm{G}_p^{(s)}(\bm{\Phi})\) denote the gradient for projection
\(p\in\{Q,K,V,O\}\) under the \(s\)-th initialization and target
\(\bm{\Phi}\). Using the same weights and \(\bm{X}\), we evaluate the
gradient at \(\bm{\Phi}\), \(-\bm{\Phi}\), and \(\bm{0}\), and define
\begin{equation}
\bm{\Delta}_p^{(s)}
=
\frac{1}{2}
\left[
\bm{G}_p^{(s)}(\bm{\Phi})
\bm{G}_p^{(s)}(\bm{\Phi})^\top
+
\bm{G}_p^{(s)}(-\bm{\Phi})
\bm{G}_p^{(s)}(-\bm{\Phi})^\top
\right]
-
\bm{G}_p^{(s)}(\bm{0})
\bm{G}_p^{(s)}(\bm{0})^\top.
\label{eq:sim-attention-isolated}
\end{equation}
The symmetric combination cancels the terms independent of
\(\bm{\Phi}\) and linear in \(\bm{\Phi}\), leaving the contribution
quadratic in the target. Combining this cancellation with the target-dependent terms in
\eqref{eq:atta-cbars} and \eqref{eq:atta-main} gives
\begin{equation}
\begin{aligned}
\E[\bm{\Delta}_Q^{(s)}]
&=
\frac{d_v\gamma}{d_k}
\norm{\bm{\Phi}}_\Frob^2\bm{I}_{d_k},\\
\E[\bm{\Delta}_K^{(s)}]
&=
\frac{d_v\gamma}{d_k}
\norm{\bm{\Phi}}_\Frob^2\bm{I}_{d_k},\\
\E[\bm{\Delta}_V^{(s)}]
&=
\gamma
\norm{\bm{\Phi}}_\Frob^2\bm{I}_{d_v},\\
\E[\bm{\Delta}_O^{(s)}]
&=
d_v\gamma\,\bm{\Phi}\bm{\Phi}^\top.
\end{aligned}
\label{eq:sim-attention-isolated-expectations}
\end{equation}
Thus, \(\bm{\Delta}_p^{(s)}\) in
\eqref{eq:sim-attention-isolated} has zero expected off-diagonal
entries for \(p\in\{Q,K,V\}\), exactly as in
\eqref{eq:sim-attention-qkv}, while for \(p=O\) it preserves the
off-diagonal term in \eqref{eq:sim-attention-theory}.

For each configuration, we compute \(\widehat D\) from the full
empirical expected Gram. For \(\widehat{\mathrm{OD}}\), we first
average \(\bm{\Delta}_p^{(s)}\) over the same \(S\) initializations
and then apply \(\mathrm{OD}(\cdot)\). We then compute
\(\widehat R=\widehat D/\widehat{\mathrm{OD}}\).

\begin{figure}[t]
    \centering
    \includegraphics[width=\linewidth]
    {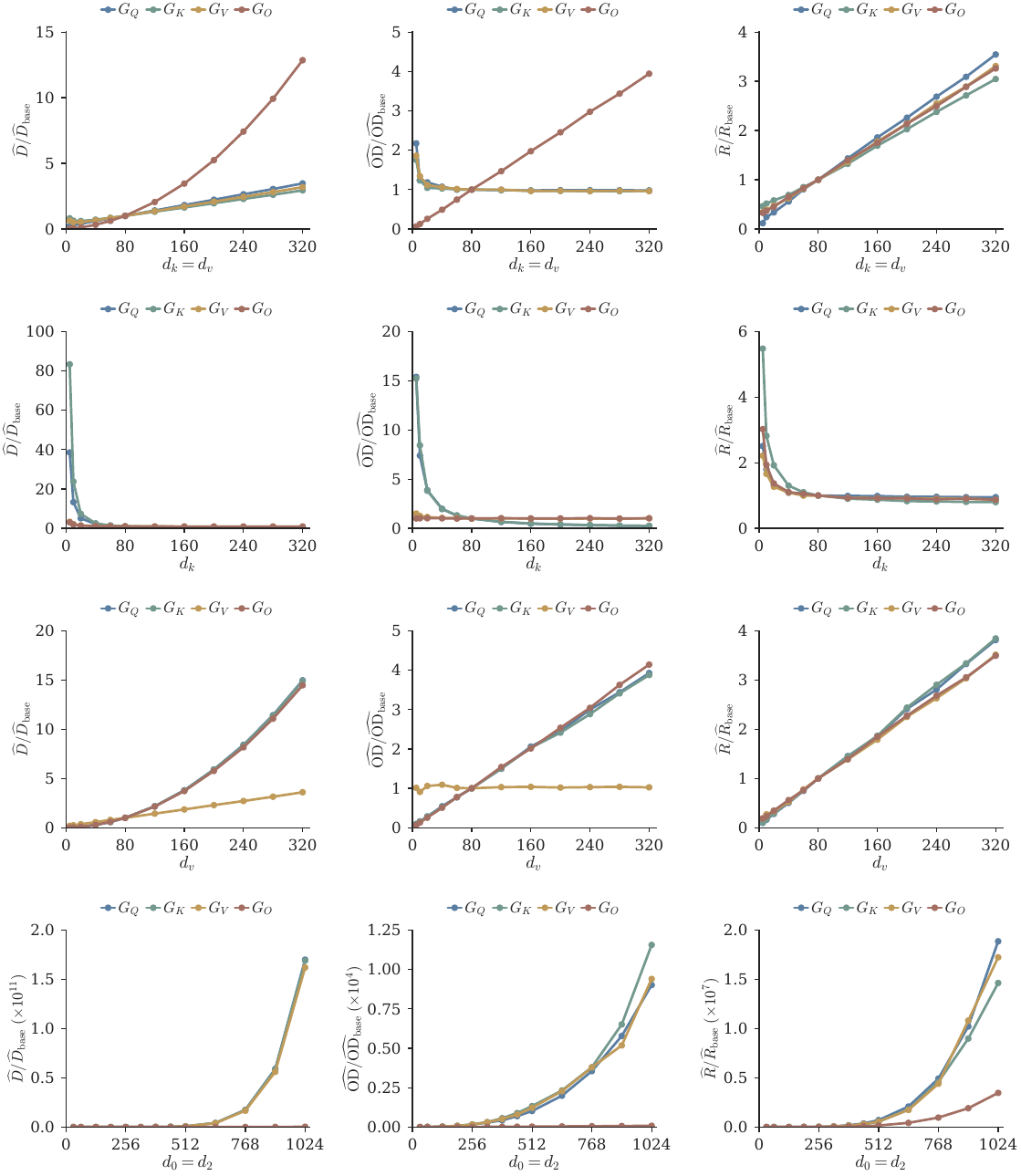}
    \caption{
    Empirical dimension scaling for single-layer attention.
    Rows vary \(d_k=d_v\) jointly, \(d_k\), \(d_v\), and
    \(d_0=d_2\) (top to bottom). Columns show
    \(\widehat D\), \(\widehat{\mathrm{OD}}\), and
    \(\widehat R=\widehat D/\widehat{\mathrm{OD}}\) for
    \(\GQ,\GK,\GV,\) and \(\GO\), each normalized by its value at the
    baseline configuration
    (first three rows: $T=32$, $d_0=512$, $d_k=d_v=80$, $d_2=12$; fourth row: $T=16, d_0=d_2=32, d_k=d_v=20$), to
    facilitate comparison across projections with different absolute
    scales. Empirical estimates use \(10{,}000\) independent
    initializations per configuration; the off-diagonal statistics are
    estimated using \eqref{eq:sim-attention-isolated}.
    }
    \label{fig:attention-dimension-scaling}
\end{figure}

Figure~\ref{fig:attention-dimension-scaling} shows the dimension-dependent diagonal scaling predicted by \cref{thm:att,cor:atta-scaling}. When \(d_k\) is varied with
\(d_v\) fixed, \(\widehat D\) for \(\GQ\) and \(\GK\) decreases, consistent with
\(D=\Theta(d_2^2d_v^2d_0^6/d_k)\), while \(\GV\) and \(\GO\) depend only weakly on \(d_k\). Increasing \(d_v\) increases \(\widehat D\) for all four projections; the stronger growth of
\(\GQ,\GK,\GO\) relative to \(\GV\) is consistent with their quadratic versus linear dependence on \(d_v\). When \(d_k=d_v\)
are increased jointly, \(\widehat D\) grows approximately linearly for \(\GQ,\GK,\GV\) and more rapidly for \(\GO\), again matching
the predicted orders.

For \(Q\), \(K\), and \(V\), the population off-diagonal entries are
exactly zero by \cref{thm:att}; the nonzero
\(\widehat{\mathrm{OD}}\) values are finite-sample Monte Carlo
variances. For the output projection,
\(\widehat{\mathrm{OD}}\) grows approximately linearly with \(d_v\) and is nearly unchanged when only \(d_k\) is varied, while \(\widehat R\) grows approximately linearly with \(d_v\), consistent with \cref{cor:atta-scaling}, which gives \(D=\Theta(d_v^2d_0^6)\),
\(\mathrm{OD}=\Theta(d_vd_0^3)\), and
\(R=\Theta(d_vd_0^3)\). For the joint \(d_0=d_2=m\) sweep, \cref{cor:atta-scaling} gives
\(D=\Theta(m^8)\) for \(\GQ,\GK,\GV\), while for \(\GO\), \(D=\Theta(m^6)\), \(\mathrm{OD}=\Theta(m^3)\), and hence
\(R=\Theta(m^3)\).

\section{Empirical Evaluation on Large Language Models}
\label{sec:llm-extension}

We next test whether the diagonal dominance behavior of \cref{thm:att} persists in transformer language models. We consider Qwen3.5-0.8B-Base~\citep{qwen35}, Granite-4.0-1B-Base~\citep{granite4}, and Llama-3.2-1B~\citep{llama32}. We perform
two experiments. First, we examine the expected gradient Grams of individual attention projections at random initialization. Second, we
measure how their diagonal dominance changes with model width across the three models. 

\subsection{Gradient Gram Structure at Initialization}
\label{app:llm-gram-structure}

We first examine the gradient Gram of attention projections in Qwen3.5-0.8B-Base,
Granite-4.0-1B-Base, and Llama-3.2-1B at random initialization. The models are instantiated from their released Hugging Face configurations~\citep{wolf2020transformers} without pretrained
weights. We set the sequence length to \(T=256\), retain each model's
released model width and head dimension, and set the number of
key/value heads equal to the number of query heads. Native RoPE
behavior is used throughout.

We analyze head \(0\) in block \(3\) for Qwen and in block \(0\)
for Granite and Llama. For Qwen, DeltaNet blocks are excluded, so
block \(3\) is the first full-attention block considered. For each
projection \(p\in\{Q,K,V,O\}\), we estimate
\begin{equation}
\widehat{\bm P}_p
=
\frac{1}{N_WN_X}
\sum_{w=1}^{N_W}
\sum_{x=1}^{N_X}
\bm G_{p,w,x}\bm G_{p,w,x}^{\top},
\label{eq:llm-empirical-gram}
\end{equation}
using \(N_W=25\) independent random initializations and
\(N_X=64\) fixed FineWeb-Edu~\citep{penedo2024the} sequences. We use \(N_W=25\) independent random initializations obtained from each model's native Hugging Face initialization scheme, which uses
zero-mean Gaussian initialization for the attention projection
weights with model-specific scales.
The same input sequences are reused across all initializations.
For Qwen, the gating rows of the query projection are excluded when
forming the query Gram.

Figure~\ref{fig:llm-gram-structure} shows the resulting empirical
expected Grams. All four attention projections exhibit a clear
diagonal component across all three models. This indicates that the
diagonal structure predicted for the single-layer attention model in
\cref{thm:att} also appears in deeper transformer architectures at
random initialization.

\begin{figure}[t]
    \centering
    \includegraphics[width=\linewidth]
    {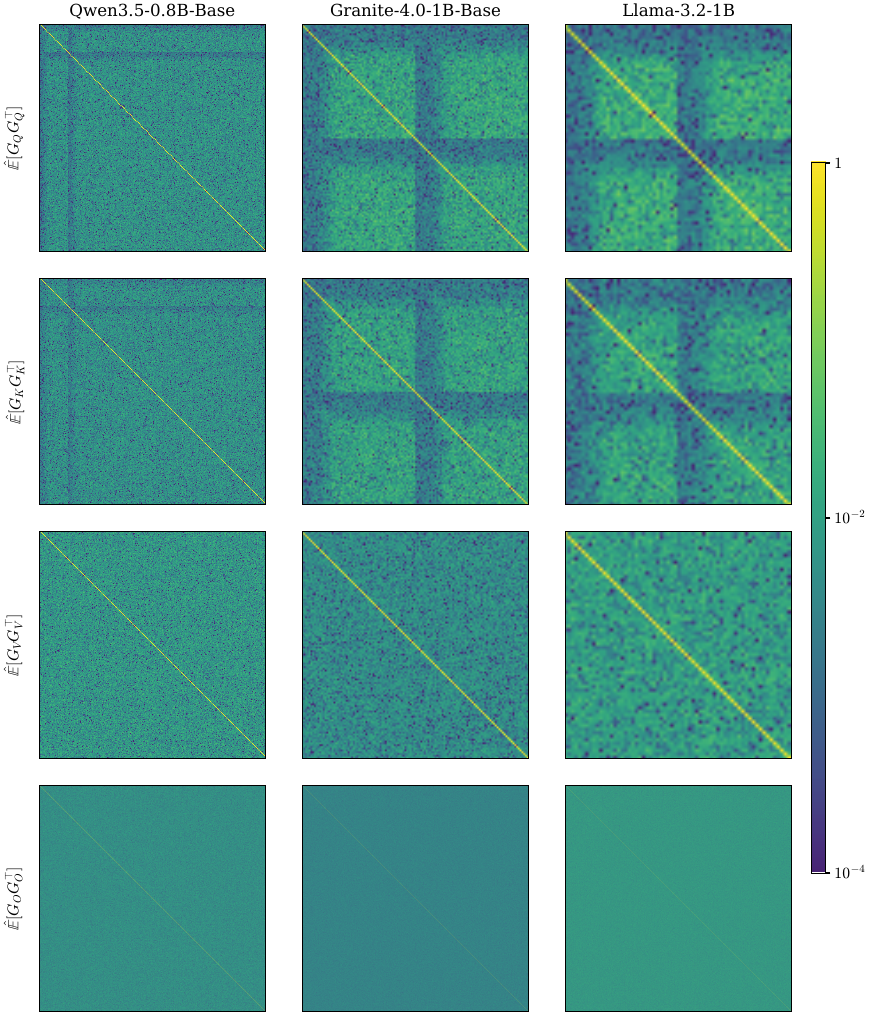}
    \caption{
    Empirical expected gradient self outer-products for
    Qwen3.5-0.8B-Base, Granite-4.0-1B-Base, and Llama-3.2-1B
    (columns), estimated from \(25\) independent random initializations
    and \(64\) fixed FineWeb-Edu sequences of length \(T=256\).
    The rows show
    \(\widehat{\mathbb{E}}[\GQ\GQ^\top]\),
    \(\widehat{\mathbb{E}}[\GK\GK^\top]\),
    \(\widehat{\mathbb{E}}[\GV\GV^\top]\), and
    \(\widehat{\mathbb{E}}[\GO\GO^\top]\).
    We use head \(0\) of block \(3\) for Qwen and head \(0\) of
    block \(0\) for Granite and Llama.
    Each heatmap is normalized by its empirical mean diagonal magnitude.
    A common logarithmic color scale is used across all panels.
}
    \label{fig:llm-gram-structure}
\end{figure}
\subsection{Model Width Scaling}
\label{app:llm-width-scaling}

We next examine how diagonal dominance changes with model width.
Using the same random initialization protocol as above, we vary the
transformer hidden width (\texttt{hidden\_size}) while holding the head
dimension (\texttt{head\_dim}) fixed at its released value. In the
notation of our single-layer attention model, this corresponds to
varying the residual input and output dimensions \(d_0=d_2\) while
keeping \(d_k=d_v\) fixed. The released head dimensions are \(256\)
for Qwen3.5-0.8B-Base, \(128\) for Granite-4.0-1B-Base, and \(64\)
for Llama-3.2-1B. The query-head count
(\texttt{num\_attention\_heads}) is held fixed, and the number of
key/value heads (\texttt{num\_key\_value\_heads}) is set equal to the number of query heads to enable analogous dimension sweep in the models. The sequence length is fixed at \(T=256\).

We consider the model-width grid
\[
\{128,256,512,768,1024,1536,2048,3072,4096,6144,
8192,12288,16384\}.
\]
At each width, we use \(25\) independent random initializations (obtained from
each model's native Hugging Face initialization scheme, which uses zero-mean Gaussian initialization for the attention projection
weights with model-specific scales) and
the same \(64\) fixed FineWeb-Edu sequences. We analyze head \(0\) in
block \(3\) for Qwen and in block \(0\) for Granite and Llama. For
each projection \(p\in\{Q,K,V,O\}\), we report the empirical
diagonal-dominance ratio
\(\widehat R_p=R(\widehat{\bm P}_p)\), with \(R(\cdot)\) defined in
\eqref{eq:appG-DOR}.

\begin{figure}[t]
    \centering
    \includegraphics[width=\linewidth]
    {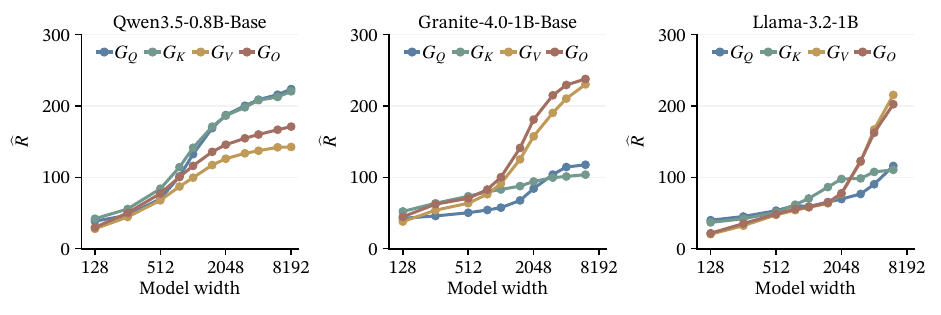}
    \caption{
    Model width scaling of the empirical diagonal-dominance ratio
    \(\widehat R\) for
    \(\GQ,\GK,\GV,\) and \(\GO\) in
    Qwen3.5-0.8B-Base, Granite-4.0-1B-Base, and Llama-3.2-1B.
    Model width (\texttt{hidden\_size}) is varied while the head
    dimension (\texttt{head\_dim}) is held fixed at its released value.
    We analyze head \(0\) in block \(3\) for Qwen and in block \(0\)
    for Granite and Llama. Each point is estimated from \(25\)
    independent random initializations and \(64\) fixed FineWeb-Edu
    sequences of length \(T=256\).
    }
    \label{fig:llm-dimension-scaling}
\end{figure}

Figure~\ref{fig:llm-dimension-scaling} shows that
\(\widehat R\) increases with model width for all four attention
projections across all three models. For \(\GO\), this is consistent
with the \(R=\Theta(d_vd_0^3)\) scaling in
\cref{cor:atta-scaling}. The increase is strongest for
\(\GQ\) and \(\GK\) in Qwen, and for \(\GV\) and \(\GO\) in
Granite and Llama.

\end{document}